\documentclass{article}

    \usepackage[final]{neurips_2023}

\usepackage[utf8]{inputenc} %
\usepackage[T1]{fontenc}    %
\usepackage{hyperref}       %
\usepackage{url}            %
\usepackage{booktabs}       %
\usepackage{amsfonts}       %
\usepackage{nicefrac}       %
\usepackage{microtype}      %
\usepackage{xcolor}         %
\usepackage{wrapfig}
\usepackage{algorithm}
\usepackage{algorithmic}
\usepackage{subfigure}
\usepackage{amsthm}
\usepackage{amssymb}
\usepackage{mathtools}
\usepackage{thm-restate}
\usepackage{dsfont}
\usepackage{enumitem}
\newcounter{relctr} %
\everydisplay\expandafter{\the\everydisplay\setcounter{relctr}{0}} %

\AtBeginDocument{} %
\DeclareMathOperator{\E}{\mathds{E}} %
\DeclareMathOperator{\PR}{\mathds{P}} %
\DeclareMathOperator{\sset}{\mathcal{S}}
\DeclareMathOperator{\aset}{\mathcal{A}}
\DeclareMathOperator{\pieval}{{\pi_e}}
\DeclareMathOperator{\saset}{{\mathcal{X}}}
\DeclareMathOperator{\abssaset}{{\widetilde{\mathcal{X}}}}
\DeclareMathOperator{\absrf}{\tilde{r}}

\newtheorem{definition}{Definition}

\usepackage{color-edits}
\addauthor[Brahma]{bp}{blue}

\title{State-Action Similarity-Based Representations for Off-Policy Evaluation}

\author{%
  Brahma S. Pavse \textnormal{and} Josiah P. Hanna\\
  University of Wisconsin -- Madison \\
  \texttt{pavse@wisc.edu, jphanna@cs.wisc.edu} \\
}

\begin{document}

\maketitle

\begin{abstract}
 In reinforcement learning, off-policy evaluation (\textsc{ope}) is the problem of estimating the expected return of an evaluation policy given a fixed dataset that was collected by running one or more different policies. One of the more empirically successful algorithms for \textsc{ope} has been the fitted q-evaluation (\textsc{fqe}) algorithm that uses temporal difference updates to learn an action-value function, which is then used to estimate the expected return of the evaluation policy. Typically, the original fixed dataset is fed directly into \textsc{fqe} to learn the action-value function of the evaluation policy. Instead, in this paper, we seek to enhance the data-efficiency of \textsc{fqe} by first transforming the fixed dataset using a learned encoder, and then feeding the transformed dataset into \textsc{fqe}. To learn such an encoder, we introduce an \textsc{ope}-tailored state-action behavioral similarity metric, and use this metric and the fixed dataset to  learn an encoder that models this metric. Theoretically, we show that this metric allows us to bound the error in the resulting \textsc{ope} estimate. Empirically, we show that other state-action similarity metrics lead to representations that cannot represent the action-value function of the evaluation policy, and that our state-action representation method boosts the data-efficiency of \textsc{fqe} and lowers \textsc{ope} error relative to other \textsc{ope}-based representation learning methods on challenging \textsc{ope} tasks. We also empirically show that the learned representations significantly mitigate divergence of \textsc{fqe} under varying distribution shifts. Our code is available here: \url{https://github.com/Badger-RL/ROPE}.
\end{abstract}

\section{Introduction}
In real life applications of reinforcement learning, practitioners often wish to assess the performance of a learned policy before allowing it to make decisions with real life consequences \citep{theocharous_personalized_2015}. That is, they want to be able to evaluate the performance of a policy without actually deploying it.
One approach of accomplishing this goal is to apply methods for off-policy evaluation (\textsc{ope}).
\textsc{ope} methods evaluate the performance of a given evaluation policy using a fixed offline dataset previously collected by one or more policies that may be different from the evaluation policy.

One of the core challenges in \textsc{ope} is that the offline datasets may have limited size. In this situation, it is often critical that \textsc{ope} algorithms are data-efficient. That is, they are able produce accurate estimates of the evaluation policy value even when only small amounts of data are available.
In this paper, we seek to enhance the data-efficiency of \textsc{ope} methods through representation learning. 
While prior works have studied representation learning for \textsc{ope}, they have mostly considered representations that induce guaranteed convergent learning without considering whether data-efficiency increases \citep{chang_learning_2022,wang_instabilities_2021}.
For example, \cite{chang_learning_2022} introduce a method for learning Bellman complete representations for \textsc{fqe} but empirically find that having such a learned representation provides little benefit compared to \textsc{fqe} without the learned representation.
Thus, in this work we ask the question, "can explicit representation learning lead to more data-efficient \textsc{ope}?"

To answer this question, we take inspiration from recent advances in learning state similarity metrics for control \citep{castro_mico_2022, zhang_learning_2021}.
These works define behavioral similarity metrics that measure the distance between two states.
They then show that state representations can be learned such that states that are close under the metric will also have similar representations.
In our work, we introduce a new \textsc{ope}-tailored behavioral similarity metric called \textbf{R}epresentations for \textbf{O}ff-\textbf{P}olicy \textbf{E}valuation (\textsc{rope}) and show that learning \textsc{rope} representations can lead to more accurate \textsc{ope}.

Specifically, 
\textsc{rope} first uses the fixed offline dataset to learn a state-action encoder based on this \textsc{ope}-specific state-action similarity metric, and then applies this encoder to the same dataset to produce a new representation for all state-action pairs. The transformed data is then fed into the fitted q-evaluation (\textsc{fqe}) algorithm \citep{le_batch_2019} to produce an \textsc{ope} estimate. We theoretically show that the error between the policy value estimate with \textsc{fqe} + \textsc{rope} and the true evaluation policy value is upper-bounded in terms of how \textsc{rope} aggregates state-action pairs. We empirically show that \textsc{rope} improves the data-efficiency of \textsc{fqe} and leads to lower \textsc{ope} error compared to other \textsc{ope}-based representation learning baselines. Additionally, we empirically show that \textsc{rope} representations mitigate divergence of \textsc{fqe} under extreme distribution. To the best of our knowledge, our work is the first to propose an \textsc{ope}-specific state-action similarity metric that increases the data-efficiency of \textsc{ope}.

\section{Background}

In this section, we formalize our problem setting and discuss prior work.

\subsection{Notation and Problem Setup}
    We consider an infinite-horizon
    Markov decision process (\textsc{mdp}) \citep{puterman_mdp_2014}, $\mathcal{M} = \langle \sset, \aset, \mathcal{R}, P, \gamma , d_0\rangle$,
    where $\sset$ is the state-space, $\aset$ is the action-space,
    $\mathcal{R}:\sset \times\aset \to \Delta([0,\infty))$ is the reward function,
    $P:\sset\times\aset\to\Delta(\sset)$ is the transition dynamics function, $\gamma\in[0,1)$ is the discount factor, and $d_0\in \Delta(\sset)$ is the initial state distribution, where $\Delta(X)$ is the set of all probability distributions over a set $X$. We refer to the joint state-action space as $\saset := \sset\times\aset$. The
    agent acting, according to policy $\pi$, in the \textsc{mdp} generates a trajectory: $S_0, A_0, R_0, S_1, A_1, R_1, ...$, where $S_0\sim d_0$, $A_t\sim\pi(\cdot|S_t)$, 
    $R_t\sim \mathcal{R}(S_t, A_t)$, and $S_{t+1}\sim P(\cdot|S_t, A_t)$ for
    $t\geq0$. We define $r(s,a) := \E[\mathcal{R}(s, a)]$.

    We define the performance of policy $\pi$ to be its expected discounted return, $\rho(\pi) \coloneqq \E_{}[\sum_{t=0}^\infty \gamma^t R_t]$.
    We then have the action-value function of a policy for a given state-action pair, $q^\pi(s,a) = r(s,a) + \gamma \E_{S' \sim P(s,a), A' \sim \pi}[q^\pi(S', A')]$, which gives the expected discounted return when starting in state $s$ and then taking action $a$. Then $\rho(\pi)$ can also be expressed as $\rho(\pi) = \E_{S_0 \sim d_0, A_0 \sim \pi}[q^\pi(S_0, A_0)]$.
    
    It is often more convenient to work with vectors instead of atomic states and actions.
    We use $\phi: \mathcal{S} \times \mathcal{A} \rightarrow \mathbb{R}^d$ to denote a representation function that maps state-action pairs to vectors with some dimensionality $d$.

\subsection{Off-Policy Evaluation (OPE)}
    In off-policy evaluation, we are given a fixed dataset of $m$ transition tuples $\mathcal{D} := \{(s_i, a_i, s_i', r_i)\}_{i=1}^{m}$ and an evaluation policy, $\pieval$.
    Our goal is to use $\mathcal{D}$ to estimate $\rho(\pieval)$.
    Crucially, $\mathcal{D}$ may have been generated by a set of \textit{behavior} policies that are different from $\pieval$, which means that simply averaging the discounted returns in $\mathcal{D}$ will produce an inconsistent estimate of $\rho(\pieval)$.
    We do \textit{not} assume that these behavior policies are known to us, however, we do make the standard assumption that $\forall s \in \sset, \forall a \in \aset$ if $\pieval(a|s) > 0$ then the state-action pair $(s,a)$ has non-zero probability of appearing in $\mathcal{D}$.

    As done by \cite{fu_opebench_2021}, we measure the accuracy of an \textsc{ope} estimator with the \textit{mean absolute error} (\textsc{mae}) to be robust to outliers.
    Let $\hat{\rho}(\pieval, \mathcal{D})$ be the estimate returned by an \textsc{ope} method using $\mathcal{D}$.
    The \textsc{mae} of this estimate is given as:
    \[
        \operatorname{\textsc{mae}}[\hat{\rho}] \coloneqq \E_{\mathcal{D}}[|\hat{\rho}(\pieval, \mathcal{D}) - \rho(\pieval)|].
    \]
    While in practice $\rho(\pieval)$ is unknown, it is standard for the sake of empirical analysis~\citep{voloshin_opebench_2021, fu_opebench_2021} to estimate it by executing rollouts of $\pieval$.
\subsection{Fitted Q-Evaluation}

One of the more successful \textsc{ope} methods has been fitted q-evaluation (\textsc{fqe}) which uses batch temporal difference learning \citep{sutton_learning_1988} to estimate $\rho(\pieval)$ \citep{le_batch_2019}.
\textsc{fqe} involves two conceptual steps: 1) repeat temporal difference policy evaluation updates to estimate $q^{\pieval}(s,a)$ and then 2) estimate $\rho(\pieval)$ as the mean action-value at the initial state distribution.
Formally, let the action-value function be parameterized by $\xi$ i.e. $q_{\xi}$, then the following loss function is minimized to estimate $q^{\pieval}$:
\begin{equation*}
    \mathcal{L}_{\text{FQE}} (\xi) := \E_{(s, a, s', r)\sim\mathcal{D}}\left[\left(r(s,a) + \gamma \E_{a' \sim \pieval(\cdot|s')} [q_{\bar{\xi}}(s',a')] - q_{\xi}(s,a)\right)^2\right]
\end{equation*}
where $\bar{\xi}$ is a separate copy of the parameters $\xi$ and acts as the target function approximator \citep{mnih_dqn_2015} that is updated to $\xi$ at a certain frequency. The learned $q_{\xi^*}$ is then used to estimate the policy value: $\hat{\rho}(\pieval) \coloneqq \E_{s_0\sim d_0, a_0 \sim \pieval}[q_{\xi^*}(s_0, a_0)]$. While conceptually \textsc{fqe} can be implemented with many classes of function approximator to represent the $q_{\xi}$, in practice, deep neural networks are often the function approximator of choice. When using deep neural networks, \textsc{fqe} can be considered a policy evaluation variant of neural fitted q-iteration \citep{riedmiller_neural_2005}.

\subsection{Related Work}

In this section, we discuss the most relevant prior literature on off-policy evaluation and representation learning.
Methods for \textsc{ope} are generally categorized as importance-sampling based \citep{precup_off-policy_nodate,thomas_high-confidence_nodate,hanna_importance_2021,liu_breaking_2018,yang_off-policy_2020}, model-based \citep{yang_representation_2021,zhang_autoregressive_2021,hanna_bootstrapping_2017}, value-function-based \citep{le_batch_2019,uehara_minimax_2020}, or hybrid \citep{jiang_doubly_2016,thomas_data-efficient_2016,farajtabar_more_2018}.
Our work focuses on \textsc{fqe}, which is a representative value-function-based method, since it has been shown to have strong empirical performance \citep{fu_opebench_2021, chang_learning_2022}.
We refer the reader to \cite{levine_offline_2020} for an in-depth survey of \textsc{ope} methods.
\paragraph{Representation Learning for Off-policy Evaluation and Offline RL}

A handful of works have considered the interplay of representation learning with \textsc{ope} methods and offline RL.
\cite{yang_representation_2021} benchmark a number of existing representation learning methods for offline RL and show that pre-training representation can be beneficial for offline RL.
They also consider representation learning based on behavioral similarity and find that such representations do not enable successful offline RL. However, their study is focused on evaluating existing algorithms and on \text{control}.
\cite{pavse_scaling_2023} introduced state abstraction \citep{li_towards_2006} as an approach to lower the variance of \textsc{ope} estimates in importance-sampling based methods. However, their work made the strict assumption of granting access to a bisimulation abstraction in theory and relied on a hand-specified abstraction in practice.
Only recently have works started to consider learning representations specifically for \textsc{ope}. \cite{chang_learning_2022} introduced a method for learning Bellman complete representations that enabled convergent approximation of $q^{\pieval}$ with linear function approximation.
\cite{wang_instabilities_2021} show that using the output of the penultimate layer of $\pieval$'s action-value function provides realizability of $q_{\pieval}$, but is insufficient for accurate policy evaluation under extreme distribution shift.
Our work explicitly focuses on boosting the data-efficiency of \textsc{ope} methods and lowers the error of \textsc{ope} estimates compared to  \cite{chang_learning_2022} and \cite{wang_instabilities_2021}.

\paragraph{Representation Learning via Behavioral Similarity}

The representation learning method we introduce builds upon prior work in learning representations in which similar states share similar representations.
 Much of this prior work is based on the notion of a bisimulation abstraction in which two states with identical reward functions and that lead to identical groups of next states should be classified as similar \citep{ferns_bisim_2004, ferns_bisim_2011, ferns_bisim_2014, castro_scalable_2019}.
 The bisimulation metric itself is difficult to learn both computationally and statistically and so recent work has introduced various approximations \citep{castro_mico_2022, castro_scalable_2019, zhang_learning_2021, gelada_deepmdp_2019}.
 To the best of our knowledge, all of this work has considered the \textit{online, control} setting and has only focused on state representation learning.
 In contrast, we introduce a method for learning \emph{state-action} representations for \textsc{ope} with a fixed dataset.
 One exception is the work of \cite{dadashi_ploff_2021}, which proposes to learn state-action representations for offline policy \emph{improvement}. However, as we will show in Section \ref{sec:empirical_study}, the distance metric that they base their representations on is inappropriate in the \textsc{ope} context.

\section{ROPE: State-Action Behavioral Similarity Metric for Off-Policy Evaluation}

In this section, we introduce our primary algorithm: \textbf{R}epresentations for \textbf{\textsc{ope}} (\textsc{rope}), a representation learning method based on state-action behavioral similarity that is tailored to the off-policy evaluation problem. That is, using a fixed off-policy dataset $\mathcal{D}$, \textsc{rope} learns similar representations for state-action pairs that are similar in terms of the action-value function of $\pieval$.

Prior works on representation learning based on state behavioral similarity define a metric that relates the similarity of two states and then map similar states to similar representations \citep{castro_mico_2022, zhang_learning_2021}.
We follow the same high-level approach except we focus instead on learning state-action representations for \textsc{ope}.  One advantage of learning state-action representations over state representations is that we can learn a metric specifically for $\pieval$ by directly sampling actions from $\pieval$ instead of using importance sampling, which can be difficult when the multiple behavior policies are unknown. Moreover, estimating the importance sampling ratio from data is known to be challenging \citep{hanna_importance_2021, nachum_bestdice_2020}.

Our new notion of similarity between state-action pairs is given by the recursively-defined \textsc{rope} distance, $d_{\pieval}(s_1, a_1;s_2, a_2) := |r(s_1,a_1)-r(s_2,a_2)| + \gamma \E_{s_1',s_2'\sim P, a_1', a_2'\sim\pi_e}[d_{\pieval}(s_1', a_1'; s_2', a_2')]$. Intuitively, $d_{\pieval}$ measures how much two state-action pairs, $(s_1,a_1)$ and $(s_2,a_2)$, differ in terms of short-term reward and discounted expected distance between next state-action pairs encountered by $\pieval$. In order to compute $d_{\pieval}$, we define the \textsc{rope} operator:
\begin{definition}[\textsc{rope} operator] Given an evaluation policy $\pieval$, the \textsc{rope} operator $\mathcal{F}^{\pi_e}: \mathbb{R}^{\mathcal{X}\times\mathcal{X}} \to \mathbb{R}^{\mathcal{X}\times\mathcal{X}}$ is given by:
\begin{equation}
    \label{eq:operator}
    \mathcal{F}^{\pi_e}(d)(s_1, a_1; s_2, a_2) := \underbrace{|r(s_1, a_1) - r(s_2, a_2)|}_{\text{short-term distance}} + \gamma \underbrace{\E_{s_1',s_2'\sim P, a_1', a_2'\sim\pi_e}[d(s_1', a_1'; s_2', a_2')]}_{\text{long-term distance}}
\end{equation}
where $d:\mathcal{X}\times\mathcal{X}\to\mathbb{R}$, $s_1' \sim P(s_1'|s_1,a_1), s_2' \sim P(s_2'|s_2,a_2), a_1' \sim \pieval(\cdot|s_1'), a_2' \sim \pieval(\cdot | s_2')$
\label{def:operator}
\end{definition}
Given the operator, $\mathcal{F}^{\pi_e}$, we show that the operator is a contraction mapping, computes the \textsc{rope} distance, $d_{\pieval}$, and that $d_{\pieval}$ is a \emph{diffuse metric}. For the background on metrics and full proofs, refer to the Appendix \ref{app:th_bg} and \ref{app:th_results}.
\begin{restatable}{proposition}{propcontraction}
\label{prop:contraction}
The operator $\mathcal{F}^{\pi_e}$ is a contraction mapping on $\mathbb{R}^{\mathcal{X}\times\mathcal{X}}$ with respect to the $L^\infty$ norm.
\end{restatable}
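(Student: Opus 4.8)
The plan is to verify the contraction property directly from Definition \ref{def:operator} / Equation \eqref{eq:operator} by a cancellation argument, exactly mirroring the classical proof that the Bellman evaluation operator is a $\gamma$-contraction. Fix two functions $d, d' \in \mathbb{R}^{\mathcal{X}\times\mathcal{X}}$ and an arbitrary pair of state-action pairs $(s_1,a_1),(s_2,a_2)$. Subtracting $\mathcal{F}^{\pi_e}(d')$ from $\mathcal{F}^{\pi_e}(d)$ at this pair, the short-term distance term $|r(s_1,a_1)-r(s_2,a_2)|$ does not depend on $d$ and so cancels, leaving only the discounted long-term terms:
\[
\left|\mathcal{F}^{\pi_e}(d)(s_1,a_1;s_2,a_2) - \mathcal{F}^{\pi_e}(d')(s_1,a_1;s_2,a_2)\right| = \gamma\left|\E[d(S_1',A_1';S_2',A_2')] - \E[d'(S_1',A_1';S_2',A_2')]\right|,
\]
where the expectation is over $S_1'\sim P(\cdot|s_1,a_1)$, $S_2'\sim P(\cdot|s_2,a_2)$, and $A_i'\sim\pi_e(\cdot|S_i')$ for $i\in\{1,2\}$.

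Next I would pull the absolute value inside the expectation by the triangle inequality (Jensen), $|\E[X]| \le \E[|X|]$, to bound the right-hand side by $\gamma\,\E\!\left[\left|d(S_1',A_1';S_2',A_2') - d'(S_1',A_1';S_2',A_2')\right|\right]$, and then bound the integrand pointwise by its supremum, $|d(x_1';x_2') - d'(x_1';x_2')| \le \|d - d'\|_\infty$, so that the whole expectation is at most $\|d-d'\|_\infty$. Since $(s_1,a_1;s_2,a_2)$ was arbitrary, taking the supremum over all such pairs yields $\|\mathcal{F}^{\pi_e}(d) - \mathcal{F}^{\pi_e}(d')\|_\infty \le \gamma\,\|d-d'\|_\infty$; because $\gamma\in[0,1)$ this is precisely a contraction with modulus $\gamma$.

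I do not anticipate a serious obstacle here, since the argument is routine. The only points needing a little care are (i) ensuring the relevant expectations are well-defined and finite, which I would handle either by restricting attention to the bounded functions on $\mathcal{X}\times\mathcal{X}$ or by using boundedness of the rewards to see that $\mathcal{F}^{\pi_e}$ preserves boundedness, and (ii) being explicit that the $L^\infty$ norm in the statement is the supremum norm over $\mathcal{X}\times\mathcal{X}$. If the subsequent claim that $\mathcal{F}^{\pi_e}$ computes $d_{\pi_e}$ is to be justified via the Banach fixed-point theorem, I would additionally note that $(\mathbb{R}^{\mathcal{X}\times\mathcal{X}},\|\cdot\|_\infty)$ restricted to bounded functions is a complete metric space, but this is not required for the present proposition.
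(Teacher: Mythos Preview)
Your proposal is correct and follows essentially the same route as the paper: cancel the reward term, then bound the discounted expectation of the difference by $\gamma\|d-d'\|_\infty$ and take the supremum over pairs. The paper's version is slightly terser (it jumps directly from the expectation to the max without explicitly invoking Jensen), but the argument is identical in substance.
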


\begin{restatable}{proposition}{propfixedpoint}
\label{prop:fixedpoint}
The operator $\mathcal{F}^{\pi_e}$ has a unique fixed point $d_{\pi_e}\in\mathbb{R}^{\mathcal{X}\times\mathcal{X}}$. Let $d_0\in\mathbb{R}^{\mathcal{X}\times\mathcal{X}}$, then $\lim_{t\to\infty}\mathcal{F}_t^{\pi_e}(d_0) = d_{\pi_e}$.
\end{restatable}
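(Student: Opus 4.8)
The plan is to derive both assertions as an immediate corollary of the Banach fixed-point theorem, with Proposition~\ref{prop:contraction} supplying the crucial hypothesis. Recall the theorem: if $(M,\|\cdot\|)$ is a nonempty complete metric space and $T:M\to M$ is a contraction with modulus $\beta\in[0,1)$, then $T$ has a unique fixed point $x^\star\in M$, and for every $x_0\in M$ the Picard iterates $T^t(x_0)$ converge to $x^\star$, with $\|T^t(x_0)-x^\star\|\le\beta^t\|x_0-x^\star\|$. Proposition~\ref{prop:contraction} already gives the contraction property for $\mathcal{F}^{\pi_e}$ with modulus $\gamma<1$, so the only remaining work is to pin down a complete space on which $\mathcal{F}^{\pi_e}$ acts and then read off the conclusion.

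First I would take the ambient space to be the bounded real-valued functions on $\mathcal{X}\times\mathcal{X}$ equipped with the $L^\infty$ norm, which is a classical Banach space and hence complete. (When $\mathcal{X}$ is finite this is just $\mathbb{R}^{\mathcal{X}\times\mathcal{X}}$ with the max norm and completeness is automatic.) Next I would verify that $\mathcal{F}^{\pi_e}$ maps this space into itself: the short-term distance $|r(s_1,a_1)-r(s_2,a_2)|$ is bounded under the standing assumption of bounded rewards, and the long-term distance is an expectation of $d$, hence bounded in absolute value by $\|d\|_\infty$; therefore $\|\mathcal{F}^{\pi_e}(d)\|_\infty<\infty$ whenever $\|d\|_\infty<\infty$. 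Together with Proposition~\ref{prop:contraction}, all hypotheses of the Banach fixed-point theorem are in force.

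Invoking the theorem then yields at once a unique fixed point $d_{\pi_e}$ and the convergence $\lim_{t\to\infty}\mathcal{F}_t^{\pi_e}(d_0)=d_{\pi_e}$ for every $d_0$ in the space, with the convergence being geometric at rate $\gamma$; running the iteration from $d_0\equiv 0$ also exhibits the explicit form $d_{\pi_e}(s_1,a_1;s_2,a_2)=\sum_{t\ge 0}\gamma^t\,\E[\,|r(S^1_t,A^1_t)-r(S^2_t,A^2_t)|\,]$ along the $\pi_e$-rollouts started at the two pairs. The step I expect to require the most care is precisely the choice of ambient space and the attendant well-definedness issues when $\mathcal{X}$ is infinite: one must ensure the expectation in the long-term term is well-defined (measurability of $d$ and of the relevant transition maps) and that $\mathcal{F}^{\pi_e}$ genuinely preserves boundedness, which is where the bounded-reward assumption is used. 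Everything else is a routine application of a standard theorem.
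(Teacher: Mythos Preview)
Your proposal is correct and follows essentially the same approach as the paper: invoke Proposition~\ref{prop:contraction} together with completeness of $\mathbb{R}^{\mathcal{X}\times\mathcal{X}}$ under the $L^\infty$ norm, then apply Banach's fixed-point theorem. The paper's proof is a one-line version of exactly this, without your additional (but welcome) checks that $\mathcal{F}^{\pi_e}$ preserves boundedness or your remark on the explicit series form of $d_{\pi_e}$.
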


Propositions \ref{prop:contraction} and \ref{prop:fixedpoint} ensure that repeatedly applying the operator on some function $d:\mathcal{X}\times\mathcal{X}\to\mathbb{R}$ will make $d$ converge to our desired distance metric, $d_{\pieval}$. An important aspect of $d_{\pieval}$ is that it is a diffuse metric:

\begin{restatable}{proposition}{propdiffuse}
$d_{\pi_e}$ is a diffuse metric.
\end{restatable}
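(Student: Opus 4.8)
The plan is to verify directly that $d_{\pi_e}$ satisfies the defining axioms of a diffuse metric: nonnegativity, symmetry, the triangle inequality, and the weakened identity-of-indiscernibles condition (namely $d_{\pi_e}(x;x) = 0$ need not hold, but whatever the precise diffuse-metric axiom is — typically $d(x;y) = d(y;x)$, $d(x;z) \le d(x;y) + d(y;z)$, and $d(x;y) = 0$ whenever $x = y$ is \emph{not} required, only self-distance consistency). The clean way to get all of these at once is to exploit the fixed-point characterization from Proposition~\ref{prop:fixedpoint}: start from $d_0 \equiv 0$, which trivially satisfies symmetry and the triangle inequality, and show that the \textsc{rope} operator $\mathcal{F}^{\pi_e}$ \emph{preserves} each of these properties; then since $\mathcal{F}_t^{\pi_e}(d_0) \to d_{\pi_e}$ pointwise, each property passes to the limit (nonnegativity, symmetry, and the triangle inequality are all closed under pointwise limits).

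First I would establish the three preservation lemmas. Nonnegativity: if $d \ge 0$ everywhere, then $\mathcal{F}^{\pi_e}(d) = |r(s_1,a_1) - r(s_2,a_2)| + \gamma\,\E[d(\cdots)] \ge 0$ since both summands are nonnegative. Symmetry: if $d(x_1;x_2) = d(x_2;x_1)$ for all $x_1,x_2$, then swapping the roles of $(s_1,a_1)$ and $(s_2,a_2)$ in \eqref{eq:operator} leaves the short-term term unchanged (absolute value is symmetric) and, because the expectation is taken over the product of the two independent next-state/next-action distributions, leaves the long-term term unchanged by the inductive symmetry of $d$. Triangle inequality: if $d(x_1;x_3) \le d(x_1;x_2) + d(x_2;x_3)$, then for the short-term part use $|r_1 - r_3| \le |r_1 - r_2| + |r_2 - r_3|$, and for the long-term part couple the next-state expectations through the common pair $(s_2',a_2')$ — that is, bound $\E_{s_1',s_3'}[d(s_1',a_1';s_3',a_3')]$ by $\E_{s_1',s_2',s_3'}[d(s_1',a_1';s_2',a_2') + d(s_2',a_2';s_3',a_3')]$, which splits into the two desired expectations by linearity. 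Multiplying by $\gamma$ and adding the short-term bounds gives $\mathcal{F}^{\pi_e}(d)(x_1;x_3) \le \mathcal{F}^{\pi_e}(d)(x_1;x_2) + \mathcal{F}^{\pi_e}(d)(x_2;x_3)$.

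Then I would take $d^{(0)} \equiv 0$, observe it is nonnegative, symmetric, and satisfies the triangle inequality, conclude by induction that every $d^{(t)} := \mathcal{F}_t^{\pi_e}(d^{(0)})$ does too, and pass to the limit $d_{\pi_e} = \lim_t d^{(t)}$ (Proposition~\ref{prop:fixedpoint}) to obtain that $d_{\pi_e}$ is nonnegative, symmetric, and satisfies the triangle inequality — i.e., a diffuse metric (the diffuse relaxation being precisely that we do \emph{not} claim $d_{\pi_e}(x;x) = 0$, nor its converse). I would close by noting explicitly why $d_{\pi_e}$ is genuinely diffuse and not an ordinary (pseudo)metric: even for $x_1 = x_2$ the long-term term compares two \emph{independently} sampled rollouts of $\pi_e$, so $d_{\pi_e}(x;x)$ is generally strictly positive whenever the reward along $\pi_e$-trajectories is stochastic or the dynamics are. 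I expect the main obstacle to be the triangle-inequality step for the long-term term: one has to be careful that introducing the intermediate pair $(s_2',a_2')$ is legitimate, i.e., that the three next-state draws can be placed on a common probability space with the correct marginals so that linearity of expectation applies cleanly; this is routine but is the one place where the argument is not a one-line inequality, and it mirrors the standard argument used for the MICo and bisimulation metrics.
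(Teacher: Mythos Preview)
Your proposal is correct and follows essentially the same approach as the paper: the paper also establishes the triangle inequality by induction from $d_0 \equiv 0$, showing that $\mathcal{F}^{\pi_e}$ preserves the triangle inequality and then passing to the limit via Proposition~\ref{prop:fixedpoint}, with the intermediate $(s_3',a_3')$ introduced exactly as you describe. The only cosmetic difference is that the paper argues nonnegativity and symmetry by directly unrolling the recursion rather than via operator preservation, but this is not a substantive distinction; your uniform treatment is arguably cleaner.
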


where a diffuse metric is the same as a psuedo metric (see Definition~\ref{def:psuedo} in Appendix~\ref{app:th_bg}) except that self-distances can be non-zero i.e. it may be true that $d_{\pi_e}(s,a;s,a) > 0$. This fact arises due to the stochasticity in the transition dynamics and action sampling from $\pi_e$. If we assume a deterministic transition function and a deterministic $\pi_e$, $d_{\pi_e}$ will reduce to a pseudo metric, which gives zero self-distance. In practice, we use a sample approximation of the \textsc{rope} operator to estimate $d_{\pieval}$.

Given that $d_{\pieval}$ is well-defined, we have the following theorem that shows why it is useful in the \textsc{ope} context:
\begin{restatable}{theorem}{thmDbound}
\label{thm:dbound}
For any evaluation policy $\pi_e$ and $(s_1,a_1), (s_2, a_2)\in\mathcal{X}$, we have that $|q^{\pi_e}(s_1,a_1) - q^{\pi_e}(s_2,a_2)|\leq d_{\pi_e}(s_1,a_1,;s_2,a_2)$.
\end{restatable}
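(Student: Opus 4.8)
The plan is to show that the function $g(s_1,a_1;s_2,a_2) := |q^{\pi_e}(s_1,a_1) - q^{\pi_e}(s_2,a_2)|$ is dominated pointwise by the fixed point $d_{\pi_e}$ of the \textsc{rope} operator. First I would record two basic facts about $\mathcal{F}^{\pi_e}$: it is \emph{monotone} (if $d \le d'$ pointwise then $\mathcal{F}^{\pi_e}(d) \le \mathcal{F}^{\pi_e}(d')$, which is immediate since the operator adds a fixed short-term term and applies $\gamma \ge 0$ times an order-preserving expectation), and, by Propositions~\ref{prop:contraction} and~\ref{prop:fixedpoint}, $(\mathcal{F}^{\pi_e})^t(d) \to d_{\pi_e}$ for \emph{every} starting point $d \in \mathbb{R}^{\mathcal{X}\times\mathcal{X}}$.

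Next I would establish the key inequality $g \le \mathcal{F}^{\pi_e}(g)$. Using the Bellman equation $q^{\pi_e}(s,a) = r(s,a) + \gamma\, \E_{s'\sim P(\cdot|s,a),\, a'\sim\pi_e}[q^{\pi_e}(s',a')]$ for both arguments and subtracting, the difference $q^{\pi_e}(s_1,a_1) - q^{\pi_e}(s_2,a_2)$ equals $r(s_1,a_1)-r(s_2,a_2)$ plus $\gamma$ times a joint expectation of $q^{\pi_e}(s_1',a_1') - q^{\pi_e}(s_2',a_2')$ under exactly the independent coupling $s_i' \sim P(\cdot|s_i,a_i)$, $a_i' \sim \pi_e(\cdot|s_i')$ used in Definition~\ref{def:operator} — this rewriting is legitimate because $q^{\pi_e}(s_1',a_1')$ does not depend on $(s_2',a_2')$ and vice versa. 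Taking absolute values, then applying the triangle inequality and Jensen's inequality to move the absolute value inside the expectation, yields $|q^{\pi_e}(s_1,a_1) - q^{\pi_e}(s_2,a_2)| \le |r(s_1,a_1)-r(s_2,a_2)| + \gamma\, \E[\,|q^{\pi_e}(s_1',a_1') - q^{\pi_e}(s_2',a_2')|\,]$, which is precisely $\mathcal{F}^{\pi_e}(g)(s_1,a_1;s_2,a_2)$.

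Finally I would iterate. From $g \le \mathcal{F}^{\pi_e}(g)$ and monotonicity, an easy induction gives $g \le \mathcal{F}^{\pi_e}(g) \le (\mathcal{F}^{\pi_e})^2(g) \le \cdots \le (\mathcal{F}^{\pi_e})^t(g)$ for all $t \ge 0$; letting $t \to \infty$ and invoking the convergence $(\mathcal{F}^{\pi_e})^t(g) \to d_{\pi_e}$ from Proposition~\ref{prop:fixedpoint} yields $g \le d_{\pi_e}$, which is exactly the claimed bound.

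I expect the main obstacle to be technical rather than conceptual: one must ensure $g$ is a genuine element of the space $\mathbb{R}^{\mathcal{X}\times\mathcal{X}}$ on which the operator and its fixed-point iteration behave as stated — in particular that $q^{\pi_e}$ is finite (and, for the $L^\infty$ contraction machinery invoked from Proposition~\ref{prop:contraction}, bounded), which holds under the standard bounded-reward assumption — and one must be careful that the coupling obtained by unrolling the two Bellman equations coincides with the product coupling of Definition~\ref{def:operator}, so that the right-hand side genuinely equals $\mathcal{F}^{\pi_e}(g)$ rather than an expectation under some other joint law. The remaining ingredients (monotonicity, the triangle/Jensen step, and the monotone-iteration limit) are routine.
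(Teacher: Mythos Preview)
Your proof is correct and uses the same core ingredients as the paper --- the Bellman expansion of $q^{\pi_e}$, the product coupling that matches Definition~\ref{def:operator}, triangle/Jensen to pass the absolute value inside the expectation, and convergence of the iterates $(\mathcal{F}^{\pi_e})^t$ to $d_{\pi_e}$ --- so the two arguments are essentially the same. The one organizational difference is the direction of the iteration: the paper runs a \emph{co-inductive} argument, starting from the explicit constant upper bound $d_0 \equiv 2\max_{s,a}|r(s,a)|/(1-\gamma)$ and showing that the property $g \le d$ is preserved under $\mathcal{F}^{\pi_e}$ (so $g \le d_t \to d_{\pi_e}$ from above), whereas you start from $g$ itself, establish the sub-fixed-point inequality $g \le \mathcal{F}^{\pi_e}(g)$, and use monotonicity to push $(\mathcal{F}^{\pi_e})^t(g)$ upward to $d_{\pi_e}$. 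Your packaging is slightly cleaner in that it never needs the explicit constant (monotonicity does the work instead), while the paper's version makes the bounded-reward assumption visible at the outset; but the analytic content is identical.
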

Given that our goal is learn representations based on $d_{\pieval}$, Theorem \ref{thm:dbound} implies that whenever $d_{\pieval}$ considers two state-action pairs to be close or have similar representations, they will also have close action-values. In the context of \textsc{ope}, if the distance metric considers two state-action pairs that have \emph{different} action-values to be zero distance apart/have the same representation, then \textsc{fqe} will have to output two different action-values for the same input representation, which inevitably means \textsc{fqe} must be inaccurate for at least one state-action pair. 

\subsection{Learning State-Action Representations with ROPE}

In practice, our goal is to use $d_{\pieval}$ to learn a state-action representation $\phi(s,a) \in \mathbb{R}^d$ such that the distances between these representations matches the distance defined by $d_{\pieval}$. To do so, we follow the approach by \cite{castro_mico_2022} and directly parameterize the value $d_{\pieval}(s_1,a_1; s_2,a_2)$ as follows:
\begin{multline}
  d_{\pieval}(s_1, a_1; s_2, a_2) \approx \tilde{d}_{\omega}(s_1, a_1; s_2, a_2) \coloneqq \frac{||\phi_\omega(s_1,a_1)||_2^2 + ||\phi_\omega(s_2,a_2)||_2^2}{2} \\+ \beta \theta(\phi_\omega(s_1, a_1), \phi_\omega(s_2, a_2)) 
\end{multline}
in which $\phi$ is parameterized by some function approximator whose parameter weights are denoted by $\omega$, $\theta(\cdot, \cdot)$ gives the angular distance between the vector arguments, and $\beta$ is a parameter controlling the weight of the angular distance.
We can then learn the desired $\phi_{\omega}$ through a sampling-based bootstrapping procedure \citep{castro_mico_2022}.
More specifically, the following loss function is minimized to learn the optimal $\omega^*$:
\begin{equation}
    \mathcal{L}_{\text{ROPE}} (\omega) := \E_\mathcal{D}\left[\left(\left|r(s_1,a_1) - r(s_2,a_2)\right| + \gamma \E_{\pieval}[\tilde{d}_{\bar{\omega}}(s_1',a_1'; s_2',a_2')] - \tilde{d}_{\omega}(s_1,a_1; s_2,a_2)\right)^2\right]
\end{equation}
where $\bar{\omega}$ is separate copy of $\omega$ and acts as a target function approximator \citep{mnih_dqn_2015}, which is updated to $\omega$ at a certain frequency. Once $\phi_{\omega^*}$ is obtained using $\mathcal{D}$, we use $\phi_{\omega^*}$  with \textsc{fqe} to perform \textsc{ope} with the same data.
Conceptually, the \textsc{fqe} procedure is unchanged except the learned action-value function now takes $\phi_{\omega^*}(s,a)$ as its argument instead of the state and action directly. 

With \textsc{rope}, state-action pairs are grouped together when they have small pairwise \textsc{rope} distance. Thus, a given group of state-action pairs have similar state-action representations and are behaviorally similar (i.e, have similar rewards and lead to similar future states when following $\pi_e$). Consequently, these state-action pairs will have a similar action-value, which allows data samples from any member of the group to learn the group’s shared action-value as opposed to learning the action-value for each state-action pair individually. This generalized usage of data leads to more data-efficient learning. We refer the reader to Appendix \ref{sec:pseudocode} for \textsc{rope}'s pseudo-code.

\subsection{Action-Value and Policy Value Bounds}

We now theoretically analyze how \textsc{rope} state-action representations help \textsc{fqe} estimate $\rho(\pieval)$.
For this analysis, we focus on hard groupings where groups of similar state-action pairs are aggregated into one cluster and no generalization is performed across clusters; in practice, we learn state-action representations in which the difference between representations approximates the \textsc{rope} distance between state-action pairs.
Furthermore, for theoretical analysis, we consider exact computation of the \textsc{rope} diffuse metric and of action-values using dynamic programming. First, we present the following lemma. For proofs, refer to Appendix \ref{app:th_results}.
\begin{restatable}{lemma}{lemmaQbound}
\label{lemma:qbound}
Assume the rewards $\mathcal{R}:\sset \times\aset \to \Delta([0,1])$ then given an aggregated \textsc{mdp} $\widetilde{\mathcal{M}} = \langle \widetilde{\sset}, \widetilde{\aset},\widetilde{\mathcal{R}}, \widetilde{P}, \gamma , \tilde{d}_0\rangle$ constructed by aggregating state-actions in an $\epsilon$-neighborhood based on $d_{\pieval}$, and an encoder $\phi: \saset\to\abssaset$ that maps state-actions in $\mathcal{X}$ to these clusters, the action-value for the evaluation policy $\pieval$ in the two \textsc{mdp}s are bounded as:
    \[
        | q^{\pieval}(x) - \tilde{q}^{\pieval}(\phi(x))| \leq \frac{2\epsilon}{(1 - \gamma)}
    \]
\end{restatable}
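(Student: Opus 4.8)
The plan is to bound $\Delta := \lVert q^{\pieval} - \tilde q^{\pieval}\circ\phi\rVert_\infty$ directly, combining the pointwise value bound of Theorem~\ref{thm:dbound} with the self-referential structure of $d_{\pieval}$ (Definition~\ref{def:operator}, Proposition~\ref{prop:fixedpoint}) and a one-step contraction argument. Write $f := \tilde q^{\pieval}\circ\phi\colon \saset\to\mathbb{R}$ for the lifted aggregated value function and let $P^{\pieval}(\cdot\mid s,a)$ be the state--action transition kernel induced by $P$ and $\pieval$, so that $q^{\pieval}(x) = r(x) + \gamma\,\E_{x'\sim P^{\pieval}(\cdot\mid x)}[q^{\pieval}(x')]$. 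Because the rewards lie in $[0,1]$, both $q^{\pieval}$ and $\tilde q^{\pieval}$ are bounded by $\tfrac{1}{1-\gamma}$ and $d_{\pieval}$ is finite, so $\Delta<\infty$ and the manipulations below are valid.

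First I would fix $x\in\saset$, set $c:=\phi(x)$, and unfold the two Bellman equations against the aggregation. Since $\widetilde{\mathcal M}$ is obtained by averaging the reward and transition over each cluster with some weighting $w_c$ (a point mass at a representative state--action pair being the simplest valid choice), expanding $\tilde q^{\pieval}(c)$ through this construction gives $f(x) = \E_{\bar x\sim w_c}\big[r(\bar x) + \gamma\,\E_{x'\sim P^{\pieval}(\cdot\mid\bar x)}[f(x')]\big]$. Subtracting this from the Bellman equation for $q^{\pieval}(x)$ and inserting $\E_{x'\sim P^{\pieval}(\cdot\mid\bar x)}[q^{\pieval}(x')]$ as an intermediate term splits the difference, for each $\bar x$ in the cluster, into three pieces: a reward gap $r(x)-r(\bar x)$; a ``two next-state distributions'' gap $\gamma\big(\E_{x'\sim P^{\pieval}(\cdot\mid x)}[q^{\pieval}(x')]-\E_{x'\sim P^{\pieval}(\cdot\mid\bar x)}[q^{\pieval}(x')]\big)$; and a residual $\gamma\,\E_{x'\sim P^{\pieval}(\cdot\mid\bar x)}[q^{\pieval}(x')-f(x')]$ of magnitude at most $\gamma\Delta$.

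The crux is the middle gap. I would rewrite it as $\gamma\,\E_{x_1'\sim P^{\pieval}(\cdot\mid x),\,x_2'\sim P^{\pieval}(\cdot\mid\bar x)}[q^{\pieval}(x_1')-q^{\pieval}(x_2')]$ over the \emph{independent} coupling of the two next-pair marginals, apply Theorem~\ref{thm:dbound} pointwise to bound its magnitude by $\gamma\,\E[d_{\pieval}(x_1';x_2')]$, and then invoke Proposition~\ref{prop:fixedpoint}: since $d_{\pieval}$ is the fixed point of $\mathcal{F}^{\pieval}$, this independent-coupling expectation equals exactly $\tfrac{1}{\gamma}\big(d_{\pieval}(x;\bar x)-|r(x)-r(\bar x)|\big)$. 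Substituting back, the $|r(x)-r(\bar x)|$ terms cancel and I am left with the one-step recursion
\[
 |q^{\pieval}(x) - f(x)| \;\le\; \E_{\bar x\sim w_c}\big[d_{\pieval}(x;\bar x)\big] + \gamma\Delta .
\]
Finally, since $c$ aggregates an $\epsilon$-neighborhood and $d_{\pieval}$ is a diffuse metric (so the triangle inequality holds), every $\bar x$ with $\phi(\bar x)=c=\phi(x)$ satisfies $d_{\pieval}(x;\bar x)\le 2\epsilon$; hence $|q^{\pieval}(x)-f(x)|\le 2\epsilon+\gamma\Delta$ for all $x$, and taking the supremum yields $\Delta\le 2\epsilon+\gamma\Delta$, i.e.\ $\Delta\le\tfrac{2\epsilon}{1-\gamma}$, which is the claim.

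I expect the main obstacle to be the middle step: one must track carefully that the next state--action pairs in $\mathcal{F}^{\pieval}$ are drawn from the \emph{product} of the two marginals (Definition~\ref{def:operator}), so that the reward-gap cancellation against the recursive definition of $d_{\pieval}$ is an exact identity rather than a loose inequality -- this exactness is what yields the clean $\tfrac{1}{1-\gamma}$ factor instead of a weaker bound. A secondary technical point is pinning down the averaging construction of $\widetilde{\mathcal M}$ precisely enough that the Step-2 identity for $f(x)$ holds for the weighting $w_c$ actually used, and confirming the factor $2$ comes from the $\epsilon$-neighborhood diameter via the triangle inequality; the $[0,1]$ reward assumption is exactly what makes $\Delta$ (and $d_{\pieval}$) finite so the final rearrangement is legitimate.
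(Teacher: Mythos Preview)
Your proposal is correct and follows essentially the same argument as the paper: a one-step Bellman expansion, splitting into reward gap, next-state-distribution gap, and a residual bounded by $\gamma\Delta$, then bounding the middle term via Theorem~\ref{thm:dbound} under the independent coupling and identifying the result with the fixed-point definition of $d_{\pieval}$ to obtain the recursion $\Delta\le 2\epsilon+\gamma\Delta$. The only cosmetic difference is that the paper routes the middle bound through the dual Wasserstein formulation and the \L{}ukaszyk--Karmowski upper bound before arriving at the same independent-coupling expectation, whereas you go there directly; the reward-term cancellation you highlight is exactly the paper's step (k).
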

Lemma \ref{lemma:qbound} states that the error in our estimate of the true action-value function of $\pieval$ is upper-bounded by the clustering radius of $d_{\pieval}$, $\epsilon$. Lemma \ref{lemma:qbound} then leads us to our main result:
\begin{restatable}{theorem}{thmJbound}
    \label{thm:jbound}
    Under the same conditions as Lemma \ref{lemma:qbound}, the difference between the expected fitted q-evaluation (\textsc{fqe}) estimate and the expected estimate of \textsc{fqe}+\textsc{rope} is bounded:
    \[
        \big| \E_{s_0,a_0\sim\pieval}[q^{\pieval}(s_0,a_0)] - \E_{s_0,a_0\sim\pieval}[q^{\pieval}(\phi(s_0,a_0))]\big| \leq \frac{2\epsilon}{(1 - \gamma)}
    \]
\end{restatable}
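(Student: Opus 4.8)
The plan is to obtain Theorem~\ref{thm:jbound} as an essentially immediate consequence of Lemma~\ref{lemma:qbound}: the desired inequality is just the expectation-level version of the pointwise bound the lemma already supplies, and both expectations in the statement are taken under the same law, namely $s_0 \sim d_0$, $a_0 \sim \pieval$.

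First I would make explicit the identification that the quantity written $q^{\pieval}(\phi(s_0,a_0))$ in the statement denotes $\tilde{q}^{\pieval}(\phi(s_0,a_0))$, the action-value of $\pieval$ in the aggregated \textsc{mdp} $\widetilde{\mathcal{M}}$ of Lemma~\ref{lemma:qbound}, evaluated at the cluster containing $(s_0,a_0)$ (this is precisely what \textsc{fqe}+\textsc{rope} computes under exact dynamic programming). Since both terms are averaged against the same distribution over $(s_0,a_0)$, linearity of expectation lets me rewrite the left-hand side as $\big| \E_{s_0,a_0\sim\pieval}[\, q^{\pieval}(s_0,a_0) - \tilde{q}^{\pieval}(\phi(s_0,a_0)) \,] \big|$. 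Applying Jensen's inequality to the convex map $|\cdot|$ (equivalently, the triangle inequality for integrals) pushes the absolute value inside, bounding this by $\E_{s_0,a_0\sim\pieval}\big[\, | q^{\pieval}(s_0,a_0) - \tilde{q}^{\pieval}(\phi(s_0,a_0)) | \,\big]$. Finally, Lemma~\ref{lemma:qbound} bounds the integrand by the constant $\frac{2\epsilon}{1-\gamma}$ uniformly over all $(s_0,a_0)\in\mathcal{X}$, so the expectation of that constant is again $\frac{2\epsilon}{1-\gamma}$, which is the claim.

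I do not anticipate a real obstacle here: all the substantive work — relating the reward and transition structure of $\widetilde{\mathcal{M}}$ to the $\epsilon$-aggregation induced by $d_{\pieval}$, and propagating the per-step discrepancy of at most $2\epsilon$ through the Bellman recursion to produce the $\frac{1}{1-\gamma}$ horizon factor — is already done in the proof of Lemma~\ref{lemma:qbound}. The one point that deserves a sentence of care is that $\tilde{q}^{\pieval}$ is well-defined, i.e. that the hard grouping yields a consistent aggregated reward $\widetilde{\mathcal{R}}$ and transition kernel $\widetilde{P}$, but this is part of the construction assumed in the lemma's hypotheses rather than something to re-establish. Hence the proof reduces to the two displayed manipulations above: linearity of expectation, then Jensen, then the pointwise bound of Lemma~\ref{lemma:qbound}.
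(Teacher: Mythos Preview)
Your proposal is correct and matches the paper's own proof essentially step for step: combine the two expectations by linearity, apply Jensen's inequality to move the absolute value inside, and then invoke the pointwise bound of Lemma~\ref{lemma:qbound}. The paper also silently uses the identification $q^{\pieval}(\phi(\cdot))=\tilde q^{\pieval}(\phi(\cdot))$ that you make explicit.
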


Theorem \ref{thm:jbound} tells us that the error in our estimate of $\rho(\pieval)$ is upper-bounded by the size of the clustering radius $\epsilon$. 
The implication is that grouping state-action pairs according to the \textsc{rope} diffuse metric enables us to upper bound error in the \textsc{ope} estimate.
At an extreme, if we only group state-action pairs with \textit{zero} \textsc{rope} distance together then we obtain zero absolute error meaning that the action-value function for the aggregated \textsc{mdp} is able to realize the action-value function of the original \textsc{mdp}.

\section{Empirical Study}
\label{sec:empirical_study}

In this section, we present an empirical study of \textsc{rope} designed to answer the following questions:
\begin{enumerate}[topsep=-1pt,itemsep=0pt,leftmargin=6mm]
    \item Does \textsc{rope} group state-actions that are behaviorally similar according to $q^{\pieval}$?
    \item Does \textsc{rope} improve the data-efficiency of \textsc{fqe} and achieve lower \textsc{ope} error than other \textsc{ope}-based representation methods?
    \item How sensitive is \textsc{rope} to hyperparameter tuning and  extreme distribution shifts?
\end{enumerate}

\subsection{Empirical Set-up}

We now describe the environments and datasets used in our experiments.

\noindent
\textbf{Didactic Domain.} We provide intuition about \textsc{rope} on our gridworld domain. In this tabular and deterministic environment, an agent starts from the bottom left of a $3\times3$ grid and moves to the terminal state at the top right. The reward function is the negative of the Manhattan distance from the top right. $\pieval$ stochastically moves up or right from the start state and then deterministically moves towards the top right, and moves deterministically right when it is in the center. The behavior policy $\pi_b$  acts uniformly at random in each state. We set $\gamma =0.99$.

\noindent
\textbf{High-Dimensional Domains.} We conduct our experiments on five domains: HumanoidStandup, Swimmer, HalfCheetah, Hopper, and Walker2D, each of which has  $393$, $59$, $23$, $14$, and $23$ as the native state-action dimension respectively. We set $\gamma =0.99$. 

\noindent
\textbf{Datasets.} We consider $12$ different datasets: $3$ custom datasets for HumanoidStandup, Swimmer, and HalfCheetah; and $9$ \textsc{d4rl} datasets \citep{fu_d4rl_2020} for HalfCheetah, Hopper, and Walker2D. Each of the three custom datasets is of size $100$K transition tuples with an equal split between samples generated by $\pieval$ and a lower performing behavior policy. For the \textsc{d4rl} datasets, we consider three types for each domain: random, medium, medium-expert, which consists of samples from a random policy, a lower performing policy, and an equal split between a lower performing and expert evaluation policy ($\pieval$). Each dataset has $1$M transition tuples. Note that due to known discrepancies between environment versions and state-action normalization procedures \footnote{\url{https://github.com/Farama-Foundation/D4RL/tree/master}}, we generate our own datasets using the publicly available policies\footnote{\url{https://github.com/google-research/deep_ope}} instead of using the publicly available datasets. See Appendix~\ref{app:empirical_res} for the details on the data generation procedure.

\noindent
\textbf{Evaluation Protocol.} Following \cite{fu_opebench_2021, voloshin_opebench_2021} and to make error magnitudes more comparable across domains, we use relative mean absolute error (\textsc{rmae}). \textsc{rmae} is computed using a single dataset $\mathcal{D}$ and by generating $n$ seeds: $\text{\textsc{rmae}}_i (\hat{\rho}(\pi_e)) := \frac{\left|\rho(\pi_e) - \hat{\rho_i}(\pi_e)\right|}{\left|\rho(\pi_e) - \rho(\pi_{\text{rand}}) \right)|}$, where  $\hat{\rho_i}(\pi_e)$ is computed using the $i^\text{th}$ seed  and
$\rho(\pi_{\text{rand}})$ is the value of a random policy. We then report the Interquartile Mean (\textsc{iqm}) \citep{agarwal_precipice_2021} of these $n$ \textsc{rmae}s.

\noindent
\textbf{Representation learning + OPE.} Each algorithm is given access to the same fixed dataset to learn $q^{\pieval}$. The representation learning algorithms (\textsc{rope} and baselines) use this dataset to first pre-train a representation encoder, which is then used to transform the fixed dataset. This transformed dataset is then used to estimate $q^{\pieval}$.
Vanilla \textsc{fqe} directly operates on the original state-action pairs.

\subsection{Empirical Results}

We now present our main empirical results.

\subsubsection{Designing ROPE: A State-Action Behavioral Similarity Metric for OPE}
\label{sec:gw_sa_metrics}

The primary consideration when designing a behavioral similarity distance function for \textsc{ope}, and specifically, for \textsc{fqe} is
that the distance function should not consider two state-action pairs with different $q^{\pieval}$ values to be the same. Suppose we have a distance function $d$, two state-actions pairs, $(s_1, a_1)$ and $(s_2, a_2)$, and their corresponding $q^{\pieval}$. Then if $d(s_1, a_1; s_2, a_2)= 0$, it should be the case that $q^{\pieval}(s_1, a_1)= q^{\pieval}(s_2, a_2)$. On the other hand, if $d(s_1, a_1; s_2, a_2)= 0$ but $q^{\pieval}(s_1, a_1)$ and $q^{\pieval}(s_2, a_2)$ are very different, then \textsc{fqe} will have to output \emph{different} action-values for the \emph{same} input, thus inevitably making \textsc{fqe} inaccurate on these state-action pairs.

While there have been a variety of proposed behavioral similarity metrics for control, they do not always satisfy the above criterion for \textsc{ope}. We consider various state-action behavioral similarity metrics. Due to space constraints, we show results only for: on-policy \textsc{mico} \citep{castro_mico_2022} $d_{\pi_b}(s_1, a_1; s_2, a_2) := |r(s_1,a_1)-r(s_2,a_2)| + \gamma\E_{a_1',a_2'\sim \pi_b}[d_{\pi_b}((s_1', a_1'), (s_2', a_2'))]$, which groups state-actions that have equal $q^{\pi_b}$, and defer results for the random-policy metric \citep{dadashi_ploff_2021} and policy similarity metric \citep{agarwal_psm_2021} to the Appendix \ref{app:empirical_res}.

\begin{figure*}[hbtp]
    \centering
        \subfigure[Action-values of $\pi_e$]{\label{fig:gw_act_vals}\includegraphics[scale=0.35]{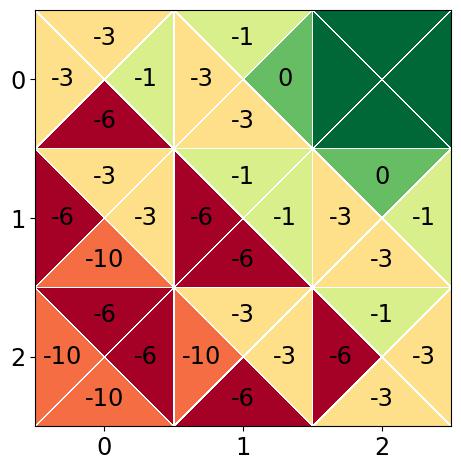}}
        \subfigure[\textsc{rope} groupings]{\label{fig:gw_rope_grouping}\includegraphics[scale=0.35]{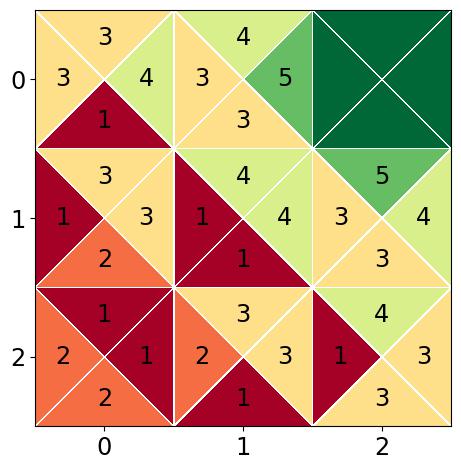}}
        \subfigure[On-policy \textsc{mico} groupings]{\label{fig:gw_mico_grouping}\includegraphics[scale=0.35]{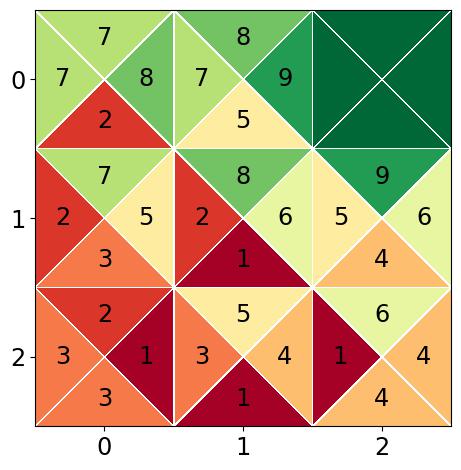}}
    \caption{\footnotesize Figure (a): $q^{\pi_e}$; center number in each triangle is the $q^{\pieval}$ for that state-action pair. Center and right: group clustering according to \textsc{rope} (ours; Figure (b)) and on-policy \textsc{mico} (Figure (c)) (center number in each triangle is group ID). Two state-action pairs are grouped together if their distance according to the specific metric is $0$. The top right cell is blank since it is the terminal state and is not grouped.}
    \label{fig:gw_grouping_visual}
\end{figure*}

We visualize how these different metrics group state-action pairs in our gridworld example where a state-action is represented by a triangle in the grid (Figure~\ref{fig:gw_grouping_visual}). The gridworld is $3\times3$ grid represented by $9$ squares (states), each having $4$ triangles (actions). A numeric entry in a given triangle represents either: 1) the action-value of that state-action pair for $\pieval$ (Figure \ref{fig:gw_act_vals}) or 2) the group ID of the given state-action pair (Figures \ref{fig:gw_rope_grouping} and \ref{fig:gw_mico_grouping}). Along with the group ID, each state-action pair is color-coded indicating its group. In this tabular domain, we compute the distances using dynamic programming with expected updates.

The main question we answer is: does a metric group two state-action pairs together when they have the same action-values under $\pi_e$? In Figure \ref{fig:gw_act_vals} we see the $q^{\pi_e}$ values for each state-action where all state-action pairs that have the same action-value are grouped together under the same color (e.g. all state-action pairs with $q^{\pi_e}(\cdot,\cdot) = -6$ belong to the same group (red)). In Figure \ref{fig:gw_rope_grouping}, we see that \textsc{rope}'s grouping is exactly aligned with the grouping in Figure \ref{fig:gw_act_vals} i.e. state-action pairs that have the same action-values have the same group ID and color. On the other hand, from Figure \ref{fig:gw_mico_grouping}, we see that on-policy \textsc{mico} misaligns with Figure \ref{fig:gw_act_vals}. In Appendix~\ref{app:empirical_res}, we also see similar misaligned groupings using the random-policy metric \cite{dadashi_ploff_2021} and policy similarity metric \cite{agarwal_psm_2021}. The misalignment of these metrics is due to the fact that they do not group state-action pairs togethers that share $q^{\pieval}$ values.

\subsubsection{Deep OPE Experiments}

We now consider \textsc{ope} in challenging, high dimensional continuous state and action space domains. We compare the \textsc{rmae} achieved by an \textsc{ope} algorithm using \emph{different} state-action representations as input. If algorithm A achieves lower error than algorithm B, then A is more data-efficient than B.

\begin{figure*}[hbtp]
    \centering
        \subfigure[Swimmer]{\includegraphics[scale=0.1325]{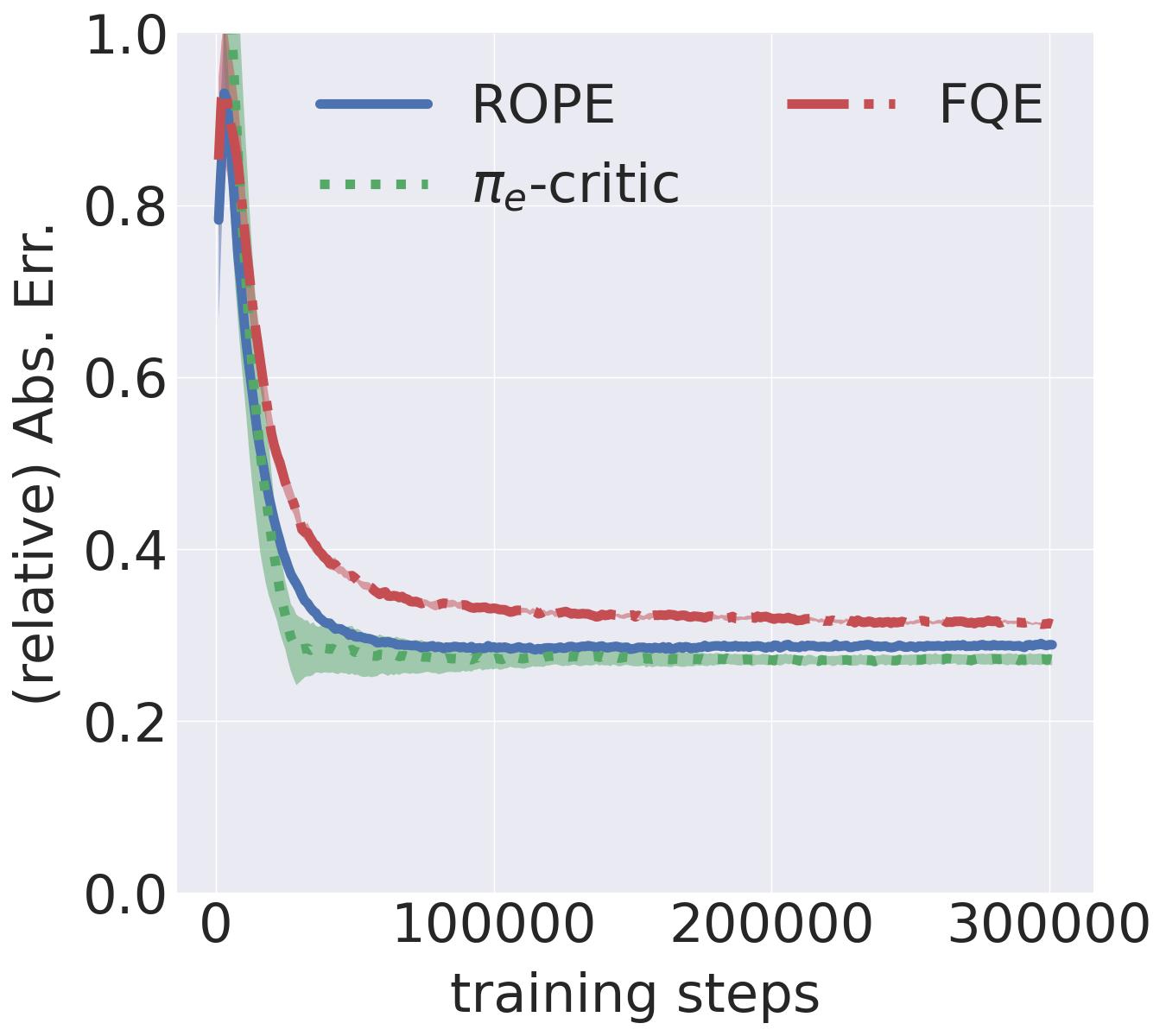}}
        \subfigure[HalfCheetah]{\includegraphics[scale=0.1325]{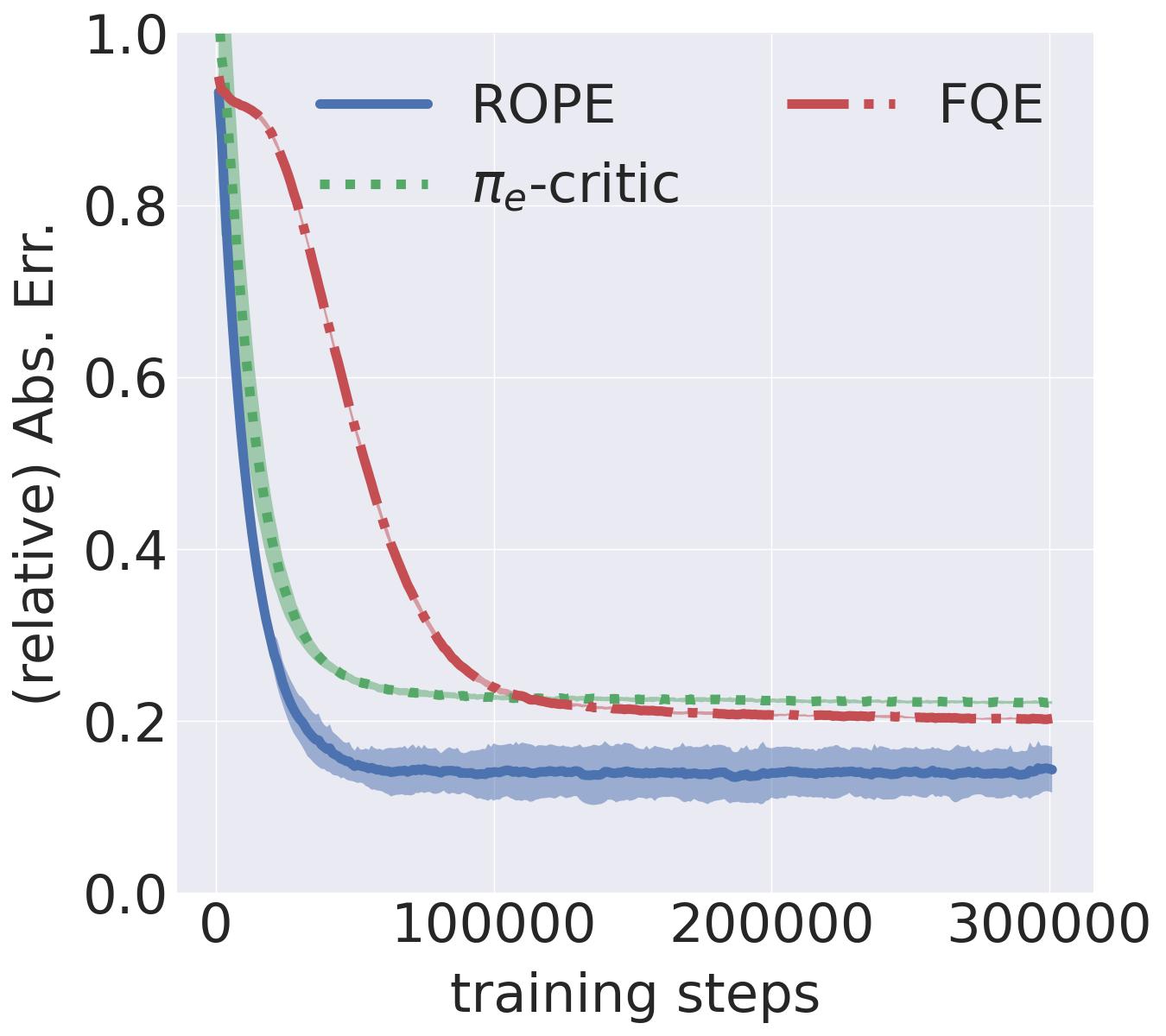}}
        \subfigure[HumanoidStandup]{\includegraphics[scale=0.1325]{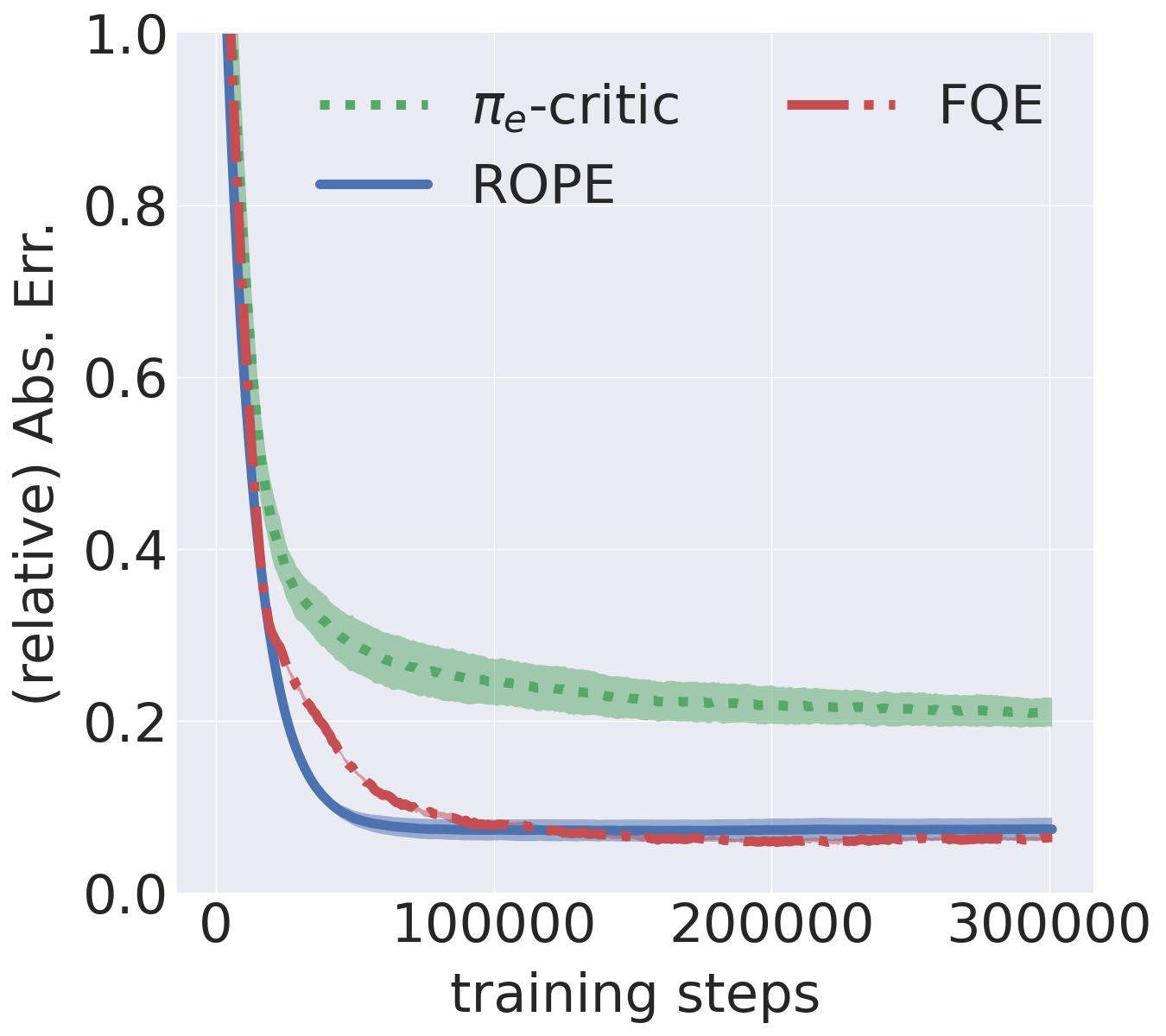}}
    \caption{\footnotesize \textsc{rmae} vs. training iterations of \textsc{fqe} on the custom datasets. \textsc{iqm} of errors for each domain were computed over $20$ trials with $95\%$ confidence intervals. Lower is better.}
    \label{fig:main_custom}
\end{figure*}

\paragraph{Custom Dataset Results}
For the custom datasets, we consider mild distribution shift scenarios, which are typically easy for \textsc{ope} algorithms. In Figure~\ref{fig:main_custom}, we report the \textsc{rmae} vs. training iterations of \textsc{fqe} with different state-action features fed into \textsc{fqe}. We consider three different state-action features: 1) \textsc{rope} (ours), 2) $\pi_e$-critic, which is a representation outputted by the penultimate layer of the action-value function of $\pieval$ \citep{wang_instabilities_2021}, and 3) the original state-action features. Note that there is no representation \emph{learning} involved for 2) and 3). We set the learning rate for all neural network training (encoder and \textsc{fqe}) to be the same, hyperparameter sweep \textsc{rope} across $\beta$ and the dimension of \textsc{rope}'s encoder output, and report the lowest \textsc{rmae} achieved at the end of \textsc{fqe} training. For hyperparameter sensitivity results, see Section \ref{sec:ablations}. For training details, see Appendix~\ref{app:empirical_res}.

We find that \textsc{fqe} converges to an estimate of $\rho(\pieval)$ when it is fed these different state-action features. We also see that when \textsc{fqe} is fed features from \textsc{rope} it produces more data-efficient \textsc{ope} estimates than vanilla \textsc{fqe}. Under these mild distribution shift settings, $\pieval$-critic also performs well since the output of the penultimate layer of $\pieval$'s action-value function should have sufficient information to accurately estimate the action-value function of $\pieval$.

\paragraph{D4RL Dataset Results}
On the \textsc{d4rl} datasets, we analyze the final performance achieved by representation learning + \textsc{ope} algorithms on datasets with varying distribution shift. In addition to the earlier baselines, we evaluate Bellman Complete Learning Representations (\textsc{bcrl}) \citep{chang_learning_2022}, which learns linearly Bellman complete representations and produces an \textsc{ope} estimate with Least-Squares Policy Evaluation (\textsc{lspe}) instead of \textsc{fqe}. We could not evaluate $\pieval$-critic since the \textsc{d4rl} $\pieval$ critics were unavailable\footnote{\url{https://github.com/google-research/deep_ope}}. For \textsc{bcrl}, we use the publicly available code \footnote{\url{https://github.com/CausalML/bcrl}}. For a fair comparison, we hyperparameter tune the representation output dimension and encoder architecture size of \textsc{bcrl}. We hyperparameter tune \textsc{rope} the same way as done for the custom datasets. We set the learning rate for all neural network training (encoder and \textsc{fqe}) to be the same. In Table \ref{table:main_d4rl}, we report the lowest \textsc{rmae} achieved at the end of the \textsc{ope} algorithm's training. For the corresponding training graphs, see Appendix~\ref{app:empirical_res}.

\begin{table*}[t]\centering
\begin{tabular}{@{}llll@{}} \toprule
  & \multicolumn{3}{c}{Algorithm} \\ 
\cmidrule(r){2-4}
Dataset & \textsc{bcrl} & \textsc{fqe}  & \textsc{rope} (ours) \\ \midrule
HalfCheetah-random  & $0.979 \pm 0.000$ & \pmb{$0.807 \pm 0.010$}   &$0.990 \pm 0.001$ \\
HalfCheetah-medium  & $0.830 \pm 0.007$ & $0.770 \pm 0.007$ & \pmb{$0.247 \pm 0.001$}\\
HalfCheetah-medium-expert  & $0.685 \pm 0.013$ & $0.374 \pm 0.001$   &\pmb{$0.078 \pm 0.043$} \\
\midrule
Walker2D-random  & $1.022 \pm 0.001$ & Diverged  &\pmb{$0.879 \pm 0.009$}\\
Walker2D-medium & $0.953 \pm 0.019$ & Diverged  &\pmb{$0.462 \pm 0.093$}\\
Walker2D-medium-expert  & $0.962 \pm 0.037$ & Diverged  & \pmb{$0.252 \pm 0.126$}\\
\midrule
Hopper-random  & Diverged & Diverged  & \pmb{$0.680 \pm 0.05 $}\\
Hopper-medium & $61.223 \pm 92.282$ & Diverged & \pmb{$0.208 \pm 0.048$}\\
Hopper-medium-expert  & $ 9.08 \pm 4.795$ & Diverged  & \pmb{$0.192 \pm 0.055$}\\
\bottomrule
\end{tabular}
\caption{\footnotesize Lowest \textsc{rmae} achieved by algorithm on \textsc{d4rl} datasets. \textsc{iqm} of errors for each domain were computed over $20$ trials with $95\%$ confidence intervals. Algorithms that diverged had a significantly high final error and/or upward error trend (see Appendix~\ref{app:empirical_res} for training curves). Lower is better.} 
\label{table:main_d4rl}
\end{table*}

We find that \textsc{rope} improves the data-efficiency of \textsc{fqe} substantially across varying distribution shifts. \textsc{bcrl} performs competitively, but its poorer \textsc{ope} estimates compared to \textsc{rope} is unsurprising since it is not designed for data-efficiency. It is also known that \textsc{bcrl} may produce less accurate \textsc{ope} estimates compared to \textsc{fqe} \citep{chang_learning_2022}. \textsc{fqe} performs substantially worse on some datasets; however, it is known that \textsc{fqe} can diverge under extreme distribution shift \citep{wang_fqedivergence_2020,wang_instabilities_2021}. It is interesting, however, that \textsc{rope} is robust in these settings. We observe this robustness across a wide range of hyperparameters as well (see Section \ref{sec:ablations}). We also find that when there is low diversity of rewards in the batch (for example, in the random datasets), it is more likely that the short-term distance component of \textsc{rope} is close to $0$, which can result in a representation collapse.

\subsubsection{Ablations}
\label{sec:ablations}
Towards a deeper understanding of \textsc{rope}, we now present an ablation study of \textsc{rope}.

\begin{figure*}[hbtp]
    \centering
        \includegraphics[scale=0.13]{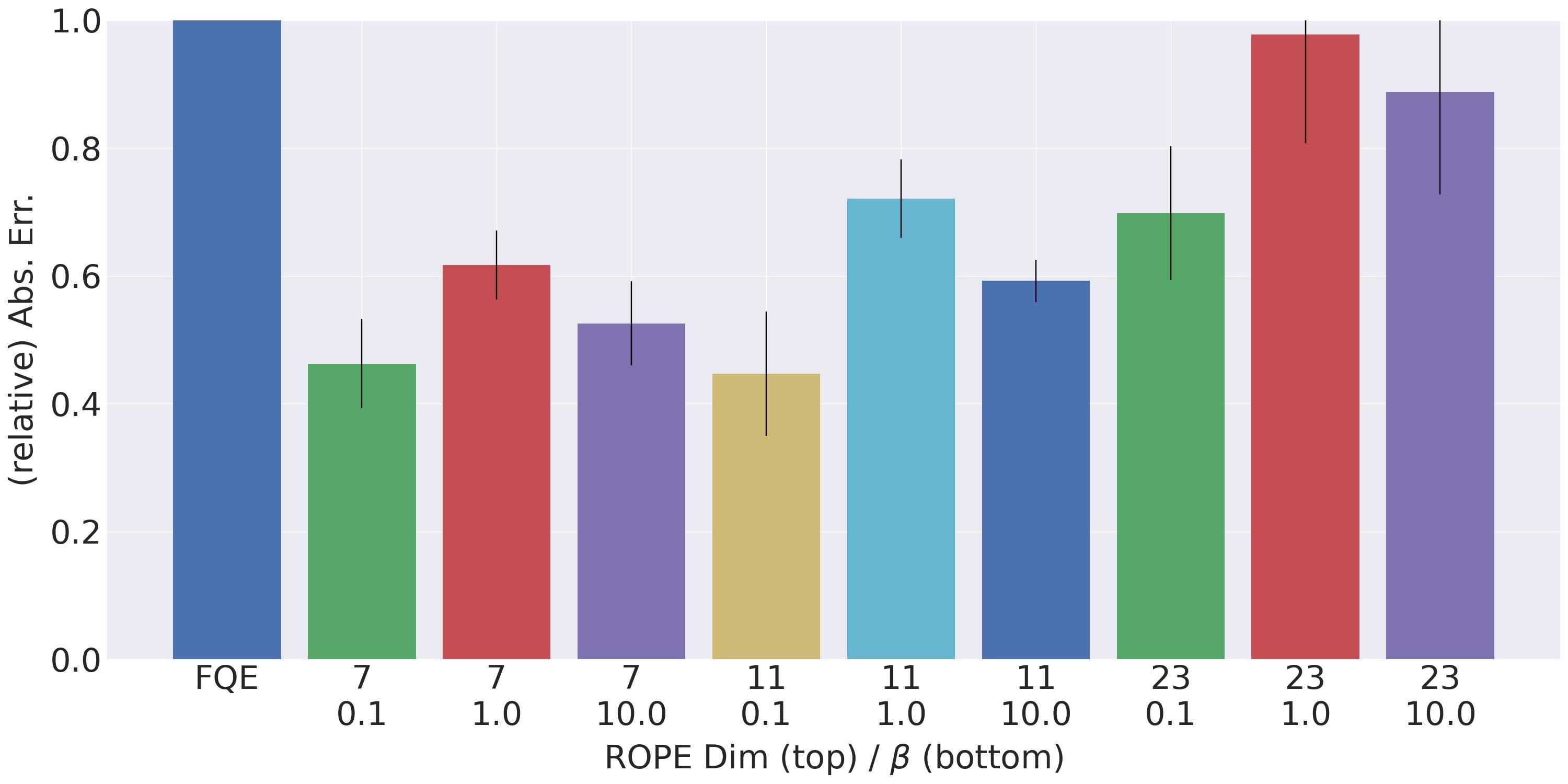}
    \caption{\footnotesize Hyperparameter sensitivity. \textsc{fqe} vs. \textsc{rope} when varying \textsc{rope}'s encoder output dimension (top) and $\beta$ (bottom) on the Walker2D-medium \textsc{d4rl} dataset. \textsc{iqm} of errors are computed over $20$ trials with $95\%$ confidence intervals. Lower is better.}
    \label{fig:hparam}
\end{figure*}

\paragraph{Hyperparameter Sensitivity} In \textsc{ope}, hyperparameter tuning with respect to \textsc{rmae} is difficult since $\rho(\pieval)$ is unknown in practice \citep{paine_hparam_2020}. Therefore, we need \textsc{ope} algorithms to not only produce accurate \textsc{ope} estimates, but also to be robust to hyperparameter tuning. Specifically, we investigate whether \textsc{rope}'s representations produce more data-efficient \textsc{ope} estimates over \textsc{fqe} across \textsc{rope}'s hyperparameters. In this experiment, we set the action-value function's learning rate to be the same for both algorithms. The hyperparameters for \textsc{rope} are: 1) the output dimension of the encoder and 2) $\beta$, the weight on the angular distance between encodings. We plot the results in Figure \ref{fig:hparam} and observe that \textsc{rope} is able to produce substantially more data-efficient estimates compared to \textsc{fqe} for a wide range of its hyperparameters on the Walker2D-medium dataset, where \textsc{fqe} diverged (see Table \ref{table:main_d4rl}). While it is unclear what the optimal hyperparameters should be, we find similar levels of robustness on other datasets as well (see Appendix~\ref{app:empirical_res}).
\begin{figure*}[hbtp]
    \centering
        \subfigure[Hopper-random]{\includegraphics[scale=0.225]{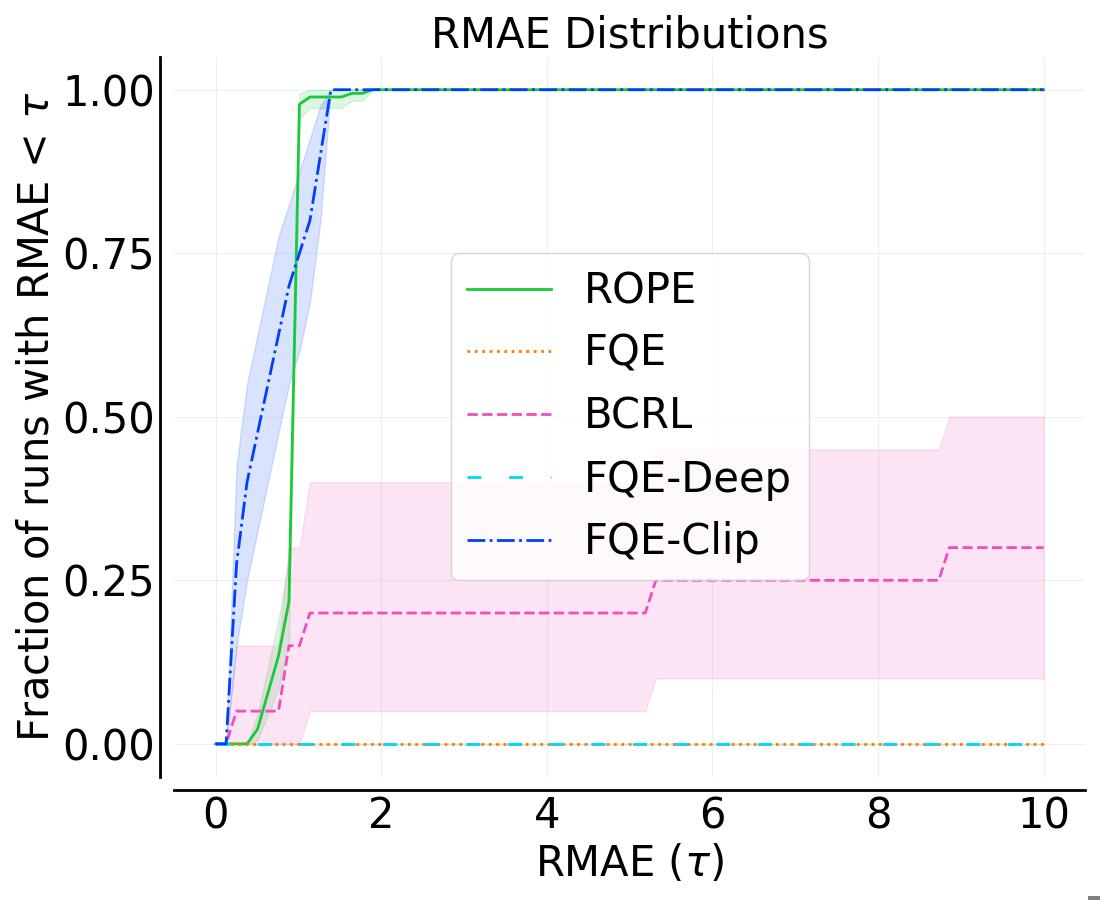}}
        \subfigure[Hopper-medium]{\includegraphics[scale=0.225]{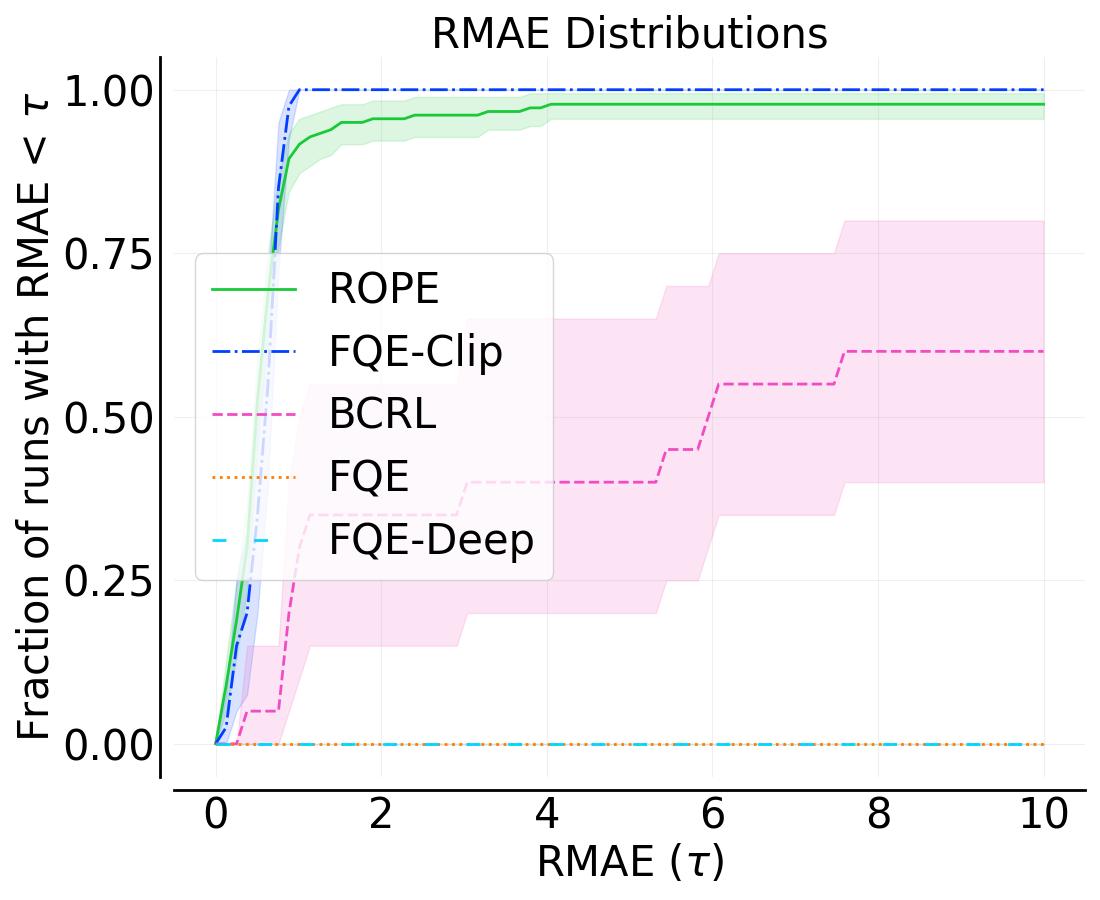}}
    \caption{\footnotesize \textsc{rmae} distributions across all runs and hyperparameters for each algorithm, resulting in $\geq 20$ runs for each algorithm. The shaded region is a $95\%$ confidence interval. Larger area under the curve is better. For visualization, we cut off the horizontal axis at $10$ \textsc{rmae}. \textsc{fqe} and \textsc{fqe-deep} are flat at $0$ i.e. neither had runs that produced an error less than $10$.}
    \label{fig:divergence}
\end{figure*}
\paragraph{ROPE Representations Mitigate FQE Divergence}
It has been shown theoretically \citep{wang_fqedivergence_2020} and empirically \citep{wang_instabilities_2021} that under extreme distribution shift, \textsc{fqe} diverges i.e. it produces \textsc{ope} estimates that have arbitrarily large error. In Table~\ref{table:main_d4rl}, we also see similar results where \textsc{fqe} produces very high error on some datasets. \textsc{fqe} tends to diverge due to the deadly triad \citep{sutton_rlbook_2018}: 1) off-policy data, 2) bootstrapping, and 3) function approximation.

A rather surprising but encouraging result that we find is that even though \textsc{rope} faces the deadly triad, it produces representations that \emph{significantly} mitigate \textsc{fqe}'s divergence across a large number of trials and hyperparameter variations. To investigate how much \textsc{rope} aids convergence, we provide the performance profile\footnote{\url{https://github.com/google-research/rliable/tree/master}} \citep{agarwal_precipice_2021} based on the \textsc{rmae} distribution plot in Figure~\ref{fig:divergence}. Across all trials and hyperparameters, we plot the fraction of times an algorithm achieved an error less than some threshold. In addition to the earlier baselines, we also plot the performance of 1) \textsc{fqe-clip} which is \textsc{fqe} but whose bootstrapping targets are clipped between $[\frac{r_{\text{min}}}{1 - \gamma}, \frac{r_{\text{max}}}{1 - \gamma}]$, where $r_\text{min}$ and $r_{\text{max}}$ are the minimum and maximum rewards in the fixed dataset; and 2) \textsc{fqe-deep}, which is regular \textsc{fqe} but whose action-value function network is double the capacity of \textsc{fqe} (see Appendix~\ref{app:empirical_res} for specifics). 

From Figure~\ref{fig:divergence}, we see that nearly $\approx 100\%$ of the runs of \textsc{rope} achieve an \textsc{rmae} of $\leq 2$, while none of the \textsc{fqe} and \textsc{fqe-deep} runs produce even $\leq 10$ \textsc{rmae}. The failure of \textsc{fqe-deep} suggests that the extra capacity \textsc{rope} has over \textsc{fqe} (since \textsc{rope} has its own neural network encoder) is insufficient to explain why \textsc{rope} produces accurate \textsc{ope} estimates. We also find that in order to use \textsc{fqe} with the native state-action representations, it is necessary to use domain knowledge and clip the bootstrapped target. While \textsc{fqe-clip} avoids divergence, it is very unstable during training (see Appendix~\ref{app:empirical_res}). \textsc{rope}'s ability to produce stable learning in \textsc{fqe} without any clipping is promising since it suggests that it is possible to improve the robustness of \textsc{fqe} if an appropriate representation is learned.

\section{Limitations and Future Work}
In this work, we showed that \textsc{rope} was able to improve the data-efficiency of \textsc{fqe} and produce lower-error \textsc{ope} estimates than other \textsc{ope}-based representations. Here, we highlight limitations and opportunities for future work. A limitation of \textsc{rope} and other bisimulation-based metrics is that if the diversity of rewards in the dataset is low, they are susceptible to representation collapse since the short-term distance is close to $0$. Further investigation is needed to determine how to overcome this limitation. Another very interesting future direction is to understand why \textsc{rope}'s representations significantly mitigated \textsc{fqe}'s divergence. A starting point would be to explore potential connections between \textsc{rope} and Bellman complete representations \citep{csaba_completeness_2005} and other forms of representation regularizers for \textsc{fqe}\footnote{\url{https://offline-rl-neurips.github.io/2021/pdf/17.pdf}}.

\section{Conclusion}
In this paper we studied the challenge of pre-training representations to increase the data efficiency of the \textsc{fqe} \textsc{ope} estimator. Inspired by work that learns state similarity metrics for control, we introduced \textsc{rope}, a new diffuse metric for measuring behavioral similarity between state-action pairs for \textsc{ope} and used \textsc{rope} to learn state-action representations using available offline data. We theoretically showed that \textsc{rope}: 1) bounds the difference between the action-values between different state-action pairs and 2) results in bounded error between the value of $\pieval$ according to the ground action-value and the action-value function that is fed with \textsc{rope} representations as input. We empirically showed that \textsc{rope} boosts the data-efficiency of \textsc{fqe} and achieves lower \textsc{ope} error than other \textsc{ope}-based representation learning algorithms. Finally, we conducted a thorough ablation study and showed that \textsc{rope} is robust to hyperparameter tuning and \emph{significantly} mitigates \textsc{fqe}'s divergence, which is a well-known challenge in \textsc{ope}. To the best of our knowledge, our work is the first  that successfully uses representation learning to improve the data-efficiency of \textsc{ope}.

\section*{Remarks on Negative Societal Impact}
Our work is largely focused on studying fundamental \textsc{rl} research questions, and thus we do not see any immediate negative societal impacts.  The aim of our work is to enable effective \textsc{ope} in many real world domains. Effective \textsc{ope} means that a user can estimate policy performance prior to deployment which can help avoid deployment of poor policies and thus positively impact society.

\section*{Acknowledgments}

Thanks to Adam Labiosa and the anonymous reviewers for feedback that greatly improved our work. Support for this research was provided by American Family Insurance through a research partnership with the University of Wisconsin—Madison’s Data Science Institute.

\bibliographystyle{plainnat}

\bibliography{sources}

\begin{thebibliography}{48}
\providecommand{\natexlab}[1]{#1}
\providecommand{\url}[1]{\texttt{#1}}
\expandafter\ifx\csname urlstyle\endcsname\relax
  \providecommand{\doi}[1]{doi: #1}\else
  \providecommand{\doi}{doi: \begingroup \urlstyle{rm}\Url}\fi

\bibitem[Agarwal et~al.(2021{\natexlab{a}})Agarwal, Machado, Castro, and Bellemare]{agarwal_psm_2021}
Rishabh Agarwal, Marlos~C. Machado, Pablo~Samuel Castro, and Marc~G Bellemare.
\newblock Contrastive behavioral similarity embeddings for generalization in reinforcement learning.
\newblock In \emph{International Conference on Learning Representations}, 2021{\natexlab{a}}.
\newblock URL \url{https://openreview.net/forum?id=qda7-sVg84}.

\bibitem[Agarwal et~al.(2021{\natexlab{b}})Agarwal, Schwarzer, Castro, Courville, and Bellemare]{agarwal_precipice_2021}
Rishabh Agarwal, Max Schwarzer, Pablo~Samuel Castro, Aaron~C Courville, and Marc Bellemare.
\newblock Deep reinforcement learning at the edge of the statistical precipice.
\newblock \emph{Advances in Neural Information Processing Systems}, 34, 2021{\natexlab{b}}.

\bibitem[Ba et~al.(2016)Ba, Kiros, and Hinton]{ba_layernorm_2016}
Jimmy Ba, Jamie~Ryan Kiros, and Geoffrey~E. Hinton.
\newblock Layer normalization.
\newblock \emph{ArXiv}, abs/1607.06450, 2016.

\bibitem[Castro(2019)]{castro_scalable_2019}
Pablo~Samuel Castro.
\newblock Scalable methods for computing state similarity in deterministic {Markov} {Decision} {Processes}, November 2019.
\newblock URL \url{http://arxiv.org/abs/1911.09291}.
\newblock arXiv:1911.09291 [cs, stat].

\bibitem[Castro et~al.(2022)Castro, Kastner, Panangaden, and Rowland]{castro_mico_2022}
Pablo~Samuel Castro, Tyler Kastner, Prakash Panangaden, and Mark Rowland.
\newblock {MICo}: {Improved} representations via sampling-based state similarity for {Markov} decision processes.
\newblock \emph{arXiv:2106.08229 [cs]}, January 2022.
\newblock URL \url{http://arxiv.org/abs/2106.08229}.
\newblock arXiv: 2106.08229.

\bibitem[Chang et~al.(2022)Chang, Wang, Kallus, and Sun]{chang_learning_2022}
Jonathan Chang, Kaiwen Wang, Nathan Kallus, and Wen Sun.
\newblock Learning {Bellman} {Complete} {Representations} for {Offline} {Policy} {Evaluation}.
\newblock In \emph{Proceedings of the 39th {International} {Conference} on {Machine} {Learning}}, pages 2938--2971. PMLR, June 2022.
\newblock URL \url{https://proceedings.mlr.press/v162/chang22b.html}.
\newblock ISSN: 2640-3498.

\bibitem[Dadashi et~al.(2021)Dadashi, Rezaeifar, Vieillard, Hussenot, Pietquin, and Geist]{dadashi_ploff_2021}
Robert Dadashi, Shideh Rezaeifar, Nino Vieillard, L{\'e}onard Hussenot, Olivier Pietquin, and Matthieu Geist.
\newblock Offline reinforcement learning with pseudometric learning.
\newblock In Marina Meila and Tong Zhang, editors, \emph{Proceedings of the 38th International Conference on Machine Learning}, volume 139 of \emph{Proceedings of Machine Learning Research}, pages 2307--2318. PMLR, 18--24 Jul 2021.
\newblock URL \url{https://proceedings.mlr.press/v139/dadashi21a.html}.

\bibitem[Farajtabar et~al.(2018)Farajtabar, Chow, and Ghavamzadeh]{farajtabar_more_2018}
Mehrdad Farajtabar, Yinlam Chow, and Mohammad Ghavamzadeh.
\newblock More {Robust} {Doubly} {Robust} {Off}-policy {Evaluation}.
\newblock In \emph{Proceedings of the 35th {International} {Conference} on {Machine} {Learning}}, pages 1447--1456. PMLR, July 2018.
\newblock URL \url{https://proceedings.mlr.press/v80/farajtabar18a.html}.
\newblock ISSN: 2640-3498.

\bibitem[Ferns and Precup(2014)]{ferns_bisim_2014}
Norm Ferns and Doina Precup.
\newblock Bisimulation metrics are optimal value functions.
\newblock In \emph{Proceedings of the Thirtieth Conference on Uncertainty in Artificial Intelligence}, UAI'14, page 210–219, Arlington, Virginia, USA, 2014. AUAI Press.
\newblock ISBN 9780974903910.

\bibitem[Ferns et~al.(2004)Ferns, Panangaden, and Precup]{ferns_bisim_2004}
Norm Ferns, Prakash Panangaden, and Doina Precup.
\newblock Metrics for finite markov decision processes.
\newblock In \emph{Proceedings of the 20th Conference on Uncertainty in Artificial Intelligence}, UAI '04, page 162–169, Arlington, Virginia, USA, 2004. AUAI Press.
\newblock ISBN 0974903906.

\bibitem[Ferns et~al.(2011)Ferns, Panangaden, and Precup]{ferns_bisim_2011}
Norm Ferns, Prakash Panangaden, and Doina Precup.
\newblock Bisimulation metrics for continuous markov decision processes.
\newblock \emph{SIAM Journal on Computing}, 40\penalty0 (6):\penalty0 1662--1714, 2011.
\newblock \doi{10.1137/10080484X}.
\newblock URL \url{https://doi.org/10.1137/10080484X}.

\bibitem[Fu et~al.(2020)Fu, Kumar, Nachum, Tucker, and Levine]{fu_d4rl_2020}
Justin Fu, Aviral Kumar, Ofir Nachum, George Tucker, and Sergey Levine.
\newblock D4rl: Datasets for deep data-driven reinforcement learning, 2020.

\bibitem[Fu et~al.(2021)Fu, Norouzi, Nachum, Tucker, Wang, Novikov, Yang, Zhang, Chen, Kumar, Paduraru, Levine, and Paine]{fu_opebench_2021}
Justin Fu, Mohammad Norouzi, Ofir Nachum, George Tucker, Ziyu Wang, Alexander Novikov, Mengjiao Yang, Michael~R. Zhang, Yutian Chen, Aviral Kumar, Cosmin Paduraru, Sergey Levine, and Thomas Paine.
\newblock Benchmarks for deep off-policy evaluation.
\newblock In \emph{ICLR}, 2021.
\newblock URL \url{https://openreview.net/forum?id=kWSeGEeHvF8}.

\bibitem[Gelada et~al.(2019)Gelada, Kumar, Buckman, Nachum, and Bellemare]{gelada_deepmdp_2019}
Carles Gelada, Saurabh Kumar, Jacob Buckman, Ofir Nachum, and Marc~G. Bellemare.
\newblock {DeepMDP}: {Learning} {Continuous} {Latent} {Space} {Models} for {Representation} {Learning}.
\newblock Technical Report arXiv:1906.02736, arXiv, June 2019.
\newblock URL \url{http://arxiv.org/abs/1906.02736}.
\newblock arXiv:1906.02736 [cs, stat] type: article.

\bibitem[Haarnoja et~al.(2018)Haarnoja, Zhou, Abbeel, and Levine]{haarnoja_sac_2018}
Tuomas Haarnoja, Aurick Zhou, Pieter Abbeel, and Sergey Levine.
\newblock Soft actor-critic: Off-policy maximum entropy deep reinforcement learning with a stochastic actor.
\newblock In Jennifer Dy and Andreas Krause, editors, \emph{Proceedings of the 35th International Conference on Machine Learning}, volume~80 of \emph{Proceedings of Machine Learning Research}, pages 1861--1870. PMLR, 10--15 Jul 2018.
\newblock URL \url{https://proceedings.mlr.press/v80/haarnoja18b.html}.

\bibitem[Hanna et~al.(2017)Hanna, Stone, and Niekum]{hanna_bootstrapping_2017}
Josiah Hanna, Peter Stone, and Scott Niekum.
\newblock Bootstrapping with {Models}: {Confidence} {Intervals} for {Off}-{Policy} {Evaluation}.
\newblock In \emph{Proceedings of the 16th {International} {Conference} on {Autonomous} {Agents} and {Multiagent} {Systems} ({AAMAS})}, May 2017.
\newblock event-place: Sao Paolo, Brazil.

\bibitem[Hanna et~al.(2021)Hanna, Niekum, and Stone]{hanna_importance_2021}
Josiah~P. Hanna, Scott Niekum, and Peter Stone.
\newblock Importance {Sampling} in {Reinforcement} {Learning} with an {Estimated} {Behavior} {Policy}.
\newblock \emph{Machine Learning (MLJ)}, 110\penalty0 (6):\penalty0 1267--1317, May 2021.

\bibitem[Jiang and Li(2016)]{jiang_doubly_2016}
Nan Jiang and Lihong Li.
\newblock Doubly {Robust} {Off}-policy {Value} {Evaluation} for {Reinforcement} {Learning}.
\newblock May 2016.
\newblock URL \url{http://arxiv.org/abs/1511.03722}.
\newblock arXiv: 1511.03722.

\bibitem[Kemertas and Aumentado{-}Armstrong(2021)]{kemertas_robustmetric_2021}
Mete Kemertas and Tristan Aumentado{-}Armstrong.
\newblock Towards robust bisimulation metric learning.
\newblock In Marc'Aurelio Ranzato, Alina Beygelzimer, Yann~N. Dauphin, Percy Liang, and Jennifer~Wortman Vaughan, editors, \emph{Advances in Neural Information Processing Systems 34: Annual Conference on Neural Information Processing Systems 2021, NeurIPS 2021, December 6-14, 2021, virtual}, pages 4764--4777, 2021.
\newblock URL \url{https://proceedings.neurips.cc/paper/2021/hash/256bf8e6923a52fda8ddf7dc050a1148-Abstract.html}.

\bibitem[Kozen(2006)]{kozen_coinductive_2006}
Dexter Kozen.
\newblock Coinductive proof principles for stochastic processes.
\newblock In \emph{Proceedings of the 21st Annual IEEE Symposium on Logic in Computer Science}, LICS '06, page 359–366, USA, 2006. IEEE Computer Society.
\newblock ISBN 0769526314.
\newblock \doi{10.1109/LICS.2006.18}.
\newblock URL \url{https://doi.org/10.1109/LICS.2006.18}.

\bibitem[Le et~al.(2019)Le, Voloshin, and Yue]{le_batch_2019}
Hoang~M. Le, Cameron Voloshin, and Yisong Yue.
\newblock Batch {Policy} {Learning} under {Constraints}.
\newblock In \emph{International Conference on Machine Learning (ICML)}. arXiv, March 2019.
\newblock URL \url{http://arxiv.org/abs/1903.08738}.
\newblock arXiv:1903.08738 [cs, math, stat].

\bibitem[Levine et~al.(2020)Levine, Kumar, Tucker, and Fu]{levine_offline_2020}
Sergey Levine, Aviral Kumar, George Tucker, and Justin Fu.
\newblock Offline {Reinforcement} {Learning}: {Tutorial}, {Review}, and {Perspectives} on {Open} {Problems}, November 2020.
\newblock URL \url{http://arxiv.org/abs/2005.01643}.
\newblock arXiv:2005.01643 [cs, stat].

\bibitem[Li et~al.(2006)Li, Walsh, and Littman]{li_towards_2006}
Lihong Li, Thomas~J Walsh, and Michael~L Littman.
\newblock Towards a {Uniﬁed} {Theory} of {State} {Abstraction} for {MDPs}.
\newblock page~10, 2006.

\bibitem[Liu et~al.(2018)Liu, Li, Tang, and Zhou]{liu_breaking_2018}
Qiang Liu, Lihong Li, Ziyang Tang, and Dengyong Zhou.
\newblock Breaking the {Curse} of {Horizon}: {Infinite}-{Horizon} {Off}-{Policy} {Estimation}.
\newblock \emph{arXiv:1810.12429 [cs, stat]}, October 2018.
\newblock URL \url{http://arxiv.org/abs/1810.12429}.
\newblock arXiv: 1810.12429.

\bibitem[Matthews(1992)]{matthews_partial_1992}
Steve Matthews.
\newblock The topology of partial metric spaces.
\newblock 1992.

\bibitem[Mnih et~al.(2015)Mnih, Kavukcuoglu, Silver, Rusu, Veness, Bellemare, Graves, Riedmiller, Fidjeland, Ostrovski, Petersen, Beattie, Sadik, Antonoglou, King, Kumaran, Wierstra, Legg, and Hassabis]{mnih_dqn_2015}
Volodymyr Mnih, Koray Kavukcuoglu, David Silver, Andrei~A. Rusu, Joel Veness, Marc~G. Bellemare, Alex Graves, Martin Riedmiller, Andreas~K. Fidjeland, Georg Ostrovski, Stig Petersen, Charles Beattie, Amir Sadik, Ioannis Antonoglou, Helen King, Dharshan Kumaran, Daan Wierstra, Shane Legg, and Demis Hassabis.
\newblock Human-level control through deep reinforcement learning.
\newblock \emph{Nature}, 518\penalty0 (7540):\penalty0 529--533, February 2015.
\newblock ISSN 00280836.
\newblock URL \url{http://dx.doi.org/10.1038/nature14236}.

\bibitem[Paine et~al.(2020)Paine, Paduraru, Michi, G{\"{u}}l{\c{c}}ehre, Zolna, Novikov, Wang, and de~Freitas]{paine_hparam_2020}
Tom~Le Paine, Cosmin Paduraru, Andrea Michi, {\c{C}}aglar G{\"{u}}l{\c{c}}ehre, Konrad Zolna, Alexander Novikov, Ziyu Wang, and Nando de~Freitas.
\newblock Hyperparameter selection for offline reinforcement learning.
\newblock \emph{CoRR}, abs/2007.09055, 2020.
\newblock URL \url{https://arxiv.org/abs/2007.09055}.

\bibitem[Pavse and Hanna(2023)]{pavse_scaling_2023}
Brahma~S Pavse and Josiah~P Hanna.
\newblock Scaling {Marginalized} {Importance} {Sampling} to {High}-{Dimensional} {State}-{Spaces} via {State} {Abstraction}.
\newblock 2023.

\bibitem[Precup et~al.()Precup, Sutton, and Dasgupta]{precup_off-policy_nodate}
Doina Precup, Richard~S Sutton, and Sanjoy Dasgupta.
\newblock Off-{Policy} {Temporal}-{Difference} {Learning} with {Function} {Approximation}.

\bibitem[Puterman(2014)]{puterman_mdp_2014}
Martin~L Puterman.
\newblock \emph{Markov decision processes: discrete stochastic dynamic programming}.
\newblock John Wiley \& Sons, 2014.

\bibitem[Riedmiller(2005)]{riedmiller_neural_2005}
Martin Riedmiller.
\newblock Neural {Fitted} {Q} {Iteration} – {First} {Experiences} with a {Data} {Efficient} {Neural} {Reinforcement} {Learning} {Method}.
\newblock In João Gama, Rui Camacho, Pavel~B. Brazdil, Alípio~Mário Jorge, and Luís Torgo, editors, \emph{Machine {Learning}: {ECML} 2005}, Lecture {Notes} in {Computer} {Science}, pages 317--328, Berlin, Heidelberg, 2005. Springer.
\newblock ISBN 978-3-540-31692-3.
\newblock \doi{10.1007/11564096_32}.

\bibitem[Sutton(1988)]{sutton_learning_1988}
Richard~S. Sutton.
\newblock Learning to predict by the methods of temporal differences.
\newblock \emph{Machine Learning}, 3\penalty0 (1):\penalty0 9--44, August 1988.
\newblock ISSN 1573-0565.
\newblock \doi{10.1007/BF00115009}.
\newblock URL \url{https://doi.org/10.1007/BF00115009}.

\bibitem[Sutton and Barto(2018)]{sutton_rlbook_2018}
Richard~S. Sutton and Andrew~G. Barto.
\newblock \emph{Reinforcement Learning: An Introduction}.
\newblock The MIT Press, second edition, 2018.
\newblock URL \url{http://incompleteideas.net/book/the-book-2nd.html}.

\bibitem[Szepesv\'{a}ri and Munos(2005)]{csaba_completeness_2005}
Csaba Szepesv\'{a}ri and R\'{e}mi Munos.
\newblock Finite time bounds for sampling based fitted value iteration.
\newblock In \emph{Proceedings of the 22nd International Conference on Machine Learning}, ICML '05, page 880–887, New York, NY, USA, 2005. Association for Computing Machinery.
\newblock ISBN 1595931805.
\newblock \doi{10.1145/1102351.1102462}.
\newblock URL \url{https://doi.org/10.1145/1102351.1102462}.

\bibitem[Theocharous et~al.(2015)Theocharous, Thomas, and Ghavamzadeh]{theocharous_personalized_2015}
Georgios Theocharous, Philip~S Thomas, and Mohammad Ghavamzadeh.
\newblock Personalized {Ad} {Recommendation} {Systems} for {Life}-{Time} {Value} {Optimization} with {Guarantees}.
\newblock page~7, 2015.

\bibitem[Thomas and Brunskill(2016)]{thomas_data-efficient_2016}
Philip~S. Thomas and Emma Brunskill.
\newblock Data-{Efficient} {Off}-{Policy} {Policy} {Evaluation} for {Reinforcement} {Learning}, April 2016.
\newblock URL \url{http://arxiv.org/abs/1604.00923}.
\newblock arXiv:1604.00923 [cs].

\bibitem[Thomas et~al.()Thomas, Theocharous, and Ghavamzadeh]{thomas_high-confidence_nodate}
Philip~S Thomas, Georgios Theocharous, and Mohammad Ghavamzadeh.
\newblock High-{Confidence} {Off}-{Policy} {Evaluation}.
\newblock page~7.

\bibitem[Uehara et~al.(2020)Uehara, Huang, and Jiang]{uehara_minimax_2020}
Masatoshi Uehara, Jiawei Huang, and Nan Jiang.
\newblock Minimax {Weight} and {Q}-{Function} {Learning} for {Off}-{Policy} {Evaluation}, October 2020.
\newblock URL \url{http://arxiv.org/abs/1910.12809}.
\newblock Number: arXiv:1910.12809 arXiv:1910.12809 [cs, stat].

\bibitem[Villani(2008)]{villani_wasser_2008}
C{\'e}dric Villani.
\newblock Optimal transport: Old and new.
\newblock 2008.

\bibitem[Voloshin et~al.(2021)Voloshin, Le, Jiang, and Yue]{voloshin_opebench_2021}
Cameron Voloshin, Hoang~Minh Le, Nan Jiang, and Yisong Yue.
\newblock Empirical study of off-policy policy evaluation for reinforcement learning.
\newblock In \emph{Thirty-fifth Conference on Neural Information Processing Systems Datasets and Benchmarks Track (Round 1)}, 2021.
\newblock URL \url{https://openreview.net/forum?id=IsK8iKbL-I}.

\bibitem[Wang et~al.(2020)Wang, Foster, and Kakade]{wang_fqedivergence_2020}
Ruosong Wang, Dean~P. Foster, and Sham~M. Kakade.
\newblock What are the statistical limits of offline {RL} with linear function approximation?
\newblock \emph{CoRR}, abs/2010.11895, 2020.
\newblock URL \url{https://arxiv.org/abs/2010.11895}.

\bibitem[Wang et~al.(2021)Wang, Wu, Salakhutdinov, and Kakade]{wang_instabilities_2021}
Ruosong Wang, Yifan Wu, Ruslan Salakhutdinov, and Sham Kakade.
\newblock Instabilities of {Offline} {RL} with {Pre}-{Trained} {Neural} {Representation}.
\newblock In \emph{Proceedings of the 38th {International} {Conference} on {Machine} {Learning}}, pages 10948--10960. PMLR, July 2021.
\newblock URL \url{https://proceedings.mlr.press/v139/wang21z.html}.
\newblock ISSN: 2640-3498.

\bibitem[Yang and Nachum(2021)]{yang_representation_2021}
Mengjiao Yang and Ofir Nachum.
\newblock Representation {Matters}: {Offline} {Pretraining} for {Sequential} {Decision} {Making}.
\newblock In \emph{Proceedings of the 38th {International} {Conference} on {Machine} {Learning}}, pages 11784--11794. PMLR, July 2021.
\newblock URL \url{https://proceedings.mlr.press/v139/yang21h.html}.
\newblock ISSN: 2640-3498.

\bibitem[Yang et~al.(2020{\natexlab{a}})Yang, Nachum, Dai, Li, and Schuurmans]{nachum_bestdice_2020}
Mengjiao Yang, Ofir Nachum, Bo~Dai, Lihong Li, and Dale Schuurmans.
\newblock Off-policy evaluation via the regularized lagrangian.
\newblock In H.~Larochelle, M.~Ranzato, R.~Hadsell, M.F. Balcan, and H.~Lin, editors, \emph{Advances in Neural Information Processing Systems}, volume~33, pages 6551--6561. Curran Associates, Inc., 2020{\natexlab{a}}.
\newblock URL \url{https://proceedings.neurips.cc/paper/2020/file/488e4104520c6aab692863cc1dba45af-Paper.pdf}.

\bibitem[Yang et~al.(2020{\natexlab{b}})Yang, Nachum, Dai, Li, and Schuurmans]{yang_off-policy_2020}
Mengjiao Yang, Ofir Nachum, Bo~Dai, Lihong Li, and Dale Schuurmans.
\newblock Off-{Policy} {Evaluation} via the {Regularized} {Lagrangian}.
\newblock \emph{arXiv:2007.03438 [cs, math, stat]}, July 2020{\natexlab{b}}.
\newblock URL \url{http://arxiv.org/abs/2007.03438}.
\newblock arXiv: 2007.03438.

\bibitem[Zhang et~al.(2021{\natexlab{a}})Zhang, McAllister, Calandra, Gal, and Levine]{zhang_learning_2021}
Amy Zhang, Rowan McAllister, Roberto Calandra, Yarin Gal, and Sergey Levine.
\newblock Learning {Invariant} {Representations} for {Reinforcement} {Learning} without {Reconstruction}, April 2021{\natexlab{a}}.
\newblock URL \url{http://arxiv.org/abs/2006.10742}.
\newblock arXiv:2006.10742 [cs, stat].

\bibitem[Zhang et~al.(2021{\natexlab{b}})Zhang, Paine, Nachum, Paduraru, Tucker, Wang, and Norouzi]{zhang_autoregressive_2021}
Michael~R Zhang, Tom~Le Paine, Oﬁr Nachum, Cosmin Paduraru, George Tucker, Ziyu Wang, and Mohammad Norouzi.
\newblock {AUTOREGRESSIVE} {DYNAMICS} {MODELS} {FOR} {OFFLINE} {POLICY} {EVALUATION} {AND} {OPTIMIZATION}.
\newblock 2021{\natexlab{b}}.

\bibitem[Łukaszyk(2004)]{ukaszyk_ld_2004}
Szymon Łukaszyk.
\newblock A new concept of probability metric and its applications in approximation of scattered data sets.
\newblock \emph{Computational Mechanics}, 33:\penalty0 299--304, 2004.

\end{thebibliography}

\newpage
\appendix

\section{Theoretical Background}
\label{app:th_bg}

In this section, we include relevant background material.

\begin{definition}[Metric] A metric, $d:X\times X\to\mathbb{R}_{\geq 0}$ has the following properties for some $x,y,z\in X$:
\begin{enumerate}
    \item $d(x,x) = 0$
    \item $d(x,y) = 0 \Longleftrightarrow x = y$  
    \item Symmetry: $d(x,y) = d(y,x)$
    \item Triangle inequality: $d(x,z)\leq d(x,y) + d(y,z)$
\end{enumerate}
\end{definition}

\begin{definition}[Pseudo Metric] A pseudo metric, $d:X\times X\to\mathbb{R}_{\geq 0}$ has the following properties for some $x,y,z\in X$:
\label{def:psuedo}
\begin{enumerate}
    \item $d(x,x) = 0$
    \item Symmetry: $d(x,y) = d(y,x)$
    \item Triangle inequality: $d(x,z)\leq d(x,y) + d(y,z)$
\end{enumerate}
Crucially, a pseudo metric differs from a metric in that if $d(x,y) = 0$ it may be the case that $x \neq y$.
\end{definition}

\begin{definition}[Diffuse Metric] A diffuse metric, $d:X\times X\to\mathbb{R}_{\geq 0}$ has the following properties for some $x,y,z\in X$:
\label{def:diffuse}
\begin{enumerate}
    \item $d(x,x) \geq 0$
    \item Symmetry: $d(x,y) = d(y,x)$
    \item Triangle inequality: $d(x,z)\leq d(x,y) + d(y,z)$
\end{enumerate}
Crucially, a diffuse metric differs from a pseudo metric in that self-distances may be non-zero.
\end{definition}

For readers interested in distances that admit non-zero self-distances, we refer them to material on \emph{partial metrics} \citep{matthews_partial_1992}. We make the following note as \cite{castro_mico_2022}: the original definition of partial metrics (see \cite{matthews_partial_1992}) uses a different triangle inequality criterion than the one in Definition \ref{def:diffuse} and is too strict (i.e. diffuse metrics violate this triangle inequality criterion), so we consider the diffuse metric definition presented in this paper.

We now present background material on the Wasserstein and related distances.
\begin{definition}[Wasserstein Distance \citep{villani_wasser_2008}] Let $d:X\times X\to\mathbb{R}_{\geq 0}$ be a distance function and $\Omega$ the set of all joint distributions with marginals $\mu$ and $\lambda$ over the space $X$, then we have:
\begin{equation}
    W(d)(\mu, \lambda) = \left(\inf_{\omega\in\Omega}\mathbb{E}_{x_1,x_2\sim\omega}[d(x_1,x_2)]\right)
\end{equation}
\end{definition}

\begin{definition}[Dual formulation of the Wasserstein Distance \citep{villani_wasser_2008}] Let $d:X\times X\to\mathbb{R}_{\geq 0}$ be a distance function and marginals $\mu$ and $\lambda$ over the space $X$, then we have:
\label{def:dual_wass}
\begin{equation}
    W(d)(\mu, \lambda) = \sup_{f\in\text{Lip}_{1,d}(X)} \mathbb{E}_{x_1\sim\mu}[f(x_1)] - \mathbb{E}_{x_2\sim\lambda}[f(x_2)]
\end{equation}
where $\text{Lip}_{1,d}(X)$ denotes the $1-$Lipschitz functions $f:X\to\mathbb{R}$ such that $|f(x_1)-f(x_2)|\leq d(x_1,x_2)$.
\end{definition}

\begin{definition}[Łukaszyk–Karmowski distance \citep{ukaszyk_ld_2004}] Let $d:X\times X\to\mathbb{R}_{\geq 0}$ be a distance function and marginals $\mu$ and $\lambda$ over the space $X$, then we have:
\label{def:lk_dist}
\begin{equation}
    D_{\text{LK}}(d)(\mu, \lambda) = \left(\mathbb{E}_{x_1\sim\mu,x_2\sim\lambda}[d(x_1,x_2)]\right)
\end{equation}
\end{definition}

We then have the following fact: $W(d)(\mu, \lambda)\leq D_{\text{LK}}(d)(\mu, \lambda)$ i.e. the Wasserstein distance is upper-bounded by the Łukaszyk–Karmowski distance \citep{castro_mico_2022}.

\section{Theoretical Results}
\label{app:th_results}

\propcontraction*
\begin{proof}
Consider $d, d'\in \mathbb{R}^{\mathcal{X}\times\mathcal{X}}$, then we have:
\begin{align*}
    ||(\mathcal{F}^{\pi_e}d)(s_1, a_1; s_2, a_2) &- (\mathcal{F}^{\pi_e}d')(s_1, a_1; s_2, a_2)||_{\infty} \\ 
    &= ||\gamma \E_{s_1',s_2'\sim P, a_1', a_2'\sim\pi_e}[d(s_1', a_1'; s_2', a_2') - d'(s_1', a_1'; s_2', a_2')] ||_{\infty}\\
    &= |\gamma| \cdot ||\E_{s_1',s_2'\sim P, a_1', a_2'\sim\pi_e}[d(s_1', a_1'; s_2', a_2') - d'(s_1', a_1'; s_2', a_2')] ||_{\infty}\\
    &\leq \gamma \max_{s_1',a_1', s_2',a_2'}|d(s_1', a_1'; s_2', a_2') - d'(s_1', a_1'; s_2', a_2')]| = \gamma ||d - d'||_{\infty}
\end{align*}
\end{proof}

\propfixedpoint*
\begin{proof}
Since $\mathcal{F}^{\pi_e}$ is a contraction mapping and that $\mathbb{R}^{\mathcal{X}\times\mathcal{X}}$ is complete under the $L^\infty$ norm, by Banach's fixed-point theorem, $\lim_{t\to\infty}\mathcal{F}_t^{\pi_e}(d) = d_{\pi_e}$.
\end{proof}

\propdiffuse*
\begin{proof}
To prove that $d_{\pi_e}$ is a diffuse metric, we need to show it has the following properties for $(s_1,a_1), (s_2, a_2), (s_3, a_3) \in\mathcal{X}$. We follow \cite{castro_mico_2022}'s strategy (see Proposition 4.10) to prove that a distance function is a diffuse metric. Recall that $d_{\pieval}(s_1, a_1;s_2, a_2) := |r(s_1,a_1)-r(s_2,a_2)| + \gamma \E_{s_1',s_2'\sim P, a_1', a_2'\sim\pi_e}[d_{\pieval}(s_1', a_1'; s_2', a_2')]$.
\begin{enumerate}
    \item Non-negativity i.e. $d_{\pi_e}(s_1,a_1;s_2,a_2)\geq 0$. Since $|r(s_1,a_1)-r(s_2,a_2)|\geq 0$, recursively rolling out the definition of $d_{\pieval}$ means that $d_{\pi_e}(s_1,a_1;s_2,a_2)$ is a sum of discounted non-negative terms.
    \item Symmetry i.e. $d_{\pi_e}(s_1,a_1;s_2,a_2) = d_{\pi_e}(s_2,a_2;s_1,a_1)$. Since $|r(s_1,a_1)-r(s_2,a_2)| = |r(s_2,a_2)-r(s_1,a_1)|$, unrolling $d_{\pi_e}(s_1,a_1;s_2,a_2)$ and $d_{\pi_e}(s_2,a_2;s_1,a_1)$ recursively results in the discounted sum of the same terms.
    \item Triangle inequality i.e. $d_{\pi_e}(s_1,a_1;s_2,a_2)\leq d_{\pi_e}(s_1,a_1;s_3,a_3) + d_{\pi_e}(s_2,a_2;s_3,a_3)$. To show this fact, we will first consider an initialization to the distance function $d_0(s_1,a_1; s_2,a_2) = 0, \forall (s_1,a_1), (s_2,a_2)\in\mathcal{X}$ and consider repeated
    applications of the operator $\mathcal{F}^{\pi_e}$ to $d_0$, which we know will make $d_0$ converge to $d_{\pi_e}$ (Proposition \ref{prop:fixedpoint}). We will show by induction that each successive update $d_{t+1} = \mathcal{F}^{\pi_e}(d_t)$ satisfies the triangle inequality, which implies that $d_{\pi_e}$ satisfies the triangle inequality.

    We have the base the case at $t=0$ trivially holding true due to the initialization of $d_0$. Now let the inductive hypothesis be true for all $t > 1$ i.e.
    $d_{t}(s_1, a_1; s_2, a_2) \leq d_{t}(s_1, a_1; s_3, a_3) + d_{t}(s_3, a_3; s_2, a_2)$ for any $(s_1,a_1), (s_2, a_2), (s_3, a_3) \in\mathcal{X}$. However, we know that:
    \begin{align*}
        d_{t+1}(s_1, a_1; s_2, a_2) &= |r(s_1, a_1) - r(s_2, a_2)| + \gamma \E_{s_1',s_2'\sim P, a_1', a_2'\sim\pi_e}[d_t(s_1', a_1'; s_2', a_2')]\\
        &\overset{(a)}{=} |r(s_1, a_1) - r(s_2, a_2)| + r(s_3, a_3) - r(s_3, a_3) + \gamma \E_{s_1',s_2'\sim P, a_1', a_2'\sim\pi_e}[d_t(s_1', a_1'; s_2', a_2')]\\
        &\overset{(b)}{\leq} |r(s_1, a_1) - r(s_3, a_3)| + |r(s_2, a_2) - r(s_3, a_3)| \\&+ \gamma \E_{s_1',s_2'\sim P, a_1', a_2'\sim\pi_e}[d_t(s_1', a_1'; s_2', a_2')]\\
        & \overset{(c)}{\leq} |r(s_1, a_1) - r(s_3, a_3)| + |r(s_2, a_2) - r(s_3, a_3)| \\&+ \gamma \E_{s_1',s_2',s_3'\sim P, a_1', a_2',a_3'\sim\pi_e}[d_{t}(s_1', a_1'; s_3', a_3') + d_{t}(s_3', a_3'; s_2', a_2')]\\
        &= |r(s_1, a_1) - r(s_3, a_3)| + \gamma \E_{s_1',s_3'\sim P, a_1',a_3'\sim\pi_e}[d_{t}(s_1', a_1'; s_3', a_3')] \\&+ |r(s_2, a_2) - r(s_3, a_3)| + \gamma \E_{s_2',s_3'\sim P, a_2',a_3'\sim\pi_e}[d_{t}(s_3', a_3'; s_2', a_2')]\\
        &= d_{t+1}(s_1, a_1; s_3, a_3) + d_{t+1}(s_2, a_2; s_3, a_3)\\
        d_{t+1}(s_1, a_1; s_2, a_2) &\leq  d_{t+1}(s_1, a_1; s_3, a_3) + d_{t+1}(s_2, a_2; s_3, a_3)
    \end{align*}
    where (a) is due to adding and subtracting $r(s_3, a_3)$, (b) is due to Jensen's inequality, (c) is due to application of the inductive hypothesis. Thus, the triangle inequality is satisfied for all $t \geq 0$, and given that $d_{t+1}\to d_{\pi_e}$, we have that $d_{\pi_e}$ also satisfies the triangle inequality.
\end{enumerate}
\end{proof}

\thmDbound*
\begin{proof}
To prove this fact, we follow \cite{castro_mico_2022} (see Proposition 4.8) and use a co-inductive argument \citep{kozen_coinductive_2006}. We will show that if $|q^{\pi_e}(s_1,a_1) - q^{\pi_e}(s_2,a_2)|\leq d(s_1,a_1,;s_2,a_2)$ holds true for some specific symmetric $d\in\mathbb{R}^{\mathcal{X}\times\mathcal{X}}$, then the statement also holds true for $\mathcal{F}^{\pi_e}(d)$, which means it will hold for $d_{\pi_e}$.

We have that for any $(s,a)\in\mathcal{X}$, $\max_{s,a} \frac{-|r(s,a)|}{1 - \gamma} \leq q^{\pi_e}(s,a) \leq \max_{s,a} \frac{|r(s,a)|}{1 - \gamma}$. Thus, for any $(s_1,a_1), (s_2,a_2)\in\mathcal{X}$, we have that $|q^{\pi_e}(s_1,a_1) - q^{\pi_e}(s_2,a_2)|\leq 2\max_{s,a} \frac{|r(s,a)|}{1 - \gamma}$. We can then assume that our specific symmetric $d$ is the constant function $d(s_1, a_1; s_2, a_2) = 2\max_{s,a} \frac{|r(s,a)|}{1 - \gamma}$, which satisfies our requirement that $|q^{\pi_e}(s_1,a_1) - q^{\pi_e}(s_2,a_2)|\leq d(s_1,a_1,;s_2,a_2)$.

Therefore, we have $q^{\pi_e}(s_1, a_1) - q^{\pi_e}(s_2, a_2)$
    \begin{align*}
         &= r(s_1,a_1)-r(s_2,a_2) + \gamma \sum_{s_1'\in\sset}\sum_{a_1'\in\aset}P(s_1'|s_1,a_1)\pi_e(a_1'|s_1')q^{\pi_e}(s_1',a_1') - \gamma \sum_{s_2'\in\sset}\sum_{a_2'\in\aset}P(s_2'|s_2,a_2)\pi_e(a_2'|s_2')q^{\pi_e}(s_2',a_2')\\
         &\leq |r(s_1,a_1)-r(s_2,a_2)| + \gamma \sum_{s_1',s_2'\in\sset}\sum_{a_1',a_2'\in\aset}P(s_1'|s_1,a_1)\pi_e(a_1'|s_1')P(s_2'|s_2,a_2)\pi_e(a_2'|s_2')(q^{\pi_e}(s_1',a_1') - q^{\pi_e}(s_2',a_2'))\\
         &\overset{(a)}{\leq} |r(s_1,a_1)-r(s_2,a_2)| + \gamma \sum_{s_1',s_2'\in\sset}\sum_{a_1',a_2'\in\aset}P(s_1'|s_1,a_1)\pi_e(a_1'|s_1')P(s_2'|s_2,a_2)\pi_e(a_2'|s_2')d(s_1',a_1';s_2',a_2')\\
         &= \mathcal{F}^{\pi_e}(d)(s_1, a_1;s_2,a_2)
    \end{align*}
    where (a) follows from the induction hypothesis. Similarly, by symmetry, we can show that $q^{\pi_e}(s_2, a_2) - q^{\pi_e}(s_1, a_1) \leq \mathcal{F}^{\pi_e}(d)(s_1, a_1;s_2,a_2)$. Thus, we have it that $|q^{\pi_e}(s_1,a_1) - q^{\pi_e}(s_2,a_2)|\leq d_{\pi_e}(s_1,a_1,;s_2,a_2)$.
\end{proof}

\lemmaQbound*
\begin{proof}
The proof closely follows that of Lemma 8 of \cite{kemertas_robustmetric_2021}, which is in turn based on Theorem 5.1 of \cite{ferns_bisim_2004}. The main difference between their theorems and ours is that the former is based on state representations and the latter is based on optimal state-value functions, while ours is focused on state-action representations for $\pieval$. 

We first remark that this new aggregated MDP, $\widetilde{\mathcal{M}}$, can be viewed as a Markov reward process (MRP) where the "states" are aggregated state-action pairs of the original MDP, $\mathcal{M}$. We now define the reward function and transition dynamics of the clustered MRP $\widetilde{\mathcal{M}}$, where $|\phi(x)|$ is the size of the cluster $\phi(x)$. Note that $\PR$ denotes the probability of the event.
    \begin{equation*}
        \tilde{r}(\phi(x)) = \frac{1}{|\phi(x)|}\sum_{y\in\phi(x)}r(y)
    \end{equation*}
    \begin{equation*}
        \widetilde{P}(\phi(x') | \phi(x)) = \frac{1}{|\phi(x)|}\sum_{y\in\phi(x)}\PR(\phi(x')| y)
    \end{equation*}
    Then we have: $| q^{\pieval}(x) - \tilde{q}^{\pieval}(\phi(x))|$
    \begin{align*}
        &= \left\lvert r(x) - \tilde{r}(\phi(x)) + \gamma \sum_{x'\in\saset}P(x'|x)q^{\pieval}(x') - \gamma \sum_{\phi(x')\in\abssaset}\widetilde{P}(\phi(x')|\phi(x))\tilde{q}^{\pieval}(\phi(x'))\right\rvert\\
        &\overset{(a)}{=} \left\lvert r(x) - \frac{1}{|\phi(x)|}\sum_{y\in\phi(x)}r(y)+ \gamma \sum_{x'\in\saset}P(x'|x)q^{\pieval}(x') - \gamma \frac{1}{|\phi(x)|}\sum_{\phi(x')\in\abssaset}\sum_{y\in\phi(x)}\PR(\phi(x')| y)\tilde{q}^{\pieval}(\phi(x'))\right\rvert\\
        &\overset{(b)}{=} \frac{1}{|\phi(x)|}\left\lvert |\phi(x)|r(x) - \sum_{y\in\phi(x)}r(y)+ \gamma |\phi(x)|\sum_{x'\in\saset}P(x'|x)q^{\pieval}(x') - \gamma \sum_{\phi(x')\in\abssaset}\sum_{y\in\phi(x)}\PR(\phi(x')| y)\tilde{q}^{\pieval}(\phi(x'))\right\rvert\\
        &\overset{(c)}{=} \frac{1}{|\phi(x)|}\left\lvert \sum_{y\in\phi(x)}(r(x) - r(y))+ \sum_{y\in\phi(x)}\left(\gamma \sum_{x'\in\saset}P(x'|x)q^{\pieval}(x') - \gamma \sum_{\phi(x')\in\abssaset}\PR(\phi(x')| y)\tilde{q}^{\pieval}(\phi(x'))\right)\right\rvert\\
        &\overset{(d.1)}{\leq} \frac{1}{|\phi(x)|}\sum_{y\in\phi(x)}\left(\lvert r(x) - r(y)\rvert + \gamma \left\lvert\sum_{x'\in\saset}P(x'|x)q^{\pieval}(x') - \sum_{\phi(x')\in\abssaset}\PR(\phi(x')|y)\tilde{q}^{\pieval}(\phi(x'))\right\rvert\right)\\
        &\overset{(d.2)}{=} \frac{1}{|\phi(x)|}\sum_{y\in\phi(x)}\left(\lvert r(x) - r(y)\rvert + \gamma \left\lvert\sum_{x'\in\saset}P(x'|x)q^{\pieval}(x') - \sum_{\phi(x')\in\abssaset}\sum_{z\in\phi(x')}P(z|y)\tilde{q}^{\pieval}(\phi(x'))\right\rvert\right)\\
        &\overset{(d.3)}{=} \frac{1}{|\phi(x)|}\sum_{y\in\phi(x)}\left(\lvert r(x) - r(y)\rvert + \gamma \left\lvert\sum_{x'\in\saset}P(x'|x)q^{\pieval}(x') - \sum_{x'\in\saset}P(x'|y)\tilde{q}^{\pieval}(\phi(x'))\right\rvert\right)\\
        &\overset{(e)}{\leq} \frac{1}{|\phi(x)|}\sum_{y\in\phi(x)}\left(\lvert r(x) - r(y)\rvert + \gamma \left\lvert\sum_{x'\in\saset}\left(P(x'|x)q^{\pieval}(x') - P(x'|y)\tilde{q}^{\pieval}(\phi(x'))\right)\right\rvert\right)\\
        &\overset{(f)}{\leq} \frac{1}{|\phi(x)|}\sum_{y\in\phi(x)}\left(\lvert r(x) - r(y)\rvert + \gamma \left\lvert\sum_{x'\in\saset}\left(P(x'|x)q^{\pieval}(x') - P(x'|y)q^{\pieval}(x')\right)\right\rvert \right)\\
        & + \frac{\gamma}{|\phi(x)|}\sum_{y\in\phi(x)}\left(\left\lvert\sum_{x'\in\saset}P(x'|y)(q^{\pieval}(x') - \tilde{q}^{\pieval}(\phi(x')))\right\rvert \right)\\
        &\overset{(g)}{\leq} \frac{1}{|\phi(x)|}\sum_{y\in\phi(x)}\left(\lvert r(x) - r(y)\rvert + \gamma \left\lvert\sum_{x'\in\saset}\left(P(x'|x)- P(x'|y)\right)q^{\pieval}(x')\right\rvert + \gamma \left\Vert q - \widetilde{q}\right\Vert_{\infty} \right)\\
        &\overset{(h)}{=} \frac{1}{|\phi(x)|}\sum_{y\in\phi(x)}\left(\lvert r(x) - r(y)\rvert + \gamma \left\lvert\mathbb{E}_{x'\sim P(\cdot|x)}[q^{\pieval}(x')] - \mathbb{E}_{x'\sim P(\cdot|y)}[q^{\pieval}(x')]\right\rvert + \gamma \left\Vert q - \widetilde{q}\right\Vert_{\infty} \right)
    \end{align*}

    where (a) is due to the definition of $\absrf$ and $\widetilde{P}$, (b) is due to multiplying and dividing by $|\phi(x)|$, (c) is due to re-arranging terms, (d.1) is due to Jensen's inequality, (d.2 and d.3) are disaggregating the sums over clustered state-actions into sums over original state-actions by expanding $\PR(\phi(x')|y) = \sum_{x\in\phi(x')}P(x|y)$ for each clustered state-action, $\phi(x')$, (e) is grouping the terms, (f) is by adding and subtracting $\frac{1}{|\phi(x)|} \sum_{y\in\phi(x)}P(x'|y)q^{\pieval}(x')$, (g) is since the infinity norm of the difference of the action-values is greater than the expected difference, (h) is re-writing the expression in terms of expectations. 
    
    From Theorem \ref{thm:dbound} we know $q^{\pi_e}$ is $1$-Lipschitz with respect to the distance function $d_{\pi_e}$. Notice that (h) contains the dual formulation of the Wasserstein distance where $f = q^{\pieval}$ (see Definition \ref{def:dual_wass}). We can then re-write (h) in terms of original definition of the Wasserstein distance:
    \begin{align*}
        | q^{\pieval}(x) - \tilde{q}^{\pieval}(\phi(x))| &\overset{}{\leq} \frac{1}{|\phi(x)|}\sum_{y\in\phi(x)}\left(\lvert r(x) - r(y)\rvert + \gamma W(d_{\pi_e})(P(\cdot|x), P(\cdot|y)) + \gamma \left\Vert q - \widetilde{q}\right\Vert_{\infty} \right)\\
        &\overset{(i)}{\leq} \frac{1}{|\phi(x)|}\sum_{y\in\phi(x)}\left(\lvert r(x) - r(y)\rvert + \gamma D_{\text{LK}}(d_{\pi_e})(x', y') + \gamma \left\Vert q - \widetilde{q}\right\Vert_{\infty} \right)\\
        &\overset{(j)}{=} \frac{1}{|\phi(x)|}\sum_{y\in\phi(x)}\left(\lvert r(x) - r(y)\rvert + \gamma \E_{x'\sim\PR^{\pi_e}, y'\sim\PR^{\pi_e}}[d_{\pi_e}(x', y')] + \gamma \left\Vert q - \widetilde{q}\right\Vert_{\infty} \right)\\
        &\overset{(k)}{=} \frac{1}{|\phi(x)|}\sum_{y\in\phi(x)}\left(d_{\pieval}(x, y) + \gamma \left\Vert q - \widetilde{q}\right\Vert_{\infty} \right)\\
        &\overset{(l)}{\leq} 2\epsilon + \gamma \left\Vert q - \widetilde{q}\right\Vert_{\infty} \\
        | q^{\pieval}(x) - \tilde{q}^{\pieval}(\phi(x))| &\overset{(m)}{\leq} \frac{2\epsilon}{1 - \gamma}, \forall x\in\saset
    \end{align*}
    where (i) is due the fact that the Łukaszyk–Karmowski, $D_{\text{LK}}$, upper bounds the Wasserstein distance, (j) is using Definition \ref{def:lk_dist}, (k) is due to the definition of $d_{\pieval}$, and (l) is due the fact that the maximum distance between any two $x, y \in\phi(x)$ is at most $2\epsilon$, which is greater than the average distance between any one point to every other point in the cluster, and (m) is due to $\left\Vert q - \widetilde{q}\right\Vert_{\infty} \leq \frac{2\epsilon}{1 - \gamma}$.
\end{proof}

\thmJbound*
\begin{proof}
    From Lemma \ref{lemma:qbound} we have that $| q^{\pieval}(s_0,a_0) - q^{\pieval}(\phi(s_0,a_0))| \leq \frac{2\epsilon}{(1 - \gamma)}$.
    \begin{align*}
        \big| \E_{s_0,a_0\sim\pieval}[q^{\pieval}(s_0,a_0)] - \E_{s_0,a_0\sim\pieval}[q^{\pieval}(\phi(s_0,a_0))]\big| &= |\E_{s_0,a_0\sim\pieval}[q^{\pieval}(s_0,a_0) - q^{\pieval}(\phi(s_0,a_0))]| \\
        &\overset{(a)}{\leq} \E_{s_0,a_0\sim\pieval}[\left|q^{\pieval}(s_0,a_0) - q^{\pieval}(\phi(s_0,a_0))\right|] \\
        &\overset{(b)}{\leq} \E_{s_0,a_0\sim\pieval}\left(\frac{2\epsilon}{1 - \gamma}\right) \\
        &= \frac{2\epsilon}{(1 - \gamma)},
    \end{align*}
    where (a) follows from Jensen's inequality and (b) follows from Lemma \ref{lemma:qbound}.
\end{proof}

\section{ROPE Pseudo-code}
\label{sec:pseudocode}

 \begin{algorithm}[H]
  \caption{\textsc{rope+fqe}}
  \label{algo:batch_linear_td}
  \begin{algorithmic}[1]
    \STATE Input: policy to evaluate $\pi_e$, batch $\mathcal{D}$, encoder parameters class $\Omega$, action-value parameter class $\Xi$, encoder function $\phi:\sset\times\aset\to\mathbb{R}^d$, action-value function $q:\sset\times\aset\to\mathbb{R}$.
    \STATE $\hat{\omega} := \arg\min_{\omega\in\Omega}$\\\hspace{1cm}$\mathbb{E}_{(s_1,a_1,s_1'), (s_2,a_2,s_2')\sim\mathcal{D}}\left[\rho\left(\left|r(s_1,a_1) - r(s_2,a_2)\right| + \gamma \E_{a_1', a_2'\sim\pieval}[\tilde{d}_{\bar{\omega}}(s_1',a_1'; s_2',a_2')] - \tilde{d}_{\omega}(s_1,a_1; s_2,a_2)\right)\right]$ 
    
    \COMMENT {\textsc{rope} training phase; where $\tilde{d}_{\omega}(s_1, a_1; s_2, a_2) \coloneqq \frac{||\phi_\omega(s_1,a_1)||_2^2 + ||\phi_\omega(s_2,a_2)||_2^2}{2} + \beta \theta(\phi_\omega(s_1, a_1), \phi_\omega(s_2, a_2))$, $\bar{\omega}$ are fixed parameters of target network, and $\rho$ is the Huber loss. See Section 3.1 for more details.}
    \STATE $\hat{\xi} := \arg\min_{\xi\in\Xi}\E_{(s,a,s')\sim\mathcal{D}}\left[\rho\left(r(s,a) + \gamma \E_{a'\sim\pieval} [q_{\bar{\xi}}(\phi_{\hat{\omega}}(s',a'))] - q_{\xi}(\phi_{\hat{\omega}}(s,a))\right)\right]$ \COMMENT {\textsc{fqe} using fixed encoder $\phi_{\hat{\omega}}$ from Step 2, where $\rho$ is the Huber loss.}
    \STATE Return $q_{\hat{\xi}}$ \COMMENT {Estimated action-value function of $\pi_e$, $q^{\pi_e}$.}
  \end{algorithmic}
\end{algorithm}

\section{Empirical Results}
\label{app:empirical_res}

We now include additional experiments that were deferred from the main text.

\subsection{Gridworld Visualizations}
In Section \ref{sec:gw_sa_metrics}, we visualize how \textsc{rope} and on-policy \textsc{mico} group state-actions pairs. We now consider two additional metrics that group state-action pairs:
\begin{enumerate}
    \item Policy similarity metric \citep{agarwal_psm_2021}: $d_{\text{PSM}}(s_1, a_1; s_2, a_2) := |\pieval(a_1|s_1) - \pieval(a_2|s_2)| + \gamma\E_{a_1',a_2'\sim \pieval}[d_{\text{PSM}}((s_1', a_1'), (s_2', a_2'))]$. This metric measures short- and long-term similarity based on how $\pieval$ acts in different states, not in terms of the rewards and returns it receives.
    \item Random policy similarity metric \citep{dadashi_ploff_2021}: $d_{\text{RAND}}(s_1, a_1; s_2, a_2) := |r(s_1,a_1)-r(s_2,a_2)| + \gamma\E_{a' \sim \mathcal{U}(\aset)}[d_{\text{RAND}}((s_1', a'), (s_2', a'))]$. Similar to $d_{\pieval}$, but considers behavior of a random policy that samples actions uniformly.
\end{enumerate}
\begin{figure*}[hbtp]
    \centering
        \subfigure[Action-values of $\pi_e$]{\includegraphics[scale=0.35]{results/gridworld/pie_q_values.jpg}}
        \subfigure[$d_{\text{PSM}}$]{\includegraphics[scale=0.35]{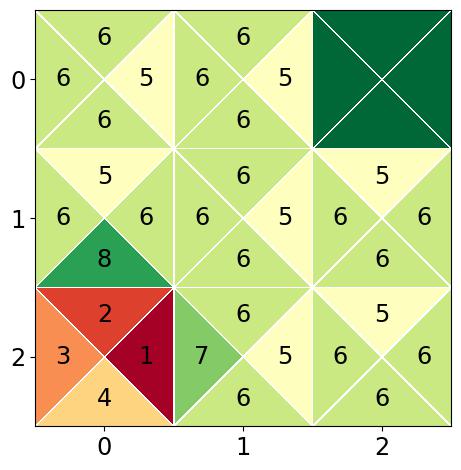}}
        \subfigure[$d_{\text{RAND}}$]{\includegraphics[scale=0.35]{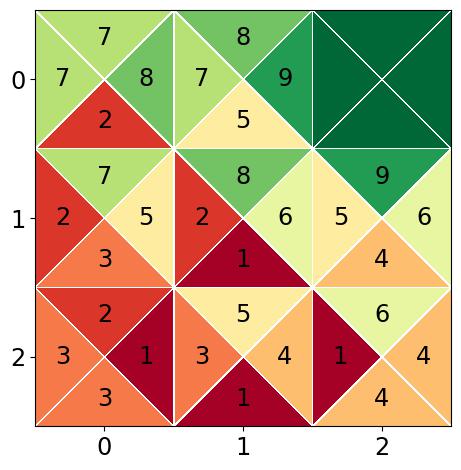}}
    \caption{\footnotesize Figure (a): $q^{\pi_e}$ for $\pi_e$. Center and right: group clustering according to \textsc{psm} (Figure (b)) and random-policy metric (Figure (c)) (center number in each triangle is group ID). Two state-action pairs are grouped together if their distance according to the specific metric is $0$. The top right cell is blank since it is the terminal state, which is not grouped.}
    \label{fig:supp_gw_grouping_visual}
\end{figure*}

From Figure \ref{fig:supp_gw_grouping_visual}, we reach the same conclusion as we did in Section \ref{sec:gw_sa_metrics}: that existing state-action similarity metrics are unsuitable for learning $q^{\pieval}$ due to how they group state-action pairs.

\subsection{Deep OPE Experiments}

We now present additional details on our empirical setup and additional experiments.

\subsubsection{Additional Empirical Setup Details}

Before applying any of the algorithms, we normalize the states of the dataset to make the each feature dimension have $0$ mean and $1$ standard deviation.

\paragraph{FQE Training Details} In all experiments and all datasets, we use a neural network as \textsc{fqe}'s action-value function with $2$ layers and $256$ neurons using \textsc{relu} activation function. We use mini-batch gradient descent to train the \textsc{fqe} network with mini-batch sizes of $512$ and for $300$K gradient steps. We use the Adam optimizer with learning rate $1e^{-5}$ and weight decay $1e^{-2}$. \textsc{fqe} minimizes the Huber loss. The only changes for \textsc{fqe-deep} are that it uses a neural network size of $4$ layers with $256$ neurons and trains for $500$K gradient steps. Preliminary results with lower learning rates such as $5e^{-6}$ and $1e^{-6}$ did not make a difference. \textsc{fqe} uses an exponentially-moving average target network with $\tau = 0.005$ updated every epoch.

\paragraph{ROPE and BCRL Details} In all experiments and datasets, we use a neural network as the state-action encoder for \textsc{rope} with $2$ layers and $256$ neurons with the \textsc{relu} activation. We use mini-batch gradient descent to train the the encoder network with mini-batch sizes of $512$ and for $300$K gradient steps.
For $\textsc{rope}$ and \textsc{bcrl}, we hyperparameter sweep the output dimension of the encoder. Additionally, for \textsc{rope}, we sweep over the angular distance scalar, $\beta$.  For the output dimension, we sweep over dimensions: $\{|X|/3,|X|/2, |X|\}$, where $|X|$ is the dimension of the original state-action space of the environment. For $\beta$, we sweep over $\{0.1, 1, 10\}$.  The best performing hyperparameter set is the one that results in lowest \textsc{rmae} (from $\rho(\pieval)$) at the end of \textsc{fqe} training. \textsc{rope} uses an exponentially-moving average target network with $\tau = 0.005$ updated every epoch. Finally, the output of \textsc{rope}'s encoder is fed through a LayerNorm \citep{ba_layernorm_2016} layer, followed by a \textsc{tanh} layer. \textsc{rope} minimizes the Huber loss.

When computing $d^{\pieval}\approx \tilde{d}_{\omega}$ \textsc{rope} uses the same procedure as \textsc{mico} (appendix C.2. of \cite{castro_mico_2022}):
\begin{equation*}
    \tilde{d}_{\omega}(s_1, a_1; s_2, a_2) \coloneqq \frac{||\phi_\omega(s_1,a_1)||_2^2 + ||\phi_{\bar{\omega}}(s_2,a_2)||_2^2}{2} + \beta \theta(\phi_\omega(s_1, a_1), \phi_{\bar{\omega}}(s_2, a_2)) 
\end{equation*}
where it applies the target network parameters, $\bar{\omega}$, on the $(s_2, a_2)$ pair for stability. For the angular distance $\theta(\phi_\omega(s_1, a_1), \phi_{\omega}(s_2, a_2))$, we have the cosine-similarity and the angle as below. Note in practice, for numerical stability, a small constant (e.g. $1e^{-6}$ or $5e^{-5}$) may have to be added when computing the square-root.
\begin{align*}
    \text{CS}(\phi_\omega(s_1, a_1), \phi_{\omega}(s_2, a_2)) &= \frac{\langle\phi_\omega(s_1, a_1),\phi_\omega(s_2, a_2)\rangle}{||\phi_\omega(s_1,a_1)|| ||\phi_\omega(s_2,a_2)||}\\
    \theta(\phi_\omega(s_1, a_1), \phi_{\omega}(s_2, a_2)) &= \text{arctan2}\left(\sqrt{1 - \text{CS}(\phi_\omega(s_1, a_1), \phi_{\omega}(s_2, a_2))^2}, \text{CS}(\phi_\omega(s_1, a_1), \phi_{\omega}(s_2, a_2))\right)
\end{align*}

\paragraph{Custom Datasets} 
We generate the datasets by training policies in the environment using \textsc{sac} \citep{haarnoja_sac_2018} and take the final policy at the end of training as $\pieval$ and we use an earlier policy with lower performance as the behavior policy. The expected discounted return of the policies and datasets for each domain is given in Table \ref{tab:pi_vals} ($\gamma = 0.99$). The values for the evaluation and behavior policies were computed by running each for $300$ rollout trajectories, which was more than a sufficient amount for the estimate to converge, and averaging the discounted return (note that \cite{chang_learning_2022} use $200$ rollout trajectories).

\begin{table}[H]
\centering
\begin{tabular}{l|l|l|l|l|l|l}
&
$\rho(\pieval)$ &
$\rho(\pi_b)$  \\
HumanoidStandup &
$14500$ &
$13000$ \\
Swimmer &
$43$ &
$31$ \\
HalfCheetah &
$544$ &
$308$  \\
\end{tabular}
\caption{Policy values of the evaluation policy and behavior policy.}
\label{tab:pi_vals}
\end{table}

\paragraph{D4RL Datasets} Due to known discrepancy issues between newer environments of gym\footnote{\url{https://github.com/Farama-Foundation/D4RL/tree/master}}, we generat our datasets instead of using the publicly available ones. To generate the datasets, we use the publicly available policies \footnote{\url{https://github.com/google-research/deep_ope}}. For each domain, the expert and evaluation policy was the $10$th (last policy) from training. The medium and behavior policy was the $5$th policy. We added a noise of $0.1$ to the policies.

\subsubsection{FQE Training Iteration Curves for D4RL Datasets}

In this section, we include the remaining \textsc{fqe} training iteration curves (\textsc{ope} error vs. gradient steps) for the \textsc{d4rl} dataset (Figure~\ref{fig:fqe_tr_itr_d4rl}). We can see that  \textsc{fqe} diverges in multiple settings while \textsc{rope} is very stable. While \textsc{fqe-clip} does not diverge, it is still highly unstable.

\begin{figure*}[hbtp]
    \centering
        \subfigure[HalfCheetah-random]{\includegraphics[scale=0.125]{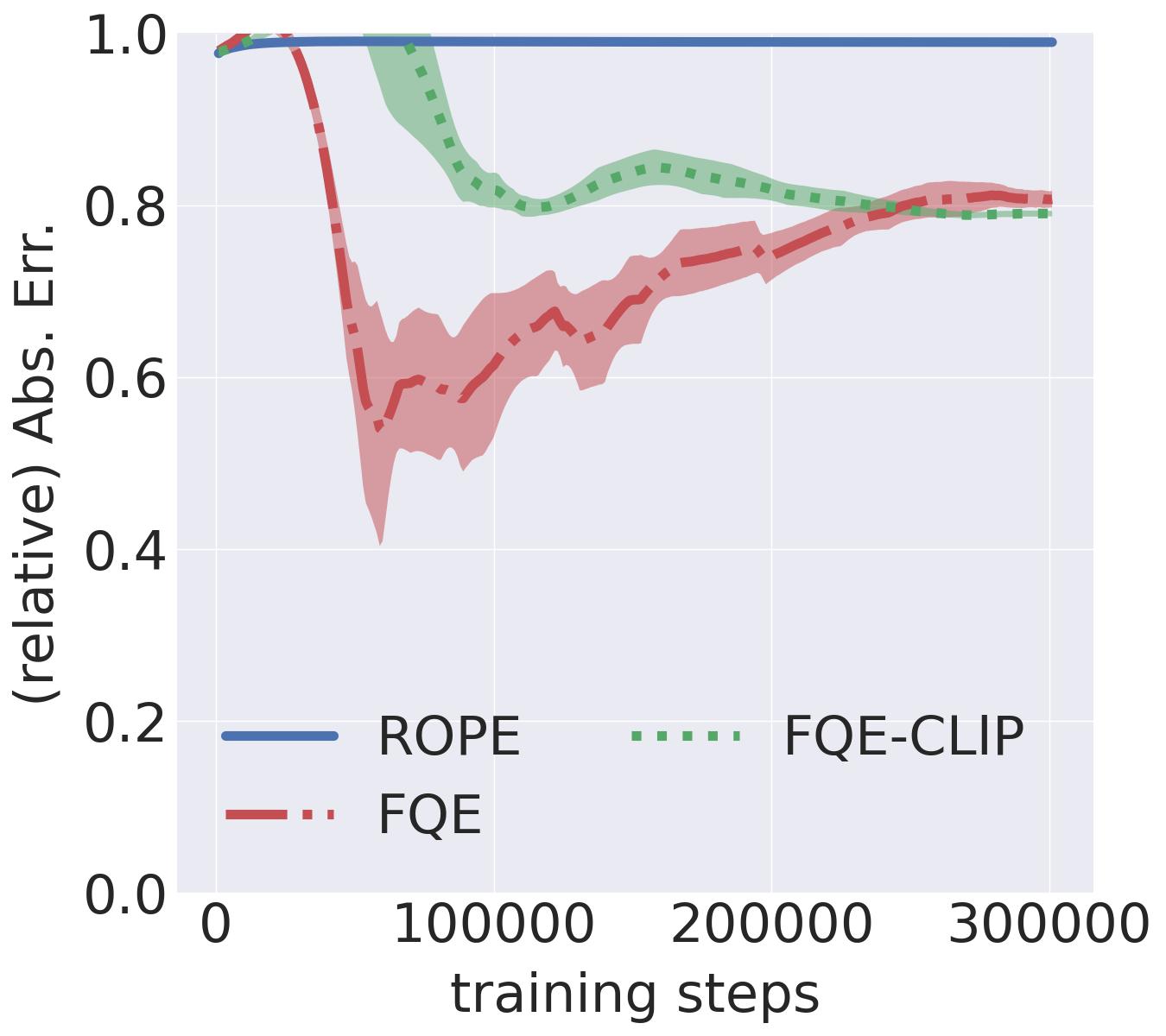}}
        \subfigure[HalfCheetah-medium]{\includegraphics[scale=0.125]{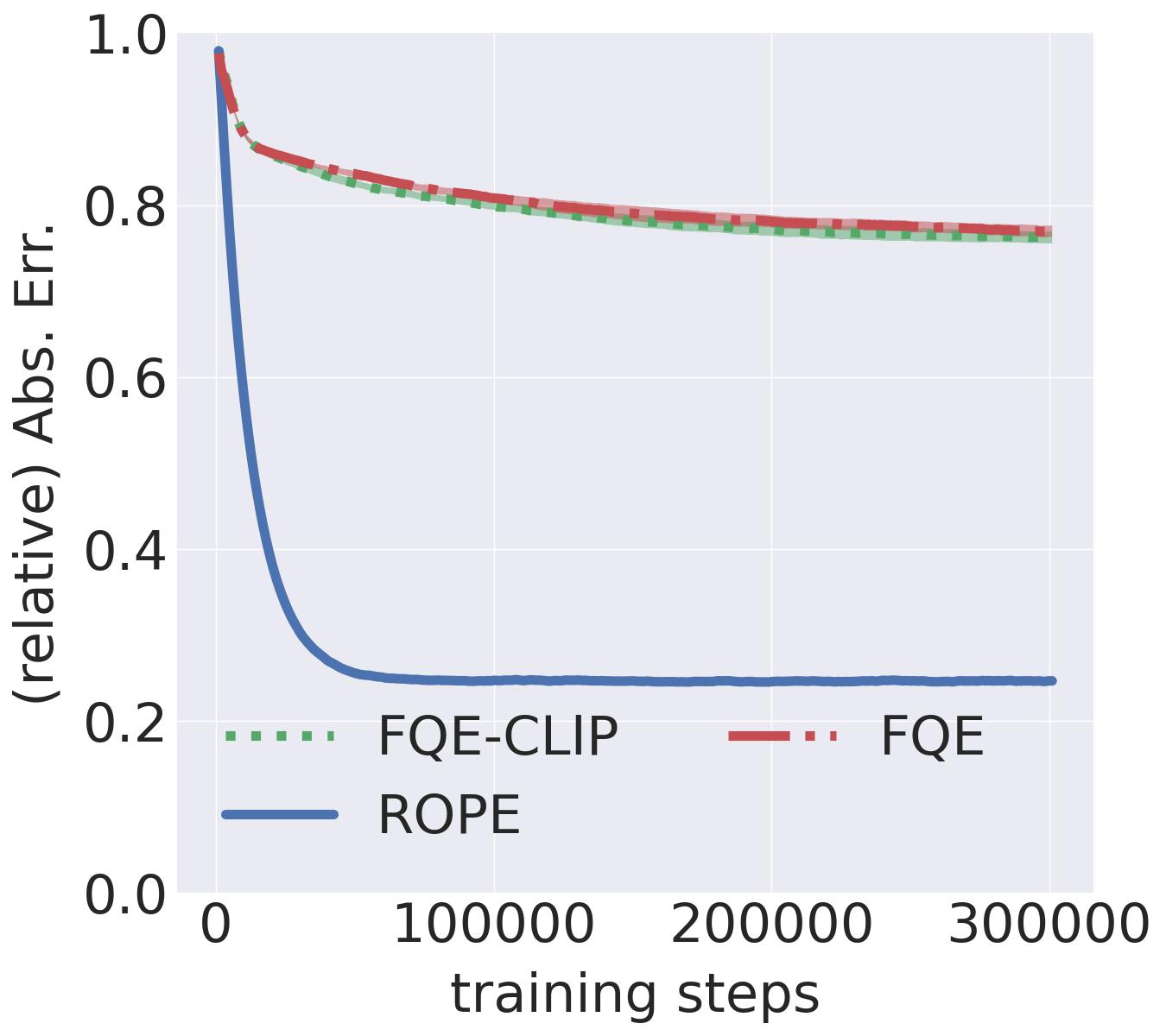}}
        \subfigure[HalfCheetah-medium-expert]{\includegraphics[scale=0.125]{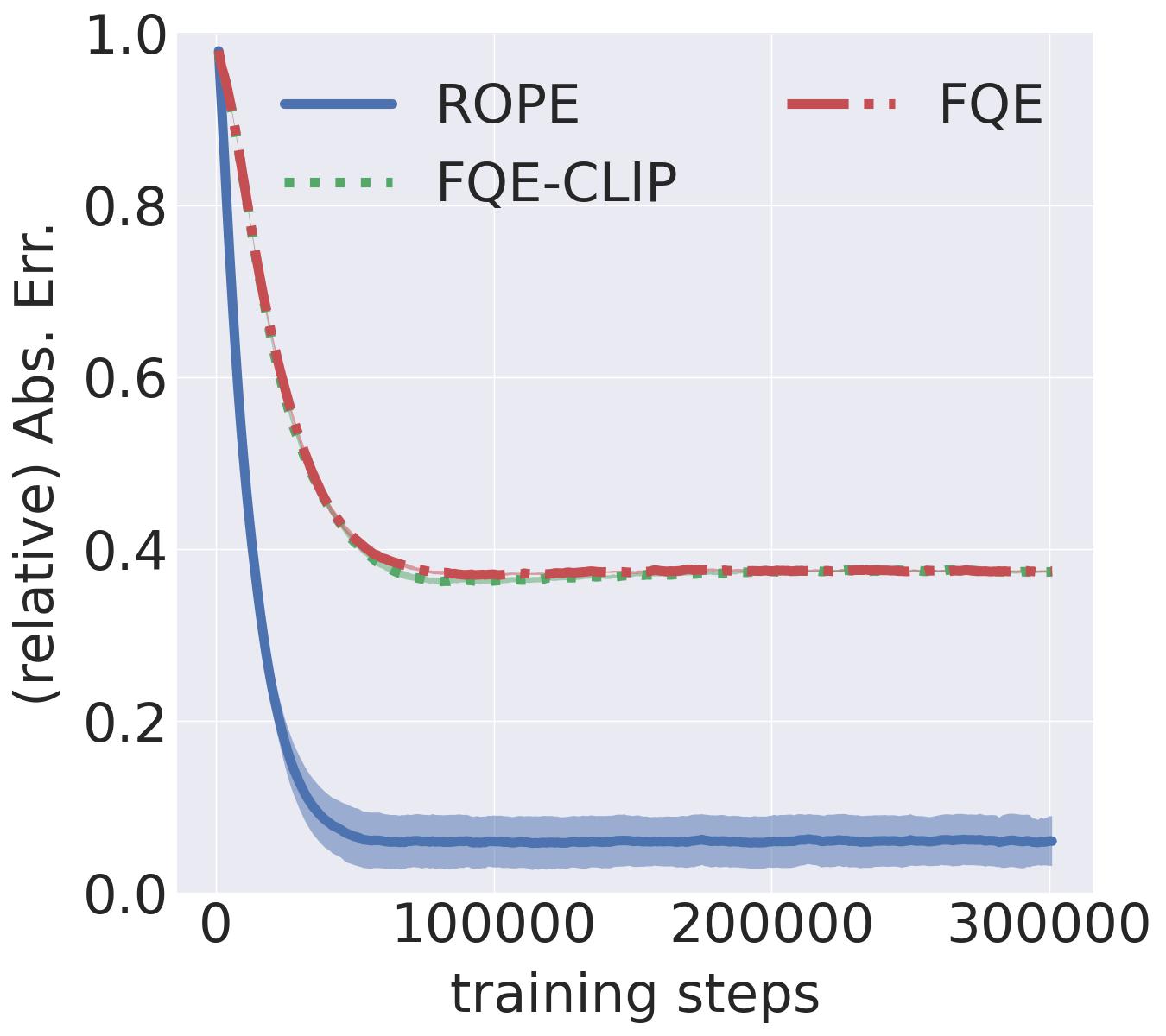}}\\
        \subfigure[Walker2D-random]{\includegraphics[scale=0.125]{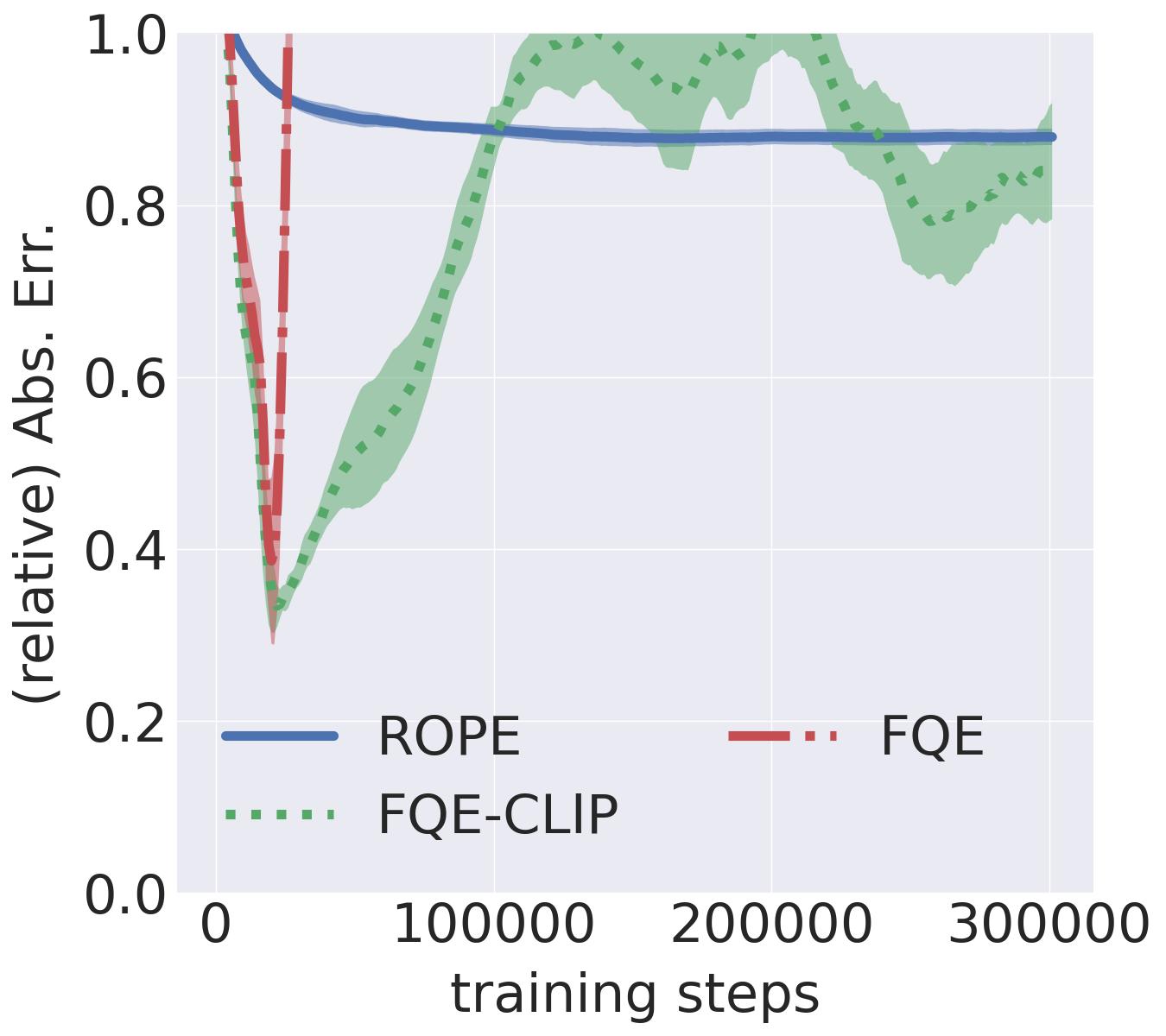}}
        \subfigure[Walker2D-medium]{\includegraphics[scale=0.125]{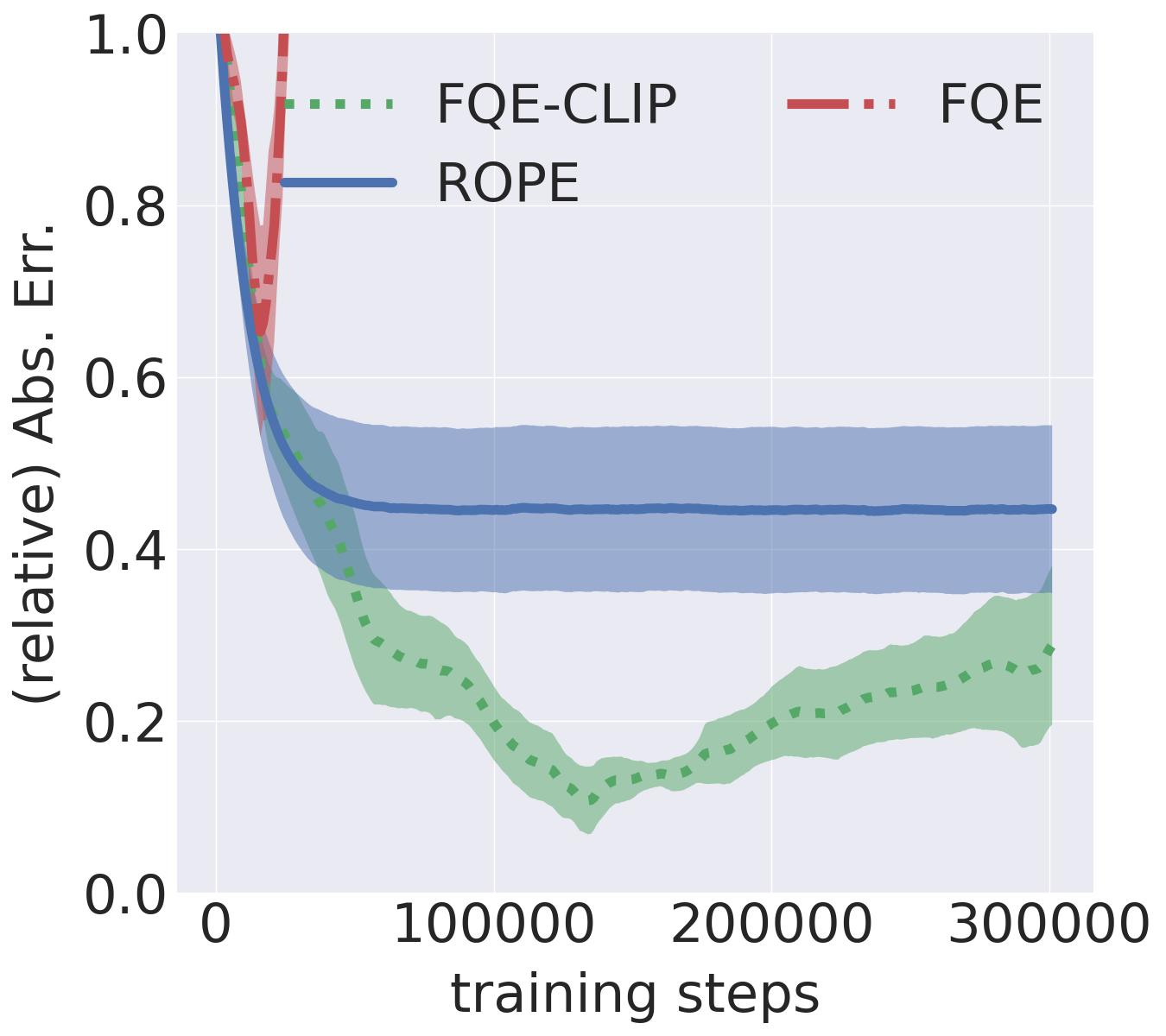}}
        \subfigure[Walker2D-medium-expert]{\includegraphics[scale=0.125]{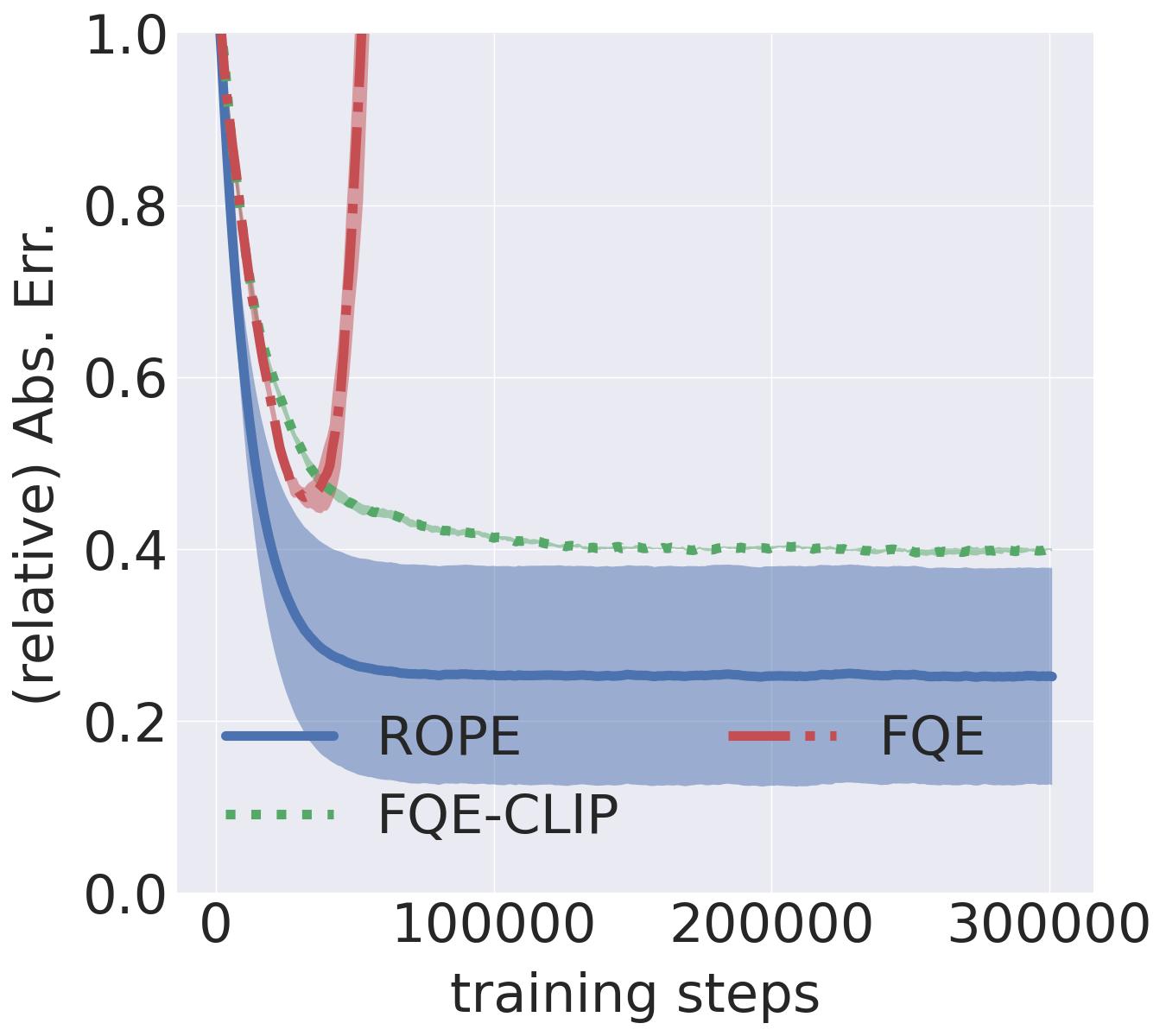}}\\
        \subfigure[Hopper-random]{\includegraphics[scale=0.125]{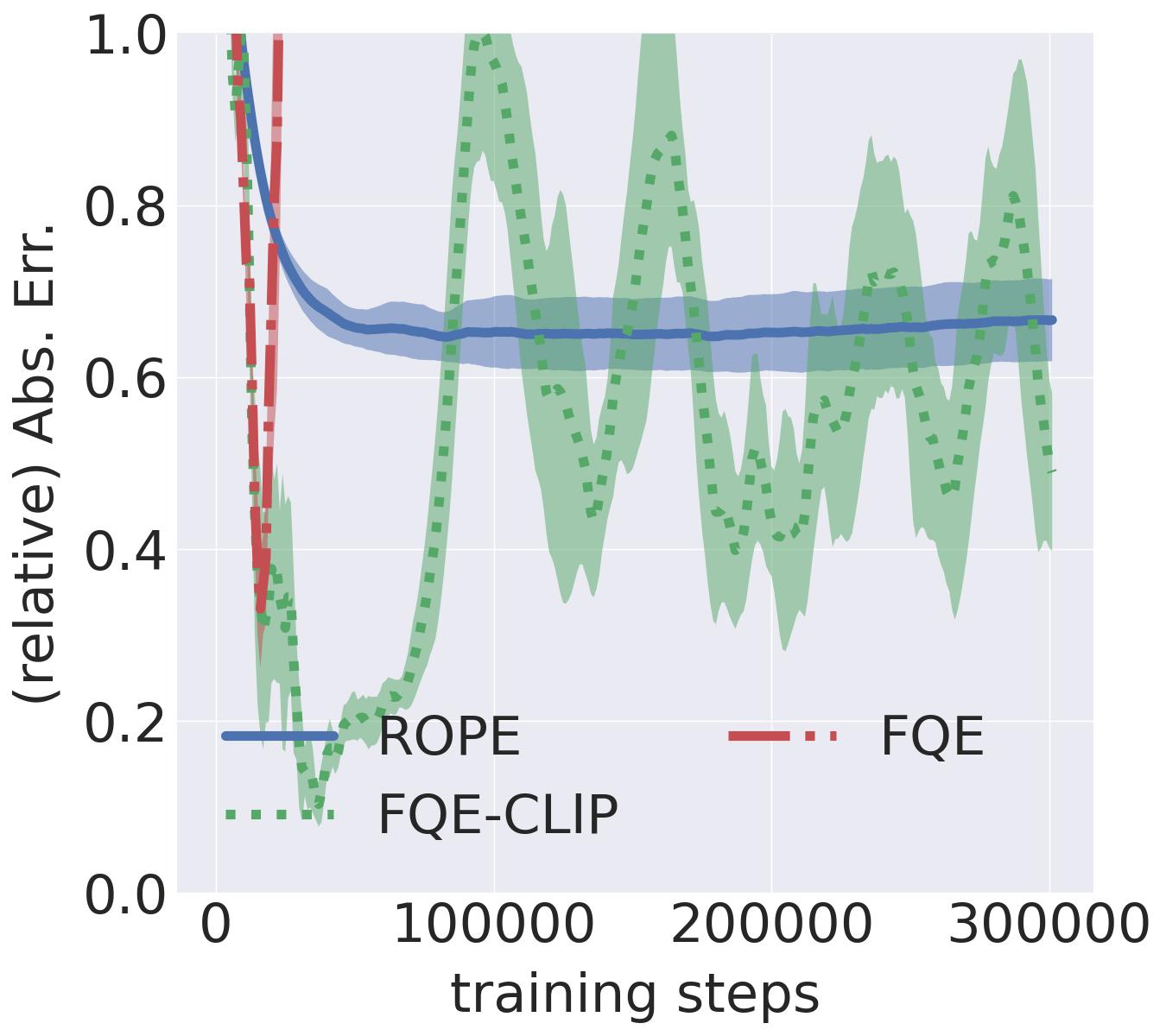}}
        \subfigure[Hopper-medium]{\includegraphics[scale=0.125]{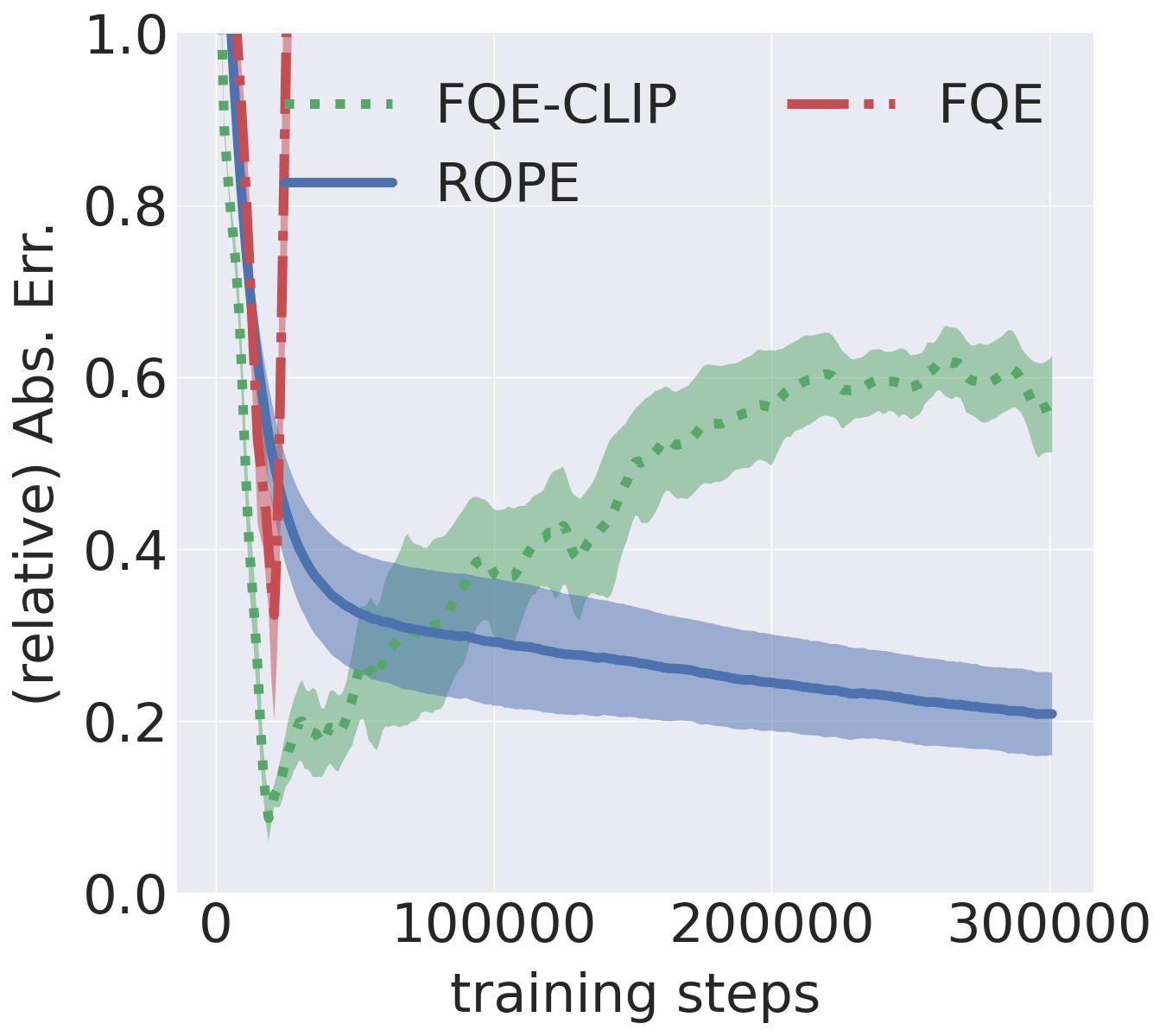}}
        \subfigure[Hopper-medium-expert]{\includegraphics[scale=0.125]{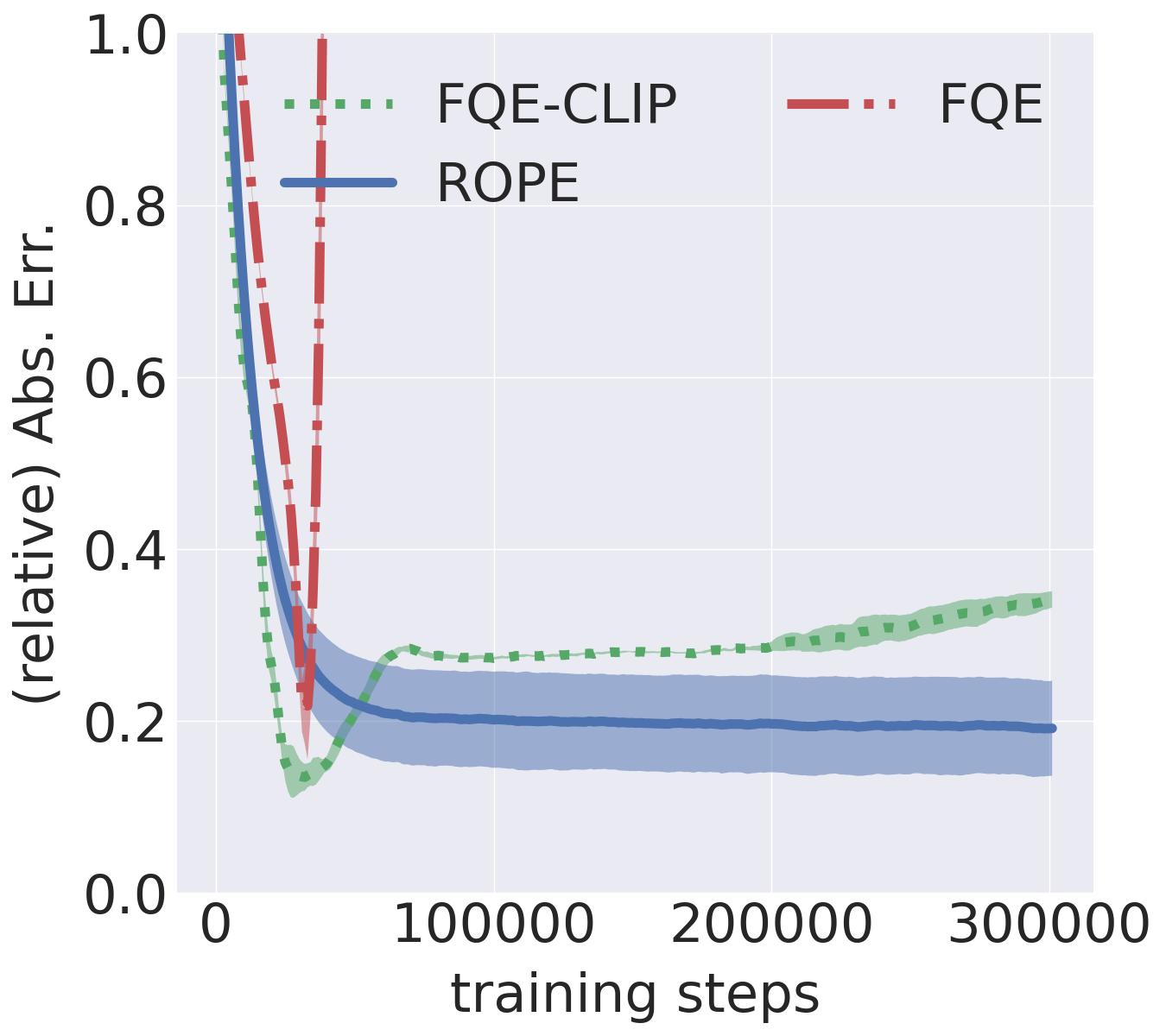}}\\
    \caption{\footnotesize \textsc{rmae} vs. training iterations of \textsc{fqe} on the \textsc{d4rl} datasets. \textsc{iqm} of errors for each domain were computed over $20$ trials with $95\%$ confidence intervals. Lower is better.}
    \label{fig:fqe_tr_itr_d4rl}
\end{figure*}

\subsubsection{Ablation: ROPE Hyperparameter Sensitivity}

Similar to the results in Section~\ref{sec:ablations}, we show \textsc{rope}'s hyperparameter sensitivity on all the custom and \textsc{d4rl} datasets. In general, we find that \textsc{rope} is robust to hyperparameter tuning, and it produces more data-efficient \textsc{ope} estimates than \textsc{fqe} for a wide variety of its hyperparameters. See Figures~\ref{fig:rope_abl_hp_start} to \ref{fig:rope_abl_hp_end}.

Note that in the bar graphs, we limit the vertical axis to $1$. In the Hopper and Walker \textsc{d4rl} experiments, \textsc{fqe} diverged and had an error significantly larger than $1$.

\begin{figure*}[hbtp]
    \centering
        \subfigure[Swimmer]{\includegraphics[scale=0.125]{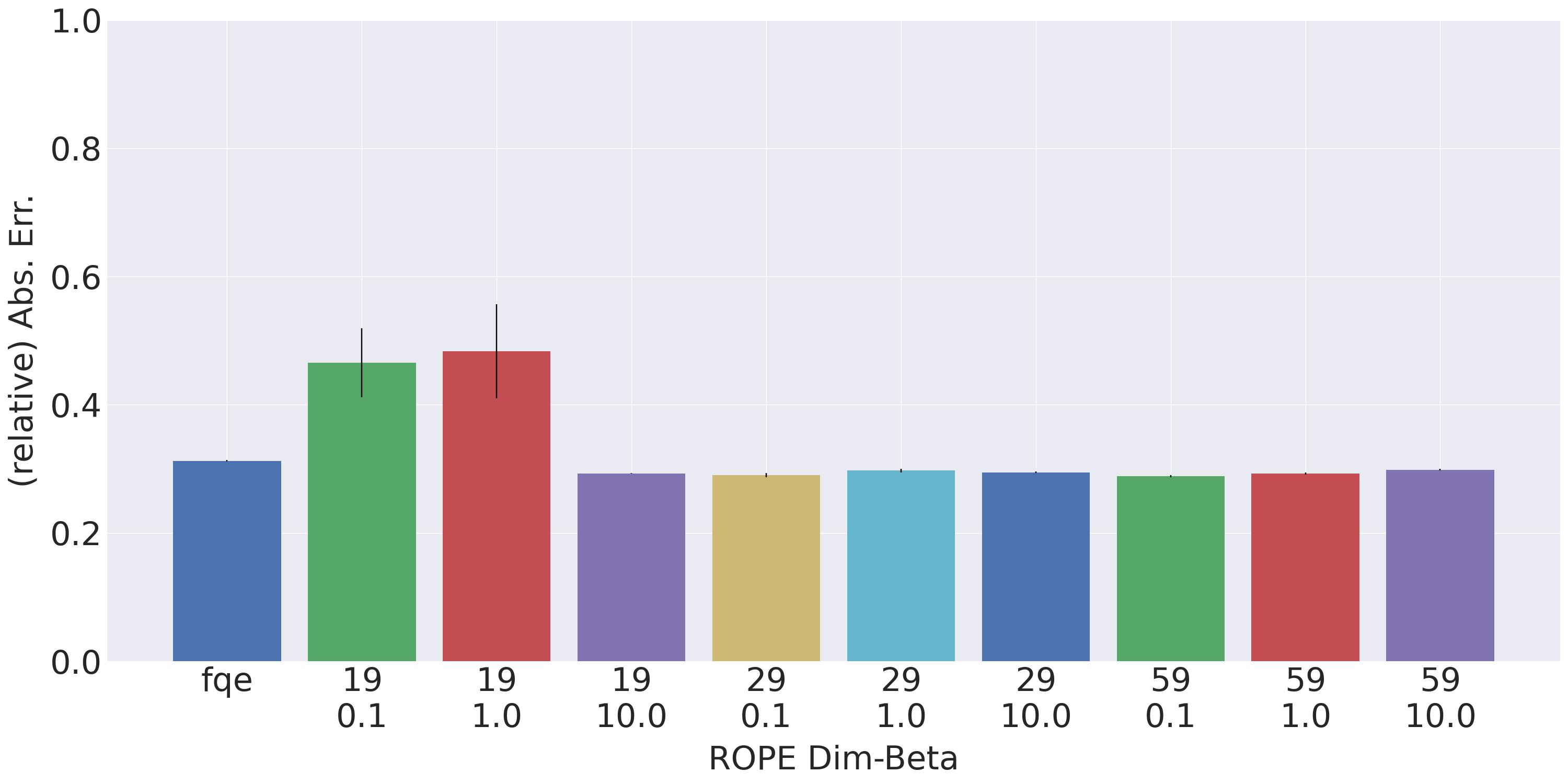}}
        \subfigure[HalfCheetah]{\includegraphics[scale=0.125]{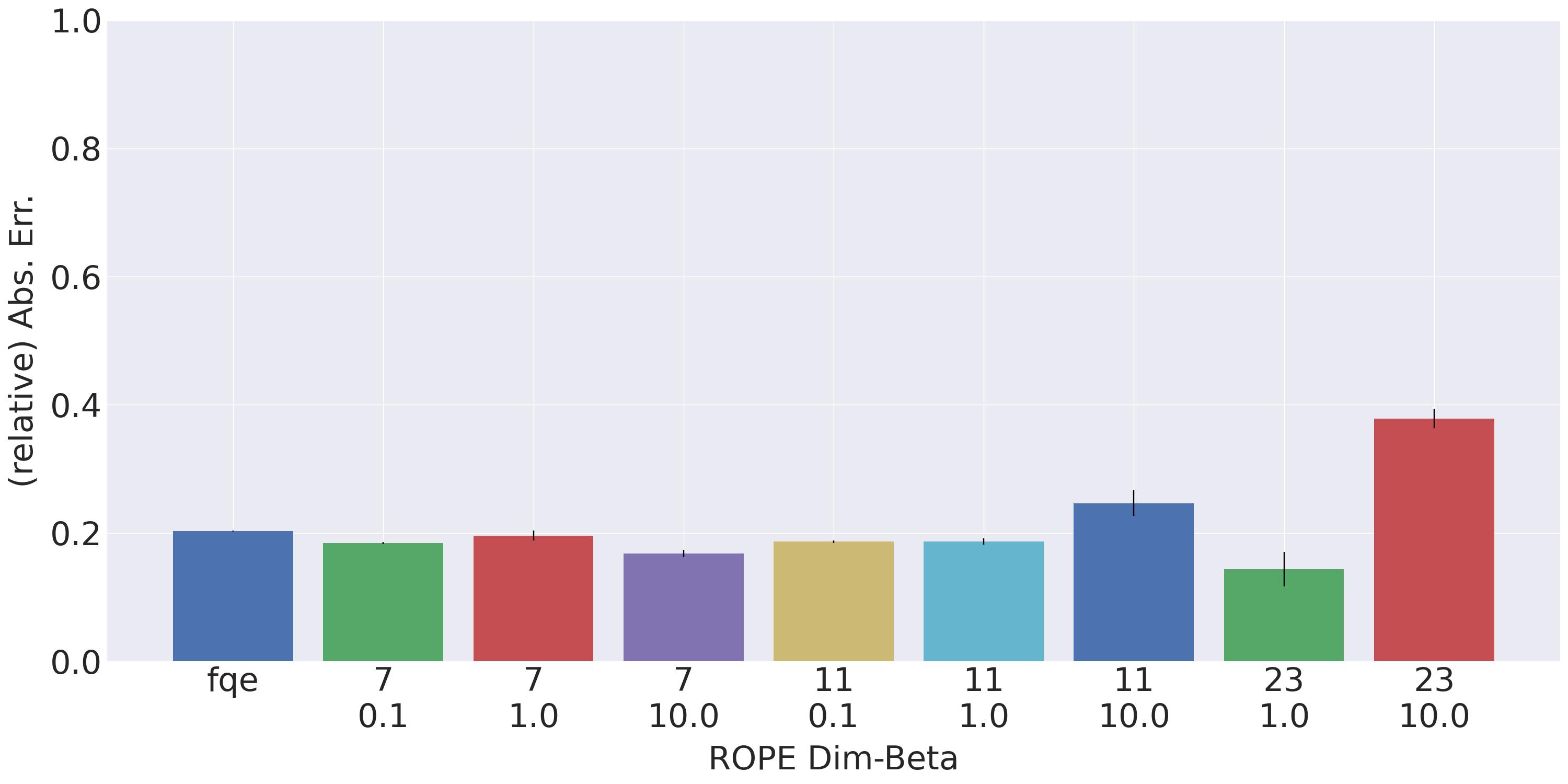}}
        \subfigure[HumanoidStandup]{\includegraphics[scale=0.125]{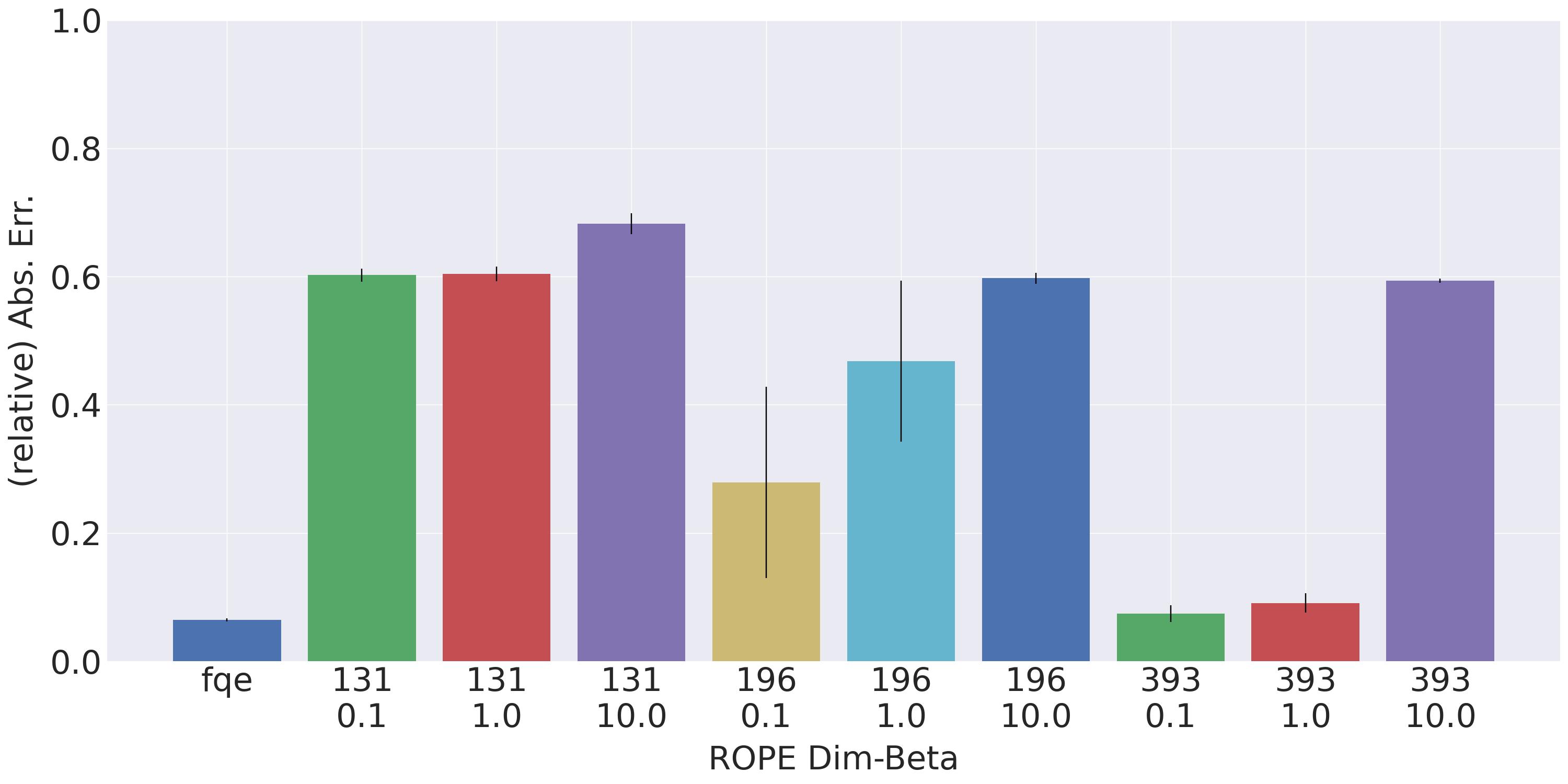}}
    \caption{\footnotesize \textsc{fqe} vs. \textsc{rope} when varying \textsc{rope}'s encoder output dimension (top) and $\beta$ (bottom) on the custom datasets.  \textsc{iqm} of errors are computed over $20$ trials with $95\%$ confidence intervals. Lower is better.}
    \label{fig:rope_abl_hp_start}
\end{figure*}

\begin{figure*}[hbtp]
    \centering
        \subfigure[HalfCheetah-random]{\includegraphics[scale=0.125]{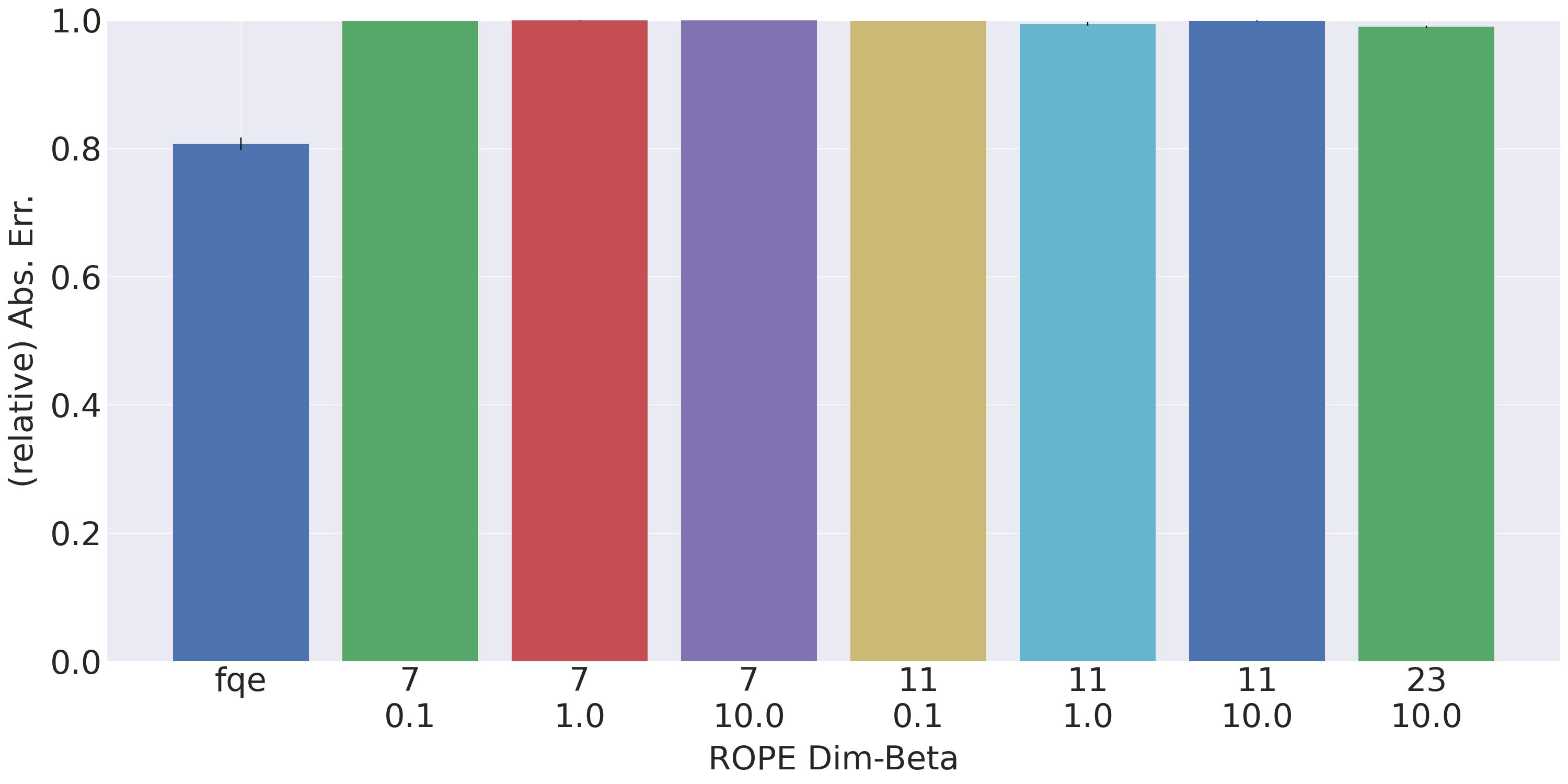}}
        \subfigure[HalfCheetah-medium]{\includegraphics[scale=0.125]{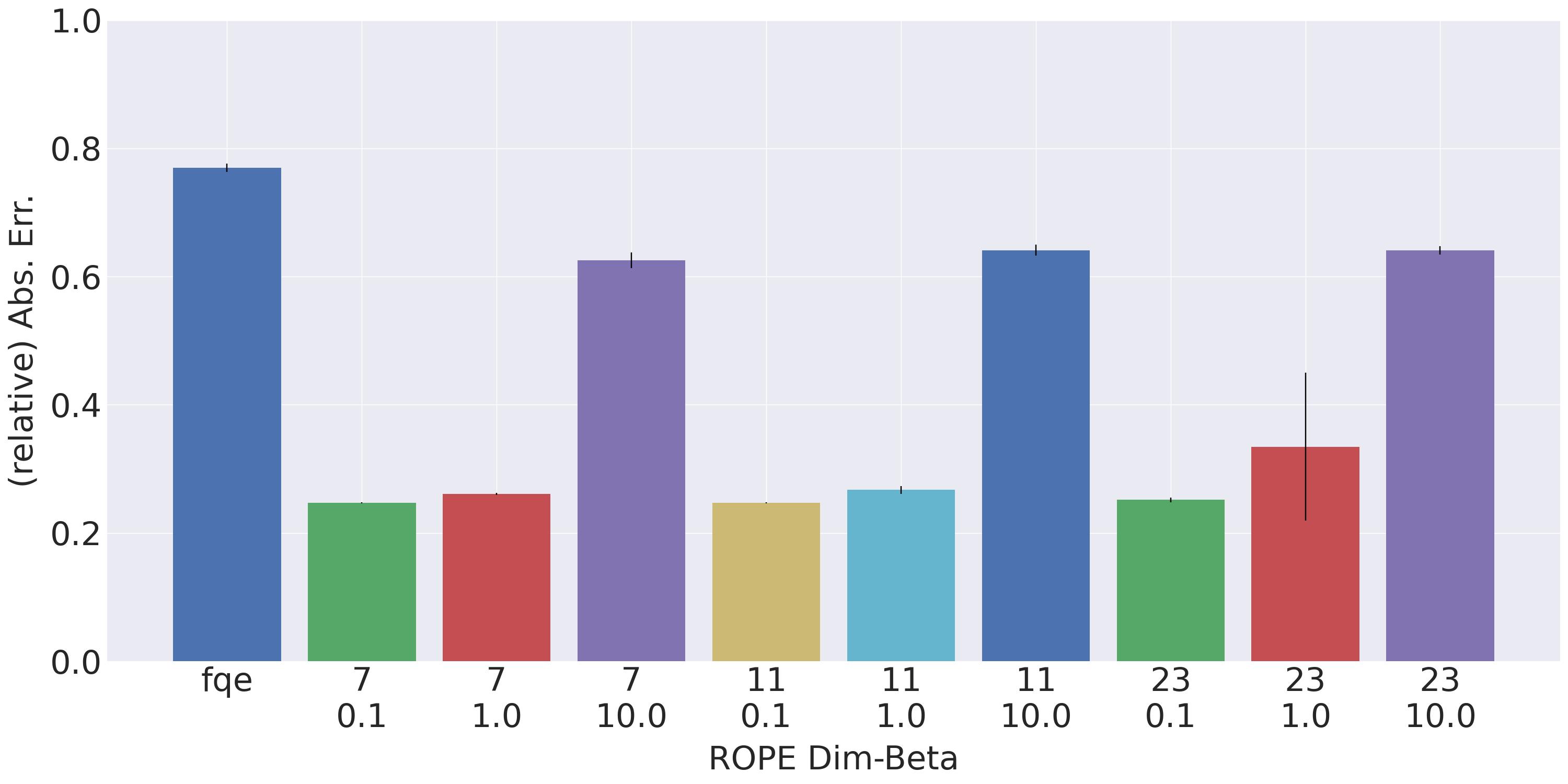}}
        \subfigure[HalfCheetah-medium-expert]{\includegraphics[scale=0.125]{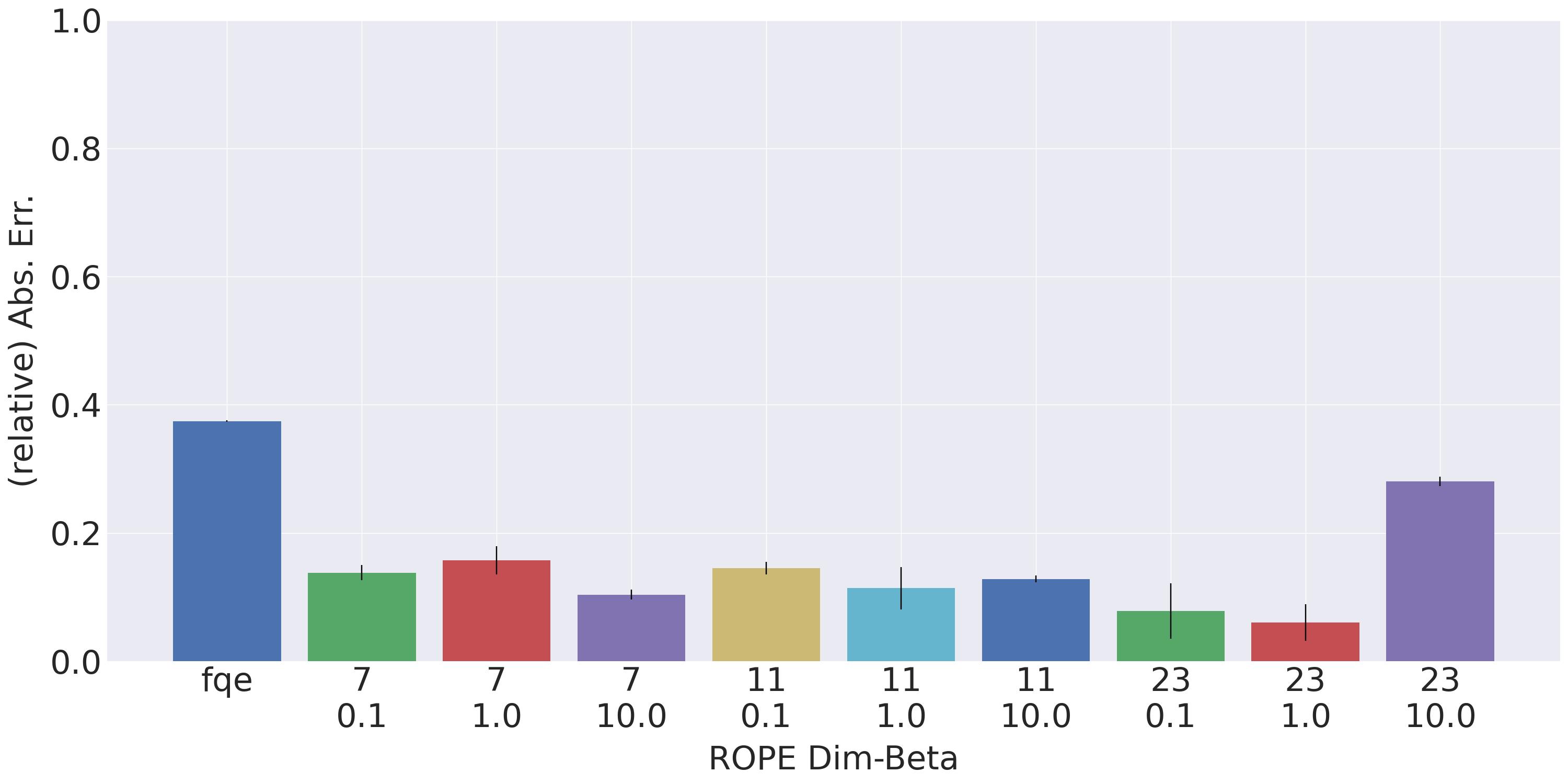}}
    \caption{\footnotesize \textsc{fqe} vs. \textsc{rope} when varying \textsc{rope}'s encoder output dimension (top) and $\beta$ (bottom) on the \textsc{d4rl} datasets.  \textsc{iqm} of errors are computed over $20$ trials with $95\%$ confidence intervals. Lower is better.}
\end{figure*}

\begin{figure*}[hbtp]
    \centering
        \subfigure[Walker2D-random]{\includegraphics[scale=0.125]{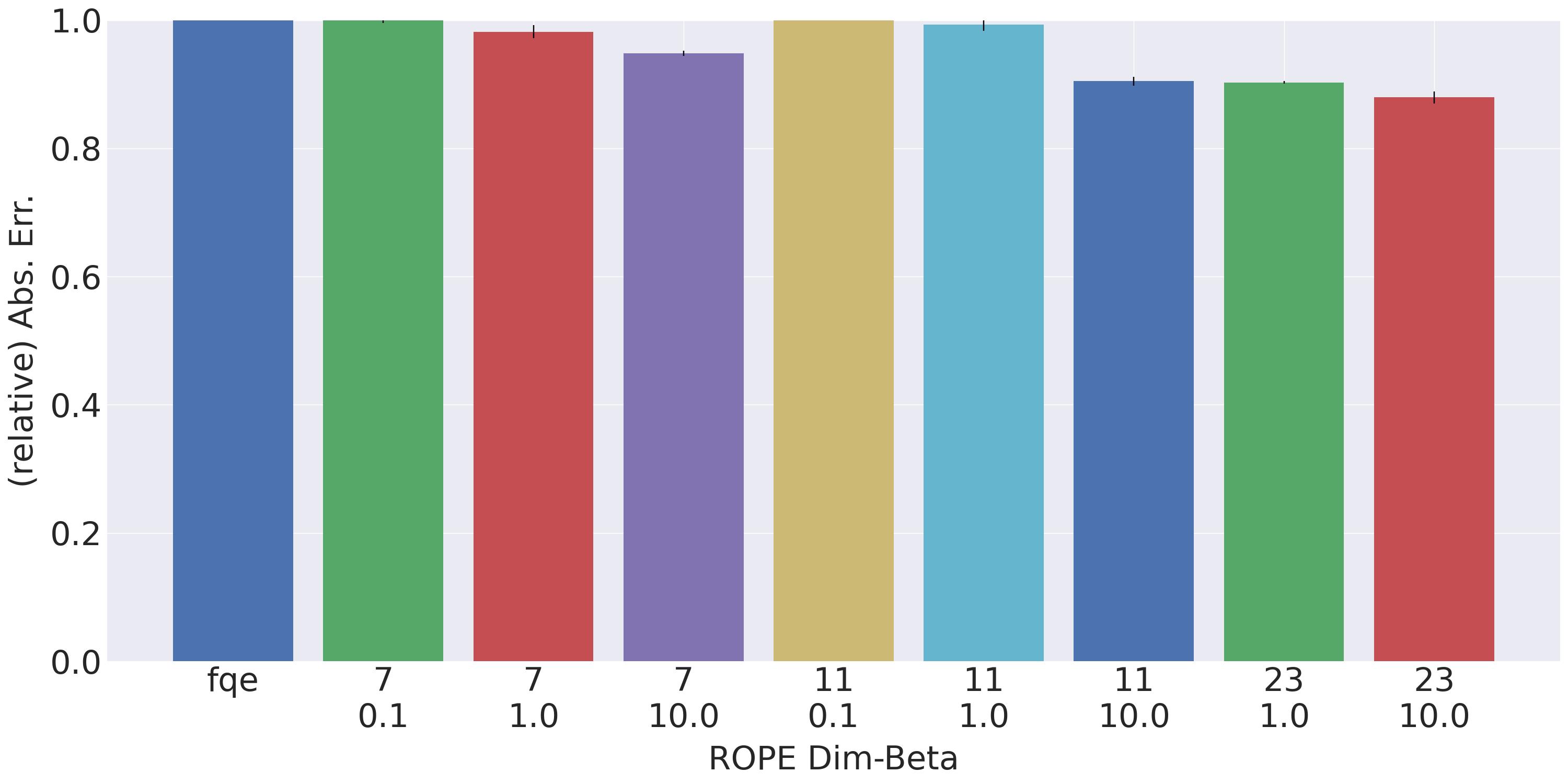}}
        \subfigure[Walker2D-medium]{\includegraphics[scale=0.125]{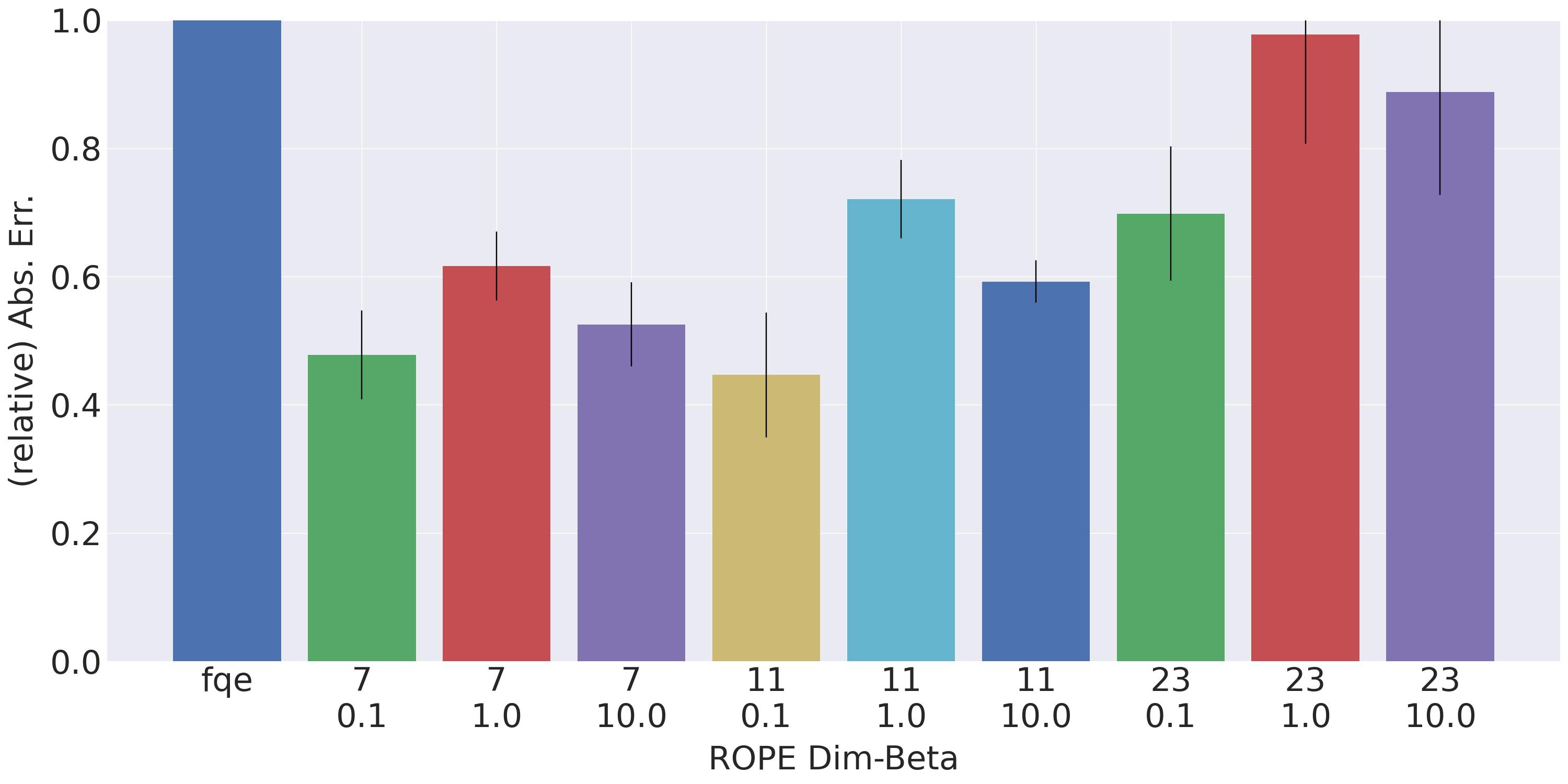}}
        \subfigure[Walker2D-medium-expert]{\includegraphics[scale=0.125]{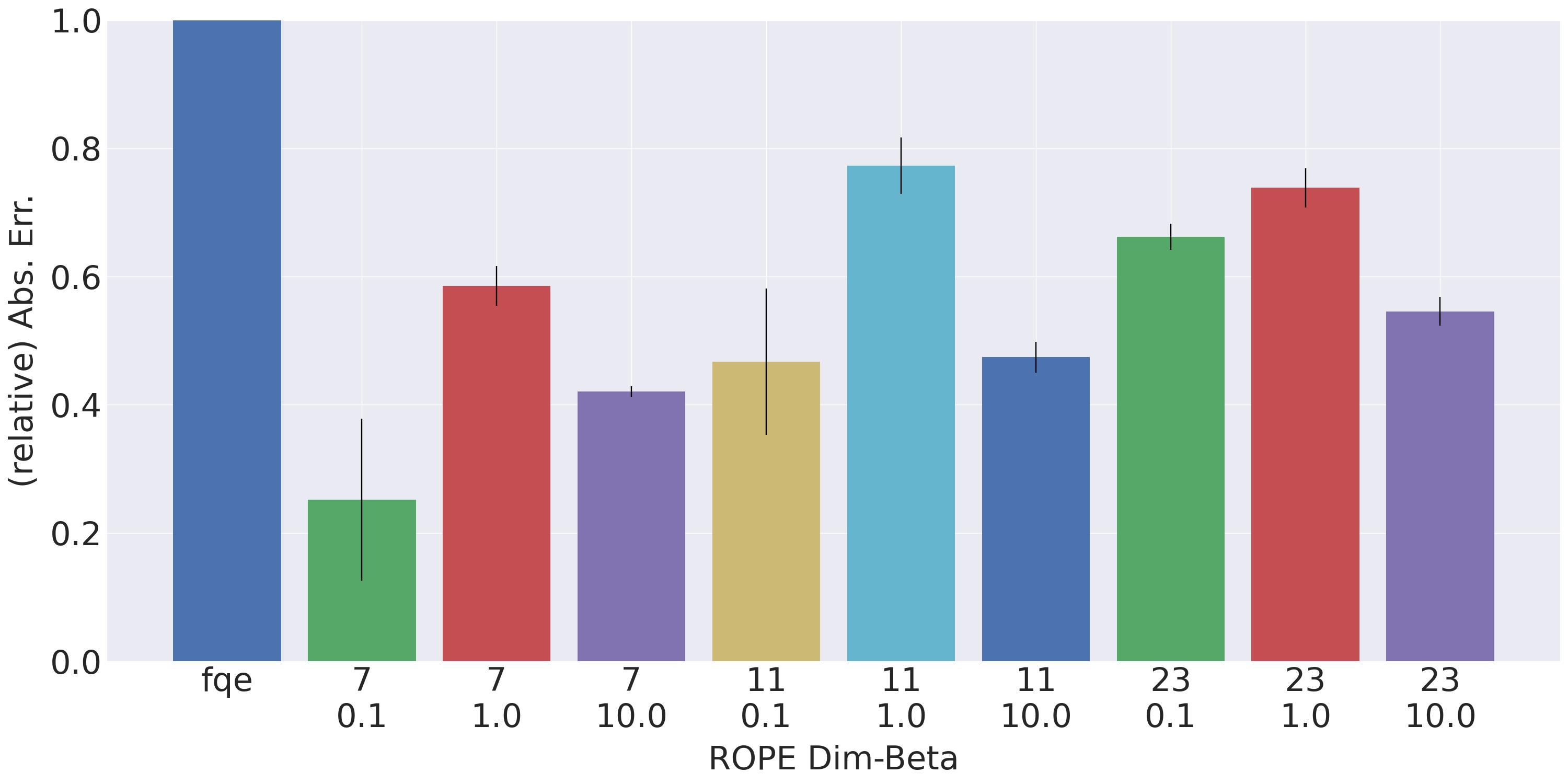}}
    \caption{\footnotesize \textsc{fqe} vs. \textsc{rope} when varying \textsc{rope}'s encoder output dimension (top) and $\beta$ (bottom) on the \textsc{d4rl} datasets.  \textsc{iqm} of errors are computed over $20$ trials with $95\%$ confidence intervals. Lower is better.}
\end{figure*}

\begin{figure*}[hbtp]
    \centering
        \subfigure[Hopper-random]{\includegraphics[scale=0.125]{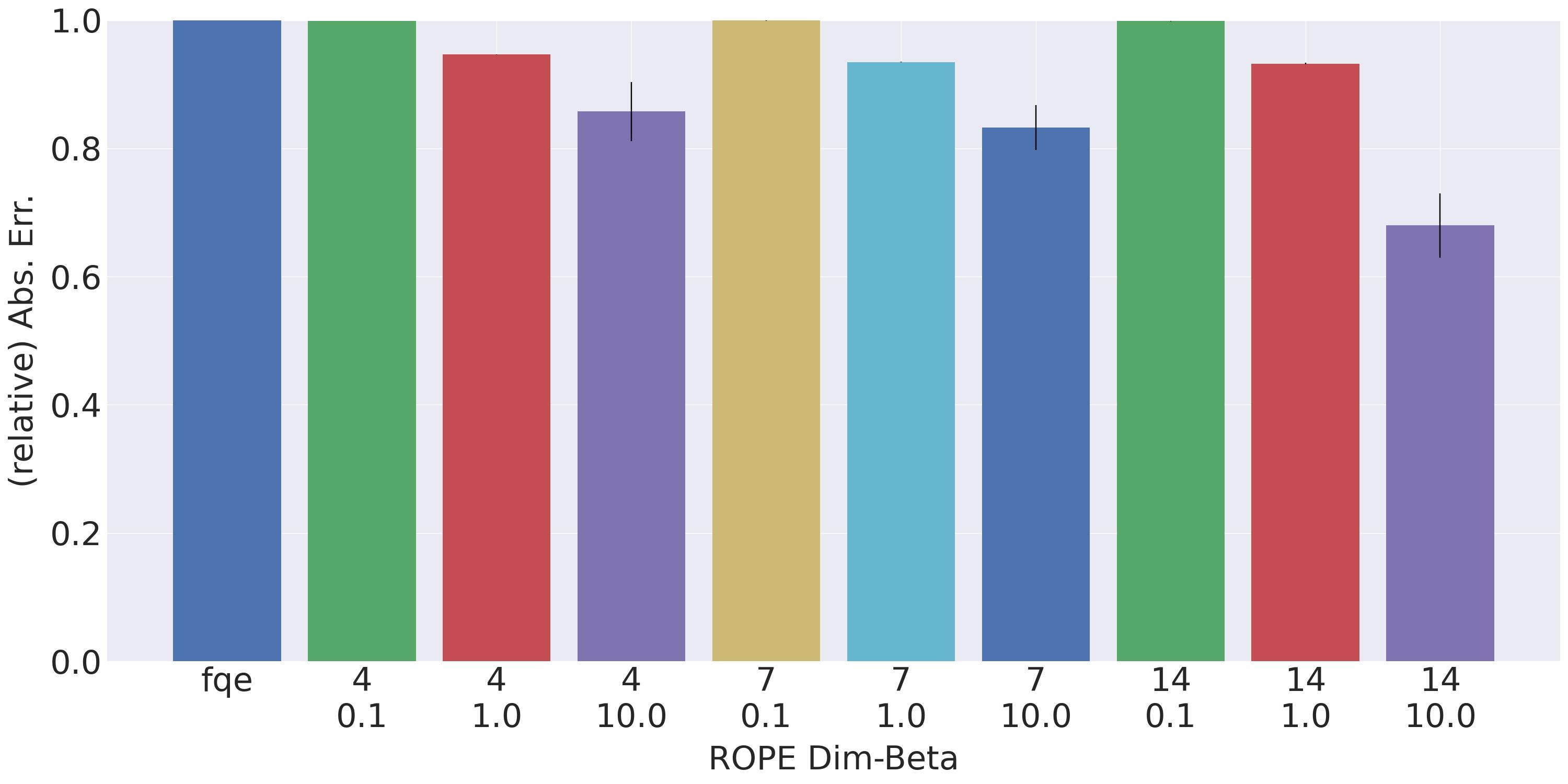}}
        \subfigure[Hopper-medium]{\includegraphics[scale=0.125]{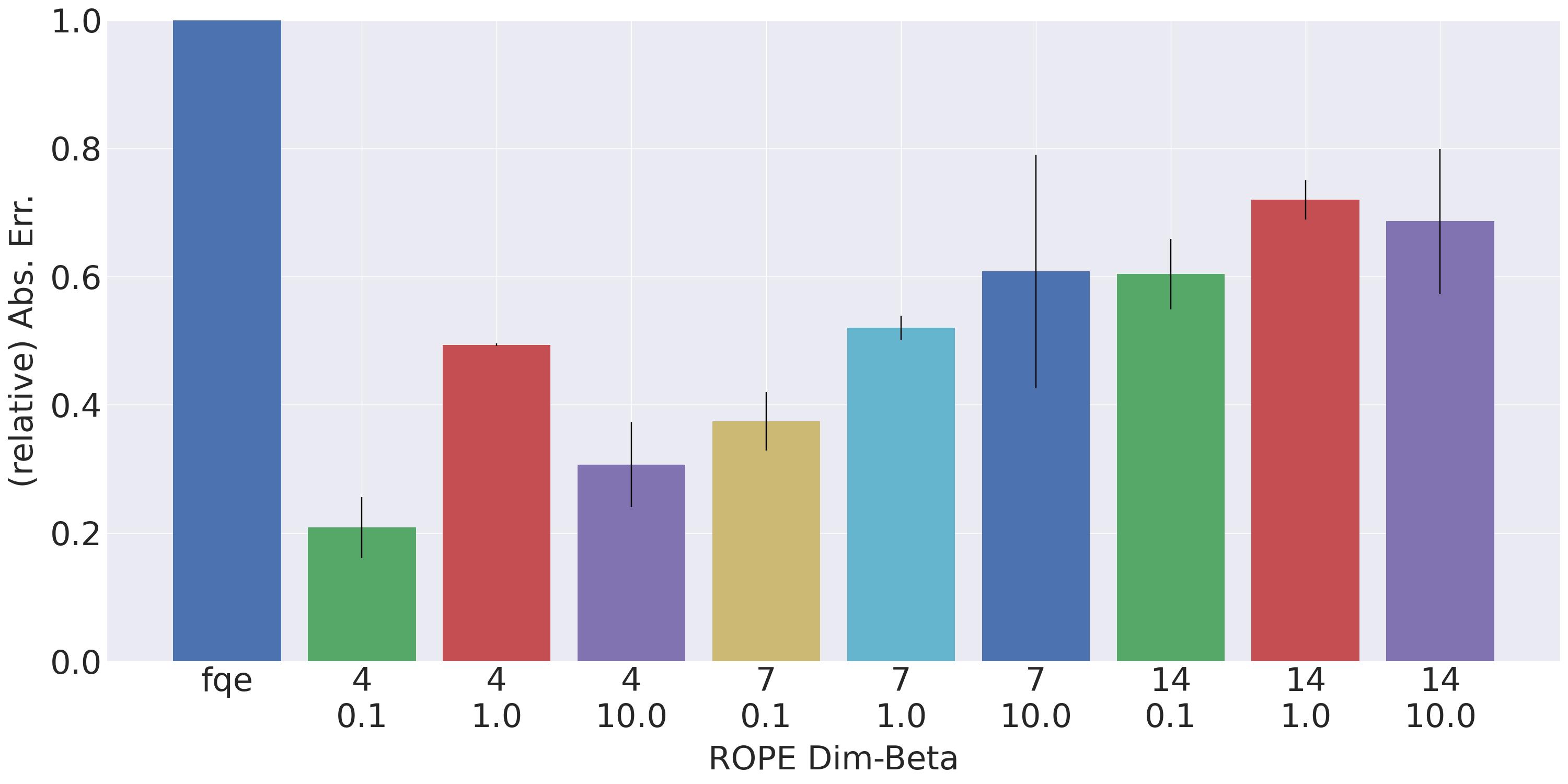}}
        \subfigure[Hopper-medium-expert]{\includegraphics[scale=0.125]{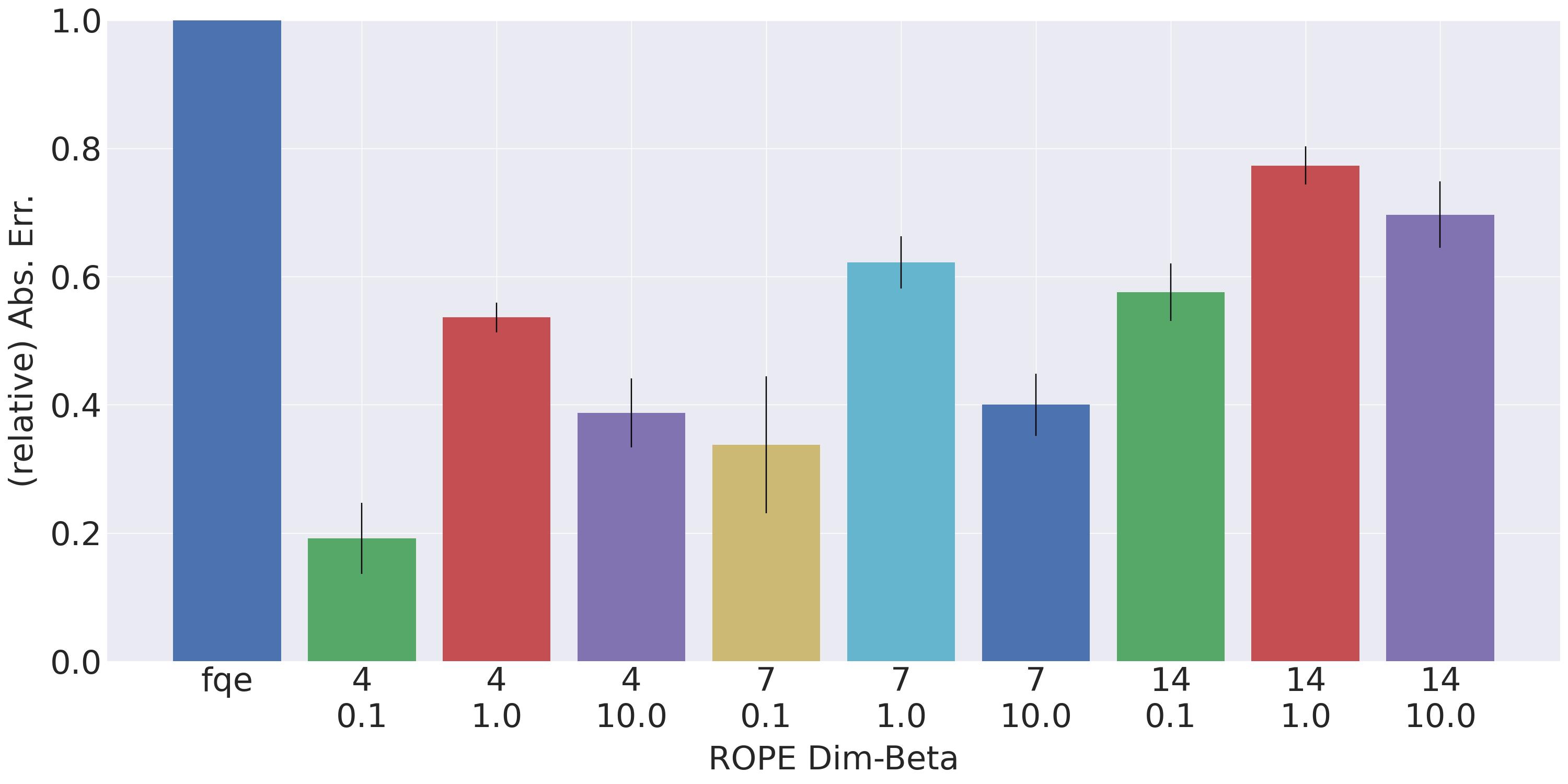}}
    \caption{\footnotesize \textsc{fqe} vs. \textsc{rope} when varying \textsc{rope}'s encoder output dimension (top) and $\beta$ (bottom) on the \textsc{d4rl} datasets.  \textsc{iqm} of errors are computed over $20$ trials with $95\%$ confidence intervals. Lower is better.}
    \label{fig:rope_abl_hp_end}
\end{figure*}

\subsubsection{Ablation: RMAE Distributions}

In this section, show the remaining \textsc{rmae} distribution curves \citep{agarwal_precipice_2021} of each algorithm on all datasets. We reach the similar conclusion that on very difficult datasets, \textsc{rope} significantly mitigates the divergence of \textsc{fqe} and that to avoid \textsc{fqe} divergence it is necessary to clip the bootstrapping target. See Figures~\ref{fig:rope_abl_divergence_start} to \ref{fig:rope_abl_divergence_end}.

\begin{figure*}[hbtp]
    \centering
        \subfigure[Swimmer]{\includegraphics[scale=0.15]{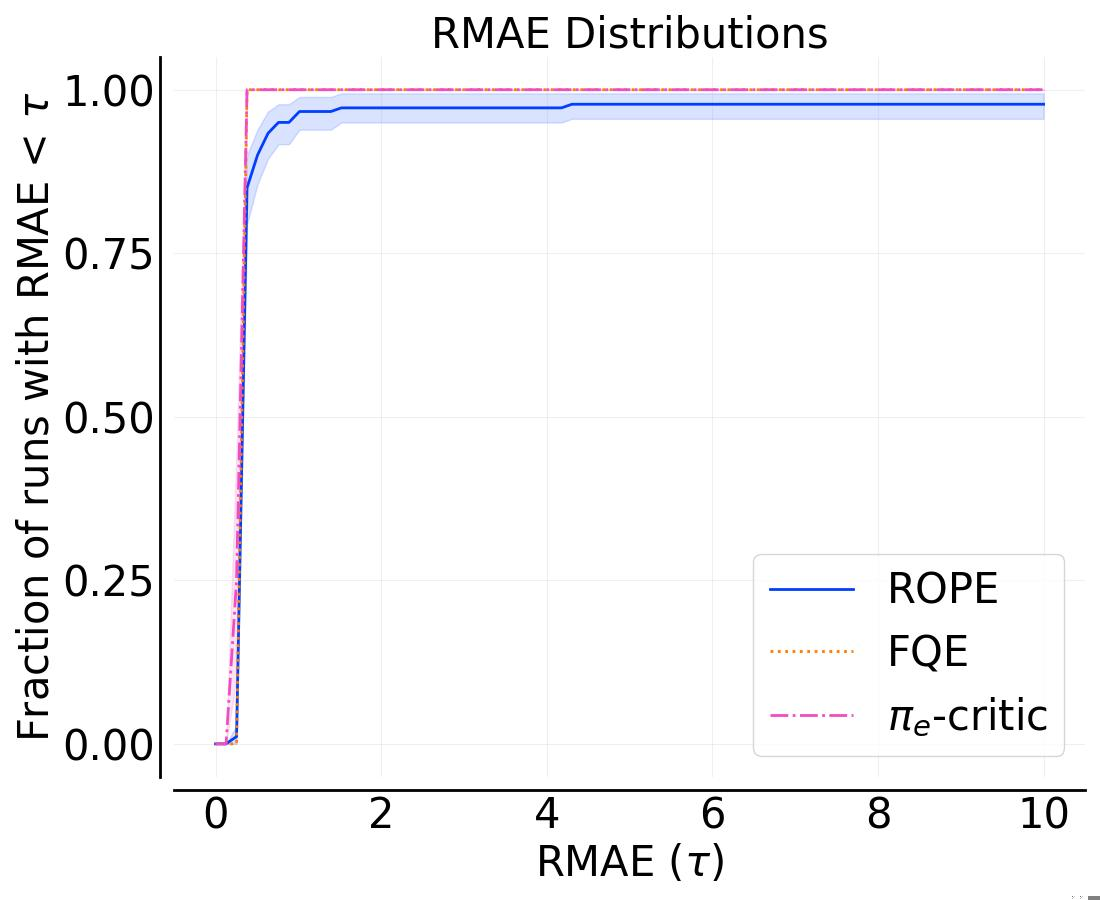}}
        \subfigure[HalfCheetah]{\includegraphics[scale=0.15]{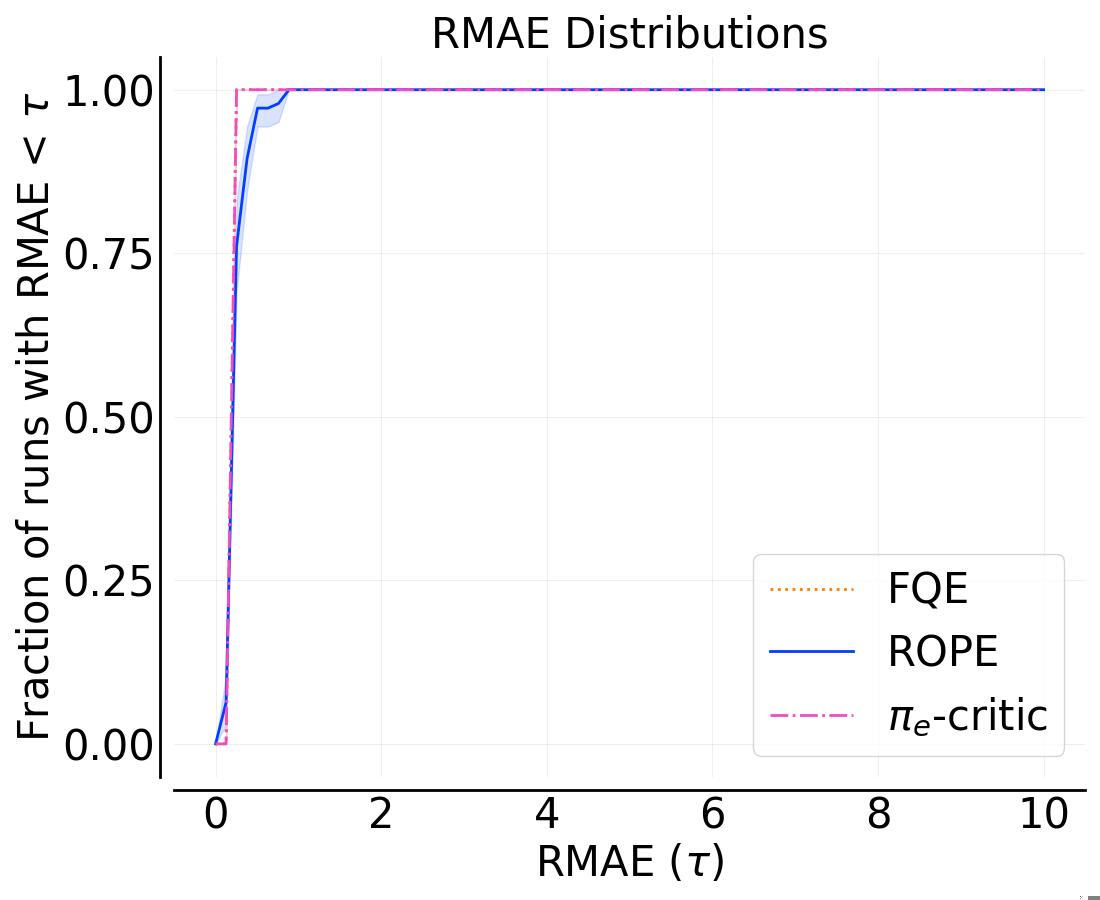}}
        \subfigure[HumanoidStandup]{\includegraphics[scale=0.15]{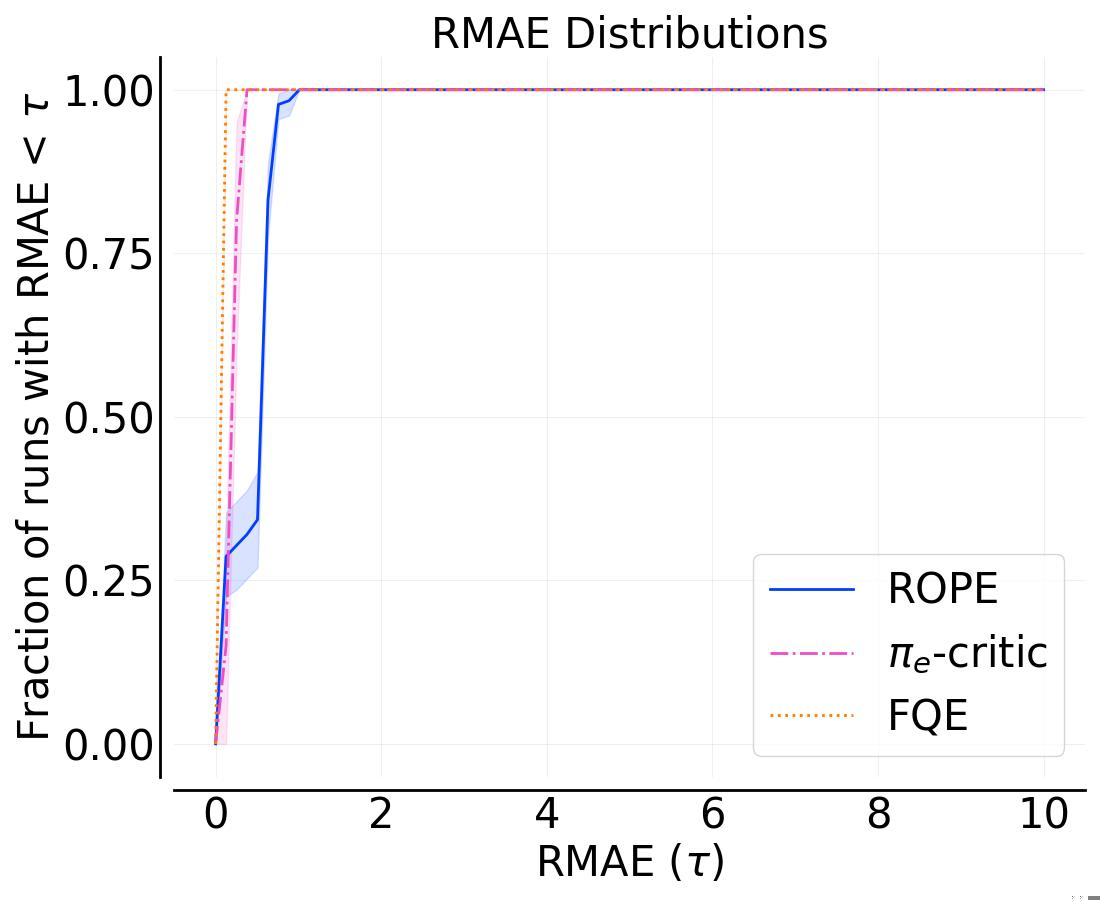}}
    \caption{\footnotesize \textsc{rmae} distributions across all runs and hyperparameters for each algorithm, resulting in $\geq 20$ runs for each algorithm. Shaded region is $95\%$ confidence interval. Larger area under the curve is better.}
    \label{fig:rope_abl_divergence_start}
\end{figure*}

\begin{figure*}[hbtp]
    \centering
        \subfigure[HalfCheetah-random]{\includegraphics[scale=0.15]{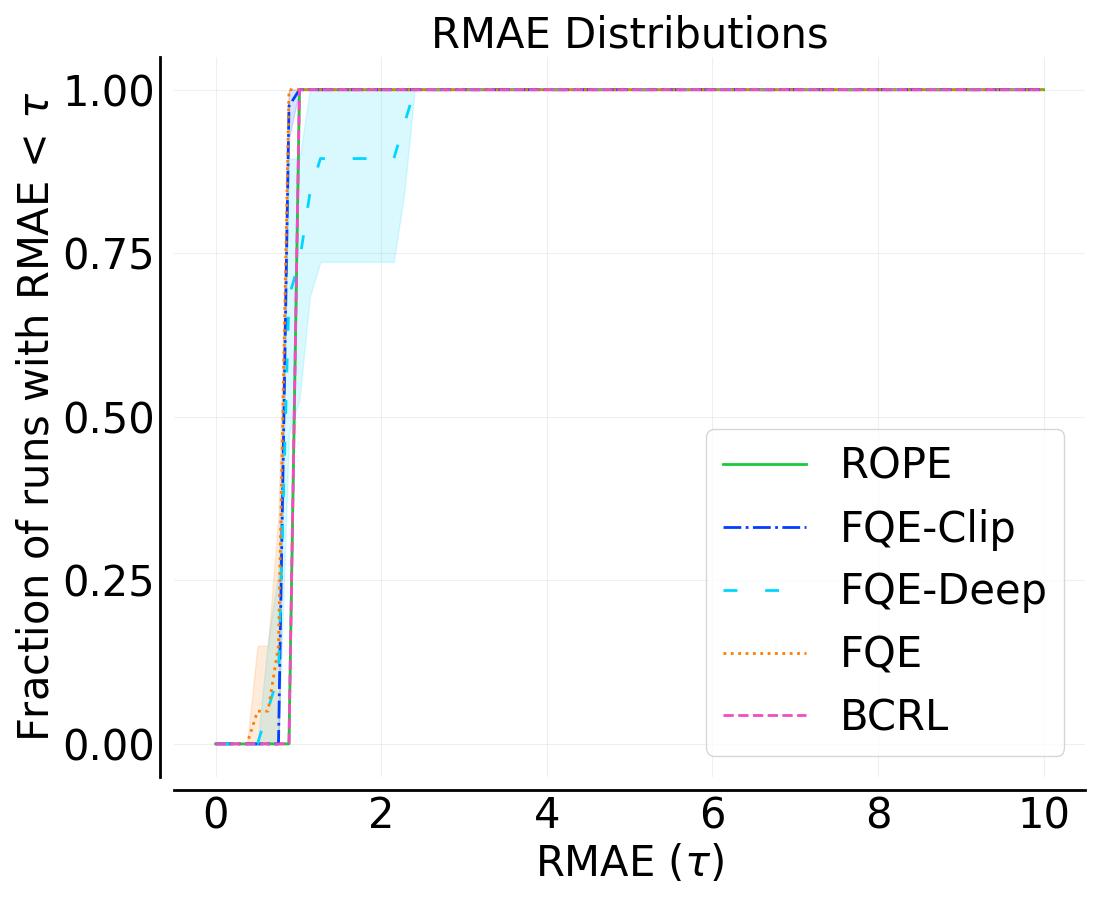}}
        \subfigure[HalfCheetah-medium]{\includegraphics[scale=0.15]{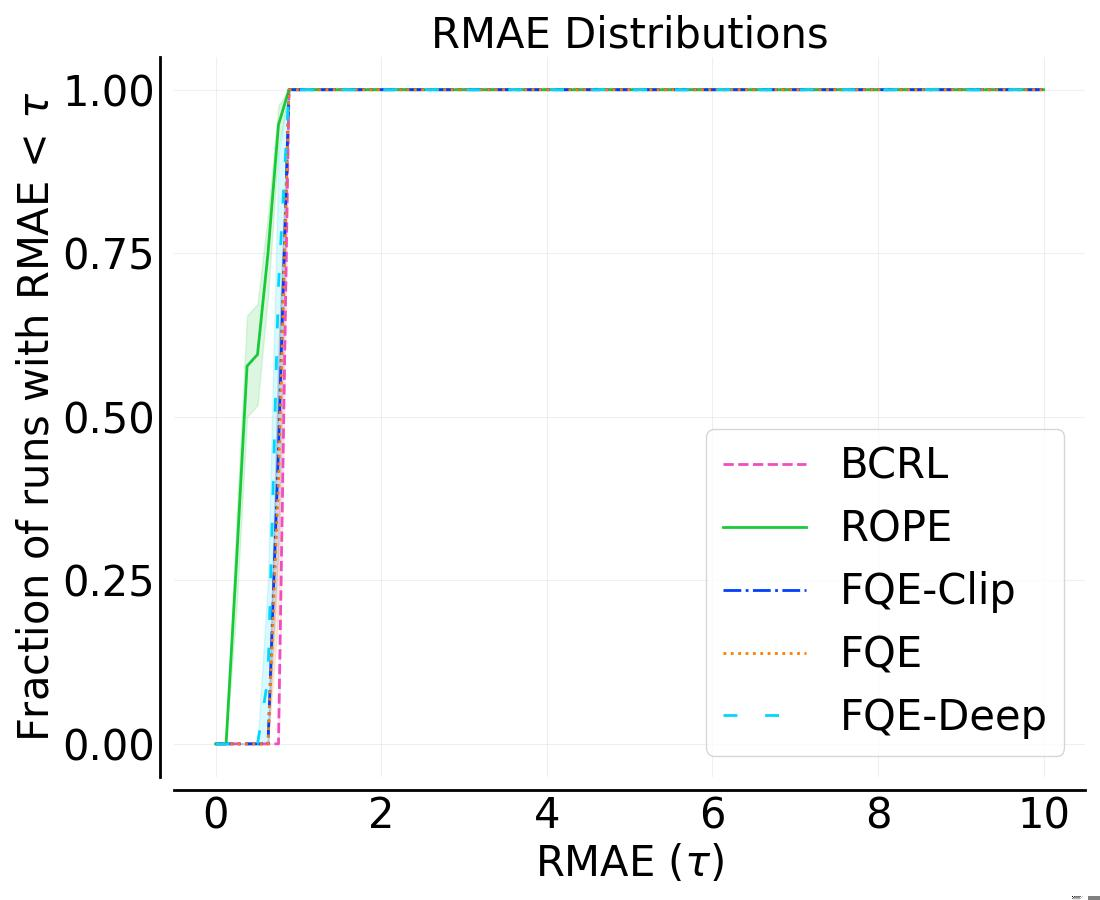}}
        \subfigure[HalfCheetah-medium-expert]{\includegraphics[scale=0.15]{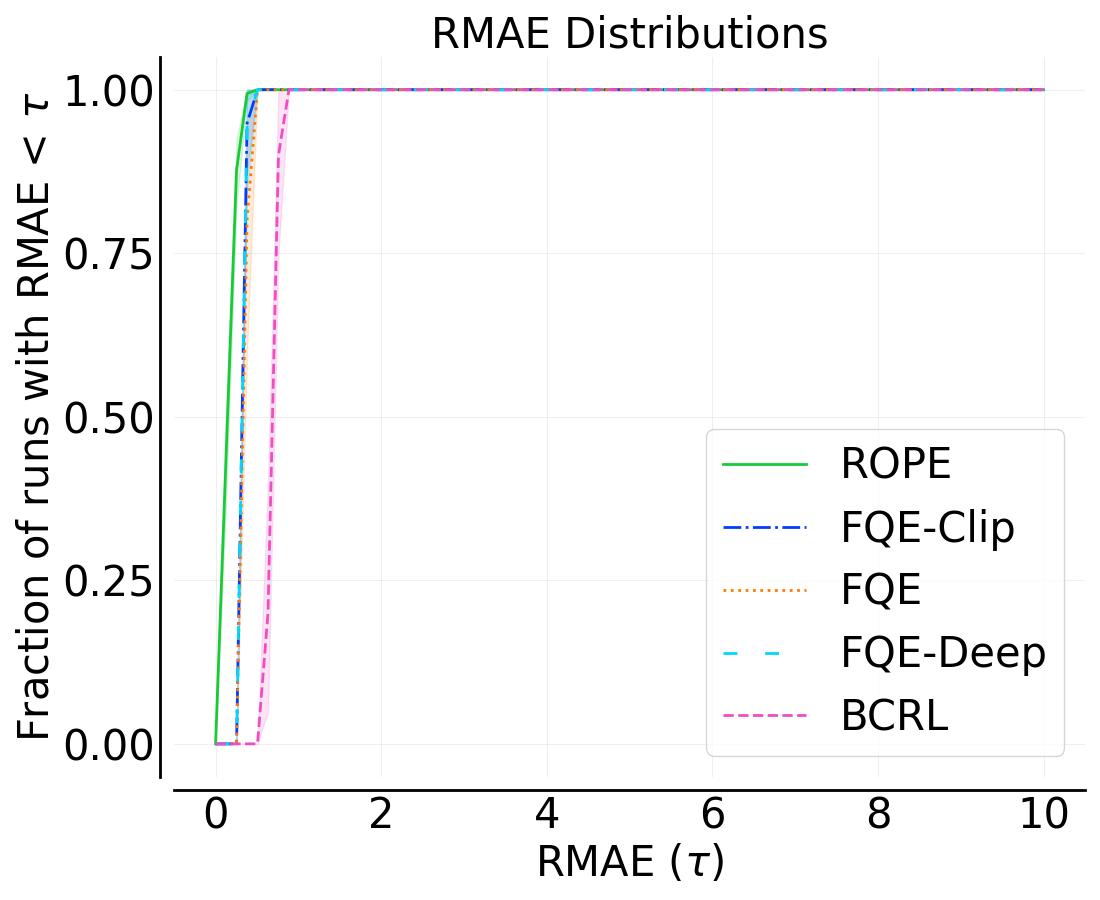}}
    \caption{\footnotesize \textsc{rmae} distributions across all runs and hyperparameters for each algorithm, resulting in $\geq 20$ runs for each algorithm. Shaded region is $95\%$ confidence interval. Larger area under the curve is better.}
\end{figure*}

\begin{figure*}[hbtp]
    \centering
        \subfigure[Walker2D-random]{\includegraphics[scale=0.15]{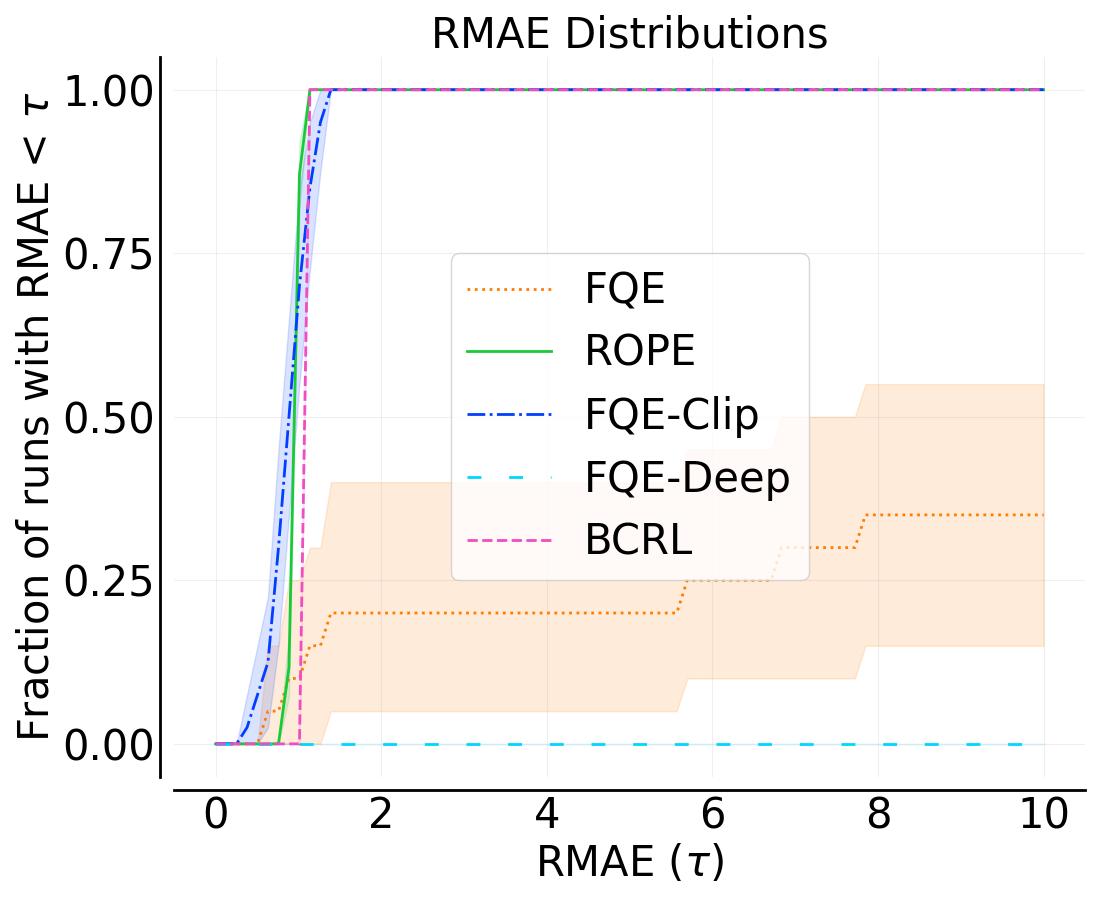}}
        \subfigure[Walker2D-medium]{\includegraphics[scale=0.15]{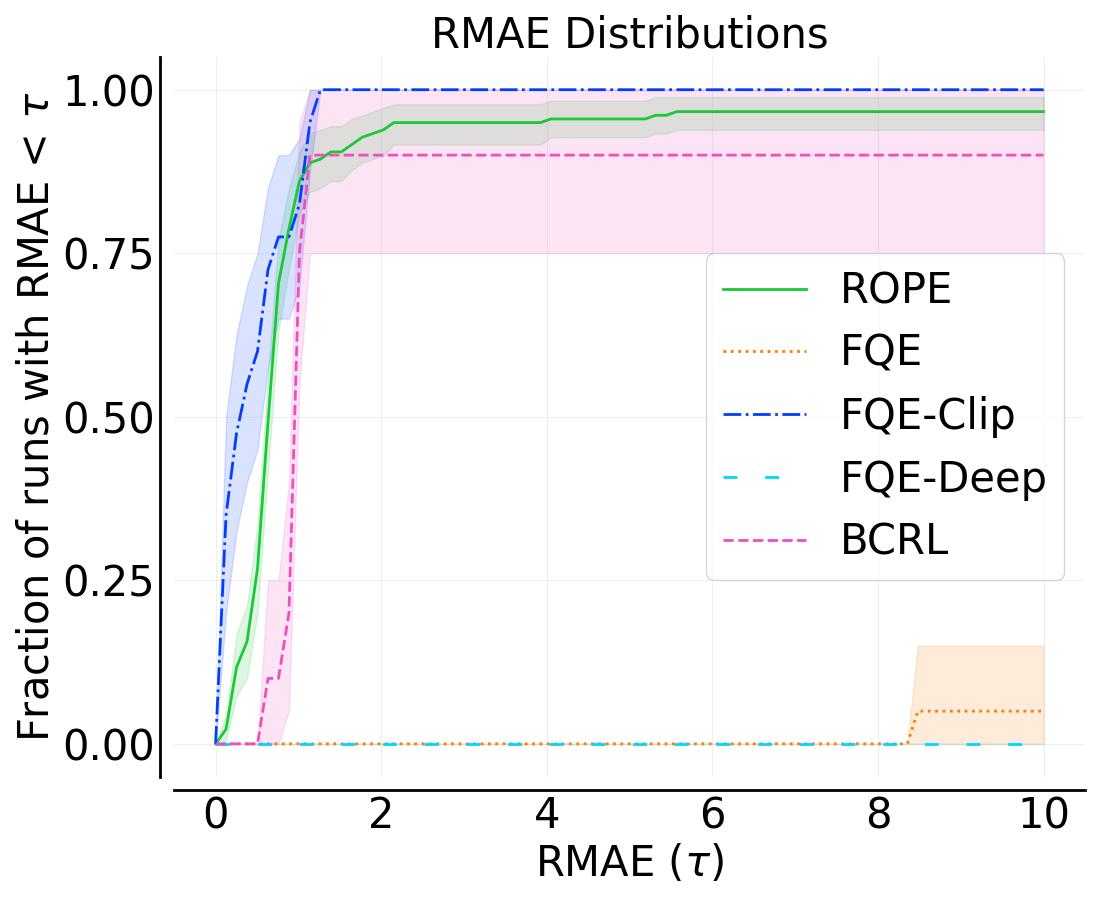}}
        \subfigure[Walker2D-medium-expert]{\includegraphics[scale=0.15]{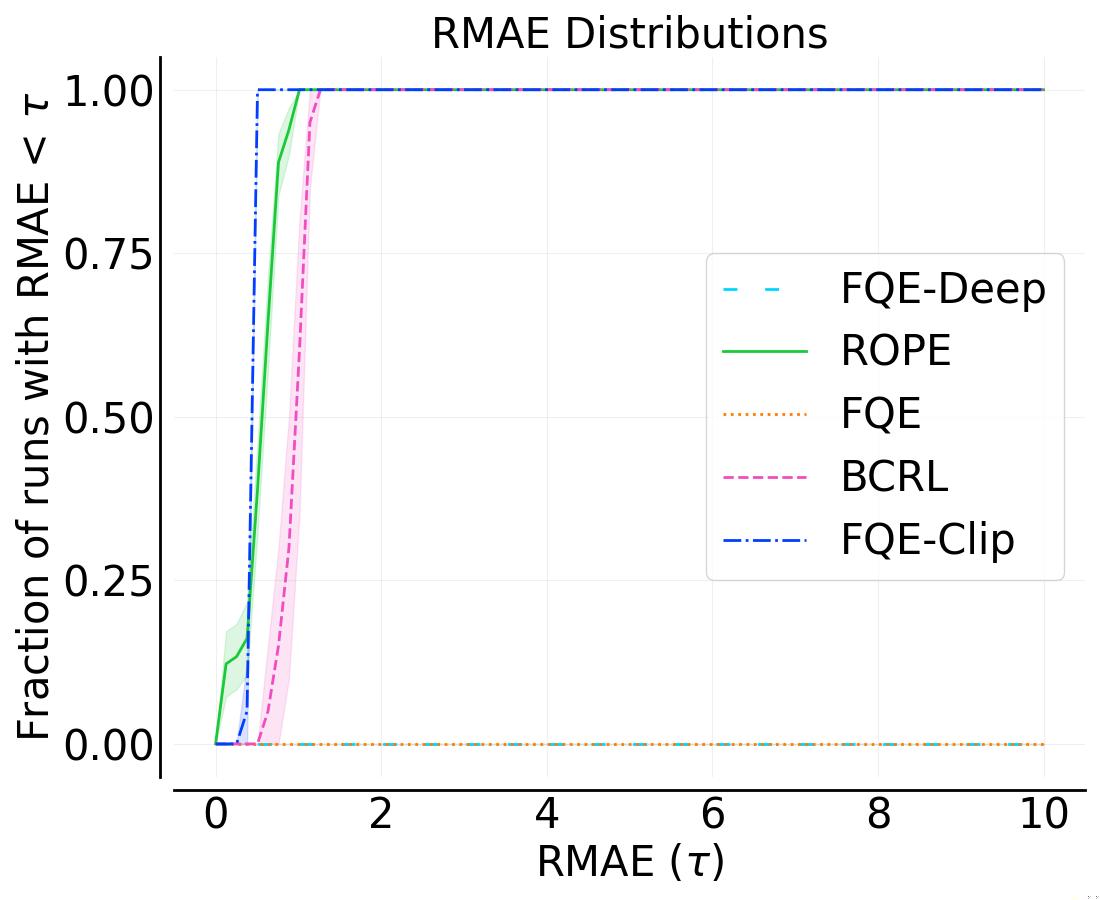}}
    \caption{\footnotesize \textsc{rmae} distributions across all runs and hyperparameters for each algorithm, resulting in $\geq 20$ runs for each algorithm. Shaded region is $95\%$ confidence interval. Larger area under the curve is better.}
\end{figure*}

\begin{figure*}[hbtp]
    \centering
        \subfigure[Hopper-random]{\includegraphics[scale=0.15]{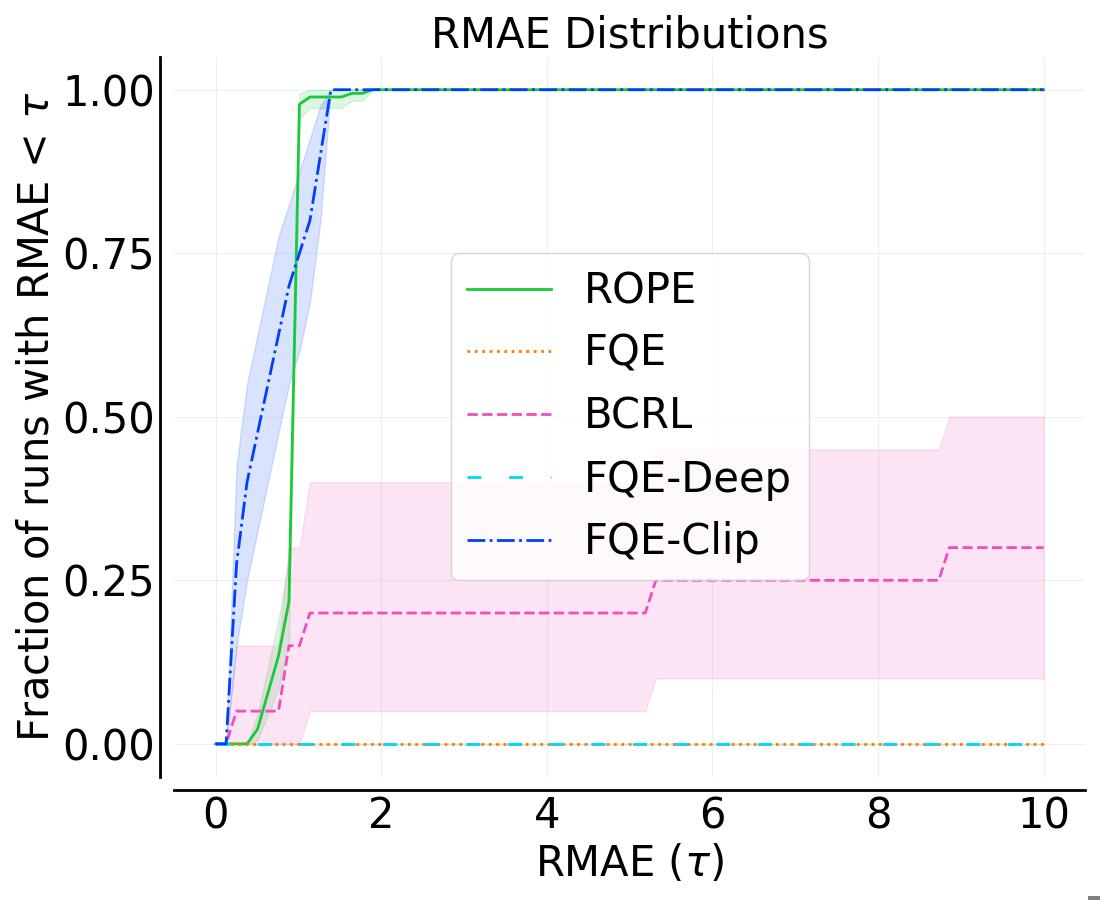}}
        \subfigure[Hopper-medium]{\includegraphics[scale=0.15]{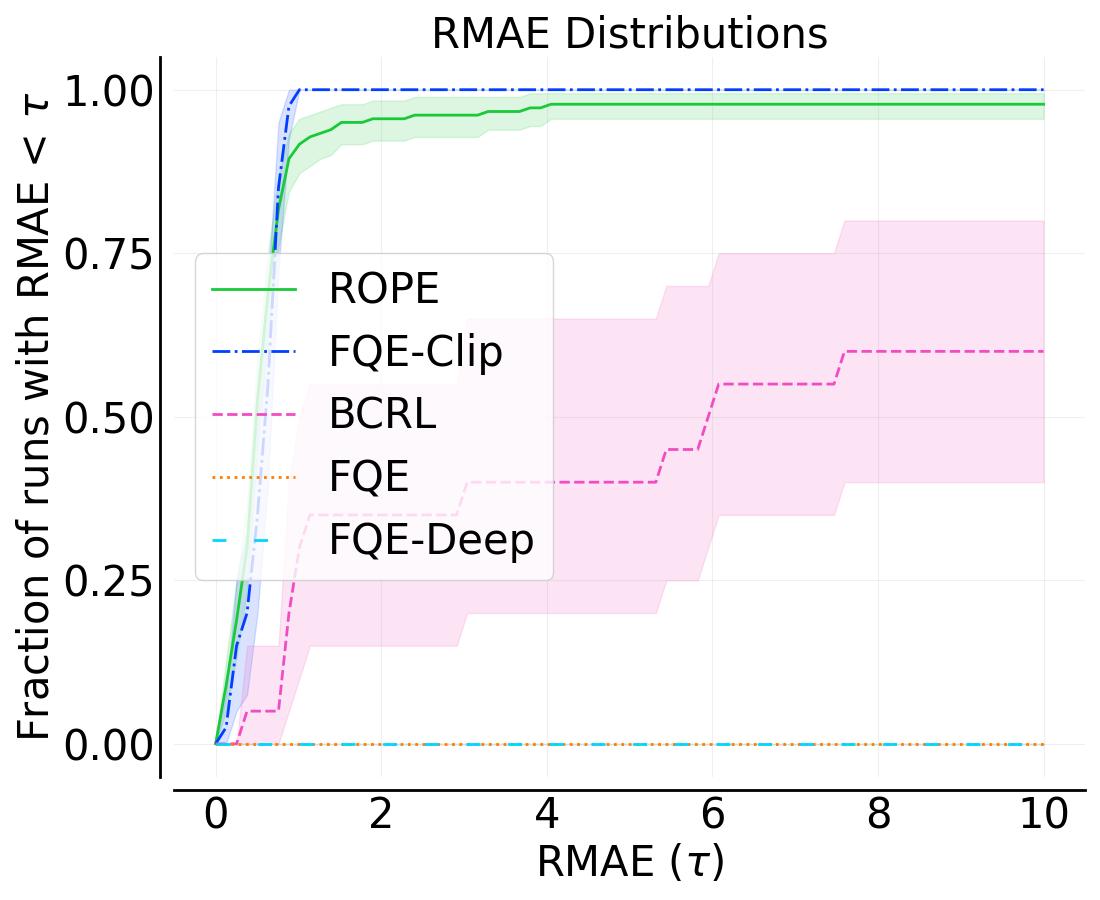}}
        \subfigure[Hopper-medium-expert]{\includegraphics[scale=0.15]{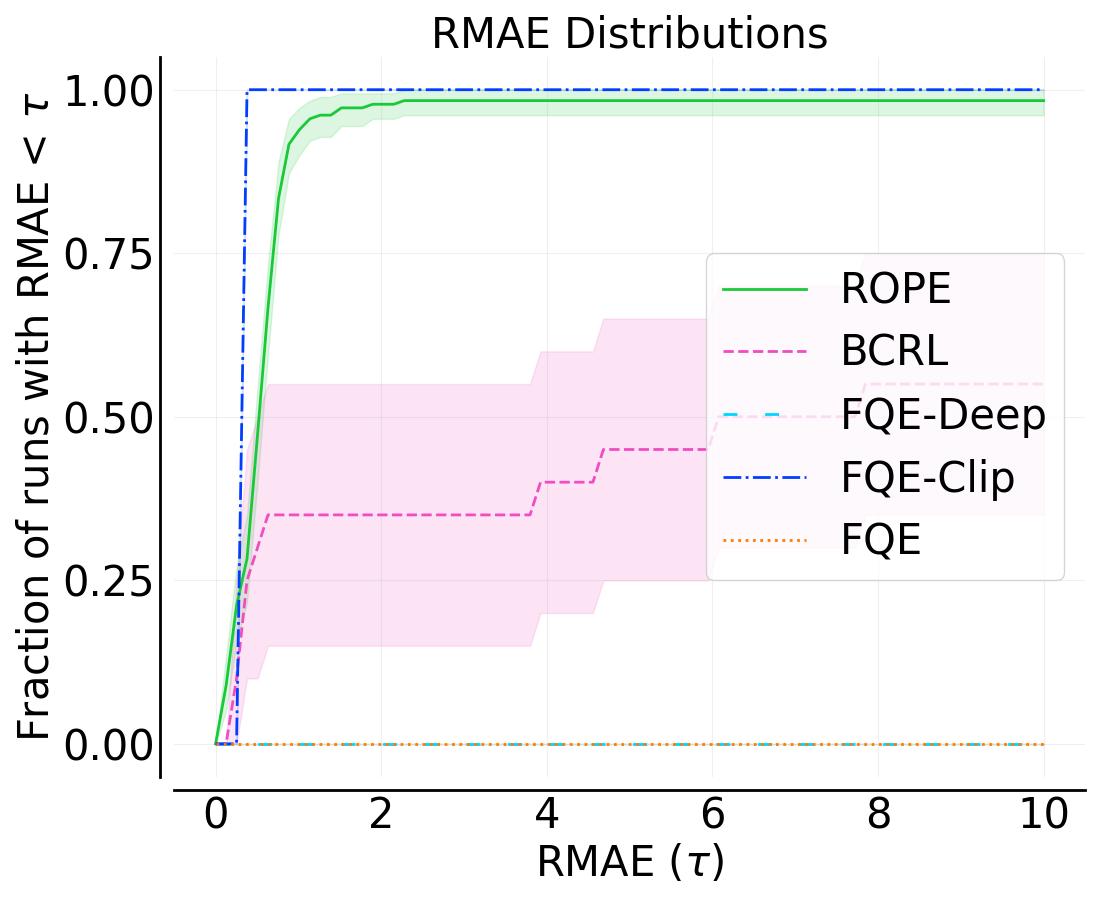}}
    \caption{\footnotesize \textsc{rmae} distributions across all runs and hyperparameters for each algorithm, resulting in $\geq 20$ runs for each algorithm. Shaded region is $95\%$ confidence interval. Larger area under the curve is better.}
    \label{fig:rope_abl_divergence_end}
\end{figure*}

\subsubsection{Training Loss Curves for ROPE and FQE}

In this section, we include the training loss curves for \textsc{rope}'s training, \textsc{fqe}'s training using \textsc{rope} representations as input, and normal \textsc{fqe} and \textsc{fqe-clip}. The training curves are a function of the algorithms hyperparameters (learning rate for \textsc{fqe}, $\beta$ and representation output dimension for \textsc{rope}). We can see that on difficult datasets, the loss of \textsc{fqe} diverges. On the other hand, with \textsc{rope}, \textsc{fqe}'s divergence is significantly mitigated. Note that \textsc{rope} does not eliminate the divergence. See Figures~\ref{fig:tr_losses_start} to \ref{fig:tr_losses_end}.

\begin{figure*}[hbtp]
    \centering
        \subfigure[\textsc{fqe} HalfCheetah-random]{\includegraphics[scale=0.125]{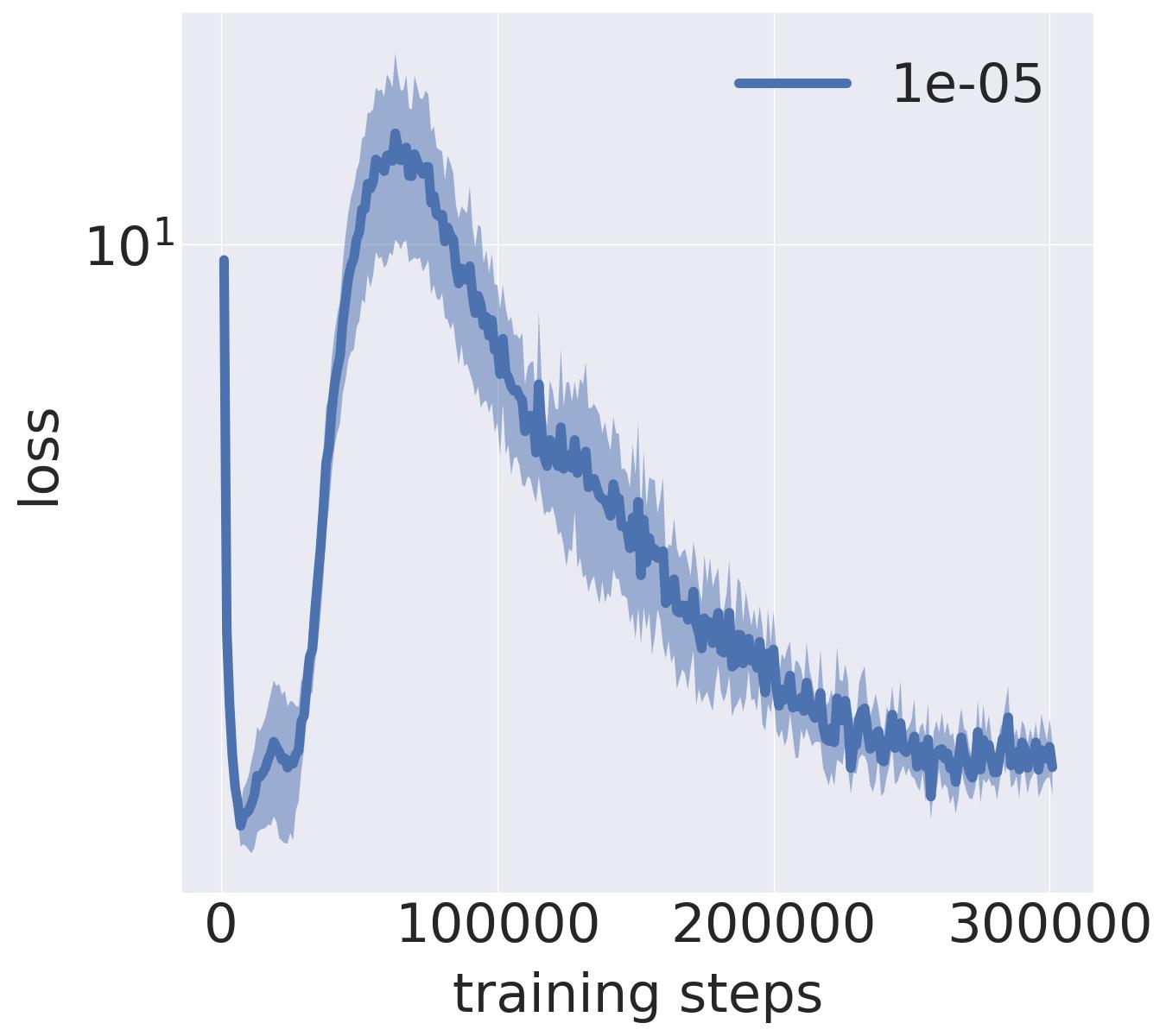}}
        \subfigure[\textsc{fqe-clip} HalfCheetah-random]{\includegraphics[scale=0.125]{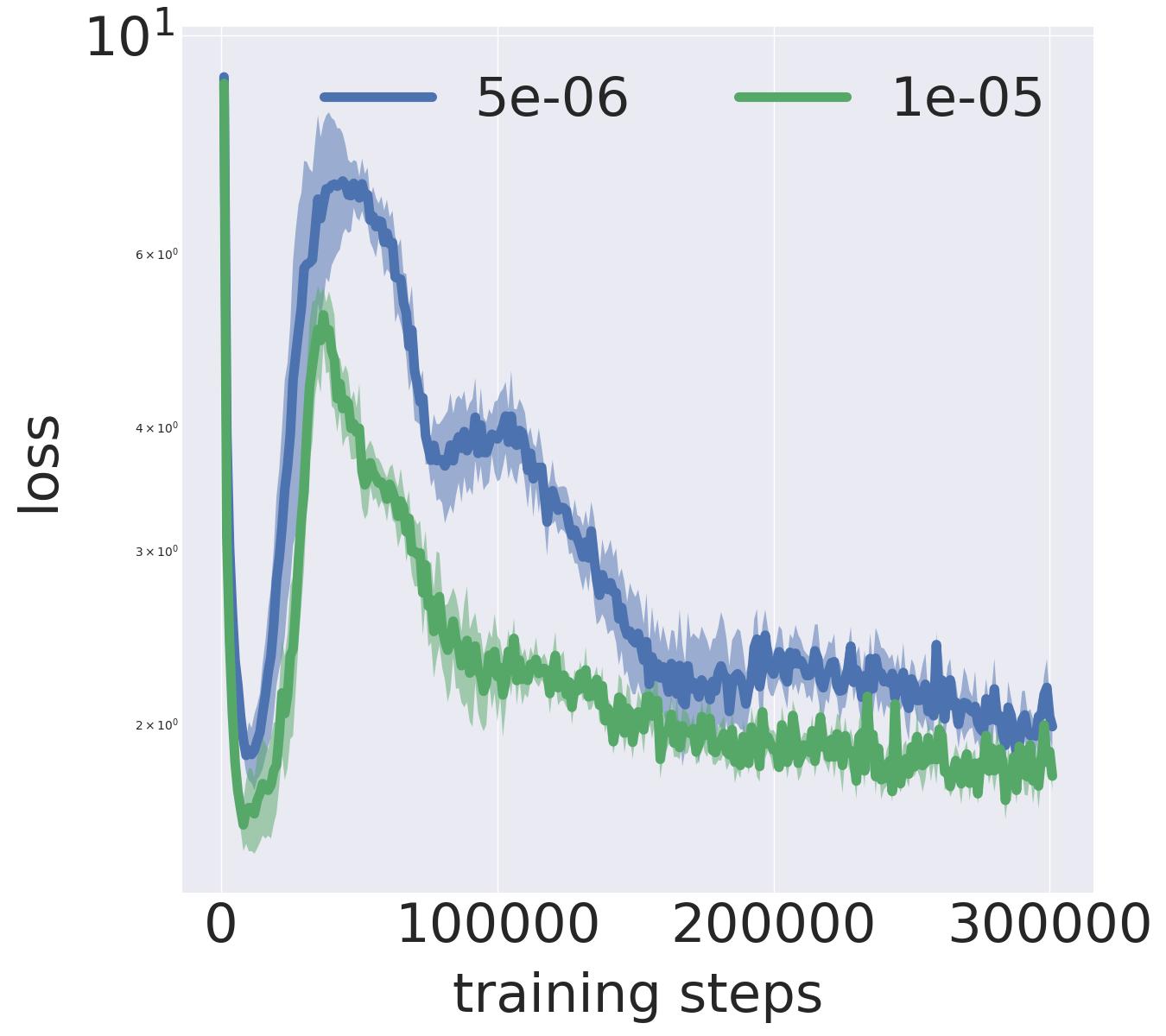}}
        \subfigure[\textsc{fqe} w/ \textsc{rope} HalfCheetah-random]{\includegraphics[scale=0.125]{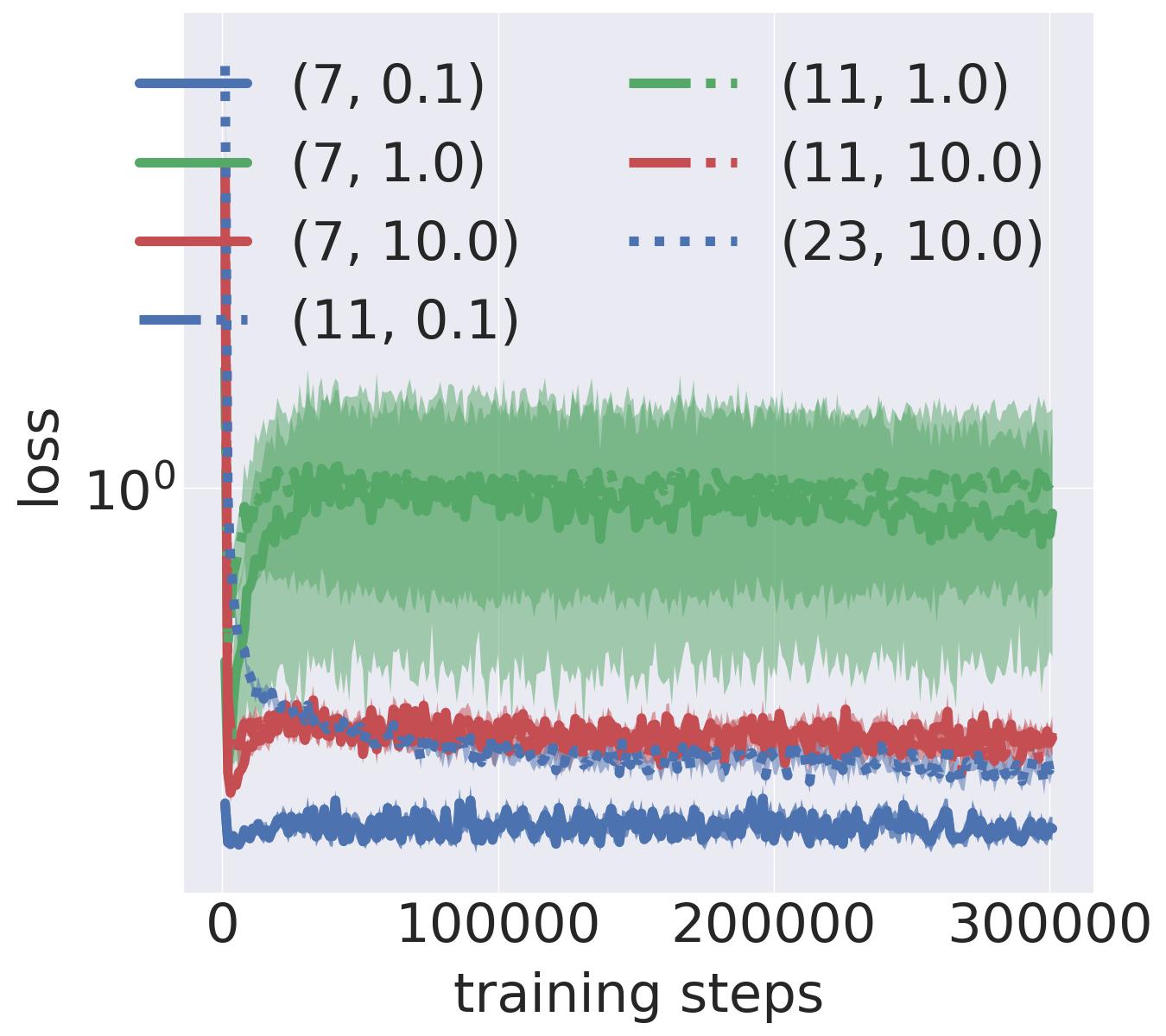}}\\
        \subfigure[\textsc{fqe} HalfCheetah-medium]{\includegraphics[scale=0.125]{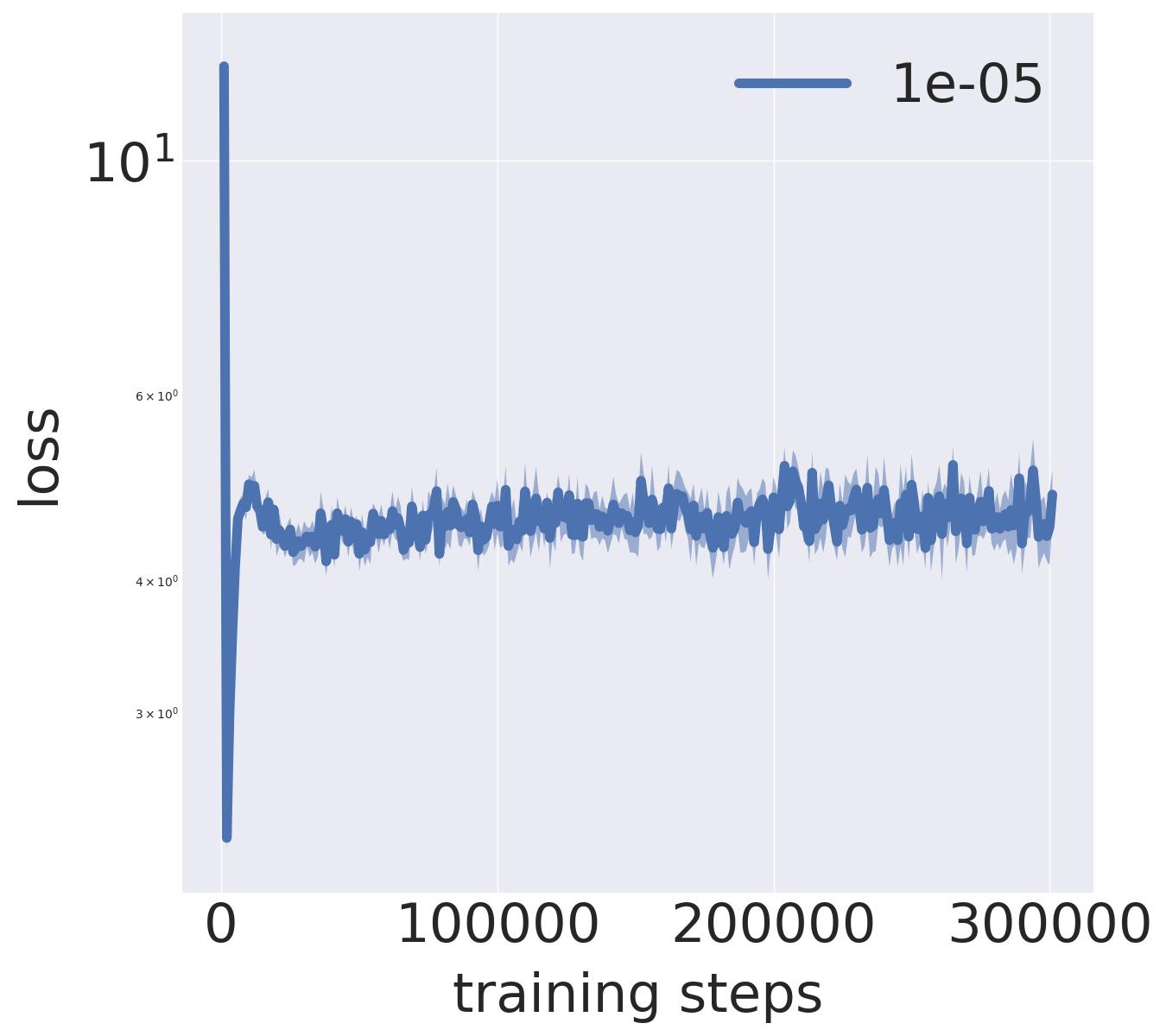}}
        \subfigure[\textsc{fqe-clip} HalfCheetah-medium]{\includegraphics[scale=0.125]{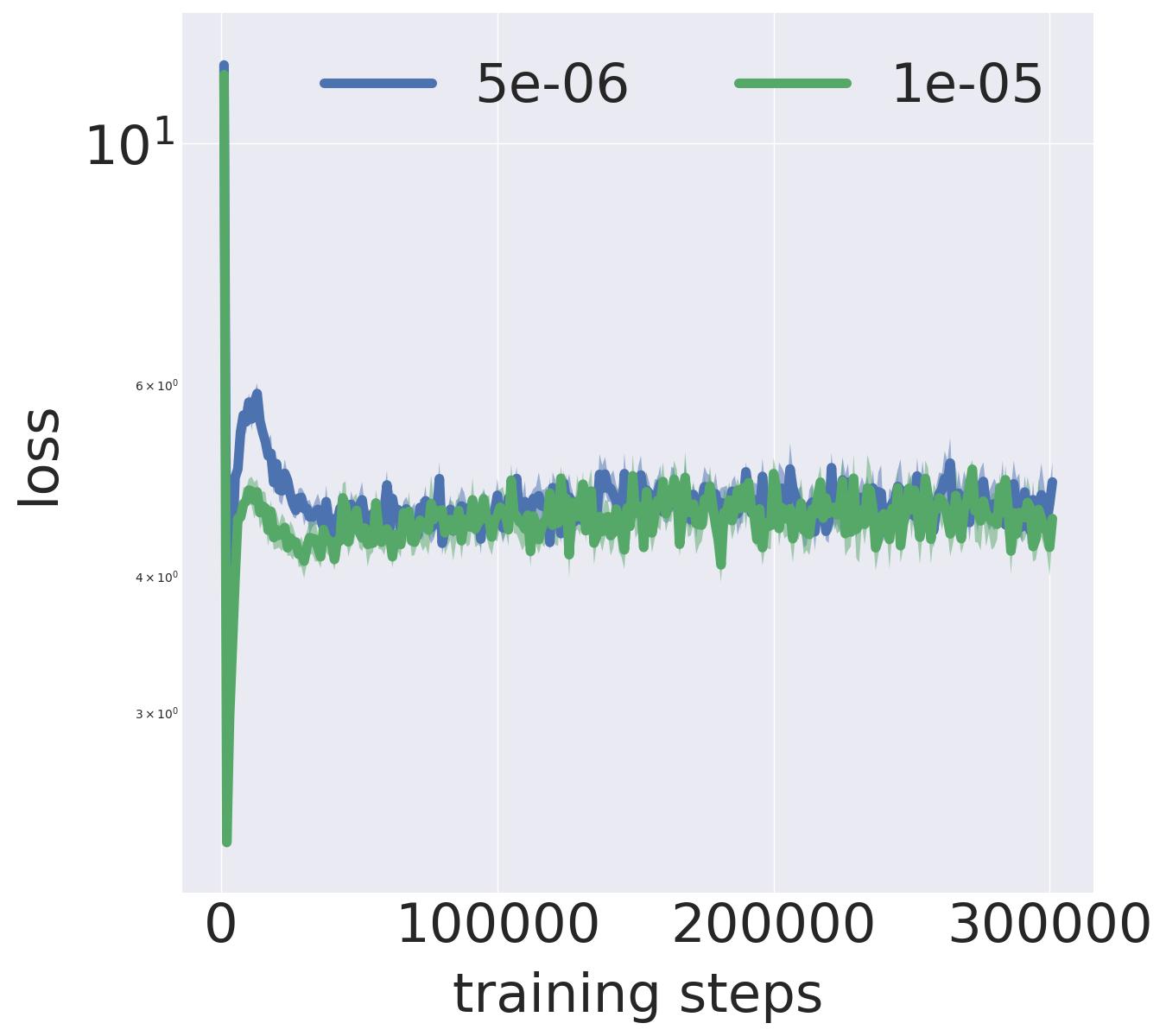}}
        \subfigure[\textsc{fqe} w/ \textsc{rope} HalfCheetah-medium]{\includegraphics[scale=0.125]{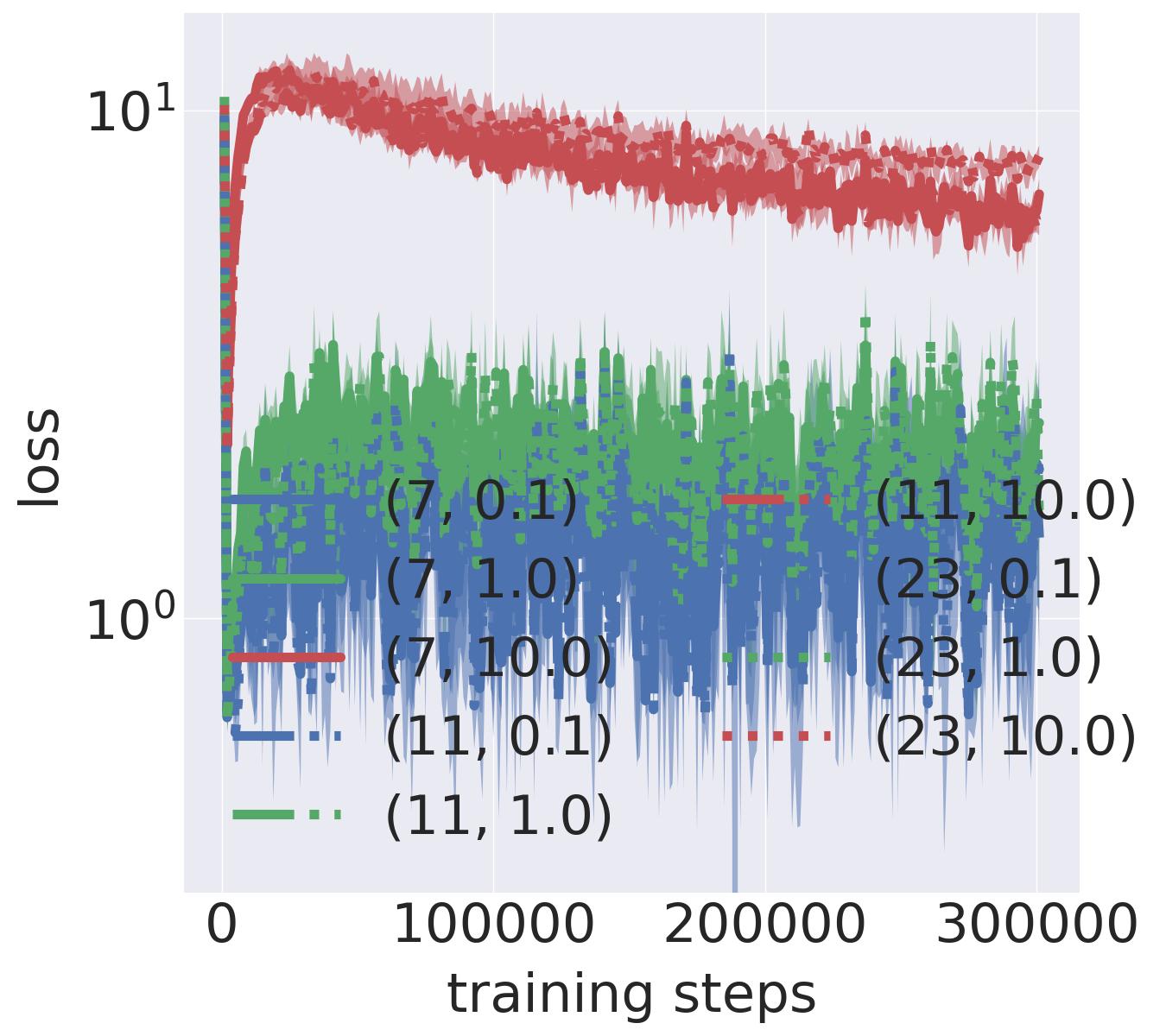}}\\
        \subfigure[\textsc{fqe} HalfCheetah-medium-expert]{\includegraphics[scale=0.125]{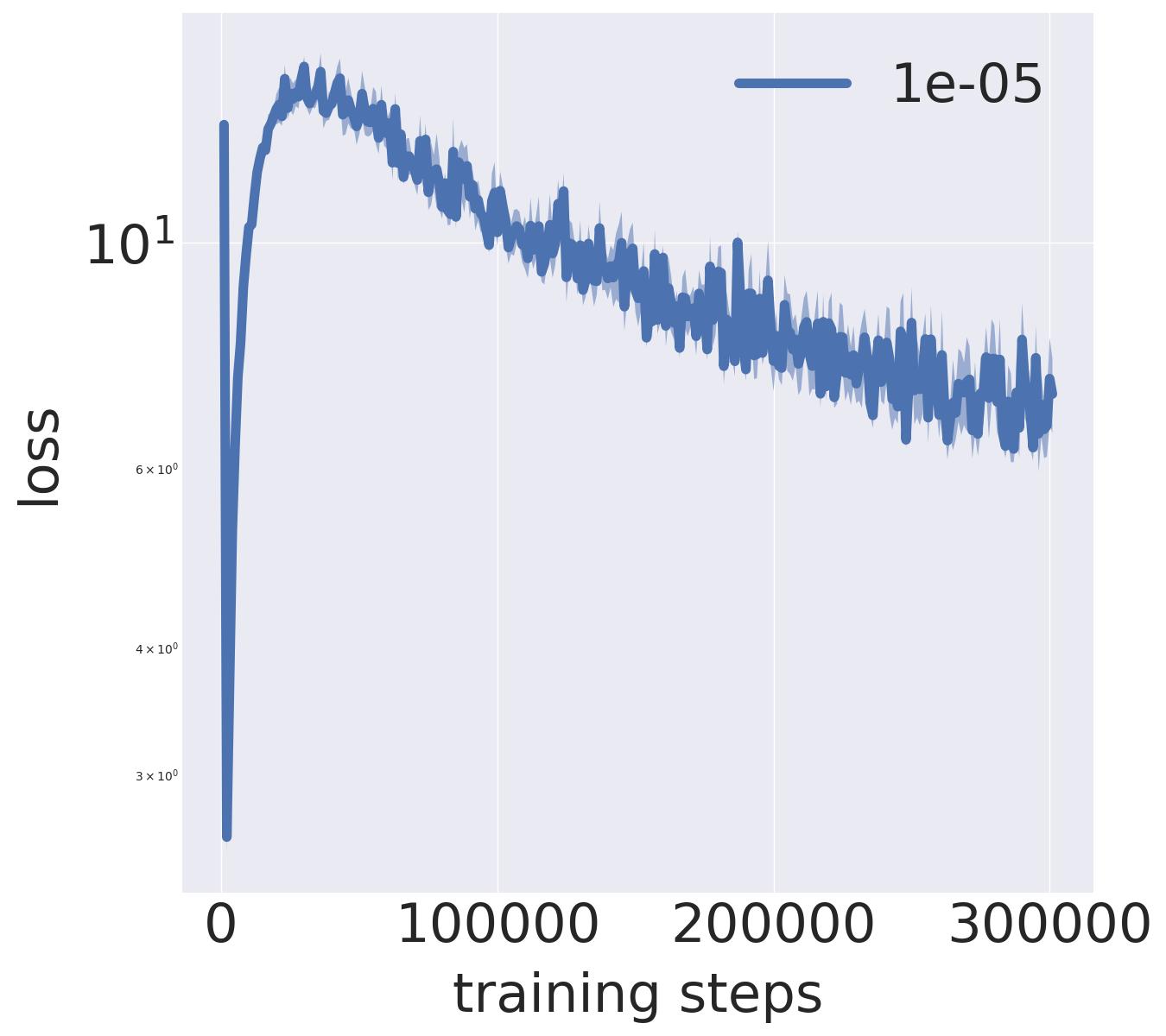}}
        \subfigure[\textsc{fqe-clip} HalfCheetah-medium-expert]{\includegraphics[scale=0.125]{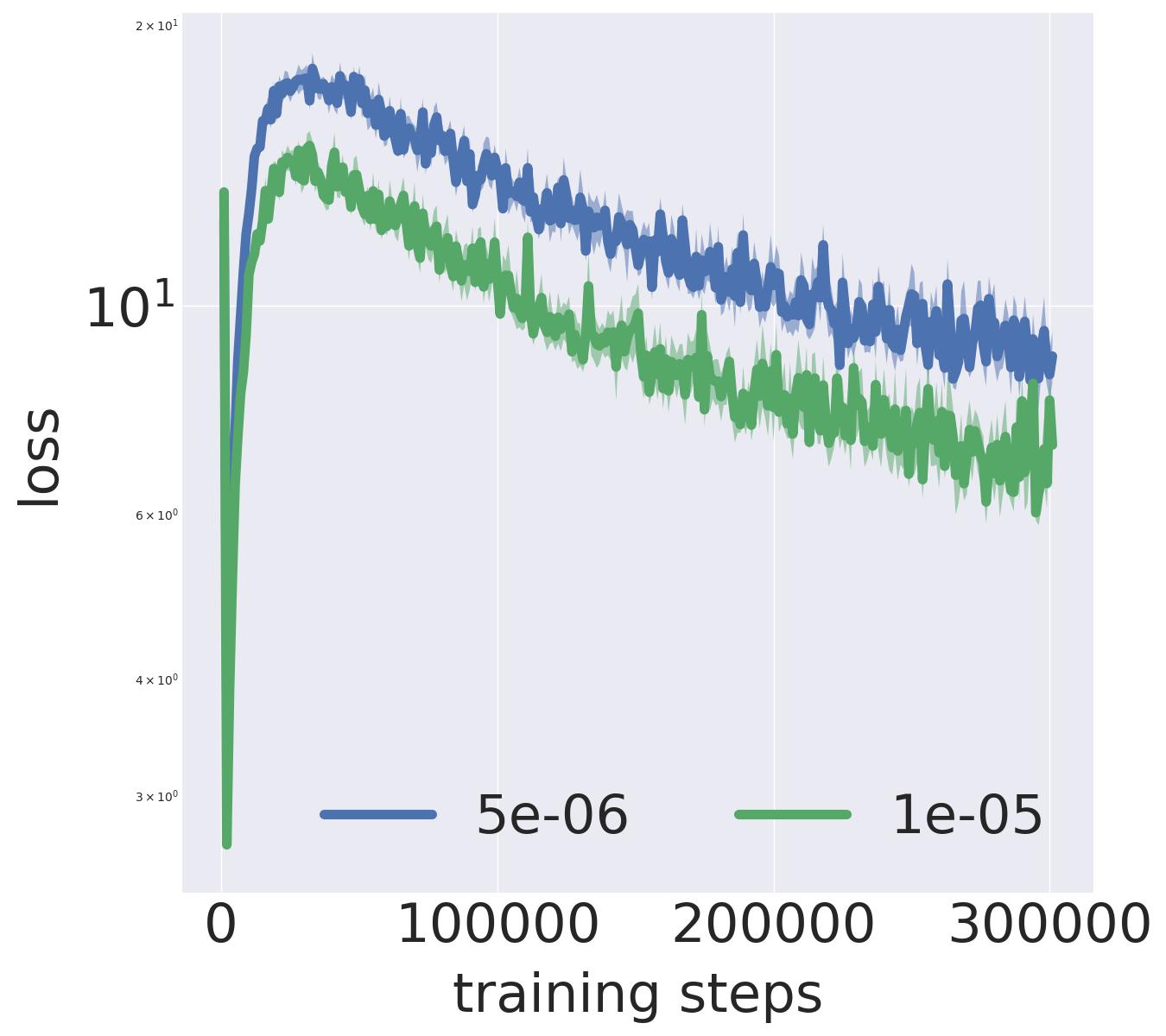}}
        \subfigure[\textsc{fqe} w/ \textsc{rope} HalfCheetah-medium-expert]{\includegraphics[scale=0.125]{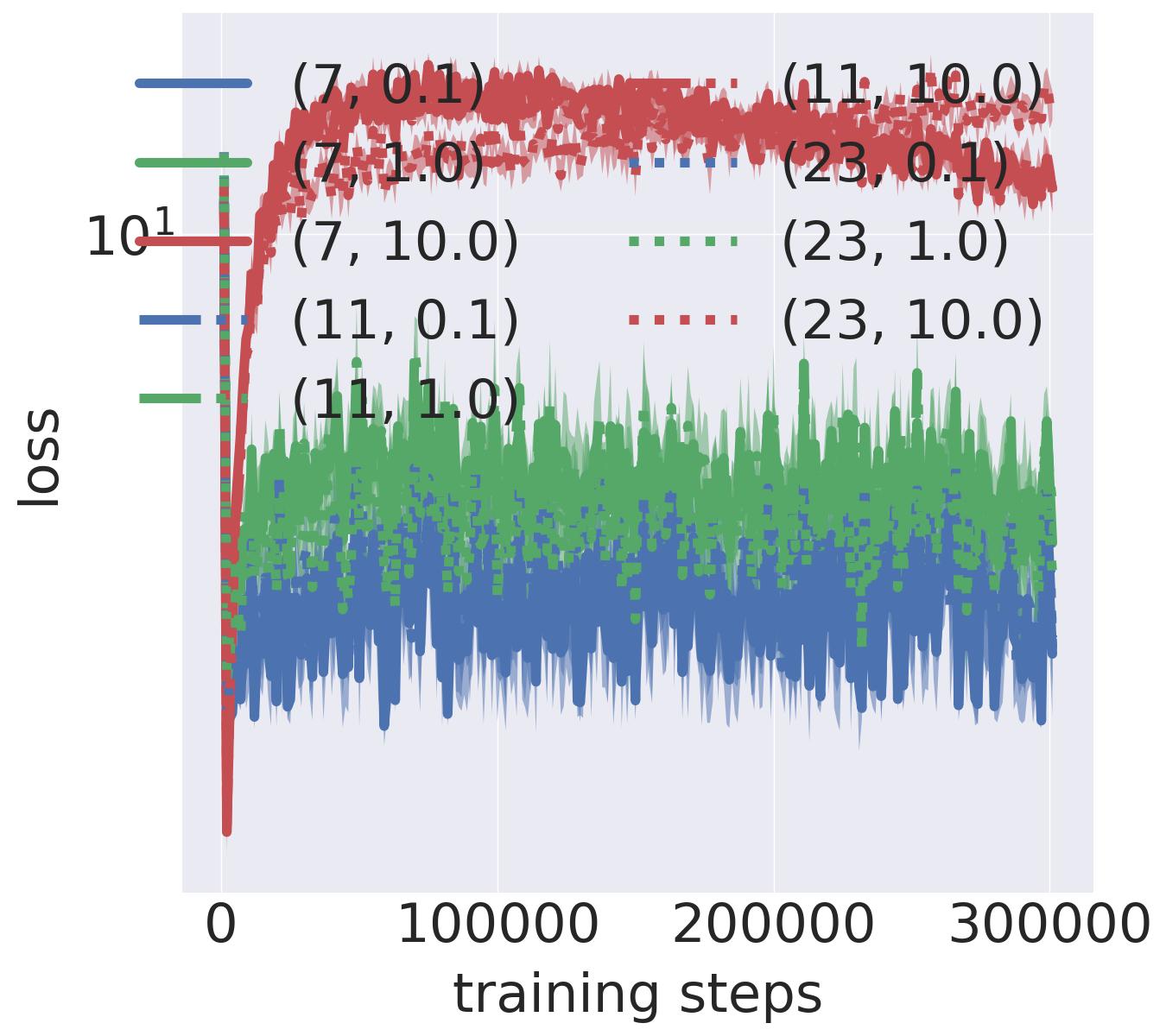}}\\
    \caption{\footnotesize \textsc{fqe} training loss vs. training iterations on the \textsc{d4rl} datasets. \textsc{iqm} of errors for each domain were computed over $20$ trials with $95\%$ confidence intervals. Lower is better. Vertical axis is log-scaled.}
    \label{fig:tr_losses_start}
\end{figure*}

\begin{figure*}[hbtp]
    \centering
        \subfigure[\textsc{fqe} Walker2D-random]{\includegraphics[scale=0.125]{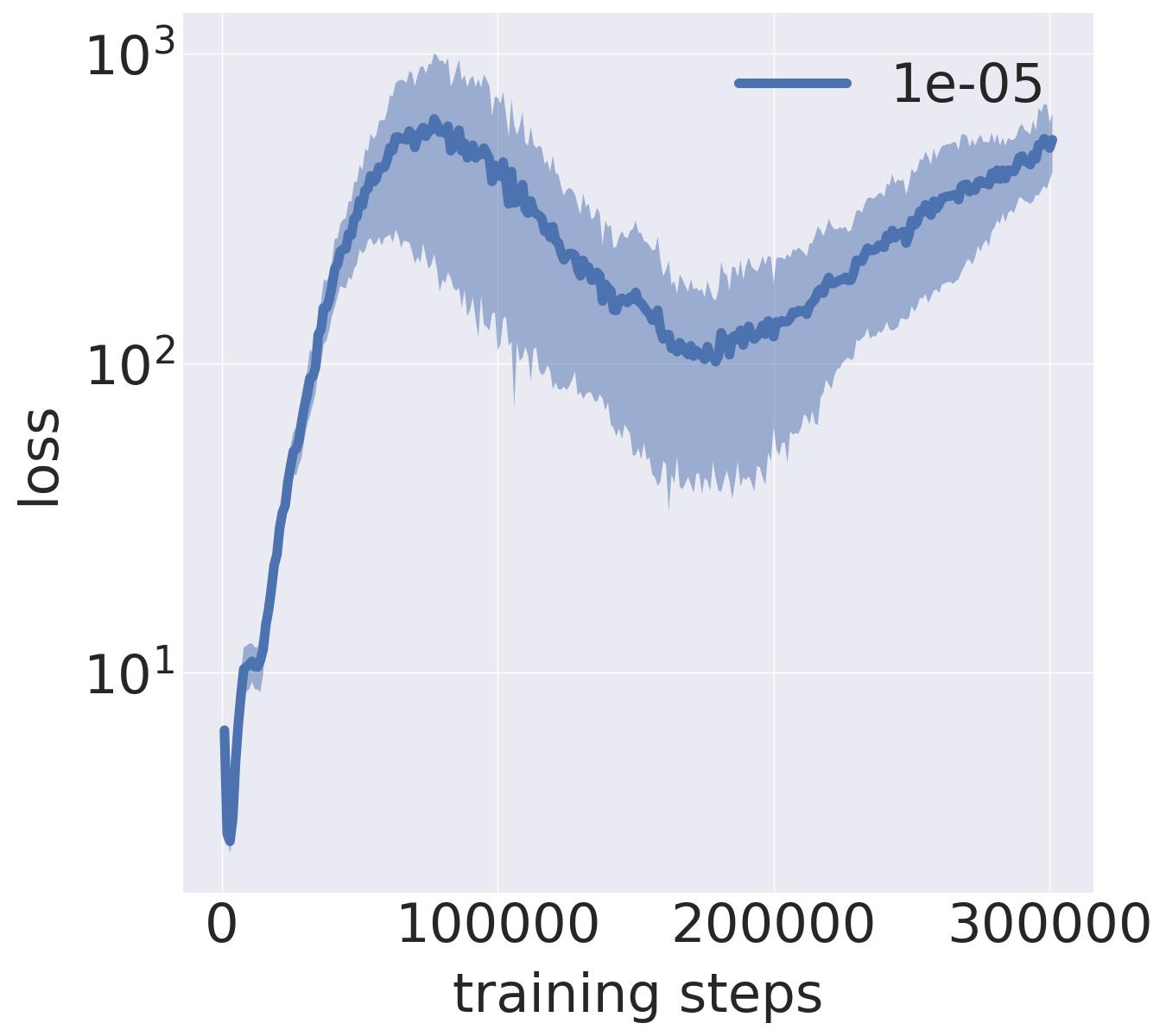}}
        \subfigure[\textsc{fqe-clip} Walker2D-random]{\includegraphics[scale=0.125]{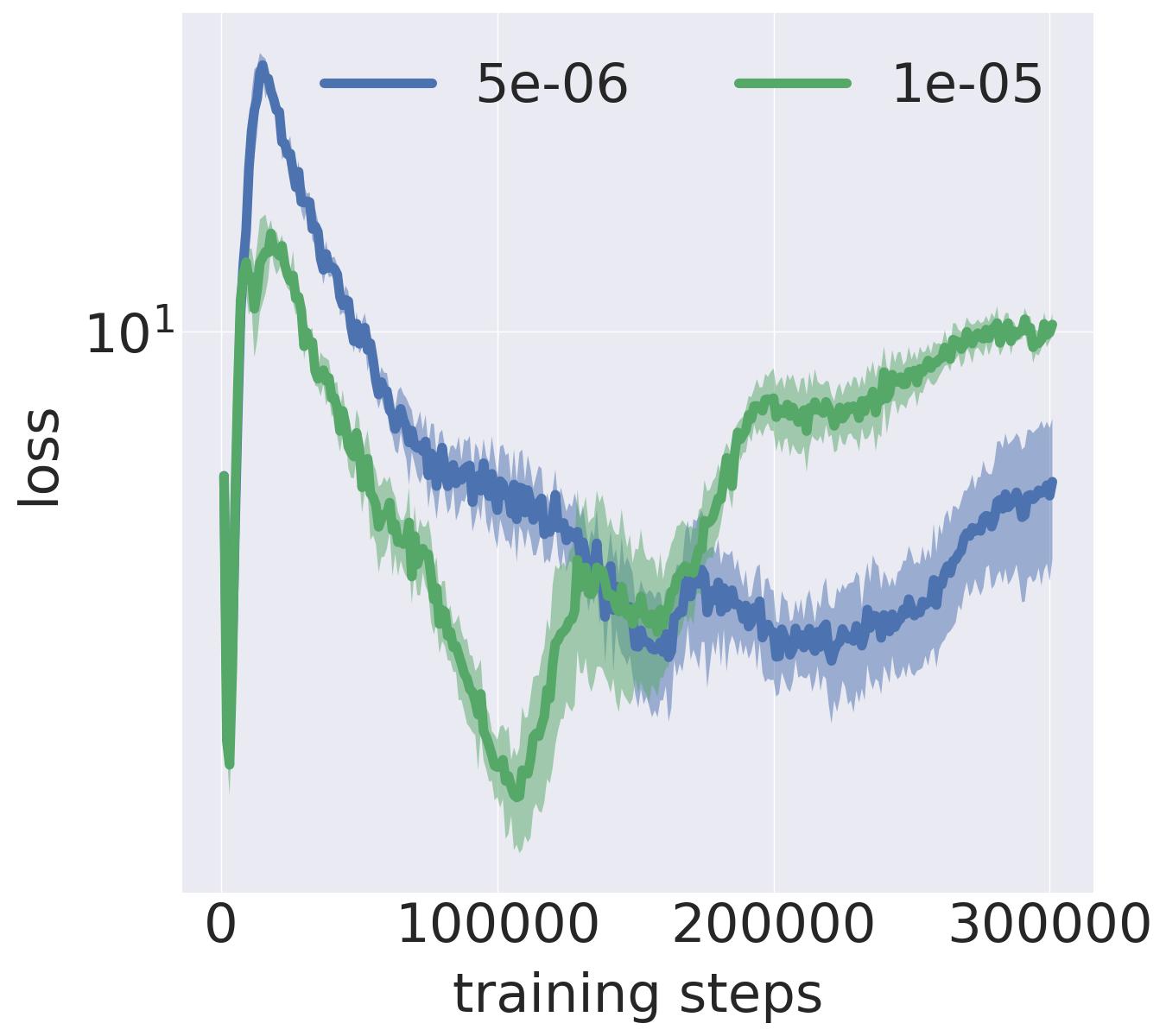}}
        \subfigure[\textsc{fqe} w/ \textsc{rope} Walker2D-random]{\includegraphics[scale=0.125]{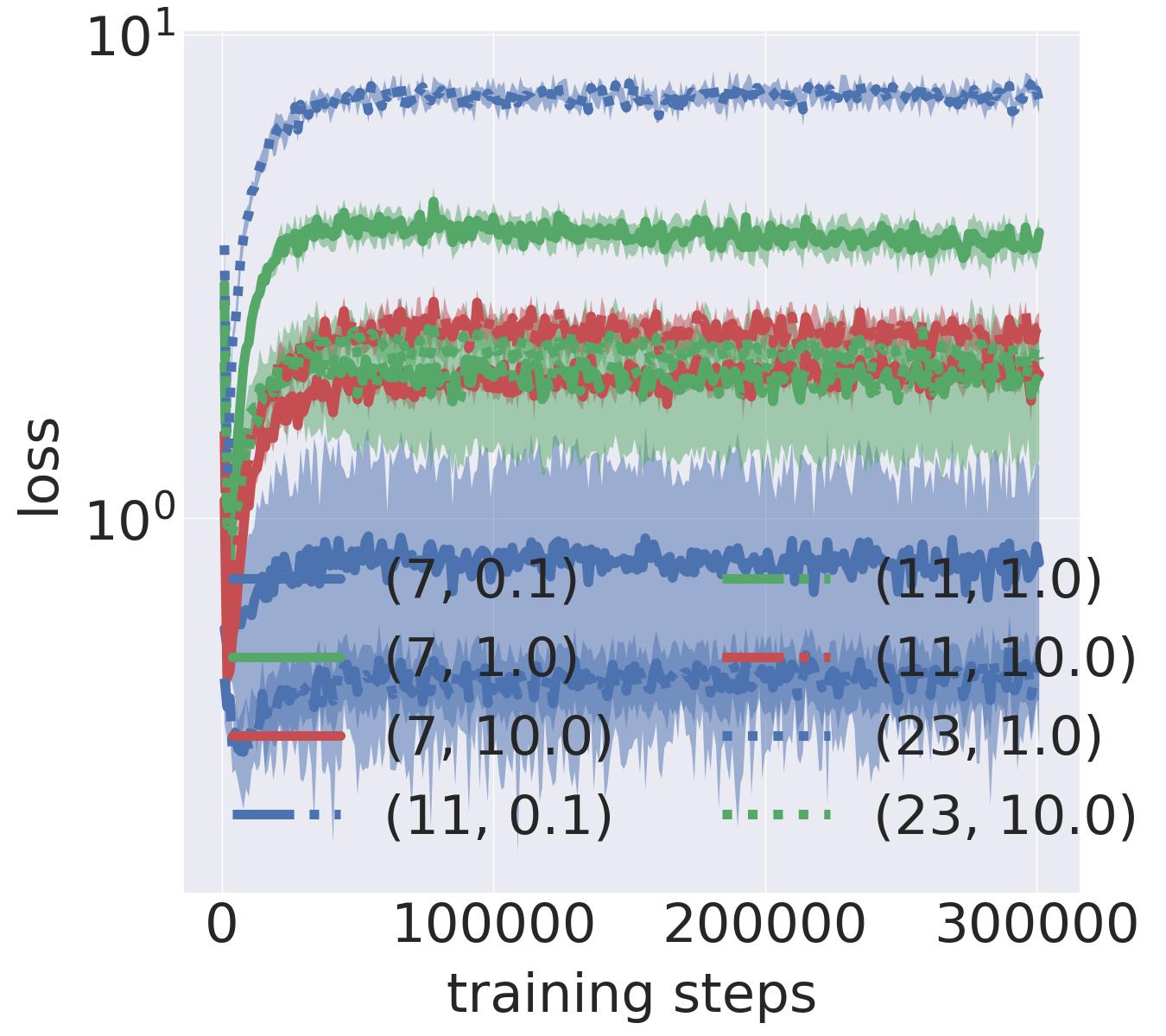}}\\
        \subfigure[\textsc{fqe} Walker2D-medium]{\includegraphics[scale=0.125]{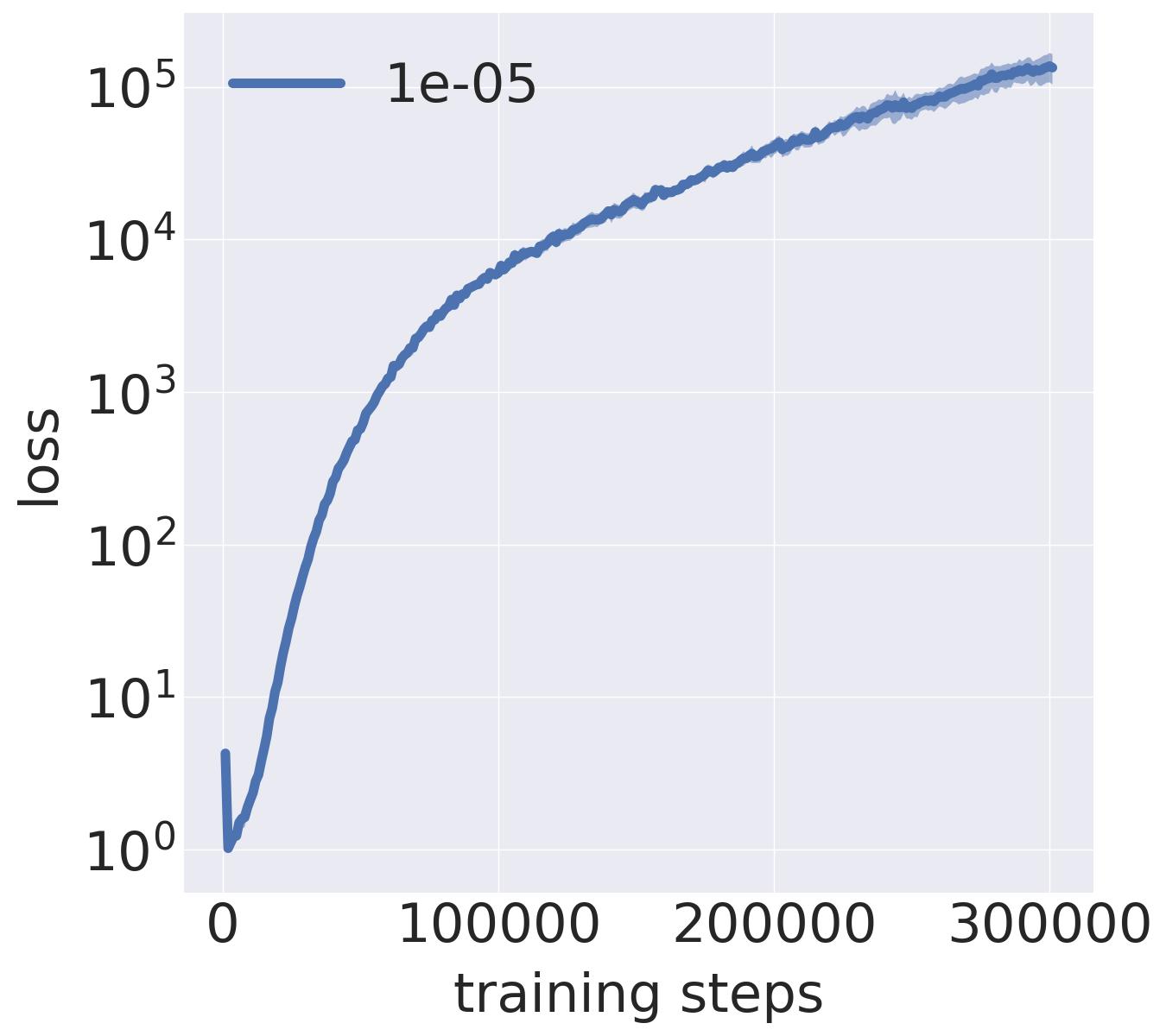}}
        \subfigure[\textsc{fqe-clip} Walker2D-medium]{\includegraphics[scale=0.125]{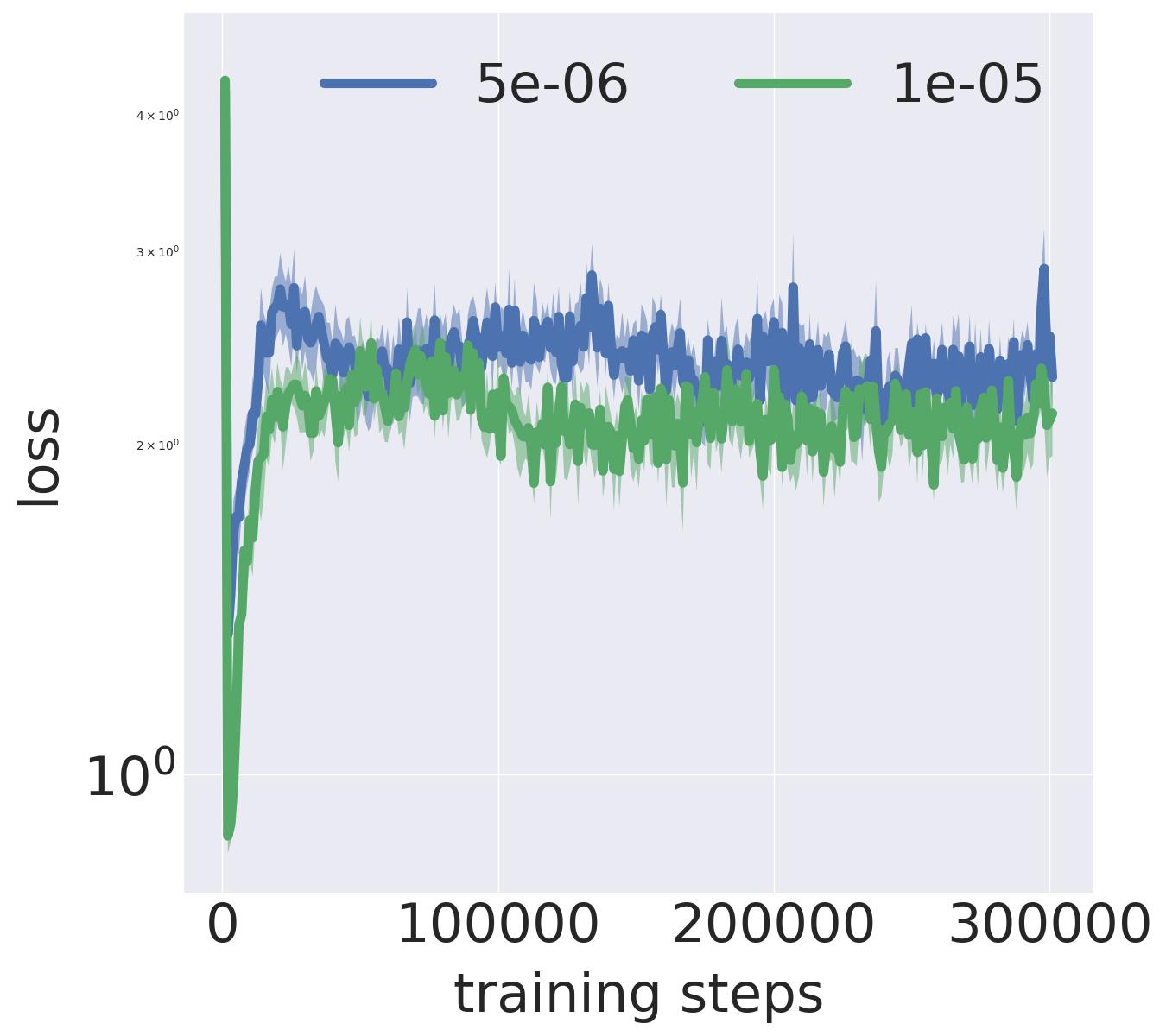}}
        \subfigure[\textsc{fqe} w/ \textsc{rope} Walker2D-medium]{\includegraphics[scale=0.125]{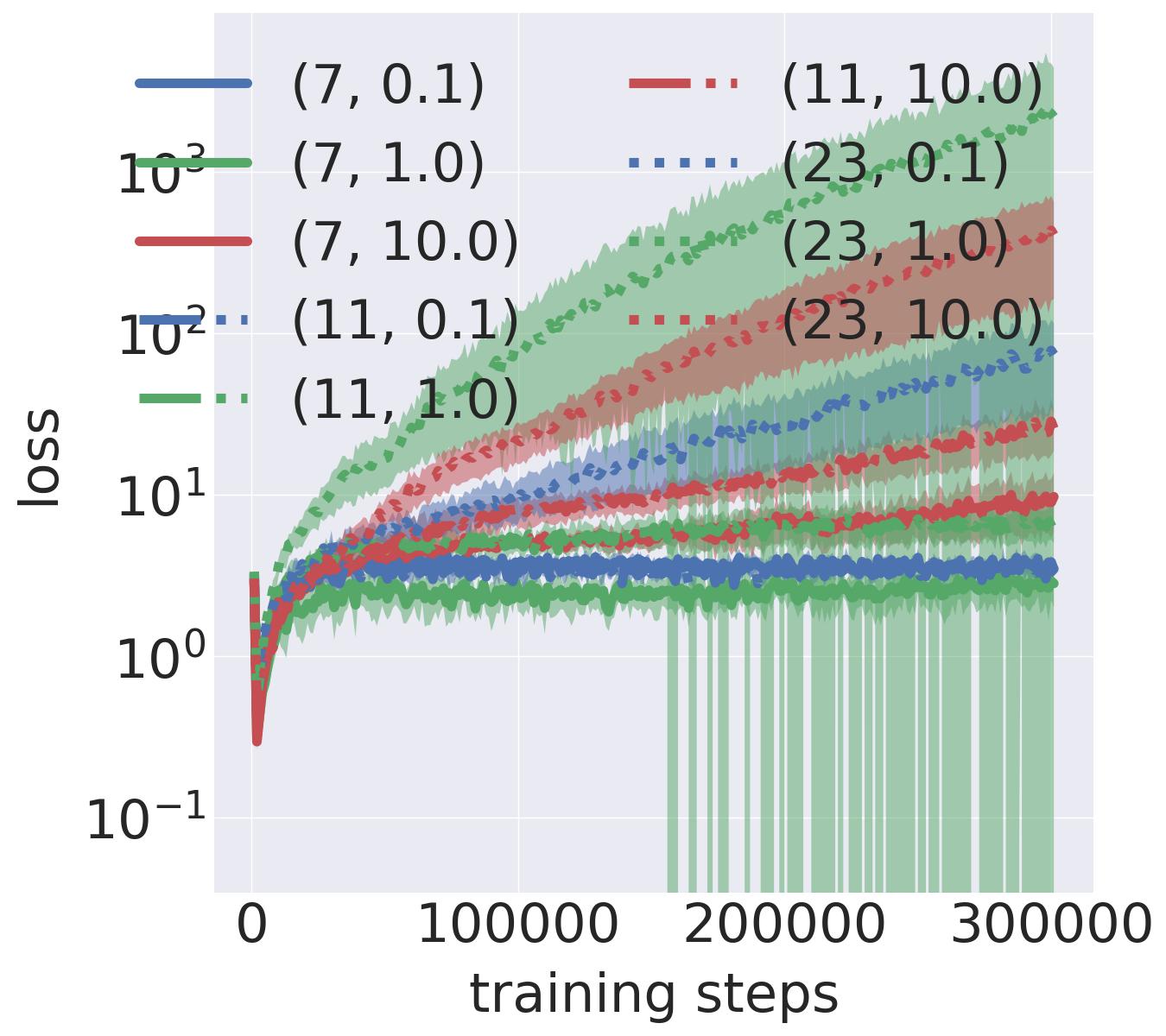}}\\
        \subfigure[\textsc{fqe} Walker2D-medium-expert]{\includegraphics[scale=0.125]{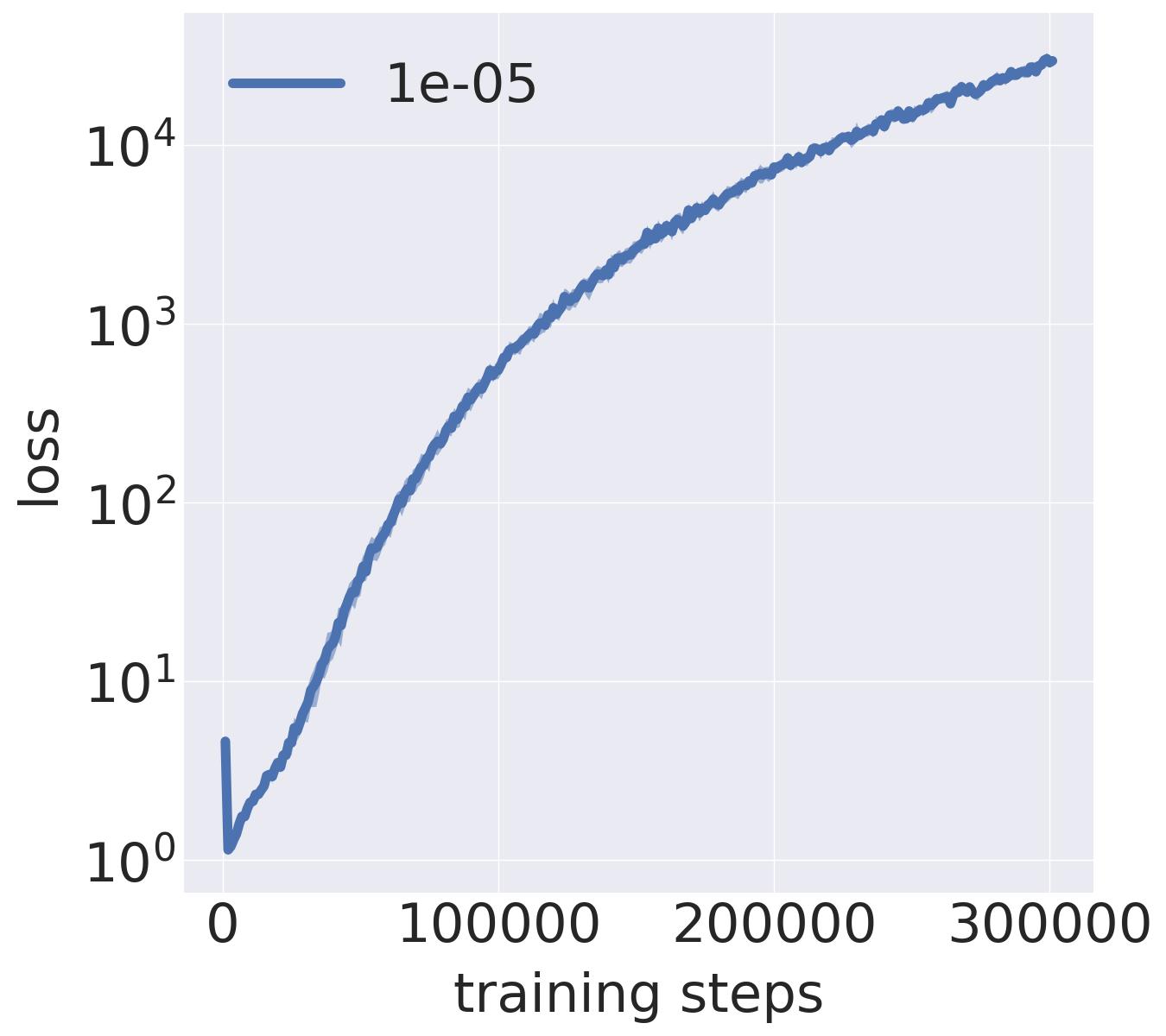}}
        \subfigure[\textsc{fqe-clip} Walker2D-medium-expert]{\includegraphics[scale=0.125]{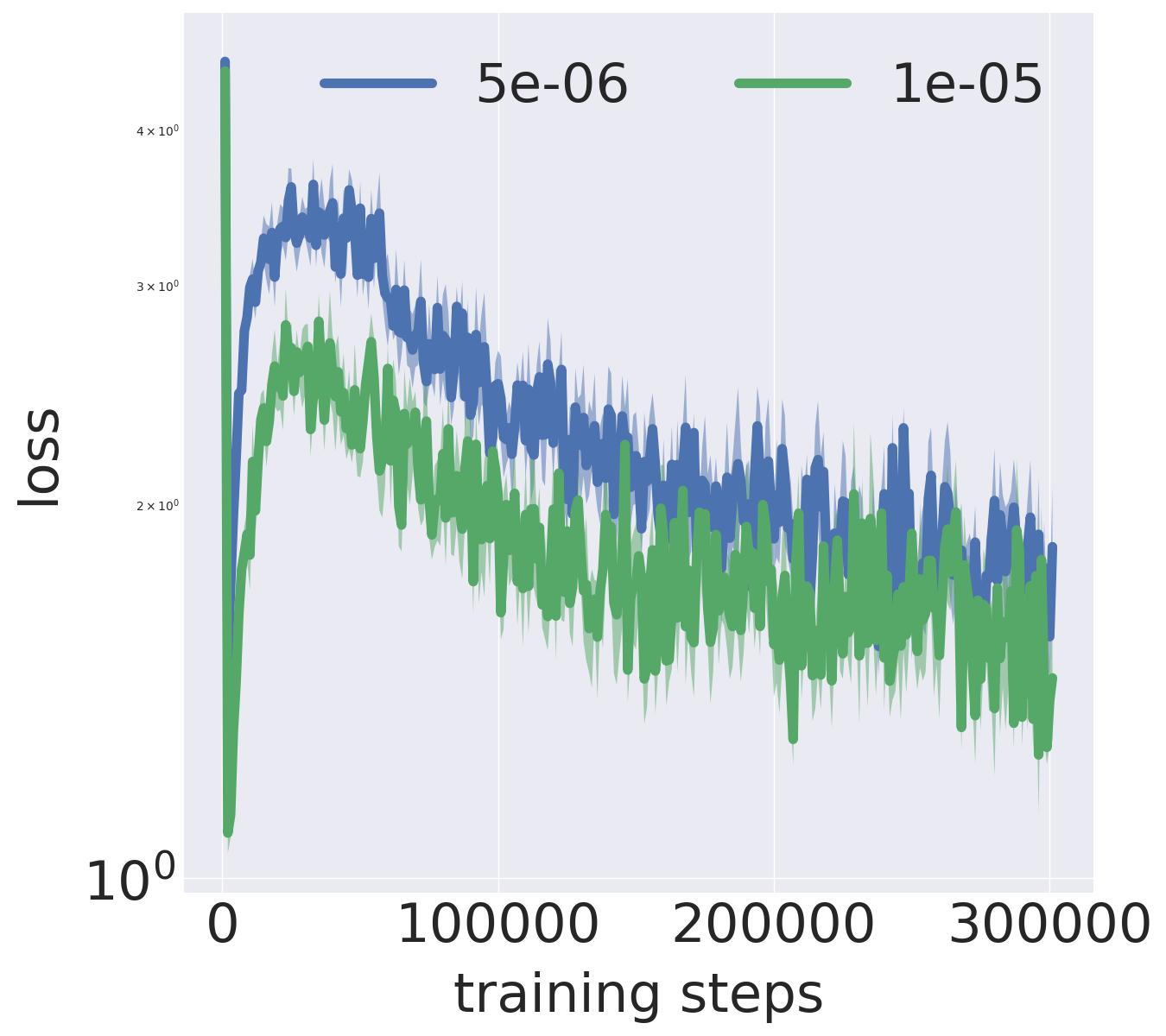}}
        \subfigure[\textsc{fqe} w/ \textsc{fqe} Walker2D-medium-expert]{\includegraphics[scale=0.125]{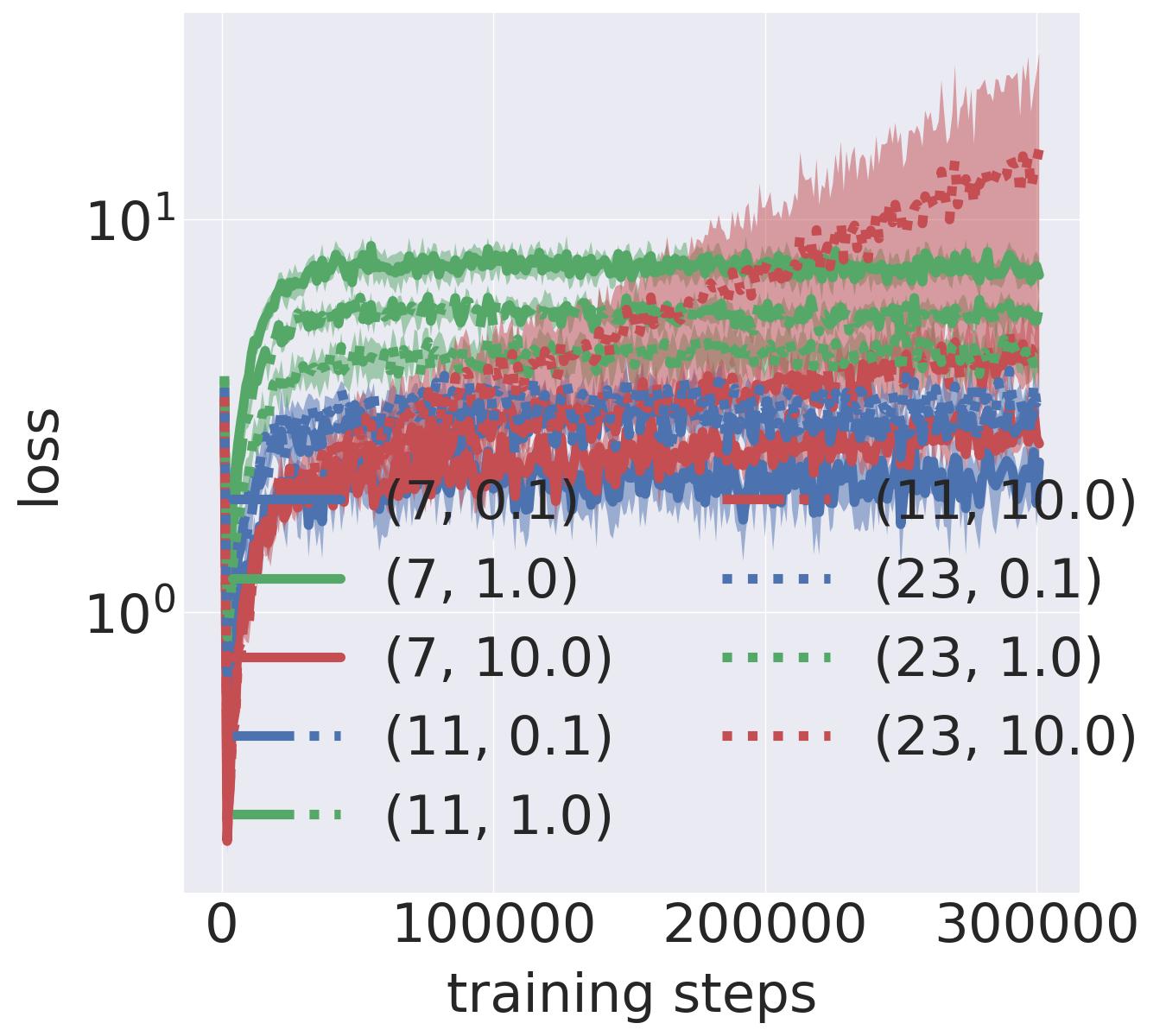}}\\
    \caption{\footnotesize \textsc{fqe} training loss vs. training iterations on the \textsc{d4rl} datasets. \textsc{iqm} of errors for each domain were computed over $20$ trials with $95\%$ confidence intervals. Lower is better. Vertical axis is log-scaled.}
\end{figure*}

\begin{figure*}[hbtp]
    \centering
        \subfigure[\textsc{fqe} Hopper-random]{\includegraphics[scale=0.125]{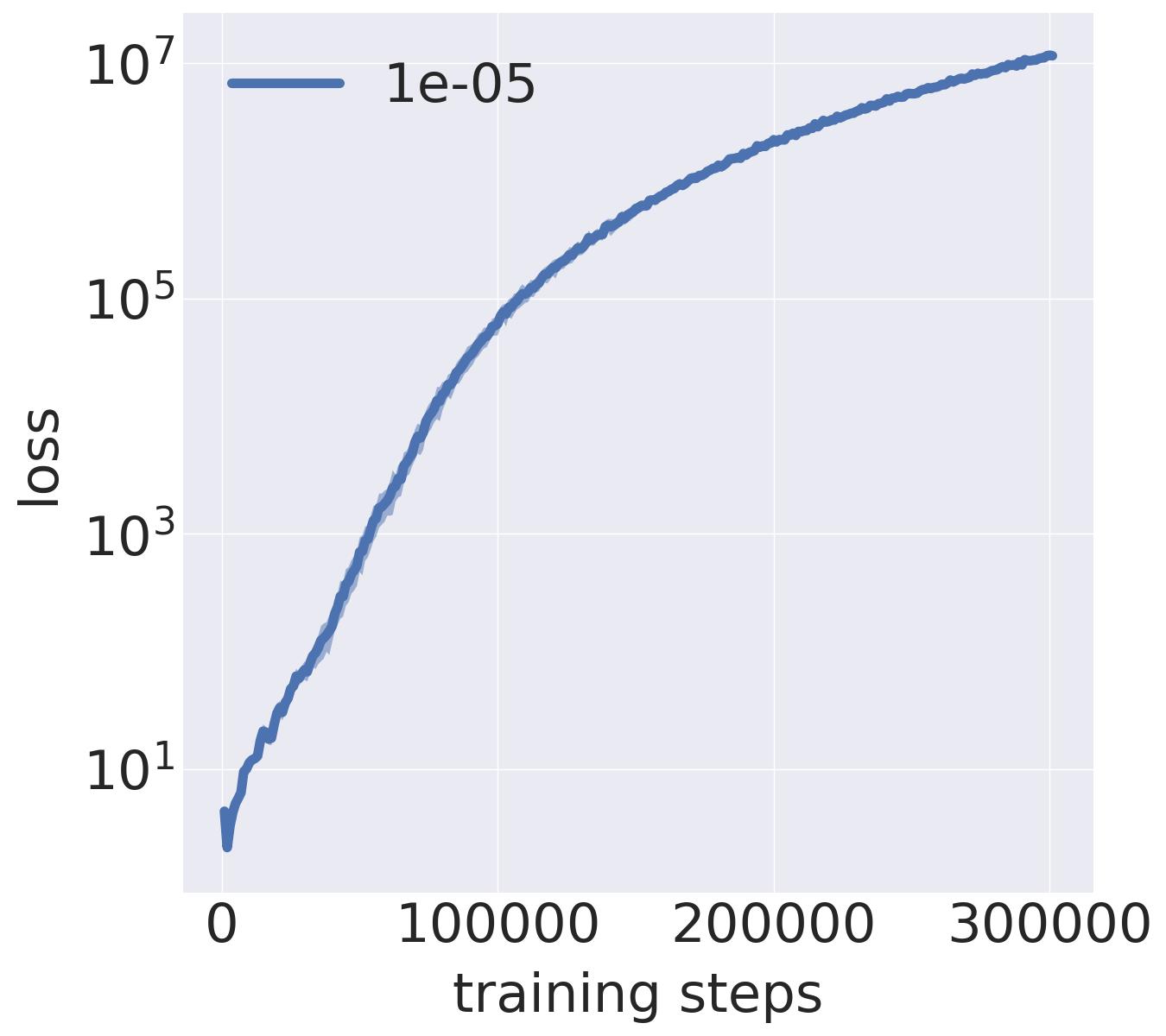}}
        \subfigure[\textsc{fqe-clip} Hopper-random]{\includegraphics[scale=0.125]{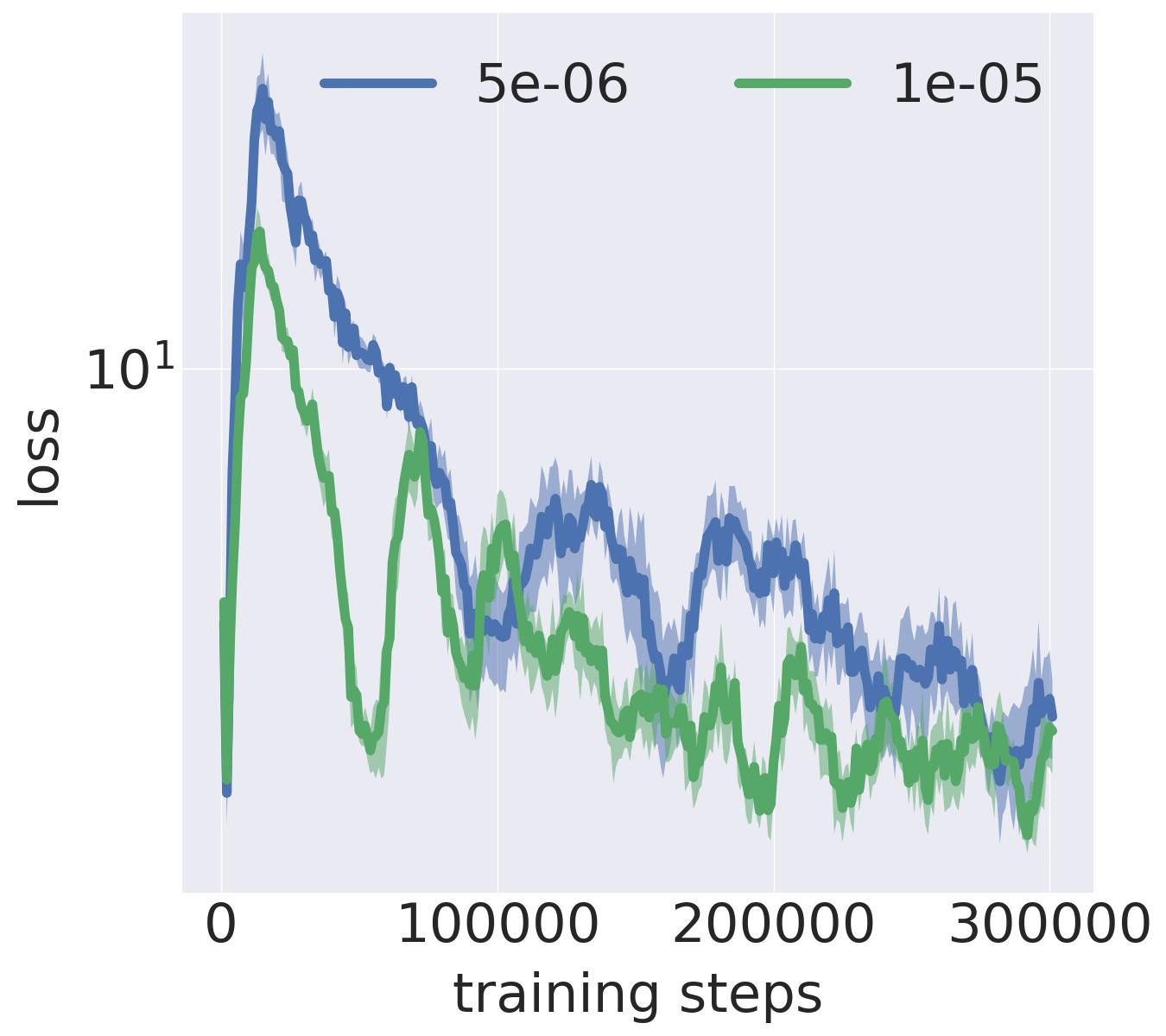}}
        \subfigure[\textsc{fqe} w/ \textsc{rope} Hopper-random]{\includegraphics[scale=0.125]{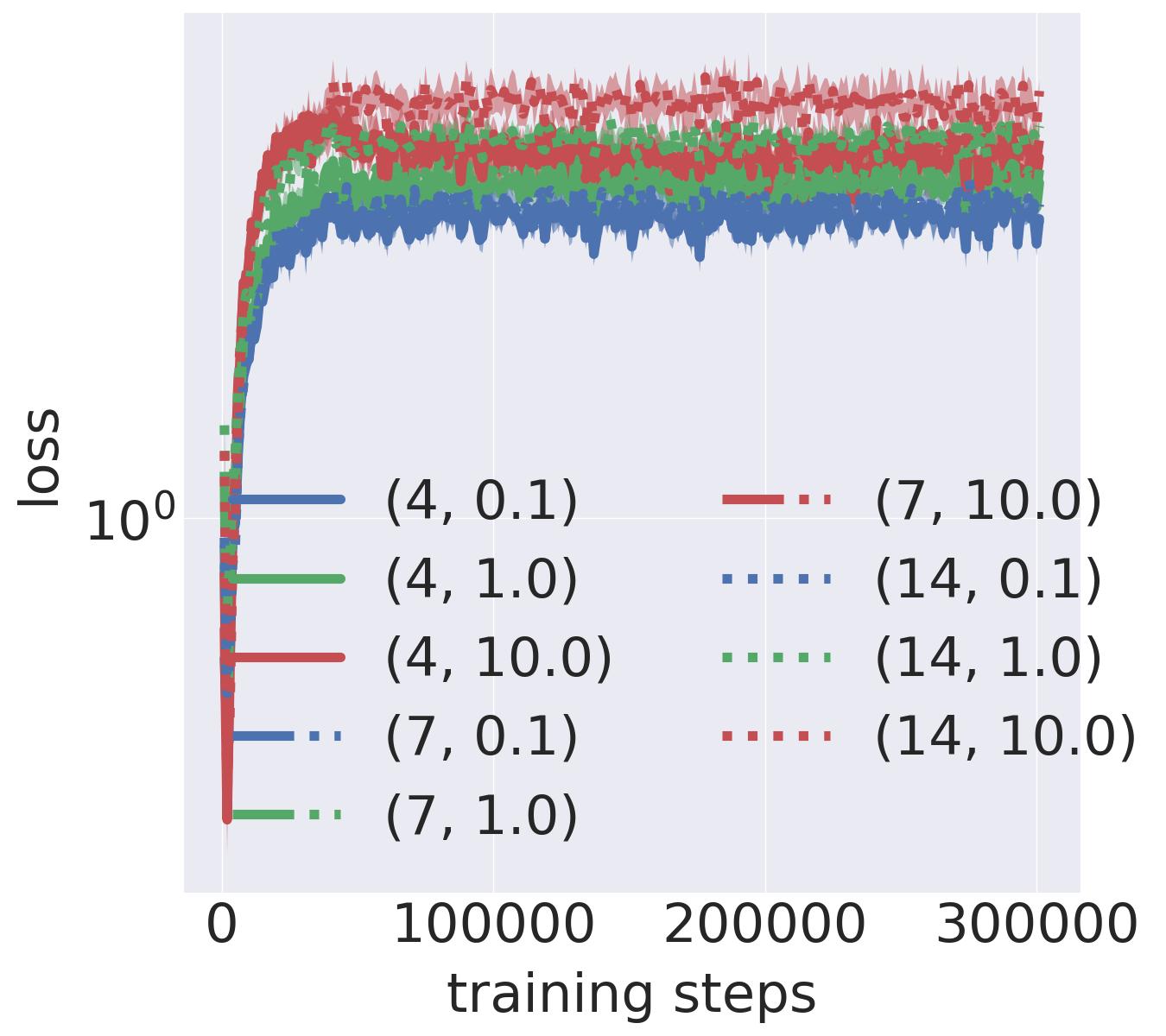}}\\
        \subfigure[\textsc{fqe} Hopper-medium]{\includegraphics[scale=0.125]{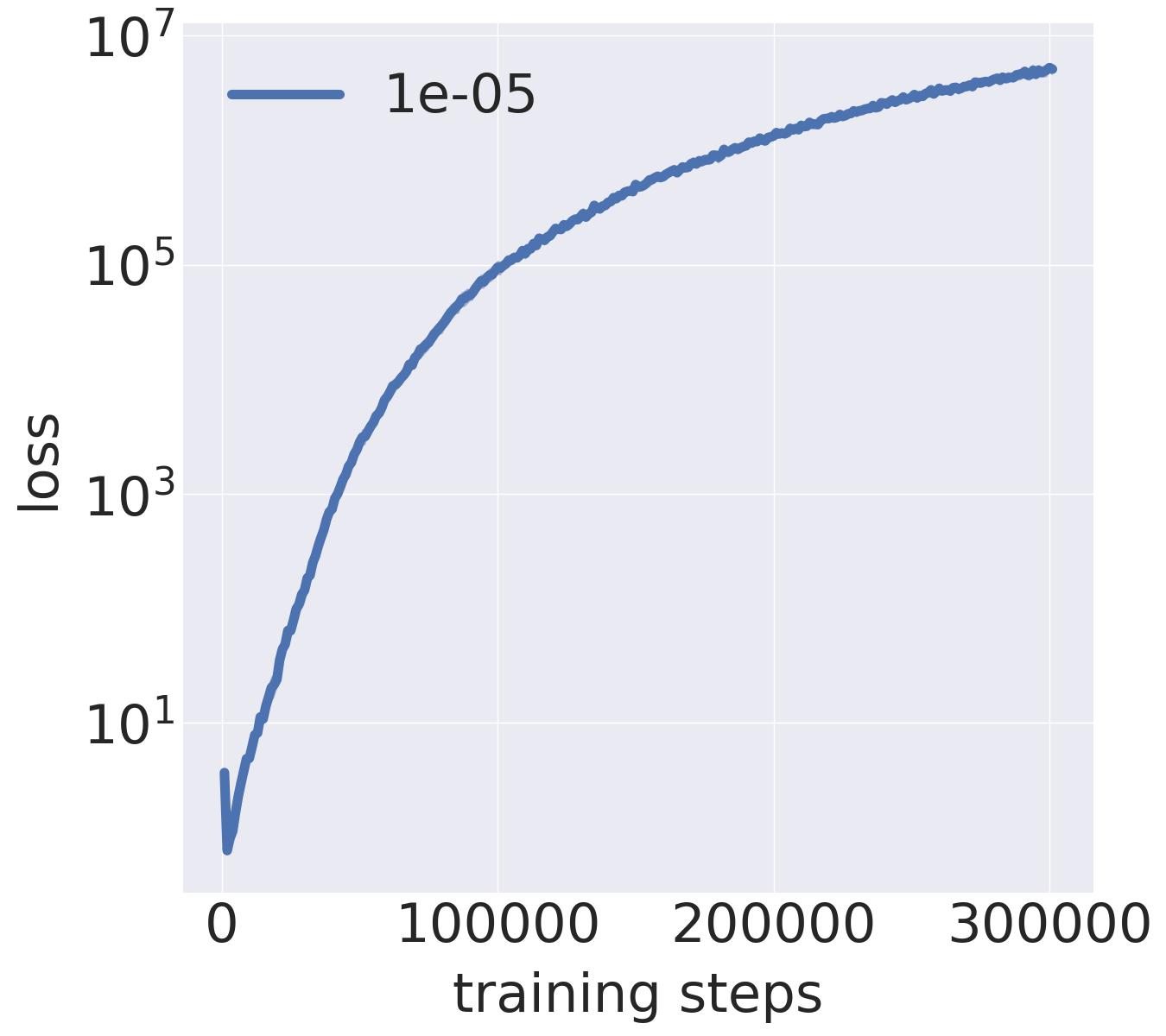}}
        \subfigure[\textsc{fqe-clip} Hopper-medium]{\includegraphics[scale=0.125]{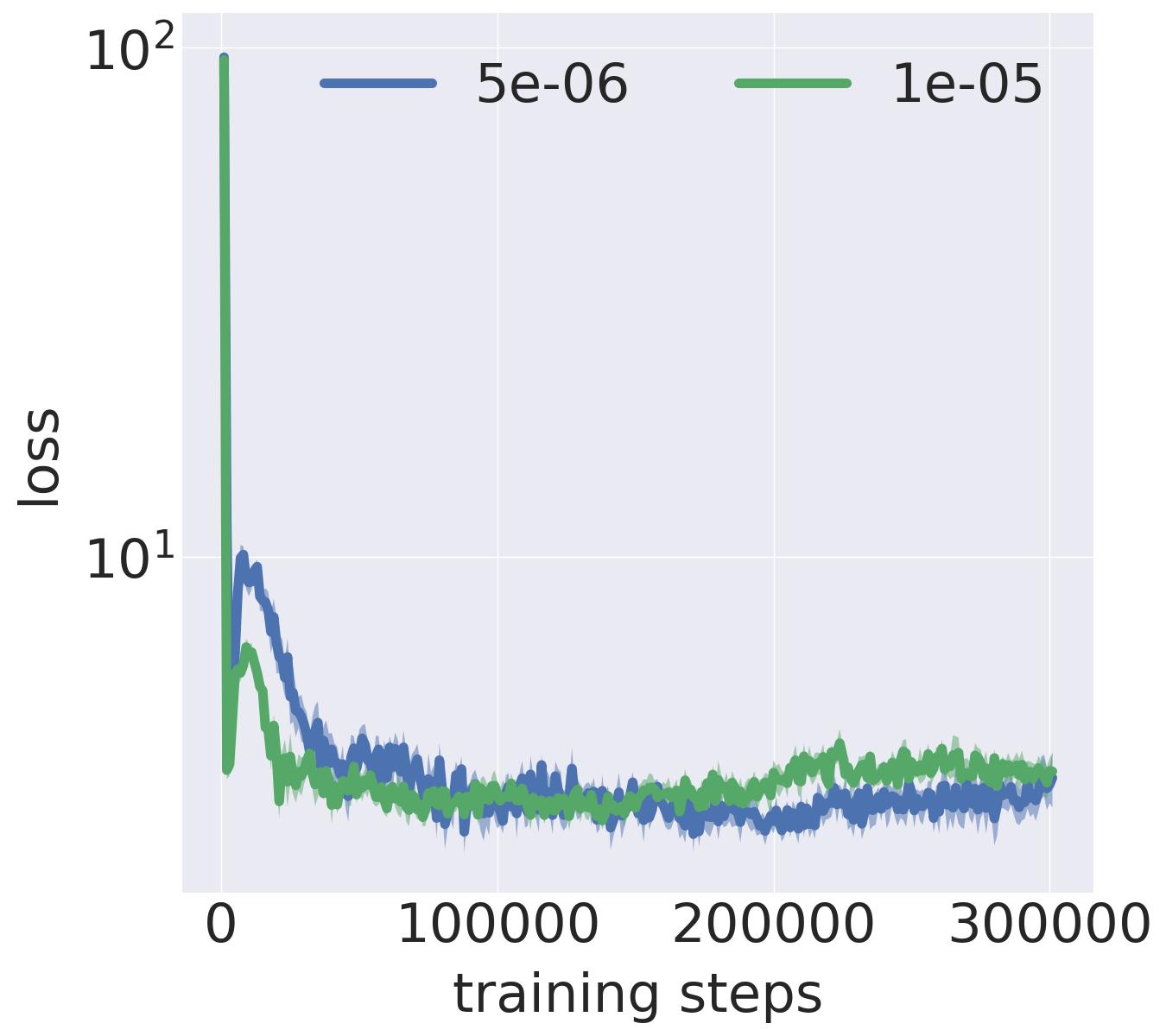}}
        \subfigure[\textsc{fqe} w/ \textsc{rope} Hopper-medium]{\includegraphics[scale=0.125]{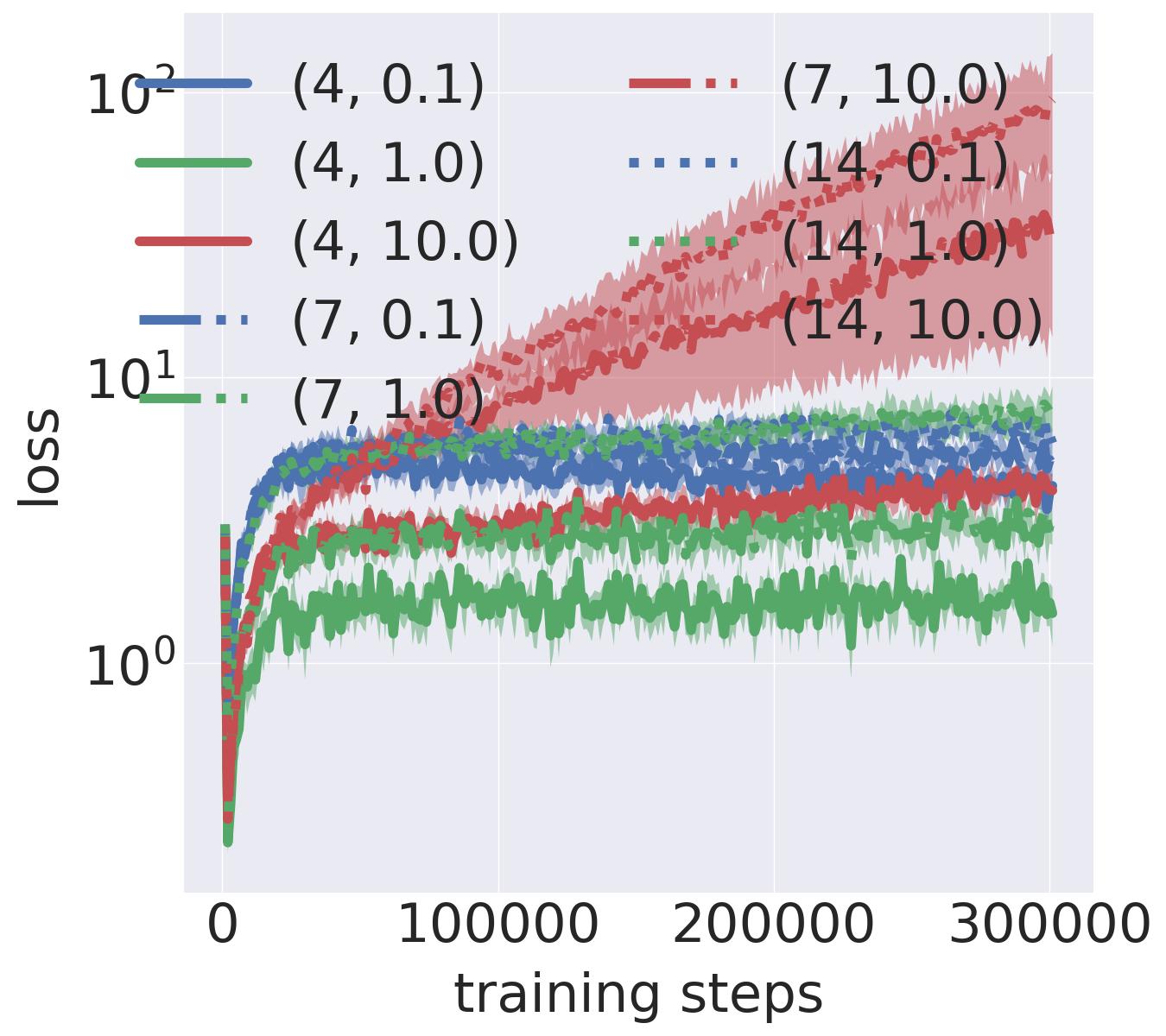}}\\
        \subfigure[\textsc{fqe} Hopper-medium-expert]{\includegraphics[scale=0.125]{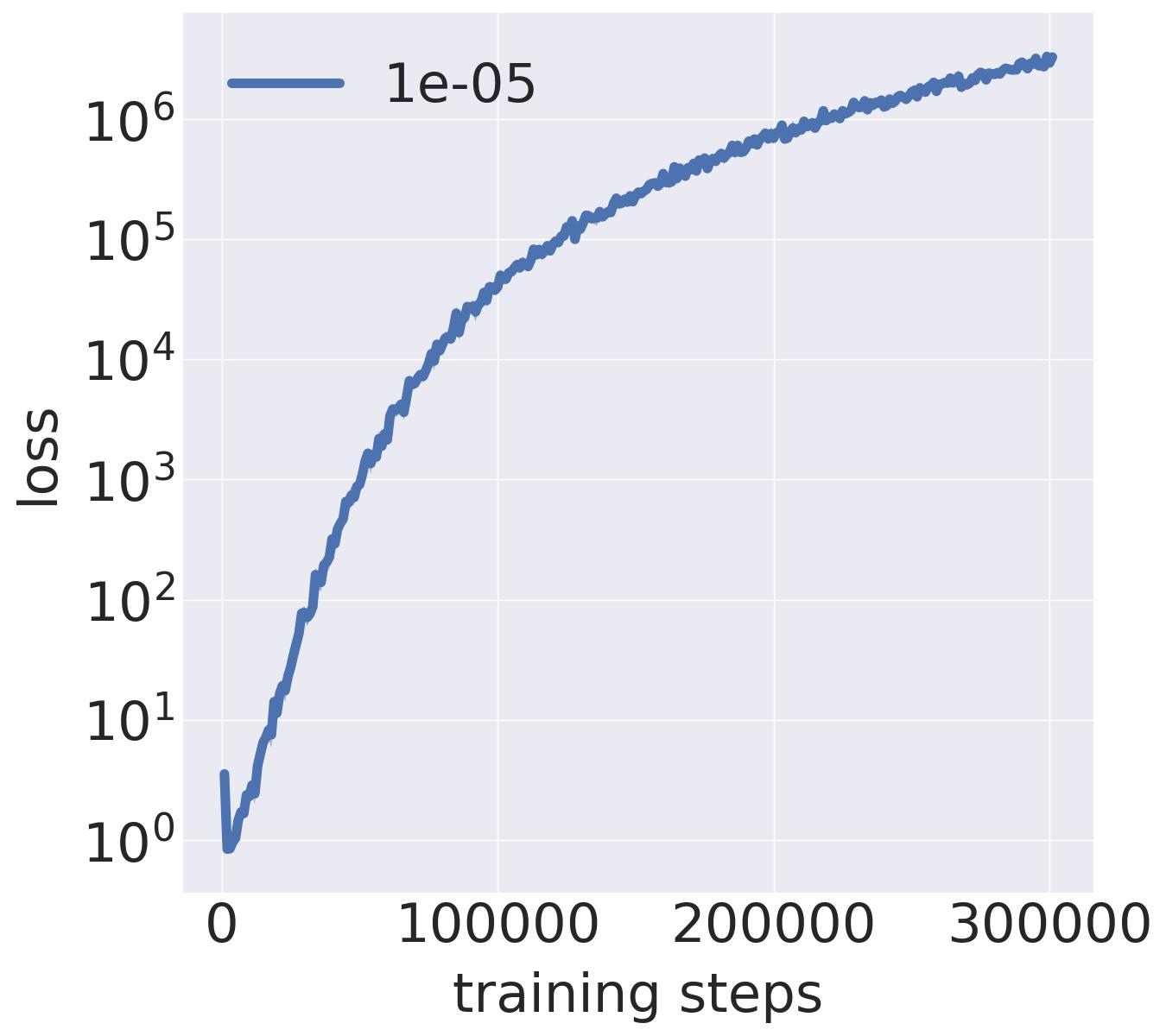}}
        \subfigure[\textsc{fqe-clip} Hopper-medium-expert]{\includegraphics[scale=0.125]{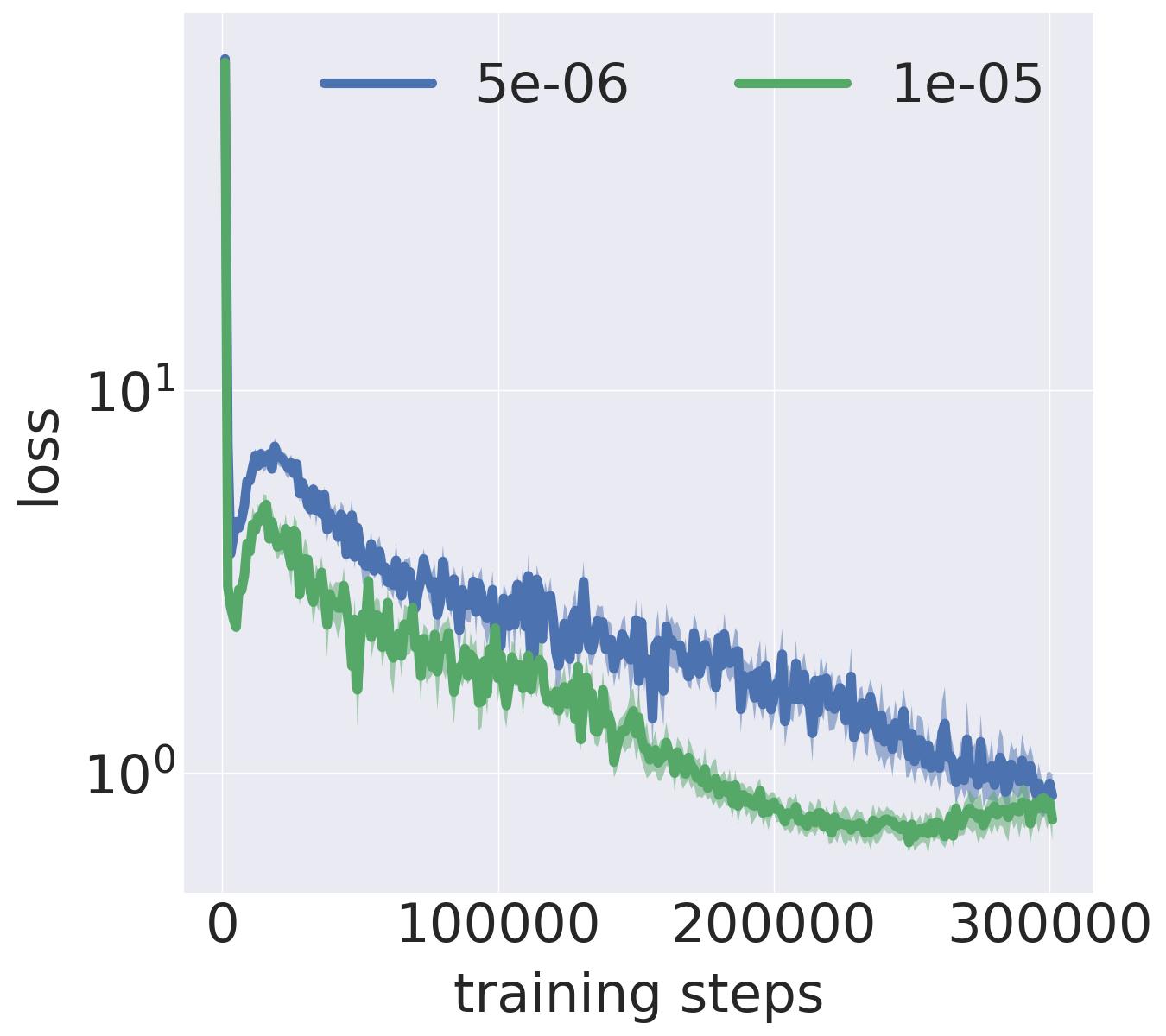}}
        \subfigure[\textsc{fqe} w/ \textsc{rope} Hopper-medium-expert]{\includegraphics[scale=0.125]{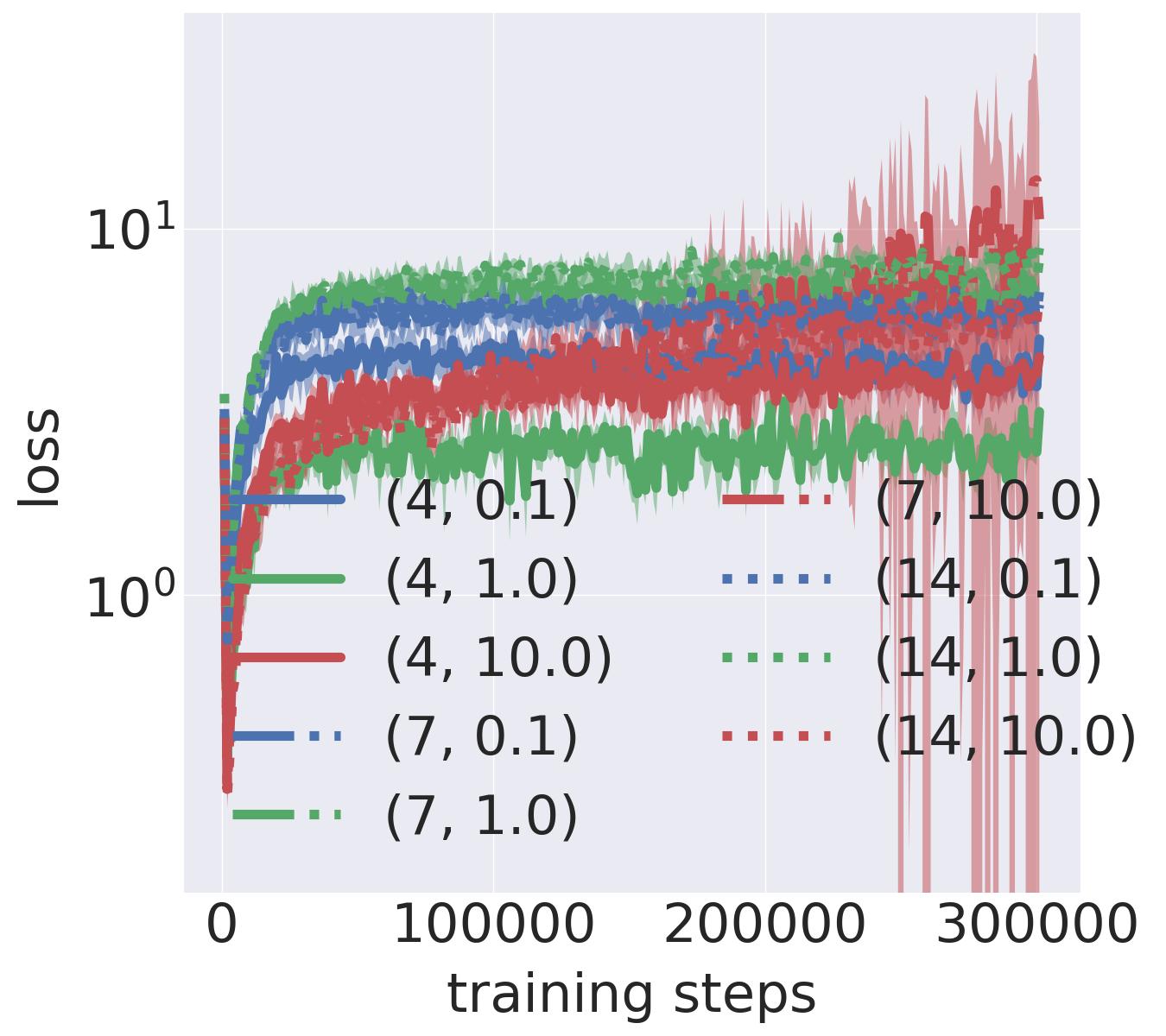}}\\
    \caption{\footnotesize \textsc{fqe} training loss vs. training iterations on the \textsc{d4rl} datasets. \textsc{iqm} of errors for each domain were computed over $20$ trials with $95\%$ confidence intervals. Lower is better. Vertical axis is log-scaled.}
\end{figure*}

\begin{figure*}[hbtp]
    \centering
        \subfigure[HalfCheetah-random]{\includegraphics[scale=0.125]{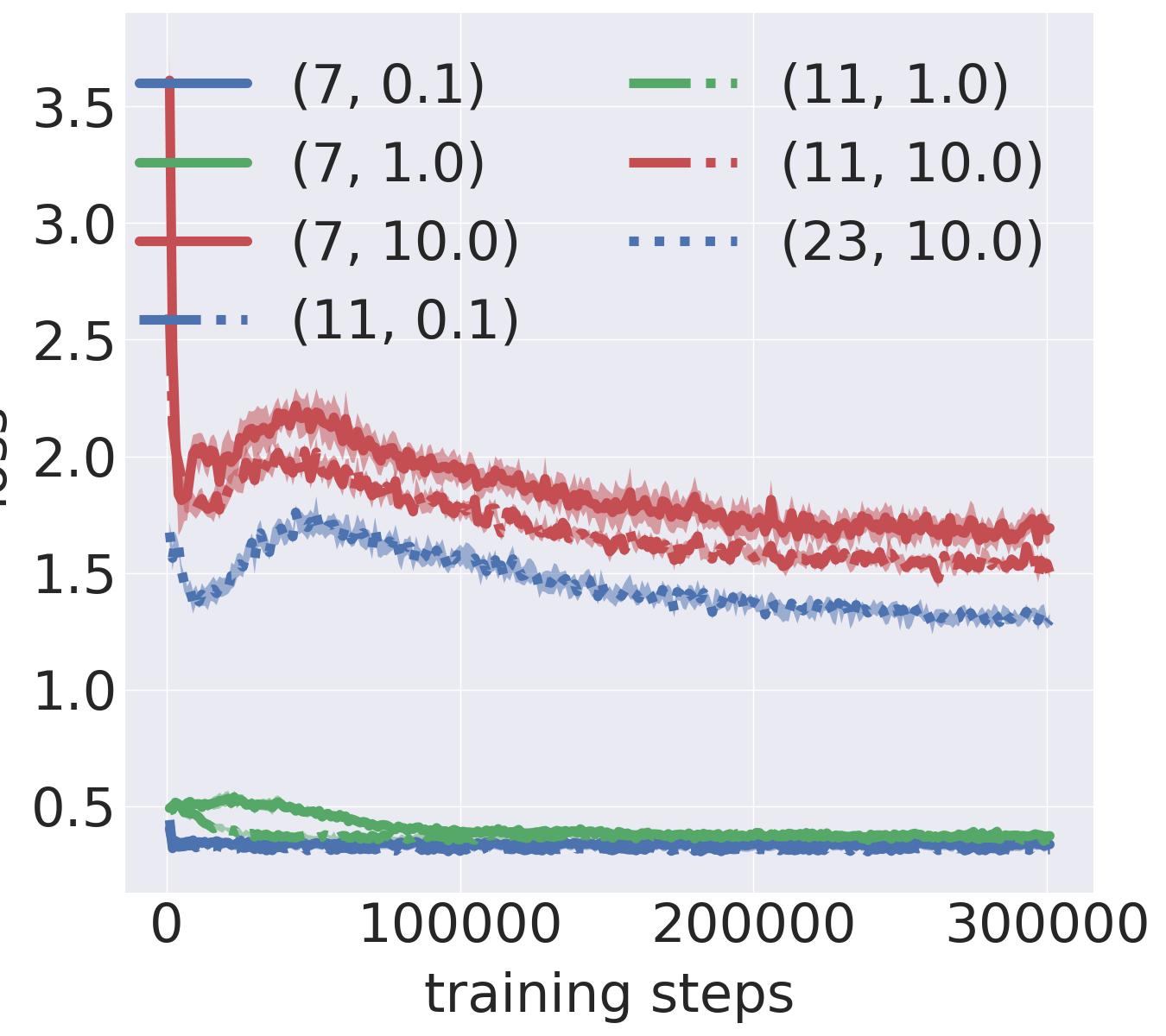}}
        \subfigure[HalfCheetah-medium]{\includegraphics[scale=0.125]{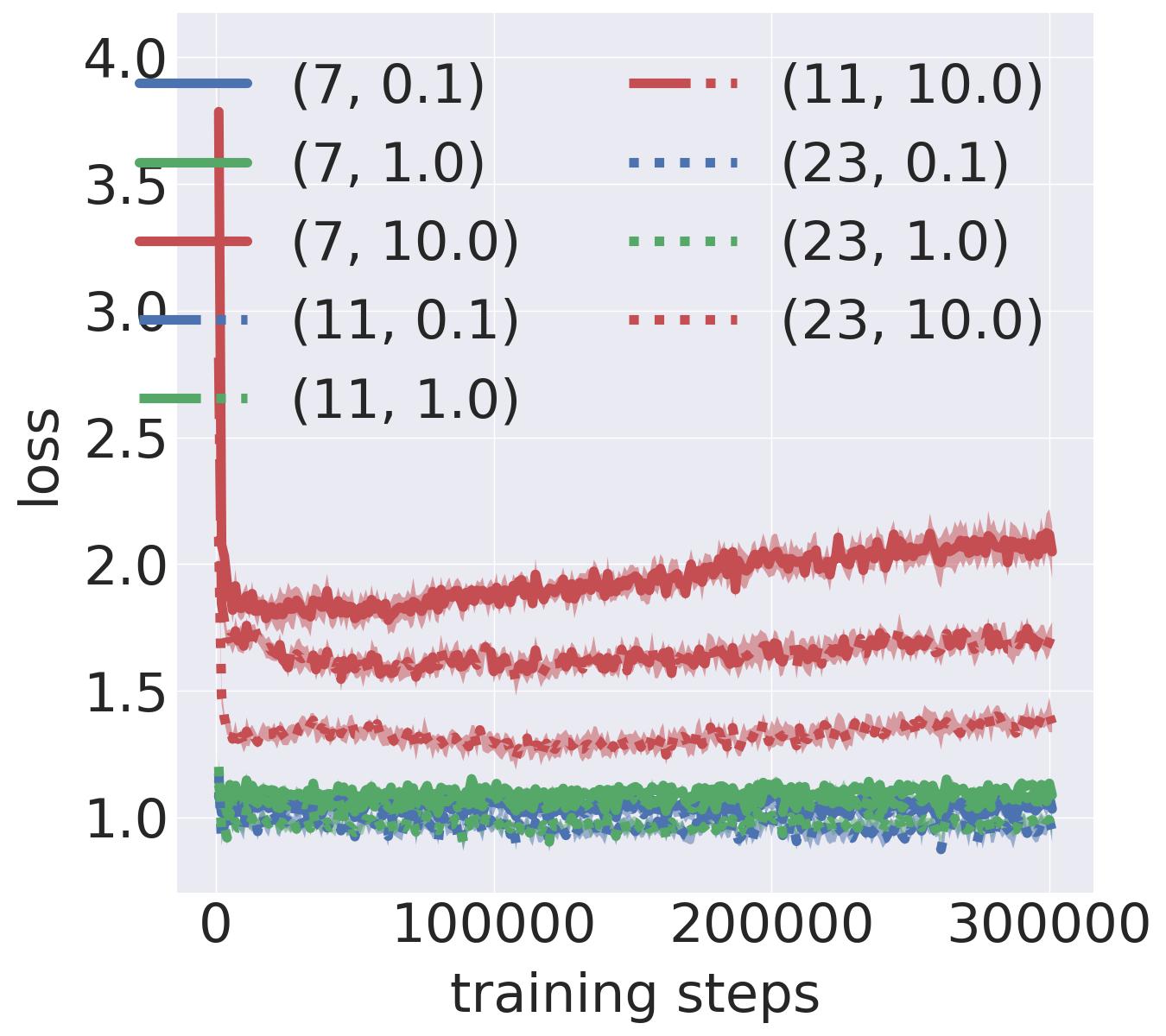}}
        \subfigure[HalfCheetah-medium-expert]{\includegraphics[scale=0.125]{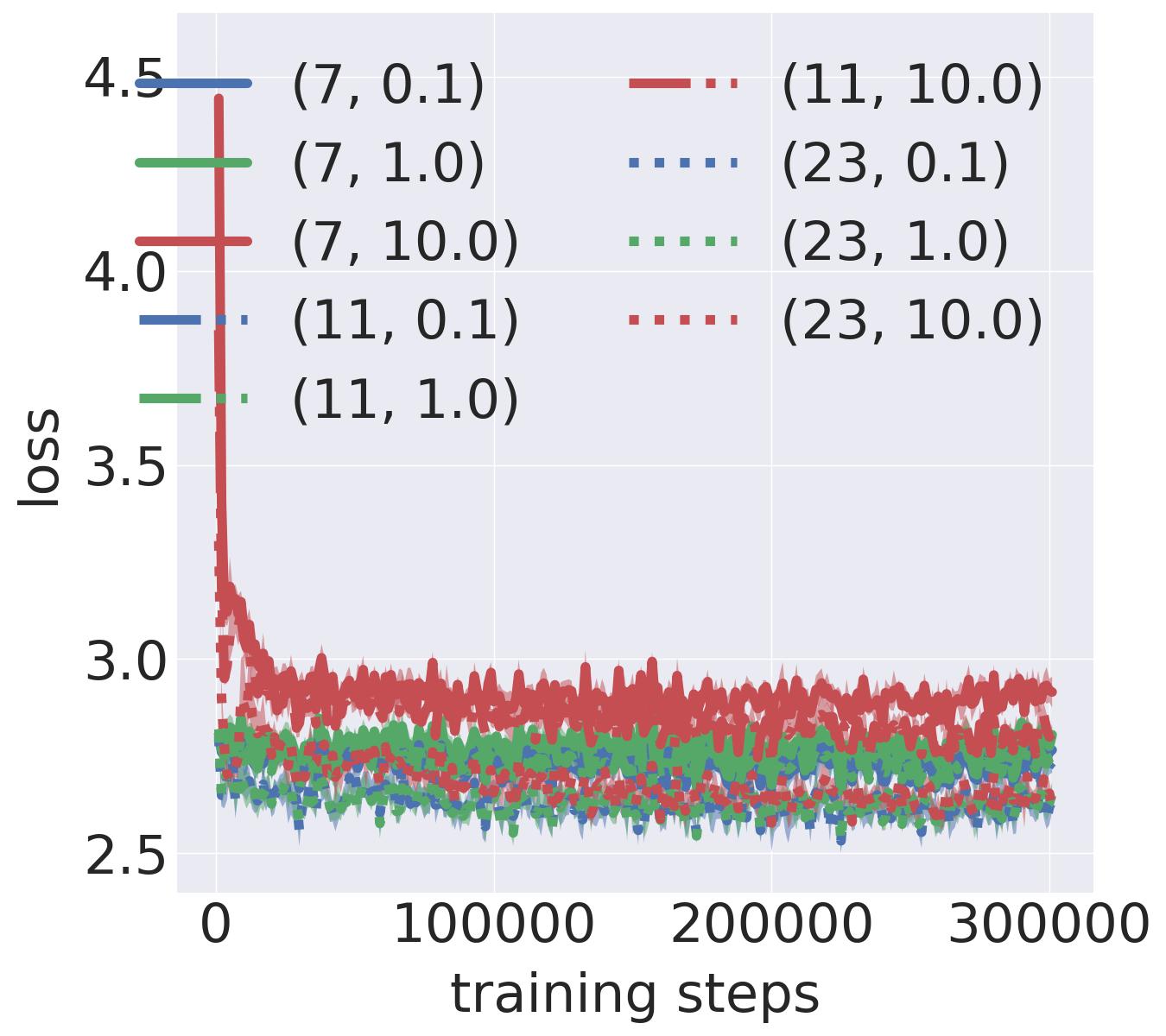}}\\
        \subfigure[Walker2D-random]{\includegraphics[scale=0.125]{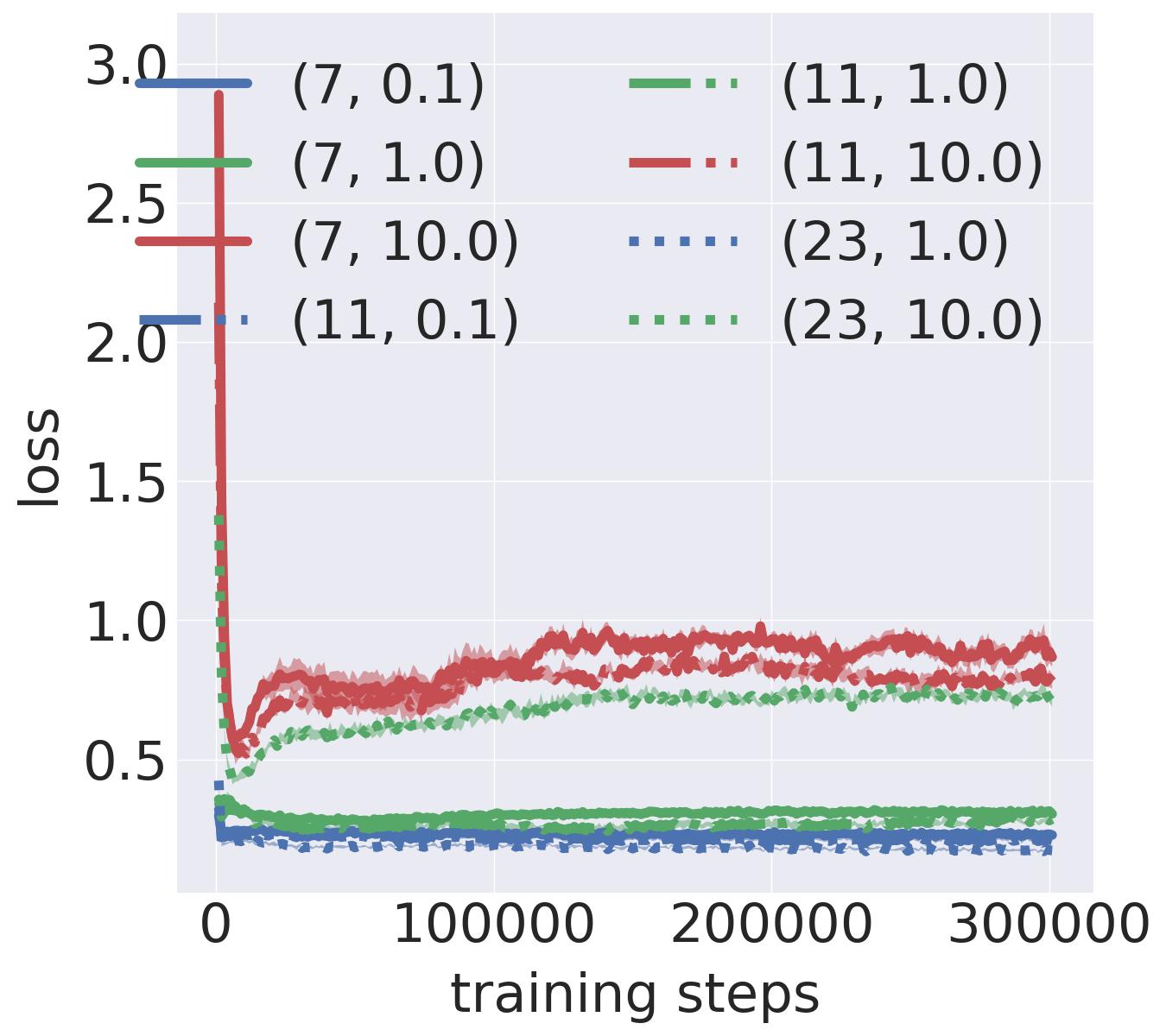}}
        \subfigure[Walker2D-medium]{\includegraphics[scale=0.125]{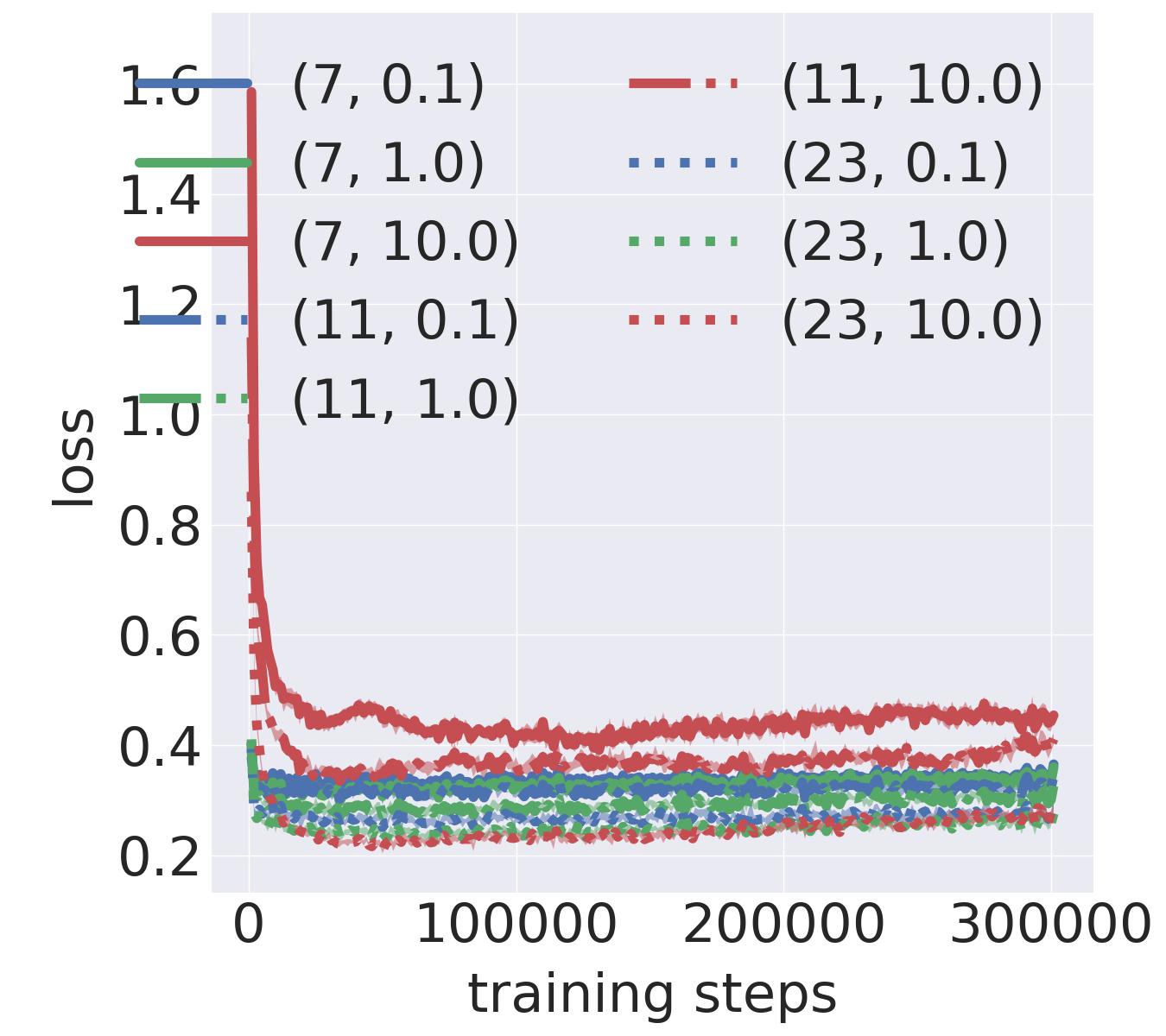}}
        \subfigure[Walker2D-medium-expert]{\includegraphics[scale=0.125]{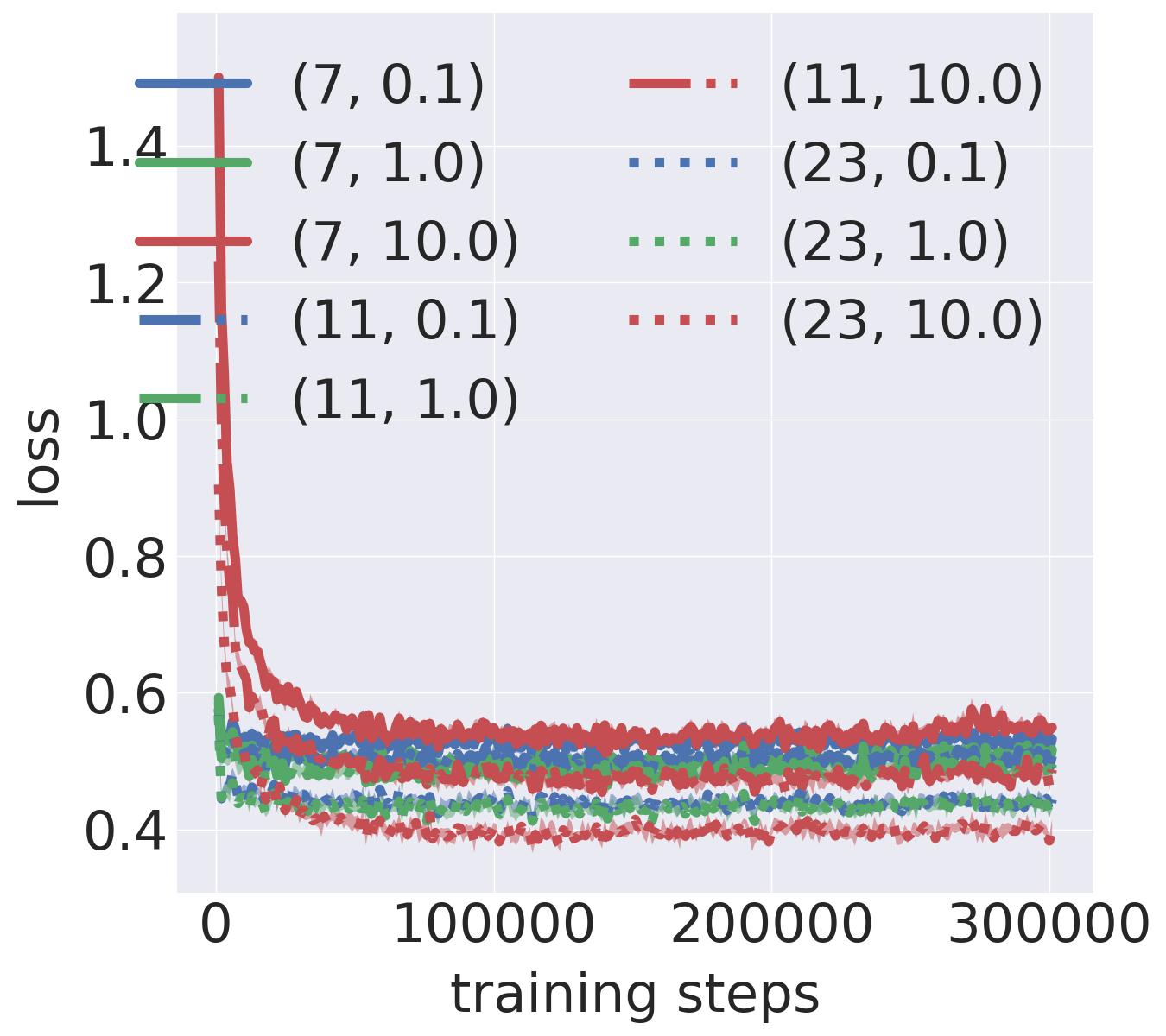}}\\
        \subfigure[Hopper-random]{\includegraphics[scale=0.125]{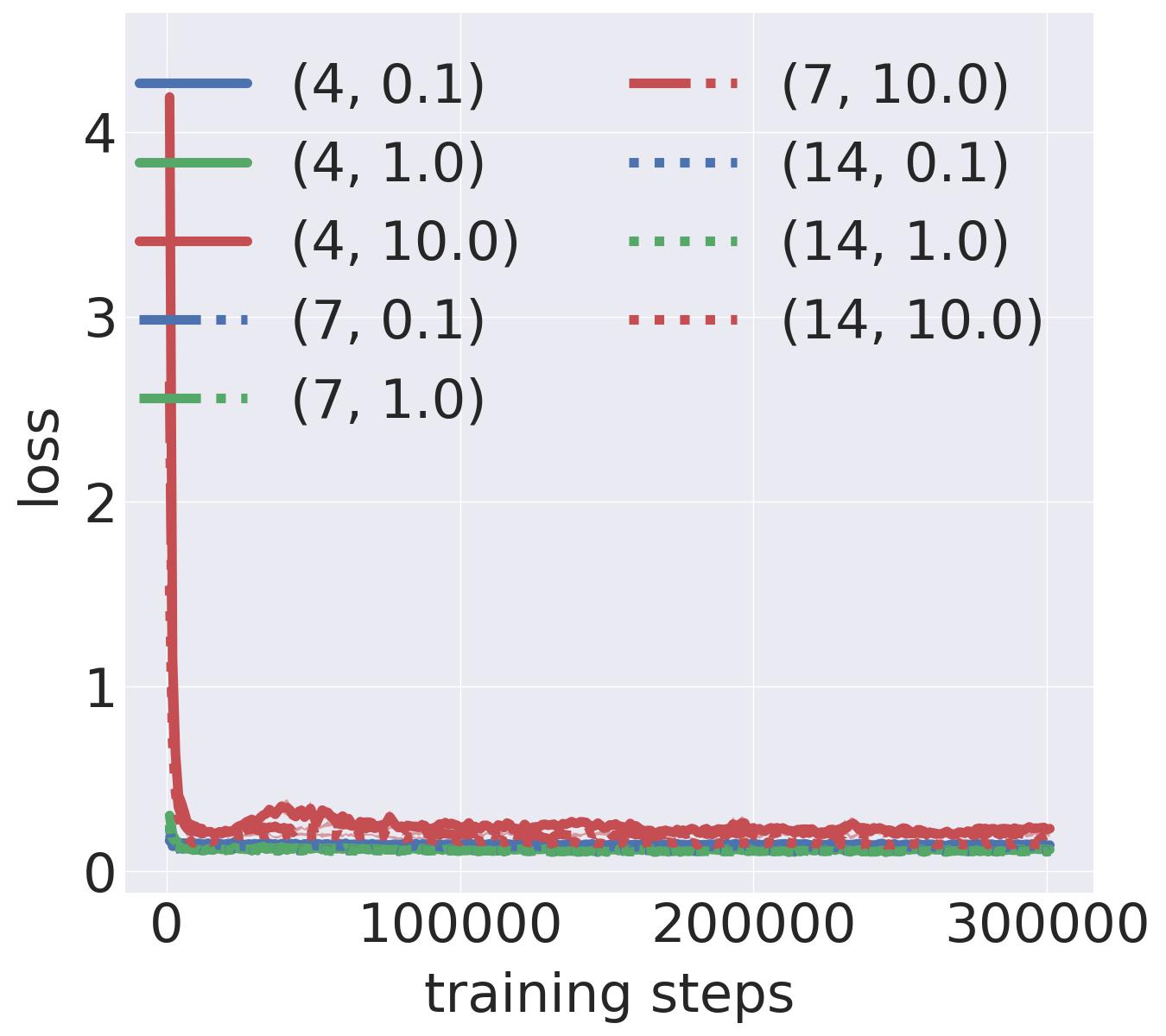}}
        \subfigure[Hopper-medium]{\includegraphics[scale=0.125]{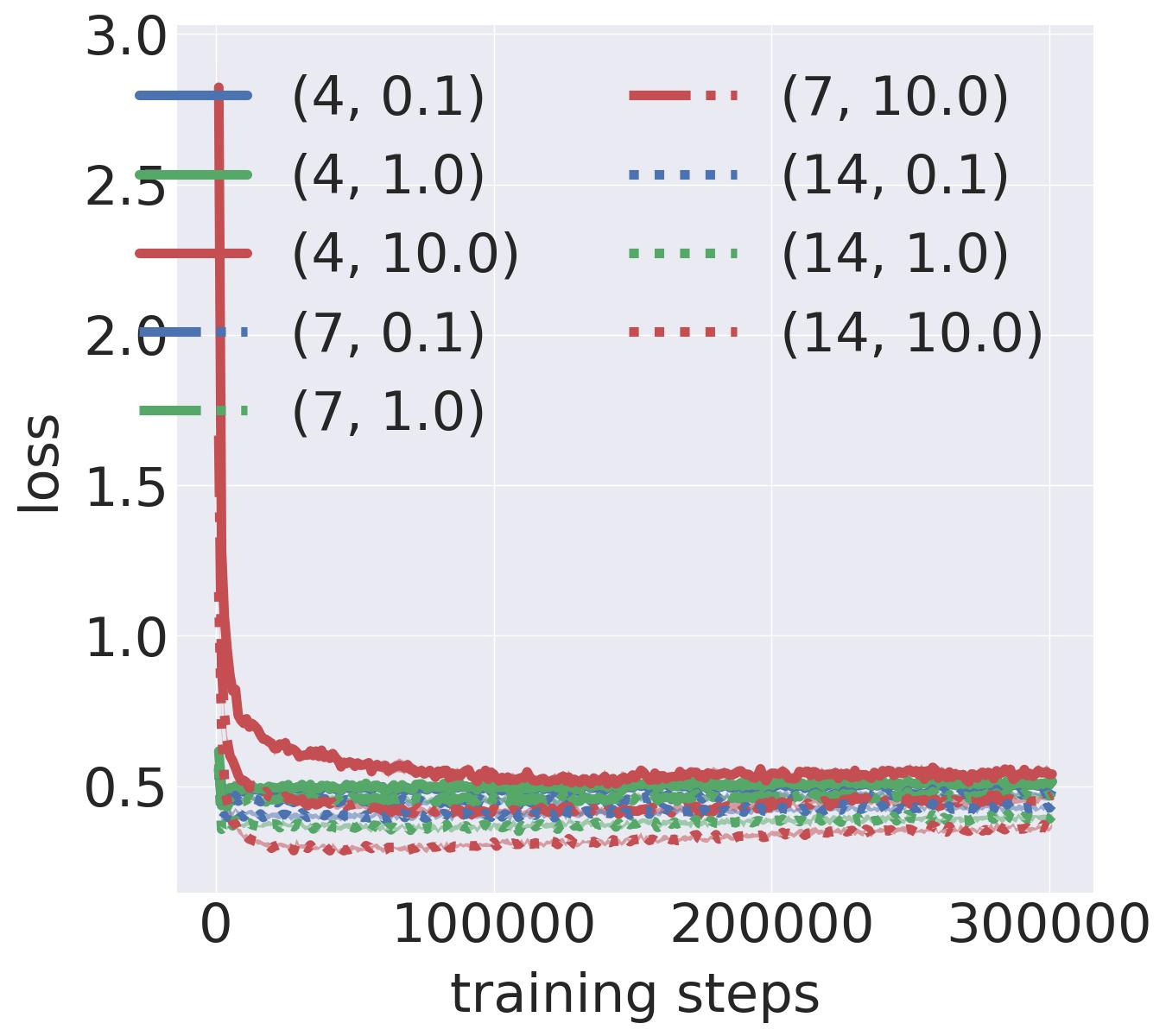}}
        \subfigure[Hopper-medium-expert]{\includegraphics[scale=0.125]{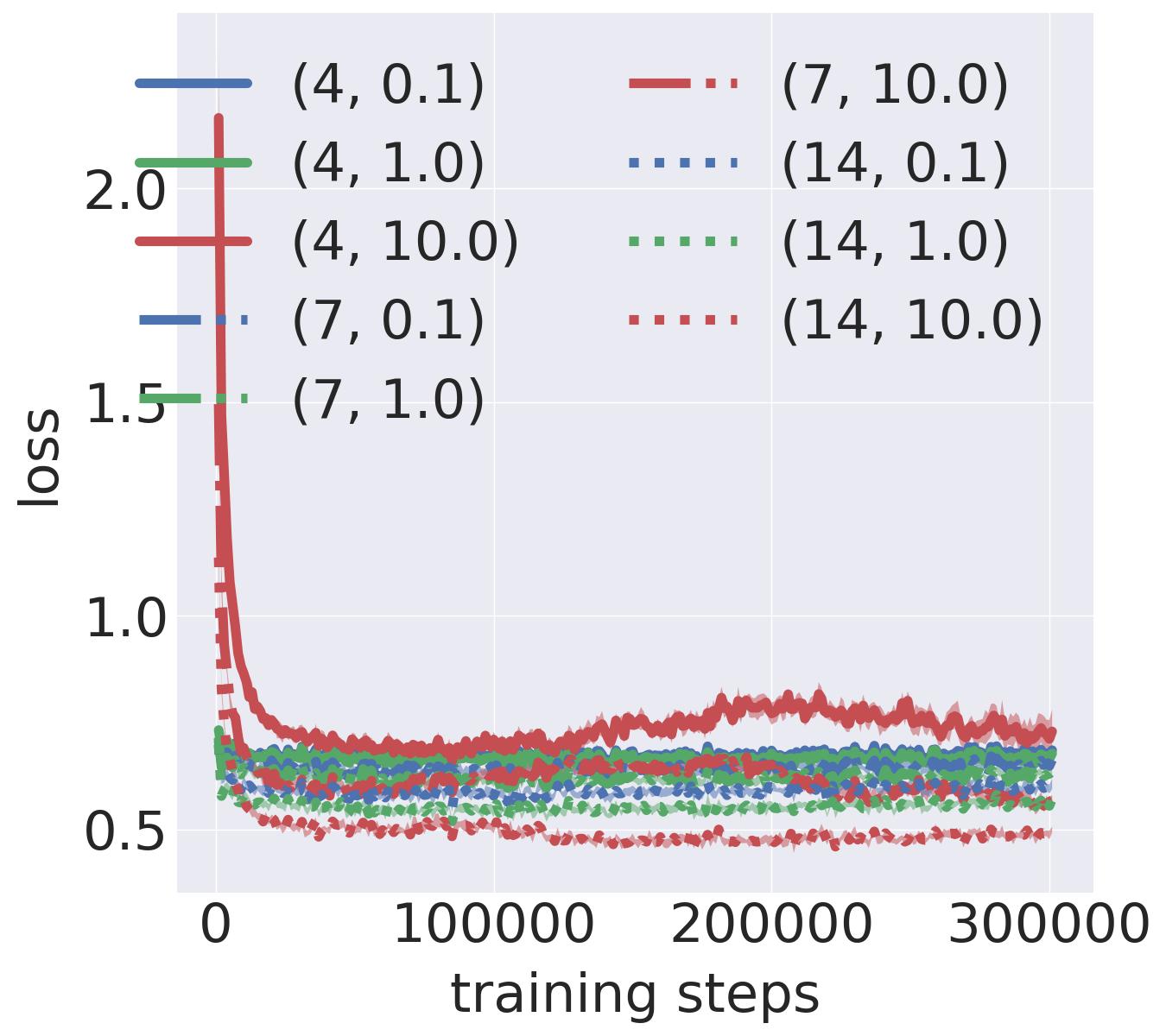}}\\
    \caption{\footnotesize \textsc{rope} training loss vs. training iterations on the \textsc{d4rl} datasets. \textsc{iqm} of errors for each domain were computed over $20$ trials with $95\%$ confidence intervals. Lower is better.}
    \label{fig:tr_losses_end}
\end{figure*}

\subsubsection{Understanding the ROPE Representations}
In this section, we try to understand the nature of the \textsc{rope} representations. We do so by plotting the mean of the: 1) mean feature dimension and 2) standard deviation feature dimension. For example, if there $N$ state-action pairs, each with dimension $D$, we compute the mean and standard deviation feature dimension for each of the $D$ dimensions across the $N$ examples, and then compute the mean along the $D$ dimensions. If the standard deviation value is close $0$, it indicates that there may be a representation collapse. See Figure~\ref{fig:rope_feature_stats}.

\begin{figure*}[hbtp]
    \centering
        \subfigure[Mean random]{\includegraphics[scale=0.125]{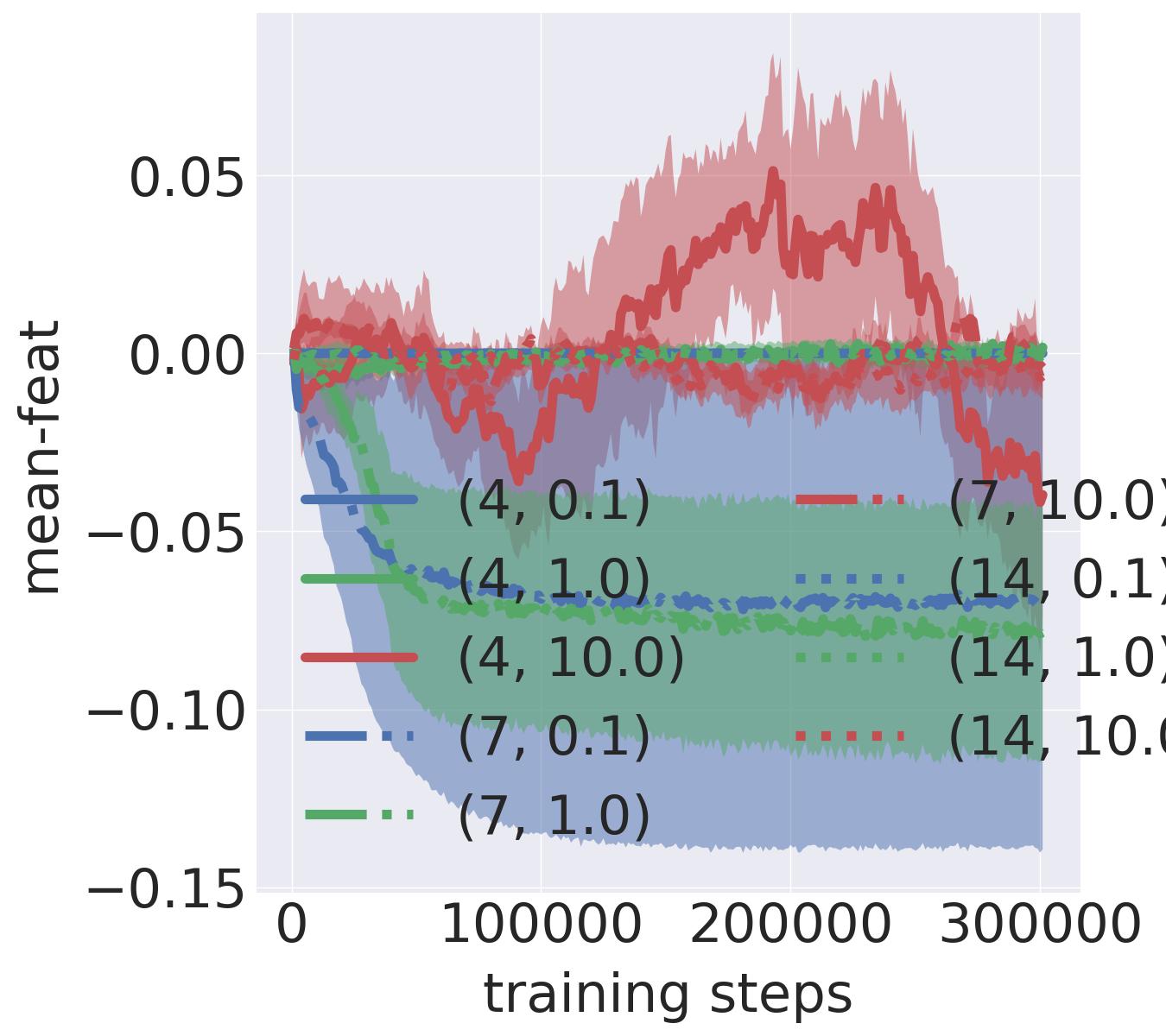}}
        \subfigure[Mean medium]{\includegraphics[scale=0.125]{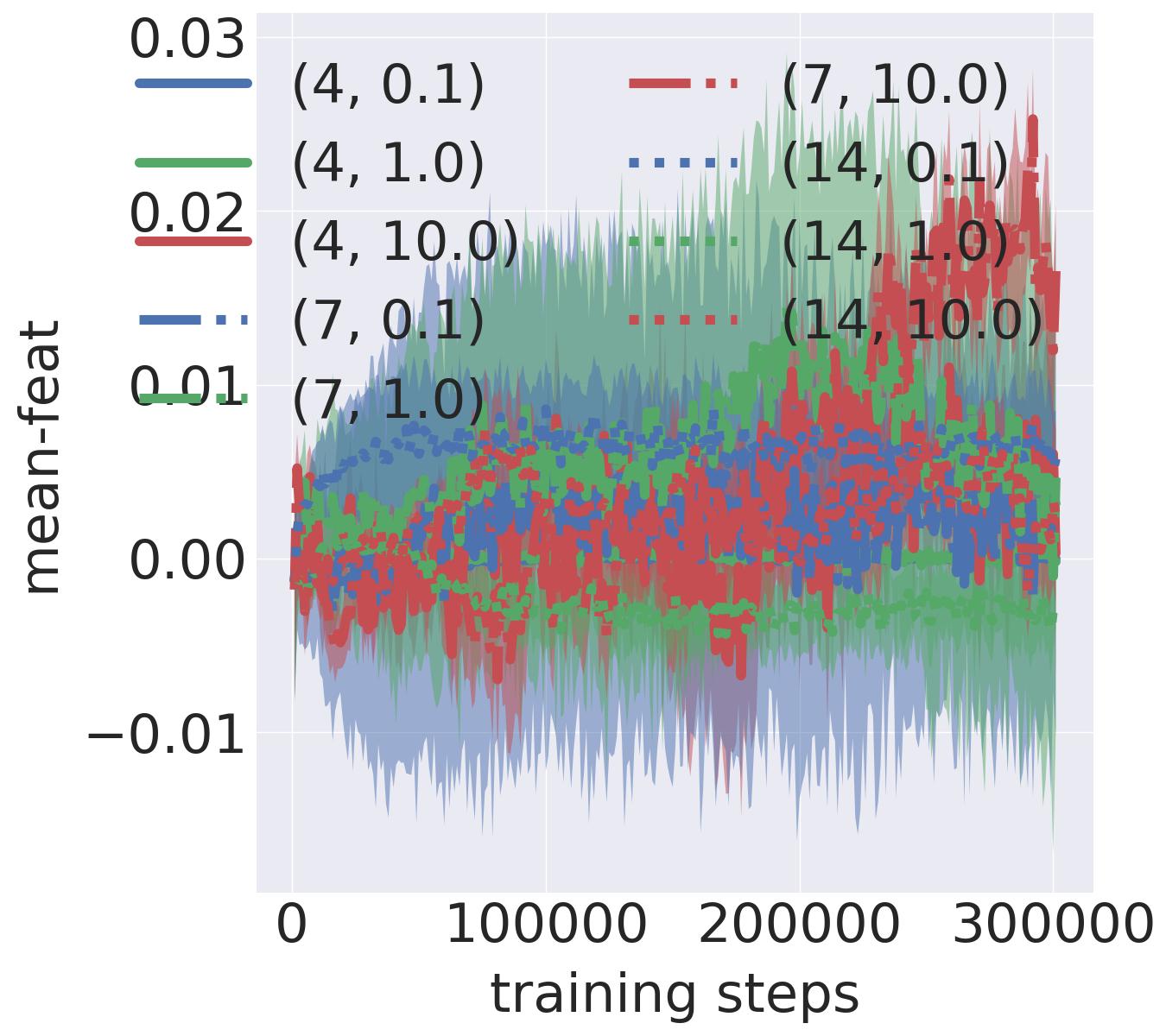}}
        \subfigure[Mean medium-expert]{\includegraphics[scale=0.125]{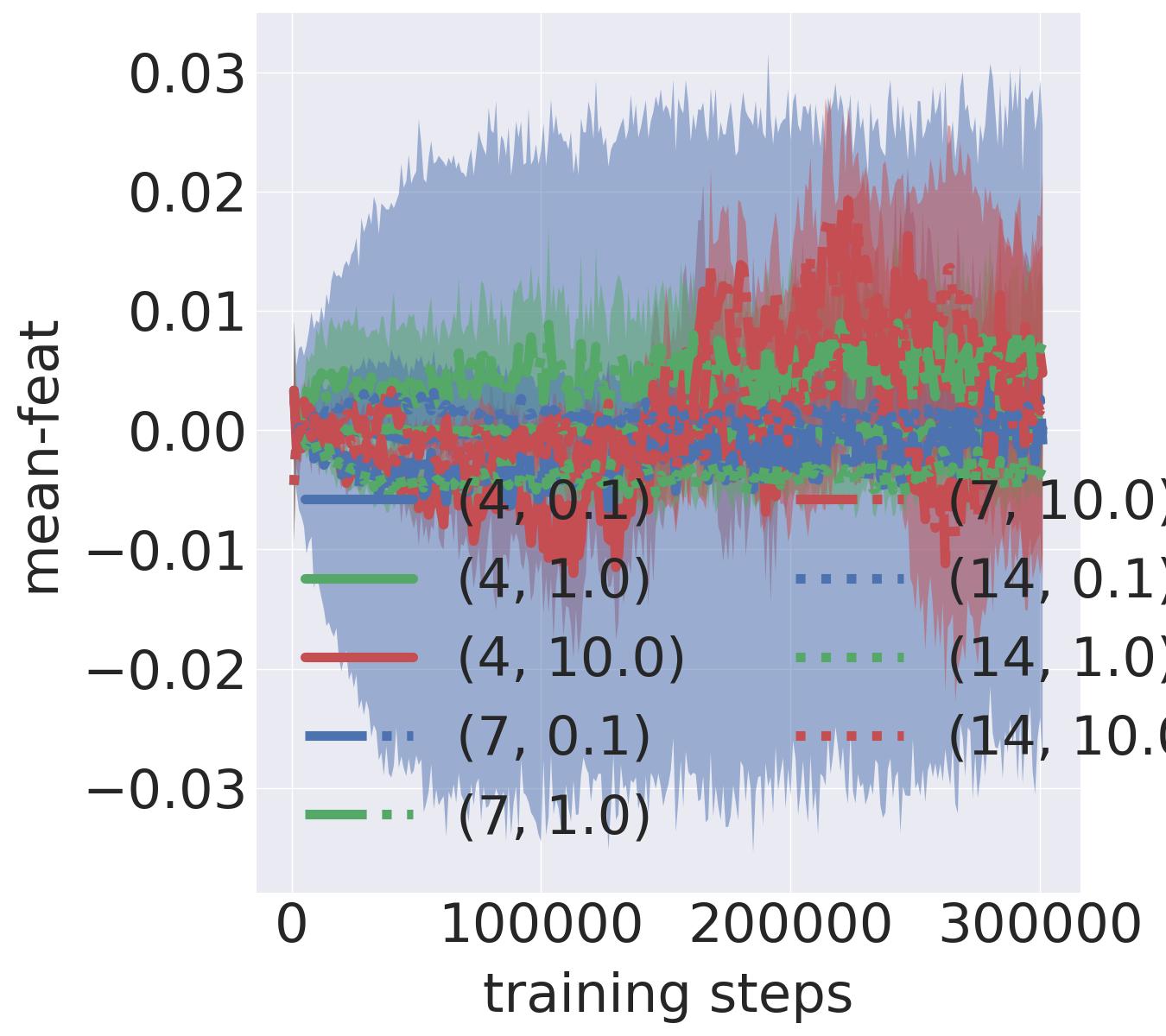}}\\
        \subfigure[Std random]{\includegraphics[scale=0.125]{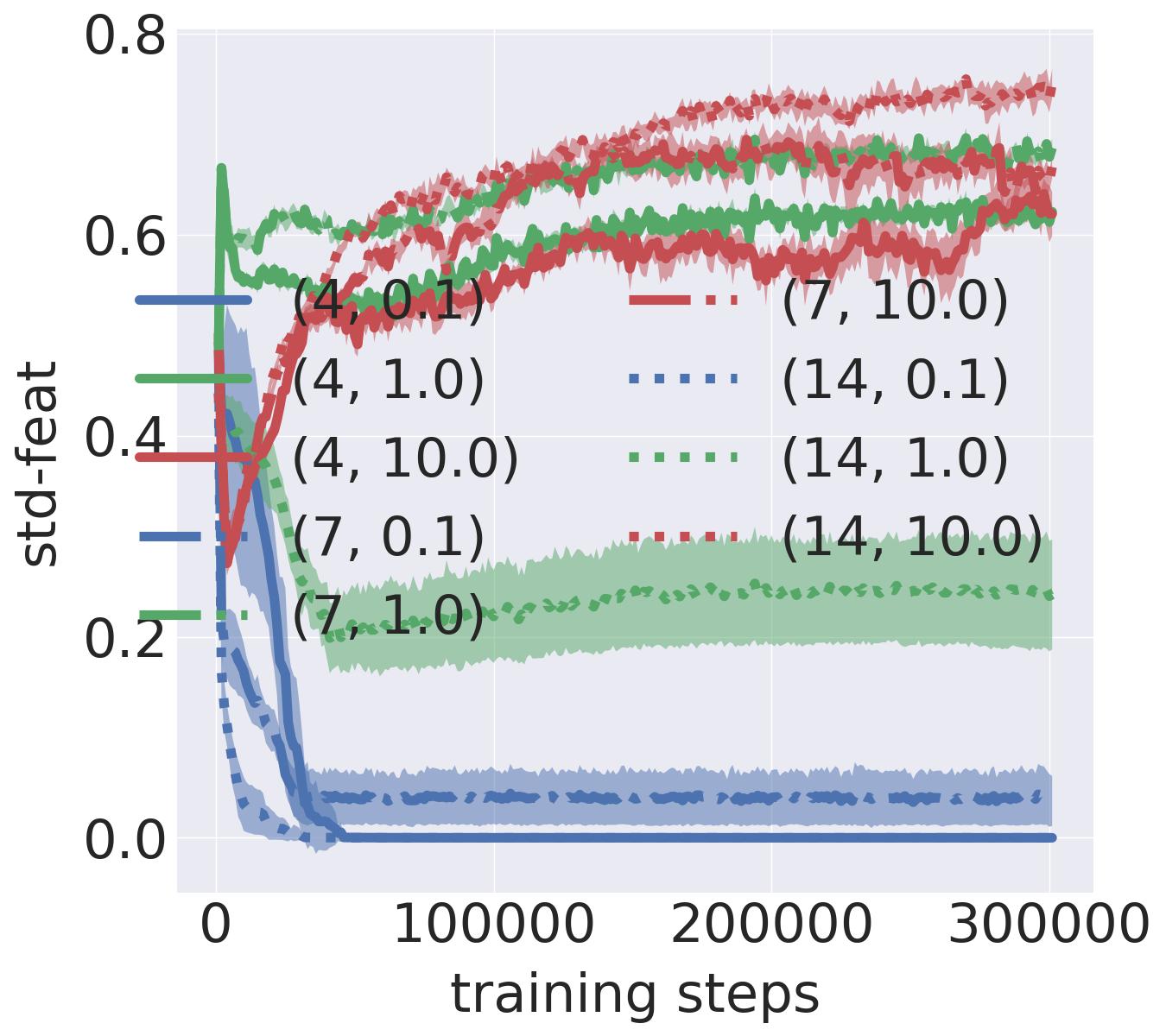}}
        \subfigure[Std medium]{\includegraphics[scale=0.125]{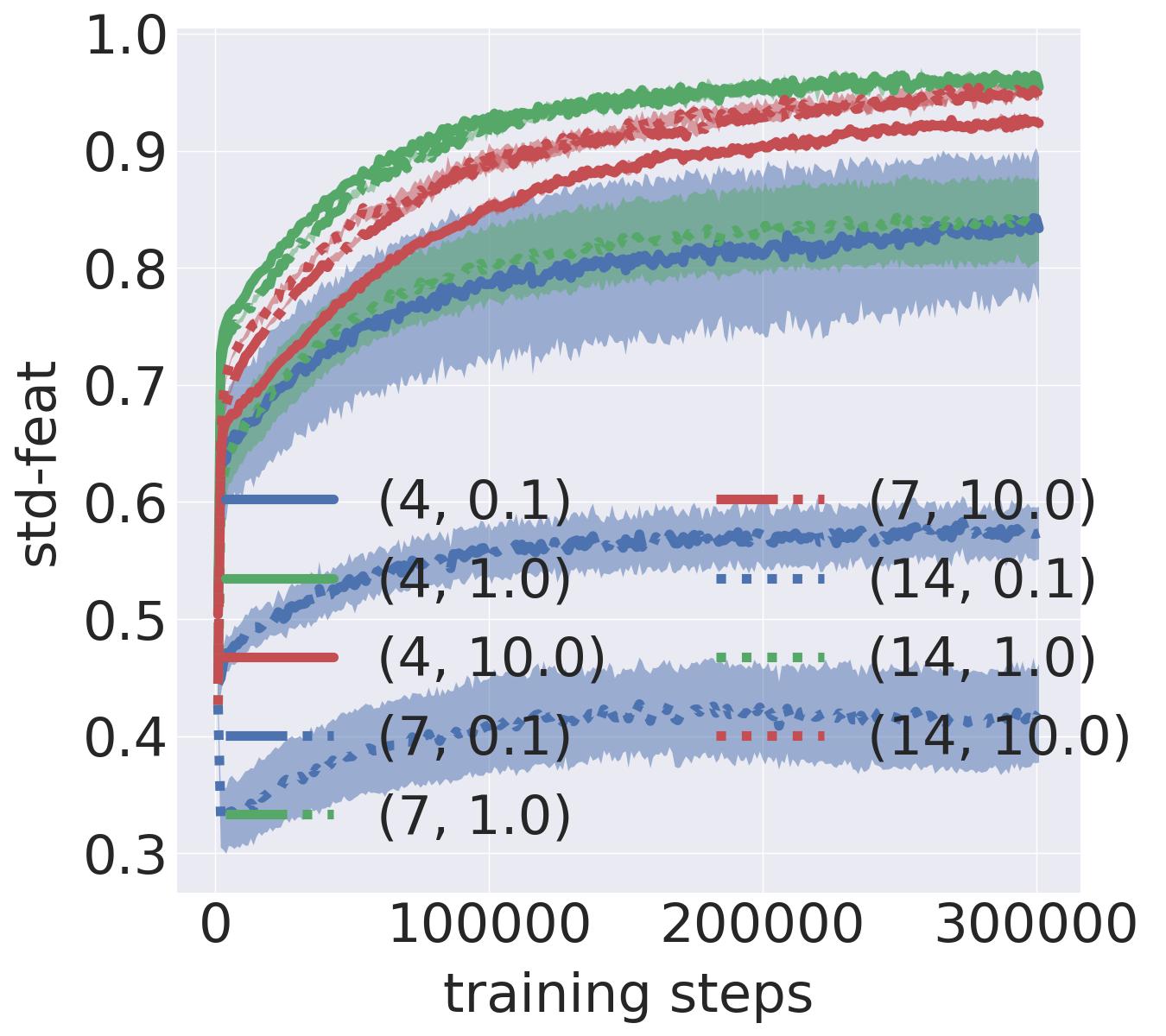}}
        \subfigure[Std medium-expert]{\includegraphics[scale=0.125]{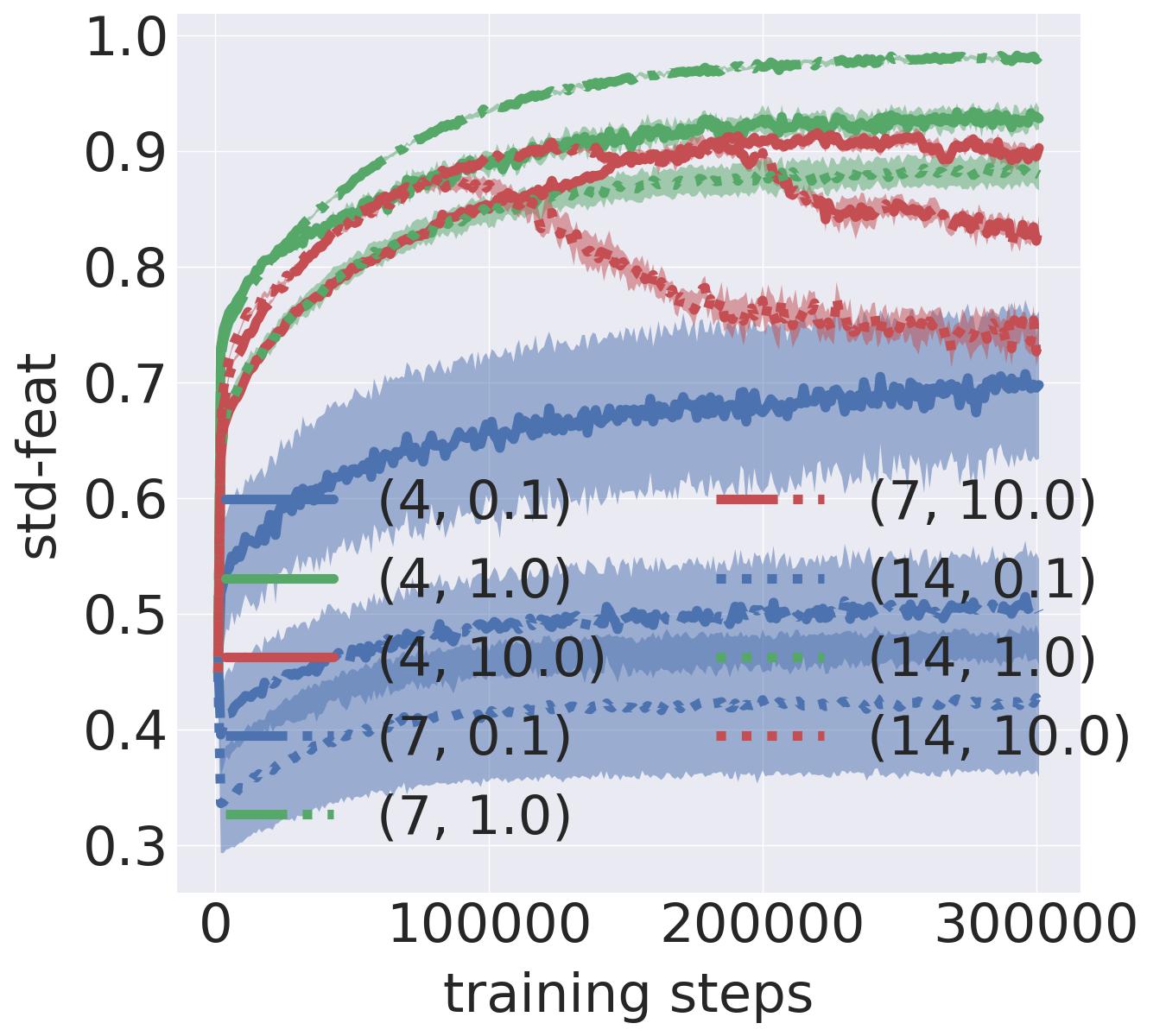}}
    \caption{\footnotesize Mean of feature dimension stats vs. training iterations on the \textsc{d4rl} Hopper dataset. \textsc{iqm} of errors for each domain were computed over $20$ trials with $95\%$ confidence intervals.}
    \label{fig:rope_feature_stats}
\end{figure*}

\subsection{Hardware For Experiments}
For all experiments, we used the following compute infrastructure:
\begin{itemize}
    \item Distributed cluster on HTCondor framework
    \item Intel(R) Xeon(R) CPU E5-2470 0 @ 2.30GHz
    \item RAM: 7GB
    \item Disk space: 4GB 
\end{itemize}

\end{document}